\documentclass[twoside]{article}

\ifdefined\pdfsuppressptexinfo\pdfsuppressptexinfo=-1\fi
\ifdefined\pdfinfoomitdate\pdfinfoomitdate=1\fi

\usepackage[preprint]{aistats2027}

\usepackage{amssymb}
\usepackage{amsthm}
\usepackage{mathtools}
\usepackage{algorithm}
\usepackage{algpseudocode}
\usepackage[round]{natbib}
\usepackage{xcolor}
\usepackage[hidelinks]{hyperref}

\makeatletter
\providecommand{\theHALG@line}{}
\renewcommand{\theHALG@line}{\thealgorithm.\arabic{ALG@line}}
\makeatother

\usepackage{cleveref}
\usepackage{enumitem}
\usepackage{booktabs}
\usepackage{array}
\usepackage{graphicx}

\newcommand{\R}{\mathbb{R}}
\newcommand{\E}{\mathbb{E}}
\newcommand{\norm}[1]{\left\lVert #1 \right\rVert}
\newcommand{\ip}[2]{\left\langle #1,#2 \right\rangle}
\newcommand{\grad}{\nabla}

\DeclareMathOperator*{\argmin}{arg\,min}

\newtheorem{assumption}{Assumption}

\newtheorem{lemma}{Lemma}
\newtheorem{proposition}{Proposition}

\newtheorem{theorem}{Theorem}
\newtheorem{corollary}{Corollary}
\begin{document}
\runningauthor{Tovmasyan, Hanzely, Shulgin, Richt\'arik, Stich}

% If your paper is accepted and the title of your paper is very long,
% the style will print as headings an error message. Use the following
% command to supply a shorter title of your paper so that it can be
% used as headings.
%
%\runningtitle{I use this title instead because the last one was very long}

% If your paper is accepted and the number of authors is large, the
% style will print as headings an error message. Use the following
% command to supply a shorter version of the author names so that
% they can be used as headings (for example, use only the surnames)
%
%\runningauthor{Surname 1, Surname 2, Surname 3, ...., Surname n}

\twocolumn[

\aistatstitle{SCAFFOLD under Hessian Similarity: Sharp Convergence Rates}

\aistatsauthor{ Zhirayr Tovmasyan\textsuperscript{1} \And Slavom\'ir Hanzely\textsuperscript{3} \And Egor Shulgin\textsuperscript{1,2} \AND Peter Richt\'arik\textsuperscript{1} \And Sebastian U. Stich\textsuperscript{3} }

\aistatsaddress{ \rule{0pt}{2em}\textsuperscript{1}King Abdullah University of Science and Technology (KAUST) \\  \textsuperscript{2}Mohamed bin Zayed University of Artificial Intelligence (MBZUAI) \\
\textsuperscript{3}CISPA Helmholtz Center for Information Security } ]

\begin{abstract}
One of the main challenges of federated learning is heterogeneous data split among the clients, leading to biased local updates called client drift.
The SCAFFOLD algorithm \citep{karimireddy2020scaffold} corrects client drift with control
variates, but its relation to the similarity of client data is not yet fully understood:
the original similarity-based guarantees cover only quadratics with a stronger
second-order similarity assumption, and a follow-up weakly convex analysis
\citep{luo2025revisiting} obtains a \(\sqrt{L\delta}\) dependence for \(L\)-smooth losses with averaged Hessian similarity \(\delta\). We analyze
SCAFFOLD with minibatch-based control variates and establish linear dependence on
Hessian similarity beyond quadratics, for both convex and nonconvex objectives.
In particular, for convex objectives with exact gradients, \(R\) SCAFFOLD rounds with \(K\) local
steps lead to an objective gap of order \((L/K+\delta)D_0^2/R\), where \(D_0\)
is the initial distance to a minimizer.
When \(K=\Theta(L/\delta)\),
it improves the minibatch SGD rate by a factor \(\Theta(L/\delta)\).
For nonconvex objectives, we prove an analogous stationarity guarantee under uniform Hessian similarity for a client-selection variant.
In both settings, we extend these guarantees to stochastic
gradients and prove matching lower bounds for the analyzed methods with constant local stepsizes. We also
show that, under partial participation, even identical clients need not obtain
the full communication benefit of additional local steps.
\end{abstract}

\section{Introduction}

Federated learning enables multiple clients to collaboratively train a shared model while keeping their data local. However, communication between clients and the central server is often a major bottleneck. Popular optimization algorithms, such as FedAvg \citep{mcmahan2017fedavg}, mitigate this cost by performing multiple local updates between communication rounds, trading additional local computation for reduced communication overhead. In the presence of heterogeneous client data, however, increasing the number of local updates can exacerbate the tendency of local iterates to deviate from the global optimization trajectory, a phenomenon known as \emph{client drift}.

The federated optimization algorithm SCAFFOLD \citep{karimireddy2020scaffold} addresses client drift using control variates that correct local update directions. The effectiveness of this correction depends, in part, on the degree of heterogeneity across clients. Quantifying this heterogeneity is therefore important for understanding the convergence behavior of federated optimization algorithms. One way to characterize client similarity is through Hessian similarity, which measures differences in curvature between the global and client-specific loss functions.

Despite this connection, the theoretical benefits of SCAFFOLD under Hessian similarity remain incompletely understood. The original SCAFFOLD paper \citep{karimireddy2020scaffold} establishes the benefit of control variates for quadratic objectives, but its analysis considers a simplified variant that uses exact control variates at the current server iterate. Subsequently, \citet{luo2025revisiting} extend the analysis beyond quadratic objectives, but obtain guarantees with weaker dependence on client similarity or under additional structural assumptions. For convex objectives, linear dependence of the communication complexity on the similarity measure has been established for methods incorporating regularization, including DANE+, FedRed, and S-DANE \citep{jiang2024fedred,jiang2024sdane}. However, whether SCAFFOLD's corrected local steps achieve the same dependence under the averaged Hessian similarity assumption (\Cref{ass:similarity}) remains an open question.

To address this question, we analyze SCAFFOLD with minibatch Option-I controls (Algorithm~\ref{alg:scaffold-option-i}) under the \(\delta\)-averaged Hessian similarity assumption (\Cref{ass:similarity})\footnote{\citet{karimireddy2020scaffold} propose two options for updating the control variates. Option-I computes the full local gradient at the server model at the beginning of each round, whereas Option-II reuses gradients computed during local updates. We replace the full gradient in Option-I by a stochastic minibatch estimate.}. For convex, \(L_{\mathrm{loc}}\)-smooth objectives (\Cref{ass:convex-smooth}) and deterministic local gradients (\(\sigma=0\) in \Cref{ass:stochastic-oracle}), our convergence bound \eqref{eq:deterministic-rate} depends on the problem geometry through the factor \(L_{\mathrm{loc}}/K+\delta\), where \(K\) denotes the number of local steps per communication round. Thus, increasing the number of local steps reduces the \(L_{\mathrm{loc}}/K\) term, but the resulting improvement is limited by the parameter \(\delta\). More precisely, when \(0<\delta\leq L_{\mathrm{loc}}\), choosing \(K=\Theta(L_{\mathrm{loc}}/\delta)\) gives an improvement over minibatch SGD by a factor of \(\Theta(L_{\mathrm{loc}}/\delta)\) (\Cref{tab:convex-stochastic-comparison}).

Beyond the deterministic setting, our analysis accommodates stochastic gradients for both local model updates and control variate estimation, while preserving linear dependence on the Hessian similarity parameter. With an appropriate choice of the number of local steps, our method achieves a communication-round complexity matching that of FedRed-SGD \citep{jiang2024fedred}, without requiring exact gradients or regularization. Its stochastic gradient complexity also matches that of FedRed-SGD up to a logarithmic factor in the noise-dependent term.  Detailed comparisons with existing methods are provided in Section~\ref{sec:related-work}.
% TODO reference

We complement our convergence guarantees for both convex and nonconvex objectives with matching lower bounds up to a constant factor, establishing the tightness of our rates and characterizing the communication and local computation complexities of constant-stepsize SCAFFOLD variants.

\subsection{Summary of contributions}

Our main contributions can be summarized as follows:

\begin{itemize}
    \item \textbf{Convex convergence and optimality.}
    Under the averaged Hessian similarity assumption (\Cref{ass:similarity}), we establish objective-gap guarantees for the averaged server iterate of SCAFFOLD Option-I (\Cref{alg:scaffold-option-i}) with constant local stepsizes and control minibatch size equal to the number of local steps per communication round, \(B=K\) (\Cref{thm:scaffold-convex}). We establish matching lower bounds, showing that these rates are optimal for the considered constant-stepsize SCAFFOLD variants (\Cref{thm:constant-step-optimality}).

    \item \textbf{Nonconvex convergence and optimality.}
    Under uniform Hessian similarity (\Cref{ass:uniform-similarity}), we establish a stationarity guarantee for the SCAFFOLD client-selection variant with randomly selected local output (\Cref{alg:scaffold-nonconvex}, \Cref{thm:scaffold-nonconvex}) with constant local stepsizes. Under local smoothness, we establish a matching lower bound within the parameter regime specified in Theorem~\ref{thm:nc-lower}.

    \item \textbf{Partial participation.}
    For the scenario in which not all clients participate in each round but exact gradients are available, we establish convergence guarantees under uniform client sampling (\Cref{thm:pp-main}) and show that refreshing control variates only for participating clients can prevent the full \(1/K\) improvement, even when all clients have identical convex objectives. The lower bound holds for every constant local stepsize \(0<\eta_\ell\leq 1/L_{\mathrm{loc}}\) (\Cref{thm:pp-lower}).
    
\end{itemize}

\subsection{Related work}
\label{sec:related-work}

\paragraph{SCAFFOLD and client drift.}
Beyond the quadratic analysis of \citet{karimireddy2020scaffold},
\citet{gorbunov2021localsgd} study shifted gradient estimators and \citet{mitra2021fedlin} investigate corrected updates with unequal local
workloads.
Under \(\delta\)-averaged Hessian similarity and $\rho$-weak convexity,
\citet{luo2025revisiting} obtain the stationarity coefficient
$L_{\mathrm{loc}}/K+\sqrt{L_{\mathrm{loc}}\delta}+\rho$, and 
under additional uniform similarity, $\bar\delta\ge\delta$, and a quadratic convex combination of
local objectives they were able to replace $\sqrt{L_{\mathrm{loc}}\delta}$ by
$\sqrt{\bar\delta\delta}$. Nevertheless, neither result shows linear dependence on averaged
similarity $\delta$ in the convex case. In the nonconvex case, the
constant $\rho$ may exceed uniform similarity $\bar\delta$.

\paragraph{Communication under similarity.}
\citet{shamir2014dane} propose DANE, which exploits similarity through corrected, regularized local
subproblems. For general convex objectives
under averaged similarity, \citet{jiang2024fedred} present DANE+, which achieves
$\mathcal O(\delta D_0^2/\varepsilon)$ communication rounds, and
FedRed, which reaches the same order in expectation through randomized
synchronization.
\citet{jiang2024sdane} propose S-DANE, which supports accelerated local solvers, and Acc-S-DANE,
which improves the communication order to
$\mathcal O(\sqrt{\delta D_0^2/\varepsilon})$.
DANE+, FedRed, and S-DANE use exact gradients for their
corrections, including in the analyzed implementations with
stochastic local computation.
Under strong convexity, SVRP \citep{khaled2023svrp} combines proximal updates with
client sampling and variance reduction,
while ProxSkip \citep{mishchenko2022proxskip} accelerates communication through randomized
synchronization without a similarity assumption.
We summarize comparison of communication
and computation costs in Table~\ref{tab:convex-stochastic-comparison}.

\paragraph{Nonconvex methods.}
BVR-L-SGD \citep{murata2021bvr} and CE-LSGD
\citep{patel2022communication} exploit second-order similarity
using variance-reduced estimators that evaluate the same sample
at two points and require additional smoothness of stochastic gradients. Both permit one selected client to perform local
steps and return a sampled local iterate. The deterministic variant CE-LGD
achieves the optimization coefficient $L_{\mathrm{loc}}/K+\bar\delta$.
DANE+ \citep{jiang2024fedred} permits a predetermined client to solve the local
subproblem. MimeMVR admits nonconvex convergence
guarantees with a single client performing local updates, under
sample-wise Hessian similarity \citep{karimireddy2021mime}.

AuxMOM \citep{chayti2024auxiliary} separates global smoothness from similarity, obtaining
the optimization term $(L_g/K+\bar\delta)F_0/R$ without sample-wise
smoothness. Its bound contains a local
gradient-noise term proportional to $1/K$, which does not decrease
with more communication rounds.
With full local gradients, SPAM \citep{karagulyan2024spam} proves proximal convergence guarantees without global
smoothness and
\citet{jiang2025costaware} analyze client selection with explicit
communication and computation costs.
We summarize rates and oracle requirements in Table~\ref{tab:nonconvex-comparison}.

\paragraph{Lower bounds and stochastic iterate bias.}
Prior work establishes minimax lower bounds for classes of distributed
optimization algorithms
\citep{woodworth2020minibatch,patel2022communication}.
Our bounds concern the specified constant-stepsize methods and
outputs. For Local SGD, \citet{woodworth2020localsgd,glasgow2022}
establish lower bounds through stochastic iterate bias, even
for identical clients. \citet{mangold2025scaffold} characterize
the bias and variance of stochastic SCAFFOLD under strong
convexity. Our convex constructions build on the Local SGD
examples. Our nonconvex constructions adapt quadratic SGD
examples \citep{drori2020} and distributed zero chains
\citep{patel2022communication,carmon2020}.
Sections~\ref{sec:constant-step-optimality}
and~\ref{sec:nc-optimality} explain the adaptations.

\paragraph{Partial participation.}
The original SCAFFOLD analysis includes client sampling
\citep{karimireddy2020scaffold}. SCAFFOLD-M \citep{cheng2024momentum} improves
partial-participation guarantees through momentum, and Amplified SCAFFOLD \citep{crawshaw2024amplified} addresses
periodic participation.
Our partial-participation analysis in
Appendix~\ref{app:partial-participation} concerns the rule in which
only participating clients refresh their controls.

\section{Preliminaries}
We consider a federated optimization problem with \(N\) clients having
differentiable local objectives \(f_1,\ldots,f_N\). The global objective is their average,
\begin{equation}
    \min_{x\in\R^d} f(x),
    \qquad
    f(x) \coloneqq \frac1N\sum_{i=1}^N f_i(x).
    \label{eq:objective}
\end{equation}
Each client \(i\) has a local data
distribution \(\mathcal D_i\), and its objective is defined as the expected loss
\begin{equation}
    f_i(x)
    \coloneqq
    \E_{\xi\sim\mathcal D_i}\!\left[f_i(x;\xi)\right],
    \label{eq:local-objective}
\end{equation}
where \(f_i(x;\xi)\) denotes the loss evaluated at model \(x\) on sample \(\xi\).

Our main results consider full participation, meaning that all \(N\) clients estimate their control variates in every communication round. We denote \([N]\coloneqq \{1, \dots, N\}\). In Algorithm~\ref{alg:scaffold-option-i}, all clients also perform local optimization, whereas in Algorithm~\ref{alg:scaffold-nonconvex}, only one selected client performs local optimization per round. We defer the analysis of a partial-participation variant to Appendix~\ref{app:partial-participation}.

In the convex setting, we assume that \(f\) attains its minimum at \(x^\star\in\argmin_{x\in\mathbb{R}^d}f(x)\), and write \(f^\star\coloneqq f(x^\star)\). In the nonconvex setting, we require only that \(f\) be bounded below. We invoke the assumptions for these two settings separately: Section~\ref{sec:convex-results} considers locally convex and locally smooth objectives under averaged Hessian similarity, whereas Section~\ref{sec:nonconvex} considers globally smooth objectives under uniform Hessian similarity. We formalize and discuss them below.

\begin{assumption}[Local convexity and smoothness]
\label{ass:convex-smooth}
Every local objective \(f_i\) is differentiable and convex: for every
\(i\in[N]\) and \(x,y\in\R^d\),
\begin{equation}
    f_i(y)
    \ge f_i(x)+\ip{\grad f_i(x)}{y-x}.
    \label{eq:local-convexity}
\end{equation}
It is also \(L_{\mathrm{loc}}\)-smooth for some \(L_{\mathrm{loc}}>0\):
for every \(i\in[N]\) and \(x,y\in\R^d\),
\begin{equation}
    \norm{\grad f_i(x)-\grad f_i(y)}
    \le L_{\mathrm{loc}}\norm{x-y}.
    \label{eq:local-smoothness}
\end{equation}
\end{assumption}

These conditions are standard in SCAFFOLD analyses
\citep{karimireddy2020scaffold,luo2025revisiting,takezawa2025fedmuon} and in the analysis of
gradient-based local solvers for DANE+ and FedRed \citep{jiang2024fedred}.

\begin{assumption}[Global smoothness]
\label{ass:global-smoothness}
The global objective \(f\) is differentiable and \(L_g\)-smooth for some
\(L_g>0\): for all \(x,y\in\R^d\) holds
\begin{equation}
    \norm{\grad f(x)-\grad f(y)}\le L_g\norm{x-y}.
    \label{eq:global-smoothness}
\end{equation}
\end{assumption}

Local smoothness implies global smoothness with \(L_g\le L_{\mathrm{loc}}\).

For each client \(i\), define the difference between its local objective and the global objective as
\(h_i\coloneqq f_i-f\).

\begin{assumption}[Averaged Hessian similarity]
\label{ass:similarity}
For some \(\delta\ge0\) and all \(x,y\in\R^d\) holds
\begin{equation}
    \frac1N\sum_{i=1}^N
    \norm{\grad h_i(x)-\grad h_i(y)}^2
    \le
    \delta^2\norm{x-y}^2.
    \label{eq:similarity}
\end{equation}
\end{assumption}

This condition is used by \citet{luo2025revisiting} and is referred
to as averaged Hessian dissimilarity in the DANE+/FedRed analysis
\citep{jiang2024fedred}. Under full participation, it coincides with the
second-order dissimilarity condition used in S-DANE
\citep{jiang2024sdane}.

\begin{assumption}[Uniform Hessian similarity]
\label{ass:uniform-similarity}
For every client \(i\in[N]\), there is a constant \(\delta_i\ge0\) such
that, for all \(x,y\in\R^d\) holds
\begin{equation}
    \norm{\grad h_i(x)-\grad h_i(y)}
    \le\delta_i\norm{x-y}.
    \label{eq:uniform-similarity}
\end{equation}
We write \(\bar\delta=\max_i\delta_i\) for the uniform bound. 
\end{assumption}

Uniform Hessian similarity implies
\eqref{eq:similarity} with
$\delta^2=N^{-1}\sum_{i=1}^N\delta_i^2$, and together with global
smoothness implies that each $f_i$ is $(L_g+\delta_i)$-smooth.
This assumption is also used in DANE+ \citep{jiang2024fedred},
SPAM \citep{karagulyan2024spam}, and the SCAFFOLD analysis of \citet{luo2025revisiting}. It is implied by the uniform Hessian-difference bound used in the original
SCAFFOLD analysis \citep{karimireddy2020scaffold} and CE-LGD \citep{patel2022communication}, but itself
does not require second derivatives.

At local step $k$ of round $r$, client $i$ draws a sample
$\xi_{i,k}^r\sim\mathcal D_i$ and evaluates the stochastic gradient
$\grad f_i(y_{i,k}^r;\xi_{i,k}^r)$. Let $\mathcal F_{r,k}$ denote all iterates and randomness observed up to the
beginning of this local step (i.e., $y_{i,k}^r$ is
$\mathcal F_{r,k}$-measurable). We assume that the local stochastic gradients are unbiased and have
bounded variance $\sigma^2$ -- value
$\sigma=0$ corresponds to the deterministic setting of exact gradients.

\begin{assumption}[Local stochastic gradients]
\label{ass:stochastic-oracle}
For every $r\ge1$, \(i\in[N]\), $k\in\{0,\ldots,K-1\}$,
the stochastic gradient is conditionally unbiased almost surely,
\begin{equation}
    \E\!\left[
        \grad f_i(y_{i,k}^r;\xi_{i,k}^r)
        \mid\mathcal F_{r,k}
    \right]
    =\grad f_i(y_{i,k}^r), \label{eq:stochastic-unbiasedness}
\end{equation}
with its conditional variance almost surely bounded by
\begin{equation}
    \E\!\left[
        \norm{
            \grad f_i(y_{i,k}^r;\xi_{i,k}^r)
            -\grad f_i(y_{i,k}^r)
        }^2
        \mid\mathcal F_{r,k}
    \right]
    \le \sigma^2. \label{eq:stochastic-oracle}
\end{equation}
The same conditional moment bounds apply to gradient evaluations used
to estimate control variates, conditioned on the information available
before each sample is drawn. All oracle samples, including those used
for control-variate estimates, are drawn independently across clients
and oracle calls.
\end{assumption}
% TODO

This assumption is also standard \citep{karimireddy2020scaffold,luo2025revisiting}.

With the assumptions in place, we now present the studied algorithm and establish its convergence rates.

\subsection{SCAFFOLD with minibatch Option-I controls}

Algorithm~\ref{alg:scaffold-option-i} formalizes the full-participation method analyzed in our convex setting.  Starting from an initial model \(x^0\), each client performs \(K\ge1\)
local steps with stepsize \(\eta_\ell>0\), after which the server averages
the client endpoints at the server (server stepsize is \(\eta_g=1\)).
The control variates are estimated using minibatches of size \(B\ge1\),
including at initialization.

\begin{algorithm}[t]
\caption{SCAFFOLD \citep{karimireddy2020scaffold}, full participation with minibatch Option-I controls}
\label{alg:scaffold-option-i}
\begin{algorithmic}[1]
\Require Model \(x^0\), rounds \(R\), and local steps \(K\)
\Require Stepsize \(\eta_\ell\) and control batch size \(B\)
\State Each client draws \(B\) samples \(\xi_{i,b}^{0,c}\sim\mathcal D_i\)
\State Each client sets \(c_i^0\gets B^{-1}\sum_{b=1}^B\grad f_i(x^0;\xi_{i,b}^{0,c})\)
\State Server sets \(c^0\gets N^{-1}\sum_{i=1}^N c_i^0\)
\For{rounds \(r=1,\ldots,R\)}
    \State Server broadcasts \((x^{r-1},c^{r-1})\)
    \For{each client \(i\in[N]\) in parallel}
        \State \(y_{i,0}^r\gets x^{r-1}\)
        \For{\(k=0,\ldots,K-1\)}
            \State Draw \(\xi_{i,k}^r\sim\mathcal D_i\)
            \State \(d_{i,k}^r\gets
            \grad f_i(y_{i,k}^r;\xi_{i,k}^r)-c_i^{r-1}+c^{r-1}\)
            \State \(y_{i,k+1}^r\gets y_{i,k}^r-\eta_\ell d_{i,k}^r\)
        \EndFor
        \State Draw \(B\) new samples \(\xi_{i,b}^{r,c}\sim\mathcal D_i\)
        \State Set \(c_i^r\gets B^{-1}\sum_{b=1}^B\grad f_i(x^{r-1};\xi_{i,b}^{r,c})\)
        \State Send \(\bigl(y_{i,K}^r,c_i^r\bigr)\) to the server
    \EndFor
    \State \(x^r\gets N^{-1}\sum_{i=1}^N y_{i,K}^r\),\quad
    \(c^r\gets N^{-1}\sum_{i=1}^N c_i^r\)
\EndFor
\State \Return \(\bar x^R\gets R^{-1}\sum_{r=1}^R x^r\)
\end{algorithmic}
\end{algorithm}

Throughout our main results, we set $B=K$. Then initialization requires $K$ stochastic gradient evaluations
per client and one control aggregation. In subsequent rounds, every
client performs $2K$ stochastic gradient evaluations: $K$ for the local
updates and $K$ to estimate its next control variate. The client sends
its endpoint and the next control variate to the server in a single message.
The control update after the last round may be omitted.

Adopting the convention $x^{-1}=x^0$, the control variates used in
round $r$ are estimated at $x^{r-2}$, one server iterate before the
round's starting model $x^{r-1}$,
\begin{equation}
\begin{aligned}
    c_i^{r-1}&=\frac1B\sum_{b=1}^B
        \grad f_i(x^{r-2};\xi_{i,b}^{r-1,c}), \\
    c^{r-1}&=\frac1N\sum_{i=1}^N c_i^{r-1}.
\end{aligned}
    \label{eq:stale-control-identity}
\end{equation}
Thus the first round uses controls at \(x^0\), and every subsequent round
uses controls that are one server iterate stale. This staleness is what
allows the endpoint and the next control to travel in the same message.

We are now ready to present the convergence rates.

\section{Theoretical results}
\subsection{Convex objectives}
\label{sec:convex-results}

We measure suboptimality using the expected objective gap of the averaged
server iterate,
\begin{equation}
    \bar x^R\coloneqq\frac1R\sum_{r=1}^R x^r,
    \qquad
    \Delta_R\coloneqq\E[f(\bar x^R)]-f^\star.
    \label{eq:averaged-output-gap}
\end{equation}

For convenience, define the effective stepsize of a round $\alpha\coloneqq K\eta_\ell$ and initial distance
$D_0\coloneqq\norm{x^0-x^\star}$. When $D_0,\sigma>0$, we also define a logarithmic term that will appear with the local noise,
\begin{equation}
    \kappa\coloneqq
    \log\left(e+\frac{L_{\mathrm{loc}}D_0\sqrt K}
                       {\sigma\sqrt R}\right).
    \label{eq:effective-log}
\end{equation}

Our first result bounds the expected objective gap for any admissible
constant stepsize.

\begin{proposition}[Fixed stepsize]
\label{prop:fixed-stepsize}
Suppose Assumptions~\ref{ass:convex-smooth}, \ref{ass:similarity},
and~\ref{ass:stochastic-oracle} hold. Run Algorithm~\ref{alg:scaffold-option-i}
with \(B\ge1\) and a constant stepsize satisfying
\(\eta_\ell L_{\mathrm{loc}}\le1\) and \(\alpha\delta\le1/(3\sqrt2)\). Then
\begin{equation}
\begin{aligned}
    \Delta_R\le{}&40\biggl[\frac{D_0^2}{\alpha R}
    +\frac{\alpha\sigma^2}{N}\Bigl(\frac1K+\frac1B\Bigr)\\
    &\quad+\alpha\sigma^2\Bigl(\frac1K+\frac1B\Bigr)\log(1+\alpha L_{\mathrm{loc}})\biggr].
\end{aligned}
    \label{eq:fixed-stepsize-rate}
\end{equation}
\end{proposition}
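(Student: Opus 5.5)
The plan is a one-round descent inequality for the server iterate \(x^r\) with respect to \(x^\star\), followed by telescoping and Jensen on \(\bar x^R\). Write the round as
\[
x^r=x^{r-1}-\frac{\eta_\ell}{N}\sum_{i,k} d_{i,k}^r .
\]
Decompose each corrected direction as
\[
d_{i,k}^r=\grad f(y_{i,k}^r)+\bigl[\grad h_i(y_{i,k}^r)-\grad h_i(x^{r-2})\bigr]+\text{(control noise)}+\text{(local noise)}.
\]
Here I use \(\grad f_i=\grad f+\grad h_i\), \(\frac1N\sum_i\grad h_i\equiv0\), and the fact that \(c_i^{r-1}-c^{r-1}\) is an unbiased estimate of \(\grad h_i(x^{r-2})\). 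The middle bracket is the drift error. Averaged over clients, its squared norm is controlled by Assumption~\ref{ass:similarity} as
\[
\delta^2\,\frac1N\sum_i\norm{y_{i,k}^r-x^{r-2}}^2 .
\]
This distance splits into the local drift \(\norm{y_{i,k}^r-x^{r-1}}\) and the staleness \(\norm{x^{r-1}-x^{r-2}}\). The key point is that \(\delta\), not \(L_{\mathrm{loc}}\), multiplies these deviations, which is what yields linear dependence on \(\delta\).

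\textbf{Descent step.} Expand \(\norm{x^r-x^\star}^2\). For the inner product with the \(\grad f(y_{i,k}^r)\) terms, use convexity and \(L_{\mathrm{loc}}\)-smoothness of each \(f_i\) rather than only of \(f\). Applying the standard three-point inequality \(\ip{\grad f_i(y)}{y'-x^\star}\ge f_i(y')-f_i(x^\star)-\frac{L_{\mathrm{loc}}}{2}\norm{y'-y}^2\) along the local path lets the \(L_{\mathrm{loc}}\) curvature be absorbed by \(\eta_\ell L_{\mathrm{loc}}\le1\). This treats the \(K\) local steps like a single step of size \(\alpha\) on a linearized model. The similarity error is handled by Young's inequality: one part is absorbed into \(-\alpha(f(x^r)-f^\star)\) or into negative \(\norm{x^r-x^{r-1}}^2\) terms, and the other is left as \(\alpha\delta^2\times\)(deviations). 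Each local deviation \(\norm{y_{i,k}^r-x^{r-1}}^2\) is bounded recursively by \(\alpha^2\) times the squared average update direction plus a noise term. The condition \(\alpha\delta\le1/(3\sqrt2)\) makes \(\alpha^2\delta^2\) small enough that all of this, including the staleness term \(\norm{x^{r-1}-x^{r-2}}^2\), can be absorbed into the negative terms of the current and previous rounds. For this I would carry a potential \(\norm{x^r-x^\star}^2+c\,\alpha^2\norm{x^r-x^{r-1}}^2\) with a suitable constant \(c\), using the convention \(x^{-1}=x^0\) for the first round.

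\textbf{Noise.} Averaging over the \(N\) independent clients gives variance \(\alpha\sigma^2/(NK)\) from the local samples and \(\alpha\sigma^2/(NB)\) from the controls, since the server steps average \(NK\) local samples and \(NB\) control samples. These produce the \(\frac{\alpha\sigma^2}{N}(\frac1K+\frac1B)\) term. The non-averaged term \(\alpha\sigma^2(\frac1K+\frac1B)\log(1+\alpha L_{\mathrm{loc}})\) comes from per-client noise entering the local iterates, and hence the deviation and curvature terms, without averaging. This includes the mismatch between \(c_i^{r-1}\) and \(\grad f_i(x^{r-2})\), whose variance is \(\sigma^2/B\) per client.

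\textbf{Main obstacle.} I expect the hardest step to be obtaining only a logarithmic factor, rather than \(\alpha L_{\mathrm{loc}}\), in that non-averaged term. A crude bound would charge \(L_{\mathrm{loc}}\cdot\eta_\ell^2 k\sigma^2\) at every step. My first attempt would use the contraction of the local GD map \(y\mapsto y-\eta_\ell\grad f_i(y)\) for a convex \(L_{\mathrm{loc}}\)-smooth \(f_i\) with \(\eta_\ell L_{\mathrm{loc}}\le1\). Noise injected at step \(j\) contributes function-gap excess at step \(k\) of order \(\eta_\ell\sigma^2/(1+(k-j)\eta_\ell L_{\mathrm{loc}})\), the sublinear decay rate of GD on convex functions. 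Summing over \(k\) for fixed \(j\) gives a harmonic sum, which produces \(\log(1+\alpha L_{\mathrm{loc}})\) after dividing out \(K\). After this, telescoping over \(r=1,\dots,R\), dividing by \(\alpha R\), and applying Jensen to \(\bar x^R\) gives the claimed bound, with the absolute constant \(40\) collecting the Young's-inequality factors.
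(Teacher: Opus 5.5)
\textbf{There is a genuine gap: your drift bound misapplies averaged similarity, and your absorption mechanism is the original SCAFFOLD route, which cannot give the \(L_{\mathrm{loc}}/K+\delta\) rate.}

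\emph{The drift bound.} You write
\(\frac1N\sum_i\norm{\grad h_i(y_{i,k}^r)-\grad h_i(x^{r-2})}^2\le\delta^2\frac1N\sum_i\norm{y_{i,k}^r-x^{r-2}}^2\).
Assumption~\ref{ass:similarity} compares all the \(h_i\) at one \emph{common} pair of points. It says nothing when each client sits at its own point. For example, take \(N\ge3\), \(h_1(x)=\frac a2x^2\) and \(h_i(x)=-\frac{a}{2(N-1)}x^2\) for \(i\ge2\). These give \(\delta=a/\sqrt{N-1}\), and moving only client~1 makes your left side \(N-1\) times your right side. If you instead split through the client mean \(\bar y_k\), the piece \(\grad h_i(y_{i,k}^r)-\grad h_i(\bar y_k)\) is controlled only by \(L_{\mathrm{loc}}\) times the dispersion. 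That is exactly the dependence you wanted to remove.

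\emph{The absorption mechanism.} Bounding \(\norm{y_{i,k}^r-x^{r-1}}\) is the original SCAFFOLD route, which the paper notes forces \(\alpha=\mathcal O(1/L_{\mathrm{loc}})\). It fails in three ways.
\begin{enumerate}
\item Expanding \(\norm{x^r-x^\star}^2\) around \(x^{r-1}\) pairs the dispersion-induced error \(\frac1N\sum_i\grad h_i(y_{i,k}^r)\) with \(x^{r-1}-x^\star\). Without strong convexity, Young's inequality cannot absorb that distance in a telescoping argument.
\item The three-point inequality with \(y'=x^r\) (your ``single step of size \(\alpha\)'') costs \(L_{\mathrm{loc}}\norm{x^r-y_{i,k}^r}^2\) per step. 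Per round this is of order \(L_{\mathrm{loc}}\alpha^3\) times the squared direction, so it needs \(\alpha L_{\mathrm{loc}}\lesssim1\) and yields only \(L_{\mathrm{loc}}D_0^2/R\).
\item With \(y'=y_{i,k+1}^r\), only \(\eta_\ell L_{\mathrm{loc}}\le1\) is needed. But you are then left with \(f_i\)-values at client-specific points and no way back to \(f(x^r)\).
\end{enumerate}

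\emph{The missing idea: never bound local drift.} Treat client \(i\)'s round as (S)GD on the convex, \(L_{\mathrm{loc}}\)-smooth corrected objective \(\widetilde f_i(y)=f_i(y)-\ip{\grad f_i(x^{r-2})-\grad f(x^{r-2})}{y}\). Prove a \emph{last-iterate} bound that keeps the terminal distance,
\[
2\alpha\,\E[\widetilde f_i(y_{i,K}^r)-\widetilde f_i(u)]\le\norm{x^{r-1}-u}^2-\E\norm{y_{i,K}^r-u}^2+\text{noise}.
\]
The suffix-averaging argument behind this bound is where \(\log(1+\alpha L_{\mathrm{loc}})\) comes from. Next, use convexity of each \(f_i\) at the common point \(x^r\):
\[
\frac1N\sum_i[\widetilde f_i(y_{i,K}^r)-\widetilde f_i(u)]\ge f(x^r)-f(u)-\delta\norm{x^r-x^{r-2}}\,e .
\]
Here similarity is used only at the common pair \((x^r,x^{r-2})\), and \(e\) is the endpoint dispersion. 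Since \(\frac1N\sum_i\norm{y_{i,K}^r-u}^2=\norm{x^r-u}^2+e^2\), completing the square absorbs the error and leaves \(\alpha^2\delta^2\norm{x^r-x^{r-2}}^2\). That stale term is handled by a server-level reduction applied with both \(u=x^\star\) and \(u=x^{r-1}\); this is where \(\alpha\delta\le1/(3\sqrt2)\) enters.

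\emph{Control noise.} The same dispersion absorbs the control errors. Since \(\sum_iq_i=0\), we have \(\frac1N\sum_i\ip{q_i}{y_{i,K}^r-u}=\frac1N\sum_i\ip{q_i}{y_{i,K}^r-x^r}\). Your accounting of this noise is also off in two respects.
\begin{itemize}
\item The control errors cancel \emph{exactly} in the averaged server direction. So the \(\alpha\sigma^2/(NB)\) term comes from the nonlinear sensitivity of the endpoints (the paper bounds it via Efron--Stein), not from ``averaging \(NB\) samples.''
\item Because \(q_i\) is fixed within a round, your harmonic-sum heuristic for fresh noise does not apply to it. The paper splits into two cases. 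For \(\alpha L_{\mathrm{loc}}\le1\), a mean-trajectory comparison gives \(L_{\mathrm{loc}}\alpha^2\sigma^2/B\). For \(\alpha L_{\mathrm{loc}}\ge1\), the dispersion absorption gives \(\alpha\sigma^2/B\).
\end{itemize}
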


When \(\sigma=0\), only the first term remains. It decreases as
\(\alpha\) grows, and the stepsize conditions allow \(\alpha\) up to order
\(\min\{K/L_{\mathrm{loc}},1/\delta\}\), so local steps reduce the
deterministic term until similarity becomes the limit. Setting \(B=K\)
and tuning \(\alpha\) gives the following theorem. The analysis of other
control minibatch sizes is in Appendix~\ref{app:convex-batch-size}.

\begin{theorem}
\label{thm:scaffold-convex}
Suppose Assumptions~\ref{ass:convex-smooth}, \ref{ass:similarity},
and~\ref{ass:stochastic-oracle} hold, and \(D_0,\sigma>0\).
The output of Algorithm~\ref{alg:scaffold-option-i} with \(B=K\) for \(R\ge1\)
rounds and the constant stepsize specified in
\eqref{eq:app-complete-tuning} of Appendix~\ref{app:convex-tuning} satisfies
\[
    \Delta_R\le200\min\{L_{\mathrm{loc}}D_0^2,\mathcal T\},
\]
where
\begin{equation}
\begin{aligned}
    \mathcal T\coloneqq{}&
    \max\left\{\frac{L_{\mathrm{loc}}}{K},\delta\right\}
        \frac{D_0^2}{R}
    +\frac{\sigma D_0}{\sqrt{NKR}}\\
    &+\min\left\{
        \left(
            \frac{L_{\mathrm{loc}}\sigma^2D_0^4}{KR^2}
        \right)^{1/3},
        \frac{\sigma D_0\sqrt\kappa}{\sqrt{KR}}
    \right\}.
\end{aligned}
\label{eq:tuned-rate}
\end{equation}
\end{theorem}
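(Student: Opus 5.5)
The plan is to derive the theorem from Proposition~\ref{prop:fixed-stepsize} with \(B=K\) and a tuned \(\alpha\). Since \(1/K+1/B=2/K\), the bound \eqref{eq:fixed-stepsize-rate} becomes
\[
\Delta_R\le 40\Bigl[\frac{D_0^2}{\alpha R}+\frac{2\alpha\sigma^2}{NK}+\frac{2\alpha\sigma^2}{K}\log(1+\alpha L_{\mathrm{loc}})\Bigr],
\]
valid for every \(\alpha\le\alpha_{\max}\coloneqq\min\{K/L_{\mathrm{loc}},1/(3\sqrt2\,\delta)\}\), because \(\eta_\ell L_{\mathrm{loc}}=\alpha L_{\mathrm{loc}}/K\le1\). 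The natural choice is
\[
\alpha=\min\{\alpha_{\max},\alpha_1,\alpha_2\},\qquad
\alpha_1=\frac{D_0\sqrt{NK}}{\sigma\sqrt R},
\]
where \(\alpha_2\) balances the third term. The cap \(\alpha_{\max}\) yields the term \(\max\{L_{\mathrm{loc}}/K,3\sqrt2\,\delta\}D_0^2/R\). The choice \(\alpha_1\) balances \(D_0^2/(\alpha R)\) against \(\alpha\sigma^2/(NK)\) and yields \(\sigma D_0/\sqrt{NKR}\). In each case the minimum is attained at one candidate, while the remaining noise terms are bounded using \(\alpha\le\) the other candidates.

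The third term carries a logarithm, and the two branches of the inner minimum in \(\mathcal T\) come from two ways of controlling \(\log(1+\alpha L_{\mathrm{loc}})\).

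\textbf{Cubic branch.} Use \(\log(1+x)\le x\), so the term is at most \(2\alpha^2L_{\mathrm{loc}}\sigma^2/K\). Balancing against \(D_0^2/(\alpha R)\) gives \(\alpha_2^{(a)}=(KD_0^2/(L_{\mathrm{loc}}\sigma^2R))^{1/3}\) and the value \((L_{\mathrm{loc}}\sigma^2D_0^4/(KR^2))^{1/3}\).

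\textbf{Logarithmic branch.} Take \(\alpha_2^{(b)}=D_0\sqrt K/(\sigma\sqrt{R\kappa})\). Then \(\alpha L_{\mathrm{loc}}\le L_{\mathrm{loc}}D_0\sqrt K/(\sigma\sqrt R)\), so \(\log(1+\alpha L_{\mathrm{loc}})\le\kappa\) by \eqref{eq:effective-log}, using \(\kappa\ge1\). The balance then gives \(\sigma D_0\sqrt\kappa/\sqrt{KR}\).

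The tuning \eqref{eq:app-complete-tuning} should pick \(\alpha_2\) as whichever of \(\alpha_2^{(a)},\alpha_2^{(b)}\) gives the smaller balanced value, or simply the larger of the two step sizes. The delicate point is that when \(\alpha\) is set by another candidate, the log term must still be bounded. I would handle this by monotonicity: \(\alpha\mapsto\alpha\log(1+\alpha L_{\mathrm{loc}})\) is increasing, so \(\alpha\le\alpha_2\) implies the log term is at most its value at \(\alpha_2\). Both branch bounds hold at their own \(\alpha_2\), so the term is at most the minimum of the two branch values.

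It remains to obtain \(\min\{L_{\mathrm{loc}}D_0^2,\mathcal T\}\). The \(L_{\mathrm{loc}}D_0^2\) part should come from a trivial regime, roughly \(\alpha\) of order \(1/L_{\mathrm{loc}}\) (that is, \(\eta_\ell=1/(KL_{\mathrm{loc}})\)), or from noting that the bound is only interesting when \(\mathcal T\le L_{\mathrm{loc}}D_0^2\). I expect the main obstacle to be reconciling the lower cap: when \(\mathcal T>L_{\mathrm{loc}}D_0^2\), one must show the tuned \(\alpha\) still gives \(O(L_{\mathrm{loc}}D_0^2)\). A plausible route is to add \(1/L_{\mathrm{loc}}\) as a lower floor in the tuning and verify that each noise term at \(\alpha=1/L_{\mathrm{loc}}\) is at most a constant times \(L_{\mathrm{loc}}D_0^2\) whenever the corresponding candidate step was smaller. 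For example, if \(\alpha_1<1/L_{\mathrm{loc}}\), then \(\sigma^2/(NKL_{\mathrm{loc}})<L_{\mathrm{loc}}D_0^2/R\le L_{\mathrm{loc}}D_0^2\).

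The constant \(200\) should follow from tracking the factors: \(40\), \(3\sqrt2\le5\), and the factor \(2\) from \(1/K+1/B\).
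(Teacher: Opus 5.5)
Your derivation of the $\mathcal T$ bound is correct and matches the paper's argument. The paper takes $\alpha^{-1}=a+b+c$, a sum of the reciprocal candidates rather than your minimum of step sizes, which only changes constants. It also bounds the logarithmic term through the active branch of $c$, which is what your monotonicity argument does.

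The genuine gap is the cap $L_{\mathrm{loc}}D_0^2$. Proposition~\ref{prop:fixed-stepsize} cannot deliver it for any admissible $\alpha$, because by AM--GM
\[
\frac{D_0^2}{\alpha R}+\frac{2\alpha\sigma^2}{NK}\ge 2\sqrt2\,\frac{\sigma D_0}{\sqrt{NKR}},
\]
and this exceeds $L_{\mathrm{loc}}D_0^2$ once $\sigma\gtrsim L_{\mathrm{loc}}D_0\sqrt{NKR}$. Your floor $\alpha=1/L_{\mathrm{loc}}$ fails concretely. The condition $\alpha_1<1/L_{\mathrm{loc}}$ is equivalent to $\sigma^2/(NKL_{\mathrm{loc}})>L_{\mathrm{loc}}D_0^2/R$, the reverse of the inequality you wrote. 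So the noise term at the floor is not controlled by $L_{\mathrm{loc}}D_0^2$. The floor can also violate $\alpha\delta\le1/(3\sqrt2)$ when $\delta>L_{\mathrm{loc}}/(3\sqrt2)$. Finally, remarking that the bound matters only when $\mathcal T\le L_{\mathrm{loc}}D_0^2$ is not a proof: the theorem asserts $\Delta_R\le200L_{\mathrm{loc}}D_0^2$ in the complementary regime.

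The missing ingredient is a separate stability argument with a much smaller stepsize. This is what \eqref{eq:app-complete-tuning} switches to when $\mathcal T>L_{\mathrm{loc}}D_0^2$, namely $\eta_\ell=D_0/(16KR(L_{\mathrm{loc}}D_0+\sigma))$.

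Lemma~\ref{lem:combined-initial-cap} then bounds the root-mean-square distance $m_t$ to $x^\star$ of every local and server iterate directly, without the $D_0^2/(\alpha R)$ term.
\begin{itemize}
\item The corrected gradient satisfies $\|\nabla f_i(y)-\nabla f_i(a)+\nabla f(a)\|\le L_{\mathrm{loc}}\|y-x^\star\|+2L_{\mathrm{loc}}\|a-x^\star\|$. The possibly large individual gradients $\nabla f_i(x^\star)$ cancel here.
\item The local noise and the control error have $L^2$ norms at most $\sigma$ and $\sigma/\sqrt B$.
\item Minkowski's inequality then gives $m_{t+1}\le(1+3\eta_\ell L_{\mathrm{loc}})m_t+2\eta_\ell\sigma$, hence $m_{KR}\le\frac{11}{8}D_0$.
\end{itemize}
Averaging the server iterates preserves this bound, and smoothness at the minimizer gives $\Delta_R\le\frac{L_{\mathrm{loc}}}{2}m_{KR}^2\le L_{\mathrm{loc}}D_0^2$. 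The final stepsize is therefore a case split on whether $\mathcal T\le L_{\mathrm{loc}}D_0^2$, not a single tuned formula with a floor.
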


The first term in the minimum caps the guarantee at the initial scale
\(200L_{\mathrm{loc}}D_0^2\), and \(\mathcal T\) decreases with more
computation. Within \(\mathcal T\), the first part is the only term that
remains with exact gradients (\(\sigma=0\)), and there local steps reduce
the smoothness part until \(\delta\) becomes the limiting factor. The
second part is the statistical error, reduced by averaging over the \(N\)
clients. The third part is the error from the local stochastic
trajectories, which client averaging does not reduce. Its two branches
come from two complementary bounds on this error, and the smaller one
applies.

\paragraph{Deterministic case.}
For \(\sigma=0\), choose the largest \(\alpha\) admitted by
Proposition~\ref{prop:fixed-stepsize}, namely
\(\alpha^{-1}=\max\{L_{\mathrm{loc}}/K,3\sqrt2\delta\}\).
Proposition~\ref{prop:fixed-stepsize} then gives
\begin{equation}
    \Delta_R
    =\mathcal{O}\!\left(
        \left(\frac{L_{\mathrm{loc}}}{K}+\delta\right)
        \frac{D_0^2}{R}
    \right).
    \label{eq:deterministic-rate}
\end{equation}
For \(0<\delta\le L_{\mathrm{loc}}\) and
\(0<\varepsilon\le\delta D_0^2\), choosing
\(K=\Theta(L_{\mathrm{loc}}/\delta)\) gives
\(\mathcal O(\delta D_0^2/\varepsilon)\) communication rounds and
\(\mathcal O(L_{\mathrm{loc}}D_0^2/\varepsilon)\) exact-gradient
evaluations per client, including those used to compute control variates. These rates match
FedRed-GD's expected communication and per-client gradient complexities
\citep{jiang2024fedred}.

\begin{table*}[t]
\centering
\caption{
    Comparison of communication and computation complexities of stochastic federated learning methods
    under full participation for achieving
    \(\E[f(\widehat x)]-f^\star\le\varepsilon\), where \(0<\varepsilon\le\delta D_0^2\).
    We assume that the local objectives are convex
    and \(L\)-Lipschitz smooth (\Cref{ass:convex-smooth}), and satisfy \(\delta\)-averaged Hessian similarity (\Cref{ass:similarity}), \(0<\delta \leq L\).
    We report upper bounds for the non-accelerated
    implementations with appropriately chosen number of local steps and minibatch sizes (see Appendix~\ref{app:convex-comparison} for details).
    Computation is measured by the number of stochastic gradient
    evaluations per client across all communication rounds, including
    initialization. Constants and logarithmic factors are suppressed.
    Our analysis improves SCAFFOLD's communication complexity from
    order $LD_0^2/\varepsilon$ to
    $\delta D_0^2/\varepsilon$, matching that of regularized methods
    without requiring exact-gradient evaluations.
}
\label{tab:convex-stochastic-comparison}
\smallskip
\small
\setlength{\tabcolsep}{5pt}
\renewcommand{\arraystretch}{1.3}
\begin{tabular*}{\textwidth}{@{\extracolsep{\fill}}lccc@{}}
\toprule
Method & \shortstack{Communication\\rounds}
 & \shortstack{Stochastic gradient\\evaluations\textsuperscript{a}}
 & \shortstack{Exact gradient\\evaluations\textsuperscript{b}} \\
\midrule
\shortstack[l]{Minibatch SGD\\\citep{woodworth2020minibatch}}
 & \(\dfrac{LD_0^2}{\varepsilon}\)
 & \(\dfrac{LD_0^2}{\varepsilon}+\dfrac{\sigma^2D_0^2}{N\varepsilon^2}\)
 & \(0\) \\
\addlinespace[2pt]
\shortstack[l]{SCAFFOLD, Option II\\\citep[Theorem~III]{karimireddy2020scaffold}}
 & \(\dfrac{LD_0^2}{\varepsilon}\)
 & \(\dfrac{LD_0^2}{\varepsilon}+\dfrac{\sigma^2D_0^2}{N\varepsilon^2}\)
 & \(0\) \\
\addlinespace[2pt]
\shortstack[l]{FedRed-SGD\\\citep[Theorem~9]{jiang2024fedred}}
 & \(\dfrac{\delta D_0^2}{\varepsilon}\)
 & \(\dfrac{LD_0^2}{\varepsilon}+\dfrac{\sigma^2D_0^2}{\varepsilon^2}\)
 & \(\dfrac{\delta D_0^2}{\varepsilon}\) \\
\addlinespace[2pt]
\shortstack[l]{S-DANE with SGD\\\citep[Corollary~21]{jiang2024sdane}}
 & \(\dfrac{\delta D_0^2}{\varepsilon}\)
 & \(\dfrac{LD_0^2}{\varepsilon}+\dfrac{L\sigma^2D_0^2}{\delta\varepsilon^2}\)
 & \(\dfrac{\delta D_0^2}{\varepsilon}\) \\
\addlinespace[2pt]
\textbf{This work (\(B=K\))}
 & \(\dfrac{\delta D_0^2}{\varepsilon}\)
 & \(\dfrac{LD_0^2}{\varepsilon}+\dfrac{\sigma^2D_0^2}{\varepsilon^2}\)
 & \(0\) \\
\bottomrule
\end{tabular*}
\par\smallskip
\begin{minipage}{\textwidth}
\footnotesize
\textsuperscript{a} S-DANE's stochastic-gradient bound hides a factor
\(\log(e+L/\delta)\) and our bound hides it only in the noise term. \\
\textsuperscript{b} Exact-gradient evaluations are counted separately from stochastic
samples, with no assumption on their relative cost. FedRed's
communication and exact-gradient counts are in expectation.
\end{minipage}
\end{table*}

\paragraph{Stochastic communication and computation.}
Table~\ref{tab:convex-stochastic-comparison} compares the communication
and computation complexities required to achieve a prescribed accuracy. For \(0<\delta\le L_{\mathrm{loc}}\)
and \(0<\varepsilon\le\delta D_0^2\), take
\(\alpha=c/\delta\) with a sufficiently small universal constant \(c>0\).
Proposition~\ref{prop:fixed-stepsize} shows that
\begin{equation}
\begin{aligned}
 R&=\Theta\!\left(\frac{\delta D_0^2}{\varepsilon}\right),\\
 B=K&=\Theta\!\left(\frac{L_{\mathrm{loc}}}{\delta}
 +\frac{\sigma^2}{\delta\varepsilon}
       \log\!\left(e+\frac{L_{\mathrm{loc}}}{\delta}\right)\right)
\end{aligned}
\label{eq:convex-stochastic-complexity-tuning}
\end{equation}
suffice to reach \(\Delta_R\le\varepsilon\) when these constants are sufficiently large.
Including control initialization and omitting the control update after the last round,
the total number of stochastic gradients per client is \(2KR\), i.e.
\begin{equation}
 \mathcal O\!\left(\frac{L_{\mathrm{loc}}D_0^2}{\varepsilon}
 +\frac{\sigma^2D_0^2}{\varepsilon^2}
       \log\!\left(e+\frac{L_{\mathrm{loc}}}{\delta}\right)\right).
\label{eq:convex-stochastic-computation}
\end{equation}

Thus, stochastic corrected steps match FedRed-SGD's communication
complexity and, up to a logarithmic factor, its stochastic-gradient
complexity, without requiring exact-gradient evaluations. The
communication complexity is independent of the noise level, at the
cost of increasing the number of local steps as the noise increases.
Compared with minibatch SGD, our communication bound improves by a
factor of order \(L_{\mathrm{loc}}/\delta\), while minibatch SGD keeps a
\(1/N\) advantage in the statistical computation term.

\paragraph{Proof overview.}
The two stepsize conditions of Proposition~\ref{prop:fixed-stepsize}
carry the main idea. Local smoothness constrains the individual stepsize
through \(\eta_\ell L_{\mathrm{loc}}\le1\), while similarity constrains
the effective stepsize of a round through \(\alpha\delta\le c\). Together
they produce the coefficient \(L_{\mathrm{loc}}/K+\delta\). The analysis
of \citet{karimireddy2020scaffold} instead bounds how far clients move
from the starting server point, which forces
\(\alpha=\mathcal O(1/L_{\mathrm{loc}})\). Clients can move far from this
point while staying close to one another, and we treat this movement as
optimization progress.

With exact gradients, each client's updates are gradient descent on the
convex corrected objective
\(\widetilde f_i(y)=f_i(y)-
\ip{\grad f_i(x^{r-2})-\grad f(x^{r-2})}{y}\), so the standard bound at
the endpoint scales as \(1/\alpha\) under
\(\eta_\ell L_{\mathrm{loc}}\le1\) alone. Following DANE+
\citep{jiang2024fedred}, but without proximal regularization, we compare
the clients' corrected objective values with the global objective at the
averaged endpoint. Similarity bounds the comparison error by the
disagreement between client endpoints, which the negative
squared-distance terms of the local bounds absorb. What remains is a
term in the server movement over two rounds, since the corrections are
evaluated at \(x^{r-2}\), and \(\alpha\delta\le c\) controls it. A
separate first-round estimate avoids an error term of order
\(L_{\mathrm{loc}}D_0^2/R\).

For the stochastic terms, a suffix-average argument \citep{shamir2013}
gives the factor \(\log(1+\alpha L_{\mathrm{loc}})\) in the local noise
term. When \(\alpha L_{\mathrm{loc}}\le1\), averaging the centered
control errors before comparing endpoints gives the sharper order
\(L_{\mathrm{loc}}\alpha^2\sigma^2/B\). Bounding how one client's
control batch changes the averaged update preserves the \(1/N\)
reduction in the shared-control term. Together, these estimates allow us to set
\(B=K\).

\subsubsection{Optimality of Theorem \ref{thm:scaffold-convex}}
\label{sec:constant-step-optimality}
\begin{theorem}[Matching lower bound]
\label{thm:constant-step-optimality}
Let \(\mathcal P\) be the class of finite-dimensional instances with \(N\ge2\) satisfying
Assumptions~\ref{ass:convex-smooth}, \ref{ass:similarity},
and~\ref{ass:stochastic-oracle}, with parameter upper bounds \(L_{\mathrm{loc}}>0, \sigma>0, 0\le\delta\le L_{\mathrm{loc}}/2\), and initial point \(x^0\) with distance \(D_0>0\).
Denote the best worst-case expected gap over constant stepsizes by
\begin{equation}
    \mathcal E_*\coloneqq
    \inf_{\eta_\ell>0}\sup_{\mathcal I\in\mathcal P}
    \bigl(\E_{\mathcal I}[f(\bar x^R)]-f^\star\bigr),
    \label{eq:lb-minimax}
\end{equation}
where $\bar x^R$ is the output of Algorithm~\ref{alg:scaffold-option-i} with \(B=K\ge2\), \(R\ge1\), and constant stepsize chosen before training. Then
\begin{equation}
    \begin{aligned}
        c\min\{L_{\mathrm{loc}}D_0^2,\mathcal T\}
        \le\mathcal E_*\le200\min\{L_{\mathrm{loc}}D_0^2,\mathcal T\}
    \end{aligned}
    \label{eq:lb-optimality}
\end{equation}
for a universal constant \(c>0\) and \(\mathcal T\) is defined in \eqref{eq:tuned-rate}.
\end{theorem}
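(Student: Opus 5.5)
The upper bound in \eqref{eq:lb-optimality} comes straight from Theorem~\ref{thm:scaffold-convex}. The tuned stepsize \eqref{eq:app-complete-tuning} is one admissible constant stepsize, so the infimum over \(\eta_\ell\) is at most the uniform guarantee \(200\min\{L_{\mathrm{loc}}D_0^2,\mathcal T\}\) over \(\mathcal P\). The substance is the lower bound. We fix an arbitrary \(\eta_\ell>0\) and exhibit, for each of the terms of \(\mathcal T\) (and for the cap \(L_{\mathrm{loc}}D_0^2\)), an instance in \(\mathcal P\) on which the expected gap is at least a constant times that term or the cap, whichever is smaller. Since \(\min\{a,b_1+b_2+b_3+b_4\}\le 4\max_j\min\{a,b_j\}\), it suffices to treat each term separately. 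We take the supremum over a direct sum of the hard instances, placed in orthogonal coordinates with the distance budget split as \(D_0/2\) each. This makes the bound hold for a single instance, although it is only needed for the supremum.

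We build the instances in the order below, always as functions of \(\alpha=K\eta_\ell\).

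\textbf{Large steps.} If \(\eta_\ell L_{\mathrm{loc}}>1\) (more precisely \(\ge 2\)), we take a one-dimensional quadratic \(\frac{L_{\mathrm{loc}}}{2}x^2\) on identical clients. Then \(|1-\eta_\ell L_{\mathrm{loc}}|\ge1\), and the iterates do not contract. Plain divergence or oscillation gives a gap of order \(L_{\mathrm{loc}}D_0^2\), once we check that the average of the iterates does not cancel; for this we use \(1-\eta L\le -1\) and, if needed, a shifted minimizer. From now on \(\eta_\ell\le 2/L_{\mathrm{loc}}\).

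\textbf{Optimization term.} We use identical deterministic clients with \(f=\frac{\mu}{2}x^2\), \(\mu\) small, \(\mu\le L_{\mathrm{loc}}\). This gives \(x^r=(1-\eta_\ell\mu)^{Kr}x^0\), and choosing \(\mu\asymp 1/(\alpha R)\) yields a gap of order \(\min\{L_{\mathrm{loc}},1/(\alpha R)\}D_0^2/R\cdot R\), i.e.\ \(\gtrsim D_0^2/(\alpha R)\). For the \(\delta\) branch we need heterogeneity that limits \(\alpha\). We take two clients with \(f_{1,2}=\frac{L}{2}x^2\pm\frac{\delta}{2}\,x^\top A x\), where \(A\) is a rotation-type (skew-symmetric-like) component giving Hessian difference \(\delta\). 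Because the controls are stale by one round, the server recursion is \(x^r=Mx^{r-1}+\alpha\,\delta\,G(x^{r-1}-x^{r-2})\) up to higher-order terms. When \(\alpha\delta\) exceeds a constant, this two-step linear recursion has spectral radius \(>1\) and the gap is \(\gtrsim L_{\mathrm{loc}}D_0^2\) or at least \(\delta D_0^2/R\). Otherwise \(\alpha\lesssim1/\delta\), and the quadratic instance gives \(D_0^2/(\alpha R)\ge\max\{L_{\mathrm{loc}}/K,\delta\}D_0^2/R\) up to constants. \emph{This is the step I expect to be the main obstacle}: we must design the heterogeneous instance so that staleness genuinely amplifies for \(\alpha\delta\gg1\) while \(f_i\) stay convex and \(L_{\mathrm{loc}}\)-smooth with \(\delta\le L_{\mathrm{loc}}/2\). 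I would first try a two-dimensional quadratic pair with \(f_i\) convex, solve the characteristic polynomial of the resulting two-step recursion explicitly, and check that a root leaves the unit disk.

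\textbf{Noise terms.} The statistical term \(\sigma D_0/\sqrt{NKR}\) follows from a standard two-point (Le Cam) argument. We use linear functions \(\pm\epsilon x\) on a bounded-curvature quadratic with Gaussian noise, and the algorithm sees \(\Theta(NKR)\) samples in total. More concretely, we use \(f=\frac{\mu}{2}x^2\) with additive noise, and compute the stationary variance \(\asymp\alpha\sigma^2/(NK)\) of the server iterate, balanced against the bias \(D_0^2/(\alpha R)\). The local-trajectory term follows the Local SGD iterate-bias constructions \citep{woodworth2020localsgd,glasgow2022}. We use a one-dimensional convex function with asymmetric curvature (e.g.\ Huber-type or piecewise quadratic with curvatures \(L_{\mathrm{loc}}\) and \(\mu\)) and noise on identical clients. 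The noise does not average out within a client's \(K\) steps, which creates a bias in \(\E[y_{i,K}]\) of order \(\min\{L_{\mathrm{loc}}\eta_\ell^2 K\sigma^2,\ \eta_\ell\sigma\sqrt{K}\}\), and the SCAFFOLD controls cancel only gradient means, not this curvature-induced bias. Combining with the \(D_0^2/(\alpha R)\) bias and minimizing over \(\alpha\) recovers the two branches \((L_{\mathrm{loc}}\sigma^2D_0^4/(KR^2))^{1/3}\) and \(\sigma D_0/\sqrt{KR}\); the factor \(\sqrt\kappa\) is absorbed because the lower bound only needs the smaller branch up to a constant. Finally, we check that every instance satisfies \(\delta\le L_{\mathrm{loc}}/2\), \(\sigma\)-bounded variance and distance \(D_0\).
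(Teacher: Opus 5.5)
Your upper bound and the reduction to one term at a time are fine, and a Le Cam argument for $\sigma D_0/\sqrt{NKR}$ is a legitimate algorithm-independent substitute for the paper's explicit noisy quadratic, since the method uses at most $3NKR$ samples. Two of your hard instances, however, do not work as proposed.

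\textbf{Similarity term.} With skew-symmetric $A$ the form $x^\top Ax$ vanishes identically, so there is no heterogeneity. With a symmetric reflection $A$, the Hessians $L_{\mathrm{loc}}I\pm\delta A$ violate $L_{\mathrm{loc}}$-smoothness. More importantly, they are uniformly well conditioned. For $\eta_\ell L_{\mathrm{loc}}\le1$ and $\alpha\delta\gg1$, each client essentially minimizes its corrected quadratic, and the stale server map tends to $x^r=-\frac{\delta^2}{L_{\mathrm{loc}}^2-\delta^2}x^{r-2}$. That is a contraction, so no root leaves the unit disk.

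Amplification needs a flat direction on every client and mean curvature of order $\delta$, not $L_{\mathrm{loc}}$. The paper uses Hessians $2hP_i$ built from tight-frame rank-one projectors, which covers every $N\ge2$, with $h\in[\delta/6,\delta/3]$. The stale correction then drives each client a distance proportional to $K\eta_\ell$ along its flat direction, which gives $z^r=B_K(v)z^{r-1}-C_K(v)z^{r-2}$ with $C_K\approx v/2$.

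Even then, a root outside the unit disk does not bound $f(\bar x^R)$, because the roots are complex. With the curvature fixed independently of $\eta_\ell$, there are stepsizes with $\alpha\delta\gg1$ at which the growing oscillatory part of $\sum_{r\le R}Q_r(v)$ vanishes. This leaves $\sum_{r\le R}Q_r\approx-2$ and a contribution of only order $\delta D_0^2/R^2$. Your slow quadratic gives just $D_0^2/(\alpha R)\ll\delta D_0^2/R$ at those stepsizes. The paper therefore tunes the curvature to the stepsize: $v\in[a,2a]$ with $a=K\eta_\ell\delta/6$, using a phase variation $\gtrsim a^{-1/2}$ over that interval, plus a compactness argument for $R\le63$.

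\textbf{Logarithmic factor.} Your claim that $\sqrt\kappa$ is absorbed is false. Let $u=L_{\mathrm{loc}}D_0\sqrt K/(\sigma\sqrt R)$. The two branches of the third term of $\mathcal T$ are in ratio $u^{1/3}$, while $\kappa=\log(e+u)$. Take $\delta=0$, large $N$, and $1\ll u\ll K$. Then $L_{\mathrm{loc}}D_0^2/(KR)=(u/K)\,\sigma D_0/\sqrt{KR}$, so $\mathcal T\asymp\sigma D_0\sqrt{\log u}/\sqrt{KR}$. This exceeds what your construction gives by an unbounded factor.

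A single asymmetric coordinate contributes order $\eta_\ell\sigma^2$. After balancing with $D_0^2/(\alpha R)$ this yields only $\min\{(L_{\mathrm{loc}}\sigma^2D_0^4/(KR^2))^{1/3},\sigma D_0/\sqrt{KR}\}$. The missing idea is a multiscale instance:
\begin{itemize}
\item use $K$ coordinates with curvatures $\lambda_q=\min\{L_{\mathrm{loc}},1/(2\eta_\ell q)\}$;
\item perturb coordinate $q$ only at local step $K-q$, realized by fixed sampled losses through disjoint cutoff bumps at the finitely many query points;
\item each coordinate then contributes order $\eta_\ell\sigma^2/q$, and the harmonic sum gives $\eta_\ell\sigma^2\log(1+\min\{\eta_\ell L_{\mathrm{loc}}K,K\})$.
\end{itemize}
You also need to show that the centered minibatch control errors, which enter a nonlinear map, cannot cancel this bias, and that the bias persists in $\bar x^R$. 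The paper gets both from joint concavity of the expected local endpoint in (start, correction), monotonicity, and Jensen.

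Finally, your large-step threshold $\eta_\ell L_{\mathrm{loc}}\ge2$ admits exact cancellation: $\eta_\ell L_{\mathrm{loc}}=2$ with $K$ odd and $R$ even gives $\bar x^R=x^\star$. This is why the paper uses the threshold $>3$ and covers $\eta_\ell L_{\mathrm{loc}}\le3$ with the other instances.
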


The restriction \(K\ge2\) excludes the degenerate case \(K=1\), in which
the control corrections cancel after server averaging and the method
reduces to minibatch SGD.

\paragraph{Scope of optimality.}
The logarithmic lower-bound construction uses differentiable sampled
losses whose expectations are convex and $L_{\mathrm{loc}}$-smooth.
The sampled losses themselves need not be convex or
$L_{\mathrm{loc}}$-smooth.
Notably, it is consistent with Assumption~\ref{ass:stochastic-oracle}.

\paragraph{Lower-bound constructions.}
We combine three families of instances, each fixed before sampling.
Complete proofs are in Appendix~\ref{app:lower-bound}.

Flat and steep quadratic coordinates give the \(L_{\mathrm{loc}}D_0^2/(KR)\) term,
following constructions in
\citet[Section~5]{woodworth2020localsgd} and
\citet[Appendix~B.1]{glasgow2022}.
To obtain the similarity term, we construct heterogeneous quadratics
whose previous-round controls produce a second-order recurrence.
We choose the parameters so that averaging the server iterates
does not cancel the error.

Identical noisy quadratics give a statistical lower bound of order
\(\sigma D_0/\sqrt{NKR}\), also present in
\citet[Theorem~3.1]{glasgow2022}.
Because the updates are linear, the
minibatch-control errors cancel under client averaging, allowing us to
directly compute the noise contribution to the averaged server output.

For the local stochastic term, we adapt the asymmetric
piecewise-quadratic construction of \citet{woodworth2020localsgd},
following \citet{glasgow2022}.
Unequal contraction rates on opposite sides of the minimizer transform
centered noise into a mean displacement that persists under client averaging.
Using different coordinate curvatures, we construct fixed
sampled losses whose gradients generate perturbations at different local
steps, which produces the logarithmic term. The
small-\(\alpha L_{\mathrm{loc}}\) regime gives the cube-root term.
Concavity of the expected local endpoint shows that centered
minibatch-control errors do not reduce this bias. We establish
the lower bound for the averaged server output using fixed sampled
losses for both local updates and control variates.

\subsection{Nonconvex objectives}
\label{sec:nonconvex}

We next analyze nonconvex objectives that are bounded below, \(f(x)\ge f_{\inf}>-\infty\) with
\(F_0\coloneqq f(x^0)-f_{\inf}>0\). We assume global smoothness (\Cref{ass:global-smoothness}) and
uniform Hessian similarity (\Cref{ass:uniform-similarity}), which keeps the roles of the constants \(L_g\)
and \(\delta_i\) separate. We study SCAFFOLD with client selection
(Algorithm~\ref{alg:scaffold-nonconvex})
and measure stationarity at a uniformly sampled local iterate.

\begin{algorithm}[t]
\caption{SCAFFOLD with client selection}
\label{alg:scaffold-nonconvex}
\begin{algorithmic}[1]
\Require Model \(x^0\), rounds \(R\), local steps \(K\), and nonempty set \(S\subseteq[N]\)
\Require Stepsize \(\eta_\ell\) and control batch size \(B\)
\State Each client draws \(B\) samples \(\xi_{i,b}^{0,c}\sim\mathcal D_i\)
\State Each client sets \(c_i^0\gets B^{-1}\sum_{b=1}^B\grad f_i(x^0;\xi_{i,b}^{0,c})\)
\State Server sets \(c^0\gets N^{-1}\sum_{i=1}^N c_i^0\)
\For{rounds \(r=1,\ldots,R\)}
    \State Select \(i_r\in S\) independently of the batches forming \((c_i^{r-1})_{i=1}^N\), conditional on the preceding history
    \State Server broadcasts \(x^{r-1}\) and \(i_r\), and sends \(c^{r-1}\) to client \(i_r\)
    \For{each client \(i\in[N]\) in parallel}
        \If{\(i=i_r\)}
            \State \(y_{i,0}^r\gets x^{r-1}\)
            \For{\(k=0,\ldots,K-1\)}
                \State Draw \(\xi_{i,k}^r\sim\mathcal D_i\)
                \State \(d_{i,k}^r\gets\grad f_i(y_{i,k}^r;\xi_{i,k}^r)-c_i^{r-1}+c^{r-1}\)
                \State \(y_{i,k+1}^r\gets y_{i,k}^r-\eta_\ell d_{i,k}^r\)
            \EndFor
            \State Independently sample \(\widetilde y^r\) uniformly from \(\{y_{i,k}^r:0\le k<K\}\)
        \EndIf
        \State Draw \(B\) new samples \(\xi_{i,b}^{r,c}\sim\mathcal D_i\)
        \State Set \(c_i^r\gets B^{-1}\sum_{b=1}^B\grad f_i(x^{r-1};\xi_{i,b}^{r,c})\)
        \State Send \(c_i^r\) to the server, together with \((y_{i,K}^r,\widetilde y^r)\) if \(i=i_r\)
    \EndFor
    \State \(x^r\gets y_{i_r,K}^r\),\quad \(c^r\gets N^{-1}\sum_{i=1}^N c_i^r\)
\EndFor
\State \Return \(\widehat x\), sampled independently and uniformly from \(\{\widetilde y^1,\ldots,\widetilde y^R\}\)
\end{algorithmic}
\end{algorithm}

\paragraph{SCAFFOLD with client selection.}
Algorithm~\ref{alg:scaffold-nonconvex} fixes a nonempty set
\(S\subseteq[N]\) of clients eligible to perform local optimization and
lets \(\bar\delta_S\coloneqq\max_{i\in S}\delta_i\). A singleton \(S\)
uses the same client throughout, while \(S=[N]\) allows any client to be
selected. In each round one client \(i_r\in S\) performs the \(K\)
corrected stochastic steps from \(x^{r-1}\), and the server takes its
endpoint. The choice may depend on preceding iterates and previously
used controls, but, conditional on that history, must be independent of
the batches used to form the controls \((c_i^{r-1})_{i=1}^N\), including
at initialization. This ensures that the selected client's
control-estimation error remains conditionally centered. Although only the selected client performs local optimization, all
clients compute and send their next control variates in each round.
Thus, with minibatch size $B=K$, each round uses $(N+1)K$
stochastic gradient evaluations.

Algorithm~\ref{alg:scaffold-nonconvex} outputs a point $\widehat x$
sampled uniformly from the $RK$ local iterates
$\{y_{i_r,k}^r:1\le r\le R,\ 0\le k<K\}$. Its expected squared gradient
norm is therefore
\begin{equation}
    \E\norm{\grad f(\widehat x)}^2
    =\frac1{RK}\sum_{r=1}^R\sum_{k=0}^{K-1}
          \E\norm{\grad f(y_{i_r,k}^r)}^2,
    \label{eq:nc-output}
\end{equation}
which the following results bound.

\begin{proposition}[Fixed stepsize, nonconvex]
\label{prop:nc-fixed-stepsize}
Suppose Assumptions~\ref{ass:global-smoothness},
\ref{ass:uniform-similarity}, and~\ref{ass:stochastic-oracle} hold. The output of
Algorithm~\ref{alg:scaffold-nonconvex} with \(B=K\), \(R,K\ge1\), and a
constant stepsize satisfying \(\eta_\ell L_g\le1/4\) and
\(\eta_\ell K\bar\delta_S\le1/8\) satisfies
\begin{equation}
    \E\norm{\grad f(\widehat x)}^2
    \le\frac{2F_0}{\eta_\ell RK}
       +32\bigl(L_g\eta_\ell+\bar\delta_S^2K\eta_\ell^2\bigr)\sigma^2.
    \label{eq:nc-fixed-stepsize-rate}
\end{equation}
\end{proposition}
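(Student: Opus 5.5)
The plan is a one-client descent argument on the global objective \(f\) along the selected client's trajectory, followed by a telescoping sum over rounds. Within round \(r\) write the update of the selected client \(i=i_r\) as
\[
d_{i,k}^r=\grad f(y_{i,k}^r)+\bigl[\grad h_i(y_{i,k}^r)-\grad h_i(x^{r-2})\bigr]+\zeta_{i,k}^r+e^r,
\]
where \(\zeta_{i,k}^r\) is the centered local noise. The term \(e^r\coloneqq(c^{r-1}-\grad f(x^{r-2}))-(c_i^{r-1}-\grad f_i(x^{r-2}))\) is the control-estimation error. It is conditionally centered because \(i_r\) is chosen independently of the control batches, and its conditional second moment is at most \(4\sigma^2/B=4\sigma^2/K\).

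By \Cref{ass:uniform-similarity}, the drift term is bounded by \(\bar\delta_S\norm{y_{i,k}^r-x^{r-2}}\le\bar\delta_S(\norm{y_{i,k}^r-x^{r-1}}+\norm{x^{r-1}-x^{r-2}})\). Global \(L_g\)-smoothness (\Cref{ass:global-smoothness}) then gives the per-step descent inequality
\[
\E f(y_{k+1})\le \E f(y_k)-\tfrac{\eta_\ell}{2}\E\norm{\grad f(y_k)}^2+\eta_\ell\,\E\norm{\text{drift}+e^r}^2\text{-type terms}+\tfrac{L_g\eta_\ell^2}{2}\E\norm{d_k}^2 .
\]
The cross term \(\ip{\grad f(y_k)}{\zeta_k}\) vanishes in expectation. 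The cross term with \(e^r\) does not vanish, because \(y_k\) depends on \(e^r\). I would handle it by Young's inequality, \(\ip{\grad f(y_k)}{e^r}\le\tfrac14\norm{\grad f(y_k)}^2+\norm{e^r}^2\). This costs \(\eta_\ell\sigma^2/K\) per step, i.e.\ \(\eta_\ell\sigma^2\) per round, which is too big. I therefore expect to need a sharper trick: write \(\grad f(y_k)=\grad f(x^{r-1})+(\grad f(y_k)-\grad f(x^{r-1}))\). The first part is independent of \(e^r\), so that cross term is zero in expectation. The second part is at most \(L_g\norm{y_k-x^{r-1}}\), which yields a term like \(L_g\eta_\ell\norm{y_k-x^{r-1}}\norm{e^r}\) that is absorbed into the deviation bound.

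Next comes a deviation lemma. I will bound \(\sum_k\E\norm{y_{i,k}^r-x^{r-1}}^2\) by unrolling the recursion. The conditions \(\eta_\ell L_g\le1/4\) and \(\eta_\ell K\bar\delta_S\le1/8\) make the growth factor \((1+\eta_\ell L_g+\eta_\ell\bar\delta_S)^K\) a constant. The resulting bound is
\[
\textstyle\sum_k\E\norm{y_k-x^{r-1}}^2\lesssim \eta_\ell^2K^2\sum_k\E\norm{\grad f(y_k)}^2+\eta_\ell^2K^2\sigma^2+\eta_\ell^2K^3\bar\delta_S^2\norm{x^{r-1}-x^{r-2}}^2 .
\]
The noise contributes \(\eta_\ell^2K\sigma^2\) from the local steps and \(\eta_\ell^2K^2\sigma^2/K\) from \(e^r\). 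The server movement \(\norm{x^r-x^{r-1}}^2=\eta_\ell^2\norm{\sum_kd_k}^2\) admits the same kind of bound.

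The main obstacle is the staleness term \(\norm{x^{r-1}-x^{r-2}}^2\). My plan is to use a potential \(\Phi_r=f(x^r)+\lambda\norm{x^r-x^{r-1}}^2\). The drift contributes \(\eta_\ell K\bar\delta_S^2\norm{x^{r-1}-x^{r-2}}^2\) to round \(r\). The previous round's movement bound provides a compensating \(-\)term once \(\lambda\asymp \bar\delta_S^2\eta_\ell K\) and \(\eta_\ell K\bar\delta_S\le1/8\). The convention \(x^{-1}=x^0\) kills the first staleness term.

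Collecting everything, each round gives
\[
\E\Phi_r\le\E\Phi_{r-1}-\tfrac{\eta_\ell}{4}\textstyle\sum_k\E\norm{\grad f(y_k)}^2+C\eta_\ell K(L_g\eta_\ell+\bar\delta_S^2K\eta_\ell^2)\sigma^2 .
\]
The \(L_g\eta_\ell^2\sigma^2\) per step comes from the smoothness quadratic term. The \(\bar\delta_S^2\eta_\ell^3K\sigma^2\) per step comes from the drift times the deviation noise, together with \(L_g\eta_\ell\times\) control noise. Telescoping over \(R\) rounds, using \(\Phi_0-\Phi_R\le F_0\), and dividing by \(\eta_\ell RK/4\) yields \eqref{eq:nc-fixed-stepsize-rate} via \eqref{eq:nc-output}. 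The constants are then tracked to reach \(2\) and \(32\), possibly using \(\tfrac{\eta_\ell}{2}\) rather than \(\tfrac{\eta_\ell}{4}\) in the descent step.
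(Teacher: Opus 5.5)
Your skeleton (center the control-error cross term at \(x^{r-1}\), unroll a path bound from the stale anchor \(x^{r-2}\), and telescope the potential \(f(x^r)+\lambda\norm{x^r-x^{r-1}}^2\) with \(\lambda\asymp\eta_\ell\bar\delta_S^2K\)) is the paper's argument for the regime \(\eta_\ell KL_g\le1\). As written, however, it does not cover all admissible stepsizes.

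After centering you are left with \(\eta_\ell L_g\E[\norm{y_{i_r,k}^r-x^{r-1}}\,\norm{e^r}]\). The displacement \(y_{i_r,k}^r-x^{r-1}\) contains \(-\eta_\ell\sum_{t<k}\) of the mean directions. You must decouple that part from \(\norm{e^r}\) by Young's inequality against the available term \(-c\,\eta_\ell\sum_k\E\norm{\cdot}^2\). This leaves a residual of order \(L_g^2\eta_\ell^3K^3\sigma^2/B\) per round. In the final bound with \(B=K\) this is \(L_g^2\eta_\ell^2K^2\sigma^2/B=(\eta_\ell KL_g)\cdot L_g\eta_\ell\sigma^2\). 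Any other weighting in Young's inequality instead puts a coefficient of order \(L_g\eta_\ell^2K\) on \(\sum_k\E\norm{\grad f(y_{i_r,k}^r)}^2\), which cannot be absorbed either.

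Either way, this is of the right order only when \(\eta_\ell KL_g\lesssim1\). The hypotheses \(\eta_\ell L_g\le1/4\) and \(\eta_\ell K\bar\delta_S\le1/8\) allow \(\eta_\ell KL_g\) up to \(K/4\) (e.g.\ when \(\bar\delta_S=0\)). The missing idea is a case split on \(\eta_\ell KL_g\). When \(\eta_\ell KL_g\ge1\), the direct Young bound you dismissed as too big is exactly what is needed: its total cost \(\lesssim\sigma^2/B\) is then at most \(L_g\eta_\ell K\sigma^2/B\). The paper implements this by keeping \(q^r\) inside the conditional mean direction \(\mu_k^r+q^r\). It then reruns the path, endpoint, and potential argument with potential coefficient \(3\eta_\ell\bar\delta_S^2K\) instead of \(2\eta_\ell\bar\delta_S^2K\). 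Because the stepsize is constant, the same case applies in every round, so the telescoping stays consistent.

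Three smaller corrections follow.

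First, the claim that \((1+\eta_\ell L_g+\eta_\ell\bar\delta_S)^K\) is a constant fails for the same reason: \(\eta_\ell L_g\le1/4\) does not control \(\eta_\ell L_gK\). Obtain your displayed deviation bound instead by directly unrolling \(y_{i_r,k}^r-x^{r-1}=-\eta_\ell\sum_{t<k}d_{i_r,t}^r\) with \(\norm{\sum_{t<k}a_t}^2\le k\sum_{t<k}\norm{a_t}^2\). The self-referential drift term is absorbed via \(\eta_\ell K\bar\delta_S\le1/8\), and no growth factor is needed.

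Second, to get the leading constant \(2\), the coefficient of \(\norm{\grad f(y_{i_r,k}^r)}^2\) must stay exactly \(\eta_\ell/2\). The paper uses \(-2\ip{a}{b}=\norm{a-b}^2-\norm a^2-\norm b^2\) with \(a=\grad f(y_{i_r,k}^r)\) and \(b=\mu_k^r\), the exact corrected direction. It keeps the spare \(-\tfrac{\eta_\ell}{4}\norm{\mu_k^r}^2\) and states the path and endpoint bounds in terms of \(\sum_k\norm{\mu_k^r}^2\). If you bound deviations by \(\sum_k\norm{\grad f(y_{i_r,k}^r)}^2\) and absorb them, the constant degrades.

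Third, independence across clients gives \(\E[\norm{e^r}^2\mid\mathcal H_r]\le(1-1/N)\sigma^2/B\). The paper's constants use this sharper bound rather than \(4\sigma^2/B\).
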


As in the convex case, smoothness constrains the individual stepsize and
similarity constrains the effective stepsize of a round, so at the
largest admissible stepsize the first term is of order
\((L_g/K+\bar\delta_S)F_0/R\).
Setting \(B=K\) and tuning the stepsize gives the following theorem.
Communication complexity and the analysis of other control minibatch
sizes are in Appendix~\ref{app:nonconvex-tuning}.

\begin{theorem}[Nonconvex convergence]
\label{thm:scaffold-nonconvex}
Suppose Assumptions~\ref{ass:global-smoothness},
\ref{ass:uniform-similarity}, and~\ref{ass:stochastic-oracle} hold.
The output of Algorithm~\ref{alg:scaffold-nonconvex} with \(B=K\),
\(R,K\ge1\), and the constant stepsize specified in
\eqref{eq:nc-refined-tuned-step} of Appendix~\ref{app:nonconvex-tuning} satisfies
\begin{equation}
\begin{aligned}
    \E\norm{\grad f(\widehat x)}^2
    =\mathcal O\!\Biggl(&\left(\frac{L_g}{K}+\bar\delta_S\right)\frac{F_0}{R}+\sigma\sqrt{\frac{L_gF_0}{KR}}\\
       &+\left(\frac{\bar\delta_S^2\sigma^2F_0^2}{KR^2}\right)^{1/3}\Biggr).
\end{aligned}
\label{eq:nc-tuned-rate}
\end{equation}
\end{theorem}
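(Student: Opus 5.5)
We derive Theorem~\ref{thm:scaffold-nonconvex} from Proposition~\ref{prop:nc-fixed-stepsize} by tuning the constant stepsize. Write \(\eta\coloneqq\eta_\ell\). The admissible set is \(\eta\le\eta_{\max}\coloneqq\min\{1/(4L_g),1/(8K\bar\delta_S)\}\). On this set the bound \eqref{eq:nc-fixed-stepsize-rate} reads
\[
\Phi(\eta)=\frac{2F_0}{\eta RK}+32L_g\sigma^2\eta+32\bar\delta_S^2K\sigma^2\eta^2 .
\]
We take
\[
\eta=\min\Bigl\{\eta_{\max},\ \eta_1,\ \eta_2\Bigr\},
\qquad
\eta_1\coloneqq\sqrt{\frac{F_0}{L_g\sigma^2RK}},
\qquad
\eta_2\coloneqq\Bigl(\frac{F_0}{\bar\delta_S^2\sigma^2RK^2}\Bigr)^{1/3},
\]
with the conventions \(\eta_2=\infty\) if \(\bar\delta_S=0\) and \(\eta_1=\eta_2=\infty\) if \(\sigma=0\). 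The value \(\eta_1\) balances the \(1/\eta\) term against the linear noise term, and \(\eta_2\) balances it against the quadratic noise term. This is the choice we expect \eqref{eq:nc-refined-tuned-step} to encode.

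We bound the three terms of \(\Phi(\eta)\) separately.
\begin{itemize}
\item \emph{Noise terms.} Since \(\eta\le\eta_1\), we have \(32L_g\sigma^2\eta\le32\sigma\sqrt{L_gF_0/(KR)}\). Since \(\eta\le\eta_2\), we have
\[
32\bar\delta_S^2K\sigma^2\eta^2\le32\,\bar\delta_S^2K\sigma^2\Bigl(\frac{F_0}{\bar\delta_S^2\sigma^2RK^2}\Bigr)^{2/3}=32\Bigl(\frac{\bar\delta_S^2\sigma^2F_0^2}{KR^2}\Bigr)^{1/3}.
\]
\item \emph{Optimization term.} Using \(1/\min\{a,b,c\}\le1/a+1/b+1/c\),
\[
\frac{2F_0}{\eta RK}\le\frac{2F_0}{RK}\Bigl(4L_g+8K\bar\delta_S+\frac1{\eta_1}+\frac1{\eta_2}\Bigr).
\]
The first two pieces give \(\mathcal O\bigl((L_g/K+\bar\delta_S)F_0/R\bigr)\). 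The piece with \(1/\eta_1\) equals \(2\sigma\sqrt{L_gF_0/(KR)}\). The piece with \(1/\eta_2\) equals \(2F_0^{2/3}\bar\delta_S^{2/3}\sigma^{2/3}R^{-2/3}K^{-1/3}\), which is exactly \(2\bigl(\bar\delta_S^2\sigma^2F_0^2/(KR^2)\bigr)^{1/3}\).
\end{itemize}
Summing these bounds gives \eqref{eq:nc-tuned-rate}.

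The main point needing care is the degenerate cases \(\sigma=0\) or \(\bar\delta_S=0\). There the stated conventions make the corresponding terms vanish, so nothing new is required. We also need that \(\eta\) is a legitimate constant stepsize: it depends only on \(F_0,L_g,\bar\delta_S,\sigma,R,K\), all fixed before training. Since the hard analytic work is contained in Proposition~\ref{prop:nc-fixed-stepsize}, the only real obstacle is verifying the exponent arithmetic for the \(\eta_2\) balance. That arithmetic is carried out in the second bullet above.
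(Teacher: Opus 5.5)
Your proposal is correct and is essentially the paper's argument: Corollary~\ref{cor:nc-refined-tuned} also substitutes a tuned stepsize into Proposition~\ref{prop:nc-fixed-stepsize}, bounds each noise term by inverting the matching piece of the stepsize, and bounds $2F_0/(\eta_\ell RK)$ one piece at a time. The one difference is that \eqref{eq:nc-refined-tuned-step} defines $\eta_\ell$ as the reciprocal of the sum $4L_g+8K\bar\delta_S+4/\eta_1+16^{1/3}/\eta_2$ (in your notation), not as a minimum; hence $1/\eta_\ell$ equals that sum exactly and $\eta_\ell\le\min\{\eta_{\max},\eta_1,\eta_2\}$, so your computation applies unchanged to the stated stepsize.
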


The first term is the only one that remains with exact gradients
(\(\sigma=0\)). It again separates global smoothness from uniform Hessian
similarity: increasing \(K\) reduces the contribution \(L_g/K\) until
\(\bar\delta_S\) becomes the limiting factor.
The remaining terms capture the noise of stochastic gradients and its interaction with client dissimilarity.

\paragraph{Proof idea.}
A control estimate computed from a minibatch of size \(B\) has error
variance at most \(\sigma^2/B\). Since the same estimate is used in all
\(K\) local steps, its error can accumulate throughout the round.

The control-estimation error has zero conditional mean given the
preceding iterates and the selected client. Its effect on the expected
objective decrease therefore comes from the displacement of local
iterates from their starting point. When \(L_g\eta_\ell K\le1\), global
smoothness bounds the resulting gradient change, allowing us to replace
the direct error bound of order \(\sigma^2/B\) with a contribution of
order
\[
    L_g\eta_\ell K\,\frac{\sigma^2}{B}.
\]
When \(B=K\), this has the same order as the usual local-gradient noise
term \(L_g\eta_\ell\sigma^2\). When \(L_g\eta_\ell K>1\), the direct
error bound is already no larger than this contribution.
Finally, we account for uniform Hessian similarity and the use of controls evaluated at an
earlier server iterate to obtain the stated convergence rate.
The complete proof is in Appendix~\ref{app:nonconvex}.

\subsubsection{Local smoothness and optimality}
\label{sec:nc-optimality}

We now specialize to a class with a common local smoothness bound
\(L_{\mathrm{loc}}\). Taking \(L_g=L_{\mathrm{loc}}\), we may choose a
common uniform-similarity bound \(\bar\delta\le2L_{\mathrm{loc}}\) with
\(\delta_i\le\bar\delta\) for every client, which is always possible
because local smoothness gives \(\delta_i\le2L_{\mathrm{loc}}\).
With the stepsize choice in Appendix~\ref{app:nc-local-tuning},
we obtain
\begin{equation}
\begin{aligned}
 &\E\norm{\grad f(\widehat x)}^2
 =\mathcal O\!\Biggl(\min\Biggl\{L_{\mathrm{loc}}F_0,\\
 &\hspace{2.1cm}\left(\frac{L_{\mathrm{loc}}}{K}+\bar\delta\right)\frac{F_0}{R}
 +\sigma\sqrt{\frac{L_{\mathrm{loc}}F_0}{KR}}\Biggr\}\Biggr).
\end{aligned}
\label{eq:nc-local-rate}
\end{equation}
The cube-root term of \eqref{eq:nc-tuned-rate} does not appear here.
Indeed, with \(A=\bar\delta F_0/R\) and
\(B=\sigma\sqrt{L_{\mathrm{loc}}F_0/(KR)}\), it can be rewritten as
\(\bigl((\bar\delta/L_{\mathrm{loc}})AB^2\bigr)^{1/3}\), and \(\bar\delta\le2L_{\mathrm{loc}}\) with the AM--GM
inequality show that it is dominated by the other two terms up to a constant and can therefore be absorbed into the \(\mathcal O(\cdot)\) bound. 

In the regime $0<\bar\delta\le L_{\mathrm{loc}}$ considered in the
nonconvex analysis of FedRed-SGD, choosing
$K=\Theta(L_{\mathrm{loc}}/\bar\delta)$ recovers its convergence
rate under uniform Hessian similarity, without requiring exact-gradient
evaluations. 
Here only one client performs local optimization in each round, whereas
minibatch SGD and the method of
\citet{luo2025revisiting} compute on all clients and retain a $1/\sqrt N$ factor
in the stochastic term. We summarize the assumptions and rates of the related works in the nonconvex setting in Table~\ref{tab:nonconvex-comparison} and move towards the corresponding lower bound.

\begin{theorem}[Nonconvex lower bound]
\label{thm:nc-lower}
Fix \(N\ge2\), a nonempty \(S\subseteq[N]\), integers \(R,K\ge1\),
and \(L_{\mathrm{loc}},F_0>0\), \(\sigma\ge0\),
\(0\le\bar\delta\le2L_{\mathrm{loc}}\). Suppose
\(RK\ge2\).
For the client-selection variant with \(B=K\), any prescribed positive
constant stepsize, and any client-selection rule allowed above, there
is a finite-dimensional instance with
\(L_{\mathrm{loc}}\)-smooth local objectives (see \eqref{eq:local-smoothness}), uniform similarity at
most \(\bar\delta\) (\Cref{ass:uniform-similarity}), initial objective gap at most \(F_0\), and gradient-noise
variance at most \(\sigma^2\), such that
\begin{equation}
\begin{aligned}
 &\E\norm{\grad f(\widehat x)}^2
 \ge c\min\Biggl\{L_{\mathrm{loc}}F_0,\\
 &\hspace{2.5cm}\left(\frac{L_{\mathrm{loc}}}{K}+\bar\delta\right)\frac{F_0}{R}
 +\sigma\sqrt{\frac{L_{\mathrm{loc}}F_0}{KR}}\Biggr\}
\end{aligned}
\label{eq:nc-lower}
\end{equation}
for a universal constant \(c>0\).
\end{theorem}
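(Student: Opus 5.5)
The lower bound is a maximum of three terms, and a maximum is at least a third of a sum, so I will prove it one term at a time. For each of the three quantities $L_{\mathrm{loc}}F_0/K\cdot R^{-1}$, $\bar\delta F_0/R$ and $\sigma\sqrt{L_{\mathrm{loc}}F_0/(KR)}$, I will build a separate finite-dimensional instance. In each case I show that, for the prescribed stepsize $\eta_\ell$ and any admissible selection rule, either the method makes essentially no progress, which gives $\E\norm{\grad f(\widehat x)}^2\gtrsim \min\{L_{\mathrm{loc}}F_0,\text{term}\}$, or it blows up or oscillates at the scale $L_{\mathrm{loc}}F_0$. Because the stepsize is fixed before the instance is chosen, I will split on the size of $\eta_\ell$ in every construction. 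The two cases are $\eta_\ell L_{\mathrm{loc}}>1$, where a steep quadratic coordinate $\frac{L_{\mathrm{loc}}}{2}x^2$ makes the iterates fail to contract, and $\eta_\ell L_{\mathrm{loc}}\le 1$.

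For the \textbf{smoothness term} I take identical clients, so $\bar\delta=0$, the controls cancel exactly, and the method is plain GD/SGD over $RK$ steps. I use a distributed zero-chain in the style of \citet{carmon2020} and \citet{patel2022communication}: a chain scaled so that each step of size $\eta_\ell\le1/L_{\mathrm{loc}}$ on a quadratic or near-flat coordinate yields only $O(\eta_\ell L_{\mathrm{loc}})$ fractional progress. More simply, a one-dimensional function that is linear with slope $g$ on a long interval gives $\E\norm{\grad f}^2= g^2$ as long as $RK\eta_\ell g\lesssim F_0/g$. Choosing $g^2\asymp L_{\mathrm{loc}}F_0/(RK)$ and smoothing near the ends gives the term when $\eta_\ell\lesssim 1/L_{\mathrm{loc}}$. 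Together with the steep-coordinate case this produces $\min\{L_{\mathrm{loc}}F_0, L_{\mathrm{loc}}F_0/(KR)\}$.

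For the \textbf{similarity term} I use heterogeneous quadratic clients of the form $f_i=f+h_i$, where each $h_i$ is an $\bar\delta$-Lipschitz-gradient quadratic on a coordinate orthogonal to the rest, arranged so that the $h_i$ average to zero. The selected client runs $K$ corrected steps, and its correction $\grad h_i(x^{r-2})$ is stale. Its effective update is therefore $x^r\approx x^{r-1}-\alpha(\grad f(x^{r-1})+\grad h_i(x^{r-1})-\grad h_i(x^{r-2}))$. When $\alpha\bar\delta\gtrsim 1$, a second-order recurrence makes the server iterate oscillate or grow at the scale $F_0$; I will try the sign pattern $h_i=\pm\frac{\bar\delta}{2}x^2$. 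Since every client in $S$ must be covered, I will give all eligible clients the same $h$ of one sign and put the compensating curvature on clients outside $S$, falling back to a rescaling if $S=[N]$. When $\alpha\bar\delta\lesssim 1$, the same slope-$g$ linear construction gives no progress with $g^2\asymp\bar\delta F_0/R$. I expect this step to be the main obstacle. The difficulty is that the lower bound on the $RK$-averaged squared gradient has to hold uniformly over adaptive selection rules, and the $S=[N]$ case forces me to make $\sum_i h_i=0$ compatible with every client exhibiting the bad recurrence. If $\pm$ signs cancel across clients, I will instead use a rotation-type antisymmetric pair: $h_i$ supported on two coordinates with $\nabla h_i$ given by a symmetric indefinite matrix of norm $\bar\delta$, chosen so that $h_1=-h_2$ but both induce amplification for large $\alpha$.

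For the \textbf{noise term} I adapt the quadratic SGD examples of \citet{drori2020}. I take identical clients, so the control error only adds variance $\sigma^2/K$ that is constant across a round, together with a one-dimensional function $f(x)=\frac{\lambda}{2}x^2$ whose stochastic gradients carry additive noise. The stationary variance of SGD then gives $\E\norm{\grad f}^2\gtrsim\lambda\eta_\ell\sigma^2$, while for small $\eta_\ell$ the starting point $x_0$ with $\lambda x_0^2\asymp F_0$ is not reached, giving $\lambda^2x_0^2$. Optimizing $\lambda\le L_{\mathrm{loc}}$ against $\eta_\ell$, and balancing $(1-\lambda\eta_\ell)^{RK}$ against the noise floor, yields $\sigma\sqrt{L_{\mathrm{loc}}F_0/(KR)}$ in the regime where it is at most $L_{\mathrm{loc}}F_0$. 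Here I use $RK\ge2$ so that the averaged output contains iterates that have already absorbed noise. Finally I embed the three instances in orthogonal coordinates, halving $F_0$ for each, and conclude by taking the maximum.
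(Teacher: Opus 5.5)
\paragraph{Verdict: genuine gap in the similarity term.} Your smoothness and noise terms follow the paper's homogeneous quadratic-SGD examples (Lemma~\ref{nclb:lem:sgd}), and your small-$K\eta_\ell\bar\delta$ branch (a slowly decreasing homogeneous function) is fine. The gap is the similarity term when $\eta_\ell L_{\mathrm{loc}}\le1$ and $K\eta_\ell\bar\delta\gg1$. You flag this step yourself but do not resolve it.

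Homogeneous instances cannot help in this regime. With $\sigma=0$ the method is plain gradient descent, whose average squared gradient over the $RK$ sampled points is at most $2F_0/(RK\eta_\ell)\ll\bar\delta F_0/R$. So a heterogeneous instance is required, and neither of your candidates is established.

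\emph{Same $h$ on $S$, compensation outside $S$.} This conflicts with the per-client bound of Assumption~\ref{ass:uniform-similarity}. Since $\sum_jh_j=0$, the $N-|S|$ compensating clients must carry $-|S|h$, so $\grad h$ can be at most $\frac{N-|S|}{|S|}\bar\delta$-Lipschitz, e.g.\ $\bar\delta/(N-1)$ when $|S|=N-1$. Instability then only starts at $K\eta_\ell\gtrsim N/\bar\delta$. In the window $1/\bar\delta\lesssim K\eta_\ell\lesssim N/\bar\delta$ you are left with $F_0/(RK\eta_\ell)$, losing up to a factor $N$. For $S=[N]$ there is no one to compensate at all.

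\emph{The fallback $h_1=-h_2$.} This is only asserted. For quadratic $f$, across rounds a client whose $h_i$ has positive curvature pulls that coordinate back toward a multiple of the stale anchor $x^{r-2}$, while one with negative curvature amplifies it. A rule that sees past iterates and used controls may interleave the two. Nothing in your sketch controls every such sequence, nor odd $N$ with $S=[N]$.

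\paragraph{The paper's fix.} The paper avoids both the stepsize split and the selection issue with the alternating zero chain of \citet{carmon2020}, distributed as in \citet{patel2022communication} (Lemma~\ref{nclb:lem:communication}).
\begin{itemize}
\item One client group carries the unary term and the links on coordinates $(2,3),(4,5),\ldots$; the other carries the links on $(1,2),(3,4),\ldots$.
\item Groups are reweighted by $N/n_j$ and scaled so that each $f_i-f$ has $\bar\delta$-Lipschitz gradient.
\item The round-$r$ correction is evaluated at $x^{r-2}$, so it lives on the first $2r-3$ coordinates; the start lives on the first $2r-2$.
\item Each group's gradient preserves prefixes of odd, respectively even, length.
\end{itemize}
Hence every local iterate of round $r$ lies in the first $2r$ coordinates, for any $K$, $\eta_\ell$ and selection sequence. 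With dimension $2R+2$ this gives $\norm{\grad f}^2\gtrsim\bar\delta F_0/R$ at every sampled point.

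\paragraph{Repairing your own route.} Multi-round instability is unnecessary: round~1 carries weight $1/R$ in $\widehat x$ and uses exact controls at $x^0$. Take
\begin{itemize}
\item $f=\frac\mu2x^2$ with $\mu=\bar\delta/8$, after replacing $\bar\delta$ by $\min\{\bar\delta,\frac89L_{\mathrm{loc}}\}$;
\item $f_i=f+\frac{s_i\bar\delta}{2}x^2$ with $\frac12\le|s_i|\le1$ and $\sum_is_i=0$, which is possible for every $N\ge2$;
\item $x^0$ with $\frac\mu2(x^0)^2=F_0$.
\end{itemize}
In round~1 the selected client's corrected objective has stationary point $\frac{s\bar\delta}{\mu+s\bar\delta}x^0$.
\begin{itemize}
\item For $s>0$ this point lies in $[\frac45x^0,x^0]$, and for $\eta_\ell L_{\mathrm{loc}}<2$ all round-1 iterates stay above $\frac35x^0$.
\item For $s<0$ it is a maximizer. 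The iterates $x^0\bigl(1-\rho(\lambda^k-1)\bigr)$, with $\rho\in[\frac17,\frac13]$ and $\lambda>1$, move away geometrically. A counting argument shows they spend at most a universal fraction ($<1$) of the $K$ steps where $|y|<\frac12x^0$.
\end{itemize}
Either way, whichever client is selected, the round-1 average of $\norm{\grad f}^2$ is $\gtrsim\mu^2(x^0)^2\asymp\bar\delta F_0$. This gives the averaged bound $\gtrsim\bar\delta F_0/R$, which is all the theorem needs. The chain additionally gives the bound pointwise in every round.

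\paragraph{Minor point.} $\eta_\ell L_{\mathrm{loc}}>1$ does not make $\frac{L_{\mathrm{loc}}}{2}x^2$ non-contracting. For $1<\eta_\ell L_{\mathrm{loc}}<2$, either rescale your slope $g$ or use that $x^0$ alone contributes $2L_{\mathrm{loc}}F_0/(RK)$.
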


Therefore, \eqref{eq:nc-local-rate} is optimal up to universal constants
for the prescribed constant-stepsize method and output under local
smoothness for \(RK\ge2\).

\paragraph{Lower-bound constructions.}
We adapt the quadratic SGD examples of
\citet[Propositions~1--2 and Supplement~A.4--A.5]{drori2020}.
Small stepsizes leave a substantial portion of the initial error
uncorrected, whereas larger stepsizes sustain the effect of gradient
noise. The centered control errors add a nonnegative term to
the conditional second moment, preserving the noisy-quadratic lower
bound.

To establish the similarity-dependent term, we adapt the
alternating-client chain of
\citet[Appendix~B, Theorem~B.1]{patel2022communication}, which builds
on \citet{carmon2020}. Alternating links limit the progress achievable
per communication round. The oracle is noiseless, and our analysis
accounts for initialization and the use of previous-round controls at
every selected local iterate. Appendix~\ref{app:nc-lower} provides
the complete proofs and the resulting matching communication bounds.

\begin{figure}[!tb]
    \centering
    \includegraphics[width=\columnwidth]{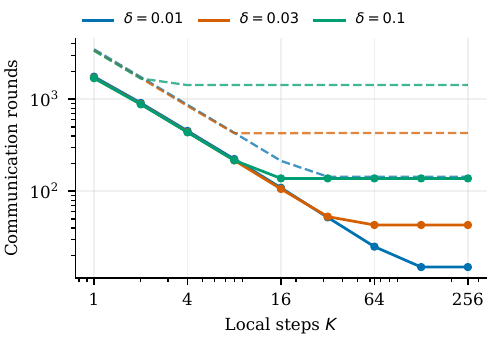}
    \caption{
    Number of communication rounds deterministic SCAFFOLD needs to reach \(f(\bar x^R)-f^\star\le10^{-5}\). Dashed curves use theoretical
    stepsizes, and solid curves use tuned stepsizes.
    Our theory predicts that increasing the number of local steps reduces communication until client
    similarity limits further gains.}
    \label{fig:convex-communication}
\end{figure}

\begin{figure*}[t]
    \centering
    \includegraphics[width=\textwidth]{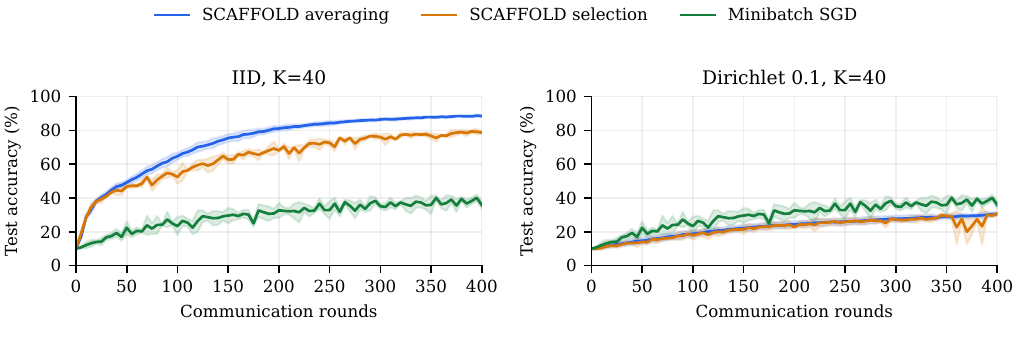}
    \caption{
        Server test accuracies on CIFAR-10 with \(K=40\) for the IID (left) and Dirichlet (right) partitions, reporting means $\pm$ standard deviations over
    three runs.
    Both SCAFFOLD variants outperform minibatch SGD on the IID
    partition, but this advantage disappears on the heterogeneous partition.}
    \label{fig:nn-comparison}
\end{figure*}

\section{Experiments}
\label{sec:experiments}

\paragraph{Communication versus local computation.}
We use a quadratic family adapted from our deterministic lower-bound
construction (Lemma~\ref{lem:stale-quadratic-recurrence}), with independently
controlled local smoothness and Hessian similarity. For
\(L_{\mathrm{loc}}=1\), we vary \(\delta\in\{0.01,0.03,0.1\}\) and
\(K\in\{1,2,4,\ldots,256\}\). With exact gradients, we measure the
communication rounds needed to reach the target objective gap
\(f(\bar x^R)-f^\star\le10^{-5}\), using theoretical and tuned stepsizes
(Figure~\ref{fig:convex-communication}). The experiment illustrates how
increasing local computation reduces communication until client
similarity becomes the limiting factor, consistent with the
\(L_{\mathrm{loc}}/K+\delta\) dependence in our theory. The transition
occurs when \(K\) is of order \(L_{\mathrm{loc}}/\delta\), beyond which
the similarity term limits further communication gains. Smaller
\(\delta\) therefore allows the improvement to continue over a wider
range of \(K\), and tuning reduces round counts while preserving this
pattern. Appendix~\ref{app:experiments} gives the objectives, protocol,
and stepsizes.

\paragraph{Neural-network training.}
We examine whether similarity between client datasets improves the
practical benefit of local computation in neural-network training.
We compare SCAFFOLD with averaging, SCAFFOLD with client selection, and
minibatch SGD on CIFAR-10 \citep{krizhevsky2009cifar}, adapting the
ResNet-18 \citep{he2016resnet} implementation and data preprocessing
from S-DANE \citep{jiang2024sdane}, with BatchNorm replaced by GroupNorm
\citep{wu2018groupnorm} to avoid batch-dependent normalization
statistics, whose drift SCAFFOLD does not correct
\citep{quintana2024bnscaffold}. We distribute the same training set
among ten clients in two ways. An IID split gives each client a random
subset of the pooled data, with similar class proportions. A Dirichlet
split draws the allocation proportions of each class across clients.
With concentration \(0.1\), this produces strongly unequal class
proportions and client sizes. Both partitions preserve the pooled
training set and the example-weighted global objective, so they provide
a statistical comparison motivated by the similarity assumptions without
prescribing a Hessian-similarity constant.

We train for \(400\) communication rounds with \(K\in\{10,40\}\),
using validation-tuned constant stepsizes and three random seeds.
Minibatch SGD uses the same per-round gradient-sample budget as
SCAFFOLD with averaging, including control estimation. It samples
from the pooled training set, and we use the same baseline runs for
both partitions. Appendix~\ref{app:nn-experiments} gives the full
protocol and results for both values of \(K\).

Figure~\ref{fig:nn-comparison} shows server test accuracy for \(K=40\).
Both SCAFFOLD variants outperform minibatch SGD on the IID partition
but fall below the same baseline on the Dirichlet partition.
Increasing \(K\) from \(10\) to \(40\) improves both SCAFFOLD variants
on IID data at the same communication-round budget, but gives no
consistent improvement on the heterogeneous partition. The
validation-selected local stepsizes also differ sharply across
partitions. For \(K=40\), validation selects \(0.2\) and \(0.01\) for
IID averaging and selection, respectively, compared with
\(3\times10^{-5}\) for both on Dirichlet data. The smaller
validation-selected stepsizes suggest that more conservative local
updates perform better on the Dirichlet partition, consistent with the
observed loss spikes (Appendix~\ref{app:nn-experiments}). Thus, in this experiment,
similarity between client datasets is associated with a substantial
advantage of both SCAFFOLD variants over minibatch SGD.

Our nonconvex guarantee uses client selection, as do several related
similarity-based analyses \citep{murata2021bvr,patel2022communication}.
On the IID partition, however, averaging reaches higher accuracy at the
same round budget for both values of \(K\). This empirical advantage
motivates sharper nonconvex guarantees for the averaging variant.
Selection uses only \(55\%\) of averaging's gradient-sample evaluations
per round. Over the shared range of gradient-sample budgets, the curves
are much closer, with selection ahead during part of training
(Figure~\ref{fig:nn-computation} in Appendix~\ref{app:nn-experiments}).

Comparing both partitions at the same models along the common
minibatch-SGD trajectory, gradient-difference measurements consistently
show larger client disagreement on the Dirichlet partition.
These are finite-sample measurements on
sampled objectives, not estimates of the similarity constants.
Appendix~\ref{app:nn-diagnostics} reports them and their sensitivity to
the perturbation radius and number of sampled examples.

\section{Conclusion}
We showed that, with minibatch Option-I controls, the convex convergence
guarantee for SCAFFOLD depends on $L_{\mathrm{loc}}/K+\delta$:
increasing the number of local steps reduces communication until client
similarity becomes the limiting factor. A client-selection variant
achieves an analogous linear dependence on the similarity parameter for
nonconvex objectives. Both guarantees
extend to stochastic gradients. Matching lower bounds
establish their optimality for the analyzed constant-stepsize methods
and outputs, under local smoothness in the nonconvex case.
Under partial participation, refreshing controls only on participating
clients can prevent the full communication benefit of local steps,
even for identical clients.

In our neural-network experiments, SCAFFOLD substantially outperformed
minibatch SGD on the IID partition but showed no such advantage on the
strongly heterogeneous partition studied. On IID data, averaging also outperformed
client selection at the same number of communication rounds, which
motivates extending the nonconvex analysis to the averaging variant.

\subsubsection*{Acknowledgments}
This work was done during the first author's research internship at
CISPA Helmholtz Center for Information Security.
SH and SS gratefully acknowledge funding from the European Research
Council (ERC) under the Horizon Europe Framework Programme (HORIZON) for
proposal number 101170430 CollectiveMinds, which also partially funded
ZT's visit to CISPA.
ZT and PR were supported by funding from King Abdullah University of
Science and Technology (KAUST): i) KAUST Baseline Research Scheme,
ii) Center of Excellence for Generative AI (award no.~5940),
iii) Competitive Research Grant (CRG) Program (award no.~6460).

\bibliography{references}
\clearpage
\appendix
\onecolumn
\aistatstitle{SCAFFOLD under Hessian Similarity: Sharp Convergence Rates \\
Appendix}

\section{Proofs of Proposition~\ref{prop:fixed-stepsize} and Theorem~\ref{thm:scaffold-convex}}
\label{app:scaffold-convex-proof}

We first allow any control batch size \(B\ge1\), with
\(\eta_\ell L_{\mathrm{loc}}\le1\) and
\(\theta=2\alpha^2\delta^2\le1/9\), which is the condition
\(\alpha\delta\le1/(3\sqrt2)\) of Proposition~\ref{prop:fixed-stepsize}.
This fixed-stepsize analysis proves the proposition and also allows
\(\sigma=0\). We then combine it with
a small-stepsize bound to prove the theorem using \(B=K\) and
\eqref{eq:app-complete-tuning}.

For each round \(r\), let \(\mathcal H_r\) contain the previous local
trajectories and previously used controls, but not the batch used to form
\((c_i^{r-1})_{i=1}^N\). In particular, \(x^{r-1}\) and \(x^{r-2}\) are
\(\mathcal H_r\)-measurable. This conditioning is valid because that batch
did not generate \(x^{r-1}\). Although \(c_i^0\) and \(c_i^1\) are both
evaluated at \(x^0\), the algorithm draws a new batch for \(c_i^1\).

Write the control errors in the current round as
\begin{equation}
 e_i=c_i^{r-1}-\grad f_i(x^{r-2}),\qquad
 \bar e=\frac1N\sum_i e_i,\qquad q_i=\bar e-e_i.
 \label{eq:app-control-errors}
\end{equation}
Conditionally on \(\mathcal H_r\), the \(e_i\)'s are independent and centered,
and \(\E[\norm{e_i}^2\mid\mathcal H_r]\le\sigma^2/B\). Consequently,
\begin{equation}
 \sum_iq_i=0,\qquad \E[q_i\mid\mathcal H_r]=0,\qquad
 \E[\norm{q_i}^2\mid\mathcal H_r]
 \le\left(1-\frac1N\right)\frac{\sigma^2}{B}.
 \label{eq:app-control-moments}
\end{equation}
The corrected errors \(q_i\) need not be independent across clients.

\subsection{Server-level reduction}

The server analysis is driven by a conditional comparison between the
objective gap accumulated in one round and the corresponding decrease in
squared distance.  To state this comparison, write
\[
    \E_{r-1}[\,\cdot\,]
    \coloneqq
    \E[\,\cdot\mid\mathcal H_r],
\]
and, for \(r\ge1\), define
\[
     G_r\coloneqq f(x^r)-f^\star,
    \qquad
    D_r\coloneqq\norm{x^r-x^\star},
    \qquad
    b_r\coloneqq\norm{x^r-x^{r-1}},
    \qquad
    s_r\coloneqq\norm{x^r-x^{r-2}}.
\]
The required conditional one-round comparison, used with
\(u=x^\star\) and \(u=x^{r-1}\), is
\begin{equation}
\begin{aligned}
    2\alpha\E_{r-1}[f(x^r)-f(u)]
    \le
    \norm{x^{r-1}-u}^2
    -\E_{r-1}\norm{x^r-u}^2 +\theta\E_{r-1}\norm{x^r-x^{r-2}}^2+Q,
\end{aligned}
\label{eq:app-joint-endpoint-template}
\end{equation}
where \(Q\ge0\) is deterministic.  The first choice of \(u\) gives a
telescoping sum of squared distances, while the second controls the server displacement.  The
next lemma shows that these two consequences are already sufficient for the
desired averaged-output guarantee.  Thus, after proving the lemma, all that
will remain is to verify \eqref{eq:app-joint-endpoint-template} with a sharp
choice of \(Q\).

\begin{lemma}[Server reduction under a one-round comparison]
\label{lem:app-server-reduction}
Suppose \(x^{-1}=x^0\), \(\alpha>0\), \(0\le\theta\le1/9\), and
\eqref{eq:app-joint-endpoint-template} holds in every round, with the same
deterministic \(Q\ge0\), for \(u=x^\star\) and \(u=x^{r-1}\).  Then
\begin{equation}
    \E[f(\bar x^R)]-f^\star
    \le
    \frac{113}{35}\frac{D_0^2}{\alpha R}
    +\frac{81}{35}\frac{Q}{\alpha}.
    \label{eq:app-generic-server-rate}
\end{equation}
\end{lemma}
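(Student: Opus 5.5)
The plan is to use the two instances of \eqref{eq:app-joint-endpoint-template} for complementary purposes. The choice $u=x^\star$ gives a telescoping bound on $\sum_r G_r$, up to the error $\theta\sum_r\E s_r^2$. The choice $u=x^{r-1}$ bounds the server displacements $b_r$, and therefore $s_r$, so this error can be absorbed using $\theta\le1/9$. Jensen's inequality for the convex $f$ then gives $\E f(\bar x^R)-f^\star\le\frac1R\sum_r\E G_r$. Throughout I take full expectations with the tower property. This is legitimate because $x^{r-1}$ and $x^{r-2}$ are $\mathcal H_r$-measurable and $Q$ is deterministic.

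First, summing the $u=x^\star$ inequality over $r=1,\dots,R$ and dropping $-\E D_R^2\le0$ gives
\[
2\alpha\sum_{r=1}^R\E G_r\le D_0^2+\theta\sum_{r=1}^R\E s_r^2+RQ .
\]
Second, for $r\ge2$ the $u=x^{r-1}$ inequality reads $2\alpha\E[G_r-G_{r-1}]+\E b_r^2\le\theta\E s_r^2+Q$. Summing over $r=2,\dots,R$ telescopes to $2\alpha(\E G_R-\E G_1)$. Since $G_R\ge0$, this yields
\[
\sum_{r\ge2}\E b_r^2\le2\alpha\E G_1+\theta\sum_{r\ge2}\E s_r^2+(R-1)Q .
\]
I avoid the $r=1$ instance of this inequality because it would introduce $G_0=f(x^0)-f^\star$, which is not controlled by $D_0$ in the lemma. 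Instead, I bound $2\alpha\E G_1$ by the $u=x^\star$ inequality at $r=1$: $2\alpha\E G_1\le D_0^2-\E D_1^2+\theta\E s_1^2+Q$.

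For the first round, the convention $x^{-1}=x^0$ gives $s_1=b_1$. Then $b_1^2\le2D_0^2+2D_1^2$, and the same $r=1$ inequality, using $G_1\ge0$, gives $\E D_1^2\le D_0^2+\theta\E b_1^2+Q$. Together these yield $(1-2\theta)\E b_1^2\le4D_0^2+2Q$.

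For $r\ge2$ I use $s_r^2\le2b_r^2+2b_{r-1}^2$, so $\sum_{r=1}^R s_r^2\le 4\sum_{r=1}^R b_r^2$, counting the first round separately since $s_1=b_1$. Inserting this into the displacement bound and absorbing $4\theta\sum_r\E b_r^2$ on the left, which is possible since $1-4\theta\ge5/9$, gives
\[
\sum_r\E s_r^2\le C_1D_0^2+C_2RQ
\]
with explicit constants. Substituting into the first summed inequality, dividing by $2\alpha R$, and using $R\ge1$ to bound the $O(Q)$ first-round contributions by $RQ$ then gives \eqref{eq:app-generic-server-rate}.

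The main obstacle is bookkeeping rather than ideas. The delicate points are to keep $G_0$ out entirely by treating round one through the distance inequality, and to track the $\theta$-absorptions carefully enough to land on $113/35$ and $81/35$. I expect the denominator $35$ to arise from factors such as $1-2\theta\ge7/9$ and $1-4\theta\ge5/9$ at $\theta=1/9$. If my crude splitting $s_r^2\le2b_r^2+2b_{r-1}^2$ gives slightly worse constants, I would rebalance with a weighted Young inequality, or include $-\E D_1^2$ in the first-round estimate, to match the stated ones.
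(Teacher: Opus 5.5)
Your proposal is correct and is essentially the paper's argument. The paper packages your summed displacement bound as the telescoped potential $P_r=2\alpha\E G_r+2\theta\E b_r^2$, handles round one exactly as you do through $s_1=b_1$ and $b_1^2\le2D_0^2+2D_1^2$ (never touching $G_0$), absorbs with $1-4\theta\ge5/9$, and finishes with Jensen. The one bookkeeping difference, telescoping the $u=x^\star$ inequality from $r=1$ instead of bounding $\E D_1^2$ and $2\alpha\E G_1$ separately, actually helps you: even with your crude splitting $\sum_{r}s_r^2\le4\sum_r b_r^2$ the arithmetic gives coefficients $223/70\le113/35$ and $143/70\le81/35$, so no rebalancing is needed.
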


\begin{proof}
All expectations in this proof are unconditional.  Substituting
\(u=x^\star\) and \(u=x^{r-1}\), respectively, into
\eqref{eq:app-joint-endpoint-template} gives
\begin{align}
    2\alpha\E G_r
    &\le
    \E D_{r-1}^2-\E D_r^2+\theta\E s_r^2+Q,
    \label{eq:app-round-gap}\\
    2\alpha(\E G_{r-1}-\E G_r)
    &\ge
    \E b_r^2-\theta\E s_r^2-Q.
    \label{eq:app-round-descent}
\end{align}
The gap inequality \eqref{eq:app-round-gap} has the desired telescoping
squared-distance terms, except for the stale displacement \(s_r\).  The descent inequality
\eqref{eq:app-round-descent} is useful precisely because it relates this
error to the consecutive displacement \(b_r\).  Since
\[
    s_r
    =\norm{(x^r-x^{r-1})+(x^{r-1}-x^{r-2})},
    \qquad
    s_r^2\le2b_r^2+2b_{r-1}^2,
\]
the stale error involves two neighboring rounds.  We first control the
initial round and then derive a Lyapunov recursion for the subsequent rounds.

\par\medskip\noindent\textbf{Controlling the initial round.}
Because \(x^{-1}=x^0\), we have \(s_1=b_1\).  Since
\(G_1\ge0\), \eqref{eq:app-round-gap} implies
\[
    \E D_1^2\le D_0^2+\theta\E b_1^2+Q.
\]
The inequality \(b_1^2\le2D_0^2+2D_1^2\) therefore yields
\[
    (1-2\theta)\E D_1^2
    \le(1+2\theta)D_0^2+Q.
\]
Using \(\theta\le1/9\), we obtain
\begin{equation}
    \E D_1^2\le\frac{11}{7}D_0^2+\frac97Q,
    \qquad
    \E b_1^2\le\frac{36}{7}D_0^2+\frac{18}{7}Q.
    \label{eq:app-first-distance}
\end{equation}
Dropping the nonpositive term \(-\E D_1^2\) from
\eqref{eq:app-round-gap} and using \eqref{eq:app-first-distance} gives
\begin{equation}
    2\alpha\E G_1
    \le\frac{11}{7}D_0^2+\frac97Q.
    \label{eq:app-first-gap}
\end{equation}
Thus the boundary terms are controlled by \(D_0\) and \(Q\) alone,
with initialization noise included in \(Q\).

\par\medskip\noindent\textbf{A Lyapunov recursion.}
For \(r\ge2\), substitute
\(s_r^2\le2b_r^2+2b_{r-1}^2\) into
\eqref{eq:app-round-descent} and rearrange:
\begin{equation}
    2\alpha\E G_r
    \le
    2\alpha\E G_{r-1}
    -(1-2\theta)\E b_r^2
    +2\theta\E b_{r-1}^2+Q.
    \label{eq:app-before-potential}
\end{equation}
The positive term \(2\theta\E b_{r-1}^2\) prevents the objective gap from
being a decreasing quantity.  It also identifies the correction to use:
attach \(2\theta\E b_r^2\) to the current gap, so that the troublesome
preceding-round term becomes part of the same quantity at round \(r-1\).
Accordingly, define the Lyapunov quantity
\begin{equation}
    P_r\coloneqq
    2\alpha\E G_r+2\theta\E b_r^2.
    \label{eq:app-server-potential}
\end{equation}
Equations \eqref{eq:app-first-distance} and
\eqref{eq:app-first-gap} imply
\begin{equation}
    P_1\le\frac{19}{7}D_0^2+\frac{13}{7}Q.
    \label{eq:app-first-potential}
\end{equation}
Adding \(2\theta\E b_r^2\) to
\eqref{eq:app-before-potential} now gives
\[
    P_{r-1}-P_r\ge(1-4\theta)\E b_r^2-Q.
\]
Thus the same construction both neutralizes the delayed term and controls
the total movement of the server.  Because \(P_R\ge0\), summing this
inequality yields
\begin{equation}
    \sum_{r=2}^R\E b_r^2
    \le
    \frac{P_1+(R-1)Q}{1-4\theta}.
    \label{eq:app-displacement-sum}
\end{equation}

It remains to return to the objective gaps.  For \(R\ge2\), sum
\eqref{eq:app-round-gap} over \(r=2,\ldots,R\), use
\[
    \sum_{r=2}^R\E s_r^2
    \le
    2\E b_1^2+4\sum_{r=2}^R\E b_r^2,
\]
and add \(2\alpha\E G_1\).  The squared distances telescope, and
\eqref{eq:app-server-potential} gives
\begin{align*}
    2\alpha\sum_{r=1}^R\E G_r
    &\le
    \E D_1^2+P_1
    +4\theta\sum_{r=2}^R\E b_r^2+(R-1)Q\\
    &\le
    \E D_1^2+
    \frac{P_1}{1-4\theta}
    +\frac{R-1}{1-4\theta}Q\\
    &\le
    \frac{226}{35}D_0^2+
    \frac{63R+99}{35}Q,
\end{align*}
where the last line uses
\((1-4\theta)^{-1}\le9/5\) and the initial estimates above.  For
\(R=1\), the same final estimate follows directly from
\eqref{eq:app-first-gap}.  Finally, convexity of \(f\) gives
\[
    \E[f(\bar x^R)]-f^\star
    \le\frac1R\sum_{r=1}^R\E G_r.
\]
Dividing the preceding bound by \(2\alpha R\) and using
\((63R+99)/(70R)\le81/35\) proves
\eqref{eq:app-generic-server-rate}.
\end{proof}

\subsection{A common local stochastic-gradient calculation}

The next lemma records the standard one-step SGD descent calculation; see
\citet[the proofs of Theorems~1 and~2]{shamir2013} and, for a more general
formulation, the proof of Lemma~4.1 in Appendix~B.1 of \citet{liu2026}.

\begin{lemma}[One stochastic-gradient step]
\label{lem:app-stochastic-one-step}
Let \(h:\R^d\to\R\) be convex and \(L\)-smooth.  Let \(z\) and \(u\) be
measurable with respect to a sigma-field \(\mathcal G\), and let
\[
    z^+=z-\eta(\grad h(z)+\zeta),
\]
where
\[
    \E[\zeta\mid\mathcal G]=0,
    \qquad
    q\coloneqq\E[\norm{\zeta}^2\mid\mathcal G]<\infty.
\]
If \(\eta>0\) and \(\eta L\le1\), then, almost surely,
\begin{align}
    2\eta\E[h(z^+)-h(u)\mid\mathcal G]
    \le{}&
    \norm{z-u}^2
    -\E[\norm{z^+-u}^2\mid\mathcal G]
    \notag\\
    &-\eta^2(1-\eta L)\norm{\grad h(z)}^2
    +\eta^2(1+\eta L)q,
    \label{eq:app-stochastic-one-step}
\end{align}
and
\begin{equation}
    \E[h(z^+)\mid\mathcal G]
    \le
    h(z)+\frac{L\eta^2}{2}q.
    \label{eq:app-one-step-objective-drift}
\end{equation}
\end{lemma}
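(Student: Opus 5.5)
We prove both inequalities by expanding the squared distance \(\norm{z^+-u}^2\) and combining convexity with smoothness in the standard way. Throughout we write \(g\coloneqq\grad h(z)\), so that \(z^+-z=-\eta(g+\zeta)\).

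\textbf{Step 1: the conditional expectation of the squared distance.} Because \(z\), \(u\) and \(g\) are \(\mathcal G\)-measurable and \(\E[\zeta\mid\mathcal G]=0\), the cross terms involving \(\zeta\) vanish. This gives
\[
\E[\norm{z^+-u}^2\mid\mathcal G]=\norm{z-u}^2-2\eta\ip{g}{z-u}+\eta^2\norm{g}^2+\eta^2q .
\]

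\textbf{Step 2: lower bound on \(\ip{g}{z-u}\).} Convexity gives \(h(z)-h(u)\le\ip{g}{z-u}\). To convert \(h(z)\) into \(h(z^+)\), we use the smoothness upper bound
\[
h(z^+)\le h(z)-\eta\ip{g}{g+\zeta}+\tfrac{L\eta^2}{2}\norm{g+\zeta}^2 .
\]
Taking conditional expectations and expanding \(\norm{g+\zeta}^2\) yields
\[
\E[h(z^+)\mid\mathcal G]\le h(z)-\eta\Bigl(1-\tfrac{L\eta}{2}\Bigr)\norm{g}^2+\tfrac{L\eta^2}{2}q .
\]
Since \(\eta L\le1\), the coefficient of \(\norm g^2\) is nonpositive. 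Dropping that term proves \eqref{eq:app-one-step-objective-drift} immediately.

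\textbf{Step 3: combining the two bounds.} Multiply the unreduced inequality from Step 2 by \(2\eta\) and add \(2\eta(h(z)-h(u))\le2\eta\ip{g}{z-u}\). Then substitute \(2\eta\ip{g}{z-u}=\norm{z-u}^2-\E\norm{z^+-u}^2+\eta^2\norm g^2+\eta^2q\) from Step 1. The result is
\[
2\eta\E[h(z^+)-h(u)\mid\mathcal G]\le\norm{z-u}^2-\E[\norm{z^+-u}^2\mid\mathcal G]+\bigl(\eta^2-2\eta^2+L\eta^3\bigr)\norm g^2+\bigl(\eta^2+L\eta^3\bigr)q .
\]
The coefficient of \(\norm g^2\) equals \(-\eta^2(1-\eta L)\) and the coefficient of \(q\) equals \(\eta^2(1+\eta L)\), which is exactly \eqref{eq:app-stochastic-one-step}.

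\textbf{Integrability.} The only technical point is that every conditional expectation above must be well defined. This follows from \(q<\infty\) together with the quadratic growth bounds for \(h\) implied by \(L\)-smoothness, so all terms are integrable.
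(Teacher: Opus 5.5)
Your proposal is correct and is essentially the paper's proof. You expand $\E[\norm{z^+-u}^2\mid\mathcal G]$ using $\E[\zeta\mid\mathcal G]=0$, take the conditional expectation of the smoothness upper bound (dropping the nonpositive $\norm{g}^2$ term gives \eqref{eq:app-one-step-objective-drift}), and then combine with $h(z)-h(u)\le\ip{g}{z-u}$ to get exactly the coefficients $-\eta^2(1-\eta L)$ and $\eta^2(1+\eta L)$. Your integrability remark, which the paper leaves implicit, should be read conditionally on $\mathcal G$, using for example $\E[\lvert\ip{g}{\zeta}\rvert\mid\mathcal G]\le\norm{g}\sqrt{q}<\infty$; unconditional integrability is not guaranteed, since $h(z)$ itself need not be integrable.
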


\begin{proof}
Write \(g=\grad h(z)\). Since \(z\) and \(u\) are
\(\mathcal G\)-measurable, conditional unbiasedness gives
\[
    \E[\ip{g}{\zeta}\mid\mathcal G]=0,
    \qquad
    \E[\ip{z-u}{\zeta}\mid\mathcal G]=0.
\]
In particular, expanding the square yields
\[
    \E[\norm{g+\zeta}^2\mid\mathcal G]
    =\norm{g}^2+2\E[\ip{g}{\zeta}\mid\mathcal G]+q
    =\norm{g}^2+q.
\]
By smoothness and \(z^+-z=-\eta(g+\zeta)\),
\begin{align*}
    h(z^+)
    &\le h(z)+\ip{g}{z^+-z}+\frac L2\norm{z^+-z}^2\\
    &=h(z)-\eta\norm{g}^2-\eta\ip{g}{\zeta}
      +\frac{L\eta^2}{2}\norm{g+\zeta}^2.
\end{align*}
Taking conditional expectations and using the preceding identities gives
\begin{equation}
\begin{aligned}
    \E[h(z^+)\mid\mathcal G]
    \le{}&
    h(z)-\eta\left(1-\frac{\eta L}{2}\right)
        \norm{\grad h(z)}^2
    +\frac{L\eta^2}{2}q.
\end{aligned}
\label{eq:app-one-step-smoothness}
\end{equation}
Because \(\eta L\le1\), dropping the nonpositive gradient term proves
\eqref{eq:app-one-step-objective-drift}. Moreover, expanding the
squared distance before taking conditional expectations gives
\begin{align*}
    \norm{z^+-u}^2
    &=\norm{z-u}^2-2\eta\ip{g+\zeta}{z-u}
      +\eta^2\norm{g+\zeta}^2.
\end{align*}
Thus, after rearranging,
\begin{align*}
    2\eta\ip{g}{z-u}
    =\norm{z-u}^2-\E[\norm{z^+-u}^2\mid\mathcal G]+\eta^2\norm{g}^2+\eta^2q.
\end{align*}
Convexity gives \(h(z)-h(u)\le\ip{g}{z-u}\). Subtracting
\(h(u)\) from \eqref{eq:app-one-step-smoothness}, multiplying by
\(2\eta\), and applying this convexity inequality yields
\begin{align*}
    2\eta\E[h(z^+)-h(u)\mid\mathcal G]
    \le{}2\eta\ip{g}{z-u}-2\eta^2\left(1-\frac{\eta L}{2}\right)\norm{g}^2
      +L\eta^3q.
\end{align*}
Substituting the rearranged distance identity, the coefficient of
\(\norm{g}^2\) becomes
\[
    \eta^2-2\eta^2\left(1-\frac{\eta L}{2}\right)
    =-\eta^2(1-\eta L),
\]
while the noise terms sum to
\(\eta^2q+L\eta^3q=\eta^2(1+\eta L)q\).
This proves \eqref{eq:app-stochastic-one-step}.
\end{proof}

\subsection{Direct control of the random local endpoints}

For a local SGD trajectory \(z_0,\ldots,z_K\) on an objective \(h\),
applying Lemma~\ref{lem:app-stochastic-one-step}, taking expectations,
and summing over the \(K\) updates bounds the trajectory-average gap.
The server uses the final client iterates, so we need to pass from
\[
    \underbrace{\frac1K\sum_{k=1}^K\E[h(z_k)-h(u)]}
        _{\text{trajectory-average gap}}
    \quad\text{to}\quad
    \underbrace{\E[h(z_K)-h(u)]}_{\text{final-iterate gap}},
\]
where \(u\) is a fixed comparison point. In doing so, we must retain
the subtracted final squared distance \(-\E\norm{z_K-u}^2\) on the
right-hand side of the bound. This term remains after the one-step
distance differences telescope:
\[
    \sum_{k=0}^{K-1}
    \bigl(\E\norm{z_k-u}^2-\E\norm{z_{k+1}-u}^2\bigr)
    =\norm{z_0-u}^2-\E\norm{z_K-u}^2,
\]
with deterministic \(z_0\). After averaging across clients, the subtracted
final squared distance supplies both the server-distance term used for
telescoping across rounds and a client-dispersion term that absorbs the
similarity error. The following lemma bounds the final-iterate gap while retaining the subtracted final squared distance, at the cost of a logarithmic factor in the noise bound. Its proof retains the
one-step derivation and uses suffix averages sharpened by smoothness near
the endpoint.

\begin{lemma}[Last-iterate bound for smooth convex SGD]
\label{lem:app-refined-endpoint}
Let \(h:\R^d\to\R\) be convex and \(L\)-smooth.  Starting from a
deterministic \(z_0\), consider
\[
    z_{k+1}=z_k-\eta\bigl(\grad h(z_k)+\zeta_k\bigr),
    \qquad k=0,\ldots,K-1,
\]
where \(z_k\) is measurable with respect to \(\mathcal G_k\), and
\[
    \E[\zeta_k\mid\mathcal G_k]=0,
    \qquad
    \E[\norm{\zeta_k}^2\mid\mathcal G_k]\le\nu^2.
\]
If \(\eta>0\) and \(\eta L\le1\), then, for every deterministic \(u\),
\begin{equation}
\begin{aligned}
    2K\eta\E[h(z_K)-h(u)]
    \le
    \norm{z_0-u}^2-\E\norm{z_K-u}^2+2K\eta^2\nu^2
        \bigl[1+\log(1+\eta LK)\bigr].
\end{aligned}
\label{eq:app-refined-endpoint}
\end{equation}
The same bound holds after conditioning on the initial information,
treating \(h\), \(z_0\), and \(u\) as fixed, provided the stated noise
assumptions hold under this conditioning.
\end{lemma}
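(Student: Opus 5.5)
The approach is the classical last-iterate argument of \citet{shamir2013}: combine the one-step inequality of Lemma~\ref{lem:app-stochastic-one-step} with suffix averages. The twist is that the subtracted final distance \(-\E\norm{z_K-u}^2\) must be kept, and smoothness must replace the crude gradient bound near the endpoint. Write \(a_k\coloneqq\E[h(z_k)-h(u)]\).

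\textbf{Step 1 (per-step inequalities).} Apply Lemma~\ref{lem:app-stochastic-one-step} at each step \(k\) with \(\mathcal G=\mathcal G_k\), an arbitrary \(\mathcal G_k\)-measurable comparison point \(v\), and \(q\le\nu^2\). Drop the nonpositive gradient term and take expectations to get
\[
2\eta\E[h(z_{k+1})-h(v)]\le\E\norm{z_k-v}^2-\E\norm{z_{k+1}-v}^2+2\eta^2\nu^2 .
\]
With \(v=u\) and summation over all \(k\), this gives
\[
2\eta\sum_{k=1}^K a_k\le\norm{z_0-u}^2-\E\norm{z_K-u}^2+2K\eta^2\nu^2 .
\]
This already has the retained final-distance term. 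The descent inequality \eqref{eq:app-one-step-objective-drift} also gives \(a_{k+1}\le a_k+L\eta^2\nu^2/2\), so objective values can grow only slowly.

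\textbf{Step 2 (suffix comparison).} Define suffix averages \(S_j\coloneqq\frac1j\sum_{k=K-j+1}^K a_k\). Use the Step 1 inequality with the random comparison point \(v=z_{K-j}\), summed over \(k=K-j,\dots,K-1\). The starting distance vanishes, which gives
\[
2\eta\sum_{k=K-j+1}^K\E[h(z_k)-h(z_{K-j})]\le 2j\eta^2\nu^2 .
\]
Equivalently, \(jS_j-j\,a_{K-j}\le j\eta\nu^2\). Combining this with \(a_{K-j}=(j+1)S_{j+1}-jS_j\) yields the standard recursion
\[
S_j\le S_{j+1}+\frac{\eta\nu^2}{j}\quad\text{(up to constants)}.
\]
Telescoping from \(j=1\) to \(K\) gives \(a_K=S_1\le S_K+\eta\nu^2\sum_j 1/j\). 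Multiplying by \(2K\eta\) and inserting Step 1 for \(S_K\) keeps \(-\E\norm{z_K-u}^2\) intact, because only the \(j=K\) term carries \(u\). This produces a \(\log K\) factor instead of \(\log(1+\eta LK)\).

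\textbf{Step 3 (sharpening to \(\log(1+\eta LK)\)), the main obstacle.} For short suffixes \(j\le m\coloneqq\lceil 1/(\eta L)\rceil\), I would not use the recursion. Instead I would use the drift bound: \(a_K\le a_{K-j}+jL\eta^2\nu^2/2\), and averaging this over \(j\le m\) compares \(a_K\) with \(S_m\) at cost \(O(mL\eta^2\nu^2)=O(\eta\nu^2)\). The harmonic recursion is then needed only for \(j\) between \(m\) and \(K\), giving \(\sum_{j=m}^K 1/j\le 1+\log(K/m)\le1+\log(1+\eta LK)\). When \(\eta LK<1\), the drift argument alone over the full horizon gives \(a_K\le S_K+O(\eta LK)\eta\nu^2\). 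The delicate part is tracking constants so the total noise term is exactly \(2K\eta^2\nu^2[1+\log(1+\eta LK)]\). I would have to verify that the drift bound and the suffix recursion combine to coefficient \(1\) in front of the bracket; if not, I would try a weighted suffix average in place of \(S_m\).

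The conditional version follows verbatim by running every step under the initial conditioning.
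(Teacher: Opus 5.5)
Your proposal is essentially the paper's proof. It uses the trajectory-average bound that keeps $-\E\norm{z_K-u}^2$, the Shamir--Zhang suffix recursion from comparing against the earlier iterate, and the smoothness drift bound \eqref{eq:app-one-step-objective-drift} on short suffixes to turn $\log K$ into $\log(1+\eta LK)$.

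The constants do not close with your cutoff $m=\lceil 1/(\eta L)\rceil$ and per-step noise $2\eta^2\nu^2$. Those estimates give at most $2K\eta^2\nu^2\bigl[\tfrac54+\log(\eta LK)\bigr]$, which exceeds the stated $2K\eta^2\nu^2\bigl[1+\log(1+\eta LK)\bigr]$ once $\eta LK\gtrsim 3.5$. The paper instead bounds each suffix increment by $\eta\nu^2\min\{1/(j+1),\eta L/4\}$, which amounts to a cutoff near $4/(\eta L)$, and then uses $\int_1^K\min\{\eta L/4,1/x\}\,dx\le\log(1+\eta LK)$. This gives the exact constant, so no weighted suffix average is needed.
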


\begin{proof}
\par\medskip\noindent\textbf{The trajectory-average bound.}
Apply Lemma~\ref{lem:app-stochastic-one-step} at step \(k\), discard its
nonpositive gradient term, and use the variance bound.  After taking
expectations,
\begin{equation}
\begin{aligned}
    2\eta\E[h(z_{k+1})-h(u)]
    \le
    \E\norm{z_k-u}^2-\E\norm{z_{k+1}-u}^2+\eta^2(1+\eta L)\nu^2.
\end{aligned}
\label{eq:app-local-one-step}
\end{equation}
Summing over \(k=0,\ldots,K-1\) gives
\begin{equation}
\begin{aligned}
    2\eta\sum_{k=1}^K\E[h(z_k)-h(u)]
    \le
    \norm{z_0-u}^2-\E\norm{z_K-u}^2+K\eta^2(1+\eta L)\nu^2.
\end{aligned}
\label{eq:app-local-average}
\end{equation}
This estimate already contains the endpoint distance required by
\eqref{eq:app-joint-endpoint-template}; only its objective term still
averages the whole trajectory.

\par\medskip\noindent\textbf{Bounding the final objective value using suffix averages.}
Fix \(j\in\{1,\ldots,K-1\}\).  Use \(u=z_j\) in
\eqref{eq:app-local-one-step} for steps \(j,\ldots,K-1\), conditioning first
on \(\mathcal G_j\).  This comparison point is valid because \(z_j\) is then
fixed and remains measurable with respect to every later sigma-field.  The
initial distance is zero, and hence
\[
    2\eta\sum_{k=j+1}^K
        \E[h(z_k)-h(z_j)]
    \le
    (K-j)\eta^2(1+\eta L)\nu^2.
\]
Define the numerical suffix averages
\[
    A_j\coloneqq
    \frac{1}{K-j+1}\sum_{k=j}^K\E h(z_k),
    \qquad j=1,\ldots,K.
\]
The preceding comparison gives
\[
    A_{j+1}-\E h(z_j)
    \le\frac{\eta(1+\eta L)\nu^2}{2}.
\]
Using the exact identity
\[
    A_{j+1}-A_j
    =
    \frac{A_{j+1}-\E h(z_j)}{K-j+1}
\]
and \((1+\eta L)/2\le1\), we obtain
\begin{equation}
    A_{j+1}-A_j
    \le
    \frac{\eta\nu^2}{K-j+1}.
    \label{eq:app-harmonic-suffix}
\end{equation}
This estimate is effective for long suffixes but unnecessarily large near
the endpoint.

\par\medskip\noindent\textbf{Using smoothness on short suffixes.}
Equation~\eqref{eq:app-one-step-objective-drift} and the variance bound imply
\[
    \E h(z_{k+1})-\E h(z_k)
    \le\frac{L\eta^2\nu^2}{2}.
\]
Let \(n=K-j\).  Summing this inequality from \(j\) to each later iterate
and then averaging gives
\[
    A_{j+1}-\E h(z_j)
    \le
    \frac{L\eta^2\nu^2(n+1)}{4}.
\]
Combining this estimate with the suffix identity and
\eqref{eq:app-harmonic-suffix} yields
\begin{equation}
    A_{j+1}-A_j
    \le
    \eta\nu^2
    \min\left\{
        \frac{1}{K-j+1},
        \frac{\eta L}{4}
    \right\}.
    \label{eq:app-clipped-suffix}
\end{equation}
Thus the harmonic estimate is used only where it is smaller than the
smoothness estimate.

Set \(c=\eta L/4\) for the remainder of this calculation.  Since
\(x\mapsto\min\{c,1/x\}\) is nonincreasing,
\[
    \sum_{n=1}^{K-1}\min\left\{c,\frac1{n+1}\right\}
    \le
    \int_1^K\min\left\{c,\frac1x\right\}\,dx
    \le
    \log(1+4cK).
\]
For the last inequality, if \(cK\le1\), the integral is at most
\(cK\le\log(1+4cK)\).  If \(cK>1\), it is at most
\(1+\log(cK)\le\log(1+4cK)\).  The sum is empty when \(K=1\).
Summing \eqref{eq:app-clipped-suffix} over \(j\) therefore gives
\begin{equation}
    \E h(z_K)
    \le
    \frac1K\sum_{k=1}^K\E h(z_k)
    +\eta\nu^2\log(1+\eta LK).
    \label{eq:app-average-to-endpoint}
\end{equation}
Multiplying \eqref{eq:app-average-to-endpoint} by \(2K\eta\), substituting
\eqref{eq:app-local-average}, and using \(1+\eta L\le2\) proves
\eqref{eq:app-refined-endpoint}.
\end{proof}

The suffix-average recursion follows \citet[Theorem~2]{shamir2013}, although
they use \(\eta_t=c/\sqrt{t}\); see also the variable-stepsize result of
\citet[Lemma~4.2]{liu2026}.  For the fixed stepsize considered here,
Lemma~\ref{lem:app-refined-endpoint} additionally uses smoothness near the
endpoint to replace \(\log K\) by \(\log(1+\eta LK)\) and retains the
terminal-distance term needed below.

\subsection{From client endpoints to the server endpoint}

The preceding lemma controls each client separately and therefore leaves
different corrected objectives evaluated at different endpoints.  The
following deterministic comparison converts their average into the global
objective at the averaged endpoint.  Its error depends on the dispersion of
the client endpoints; the negative endpoint-distance term retained in
Lemma~\ref{lem:app-refined-endpoint} will absorb this error.

For an anchor \(w\in\R^d\), define
\begin{equation}
    \widetilde f_i(y)
    \coloneqq
    f_i(y)
    -\ip{\grad f_i(w)-\grad f(w)}{y}.
    \label{eq:app-generic-corrected-objective}
\end{equation}
These functions are convex and \(L_{\mathrm{loc}}\)-smooth.  Moreover, at
every common point \(y\),
\[
    \frac1N\sum_{i=1}^N\widetilde f_i(y)=f(y).
\]

\begin{lemma}[Corrected-objective comparison]
\label{lem:app-corrected-comparison}
For arbitrary points \(y_1,\ldots,y_N\), let
\[
    \bar y\coloneqq\frac1N\sum_{i=1}^Ny_i,
    \qquad
    e^2\coloneqq\frac1N\sum_{i=1}^N\norm{y_i-\bar y}^2.
\]
Then, for every common comparison point \(u\),
\begin{equation}
    \frac1N\sum_{i=1}^N
        \bigl(\widetilde f_i(y_i)-\widetilde f_i(u)\bigr)
    \ge
    f(\bar y)-f(u)-\delta\norm{\bar y-w}\,e.
    \label{eq:app-corrected-comparison}
\end{equation}
\end{lemma}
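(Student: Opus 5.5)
The plan is to reduce everything to a single convexity inequality for each \(\widetilde f_i\), linearized at the common point \(\bar y\), and then to control the resulting linear error term with Cauchy--Schwarz and Assumption~\ref{ass:similarity}. First, the comparison terms at \(u\) are disposed of by the averaging identity \(\frac1N\sum_i\widetilde f_i(u)=f(u)\) noted after \eqref{eq:app-generic-corrected-objective}. It therefore suffices to show
\[
    \frac1N\sum_{i=1}^N\widetilde f_i(y_i)\ge f(\bar y)-\delta\norm{\bar y-w}\,e .
\]

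Next, each \(\widetilde f_i\) is convex, being \(f_i\) minus a linear function, and \(f_i\) is convex by Assumption~\ref{ass:convex-smooth}. Hence
\[
    \widetilde f_i(y_i)\ge\widetilde f_i(\bar y)+\ip{\grad\widetilde f_i(\bar y)}{y_i-\bar y}.
\]
Averaging over \(i\) and applying the same identity at \(\bar y\) turns the zeroth-order terms into \(f(\bar y)\).

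For the gradient term, compute \(\grad\widetilde f_i(\bar y)=\grad f_i(\bar y)-\grad f_i(w)+\grad f(w)\). Using \(h_i=f_i-f\), this equals \(\grad f(\bar y)+\bigl(\grad h_i(\bar y)-\grad h_i(w)\bigr)\). The contribution of \(\grad f(\bar y)\) vanishes because \(\sum_i(y_i-\bar y)=0\). What remains is
\[
    \frac1N\sum_i\ip{\grad h_i(\bar y)-\grad h_i(w)}{y_i-\bar y}.
\]
Apply Cauchy--Schwarz first in \(\R^d\) and then to the average over \(i\). This bounds the term below by
\[
    -\Bigl(\frac1N\sum_i\norm{\grad h_i(\bar y)-\grad h_i(w)}^2\Bigr)^{1/2}e .
\]
Finally, \eqref{eq:similarity} with \(x=\bar y\) and \(y=w\) bounds the square root by \(\delta\norm{\bar y-w}\), which gives \eqref{eq:app-corrected-comparison}.

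There is no real obstacle here. The only point needing care is to linearize at the common point \(\bar y\) rather than at each \(y_i\). That choice is what makes the global gradient term cancel, and it leaves only the averaged-similarity quantity, which Assumption~\ref{ass:similarity} controls in exactly the squared-average form.
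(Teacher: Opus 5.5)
Your proposal is correct and follows essentially the same route as the paper. The paper linearizes each \(f_i\) at the common point \(\bar y\), lets the global-gradient terms cancel because the deviations \(y_i-\bar y\) average to zero, and bounds the remaining cross term by Cauchy--Schwarz over clients together with Assumption~\ref{ass:similarity}. Linearizing \(\widetilde f_i\) instead of \(f_i\), as you do, is the same step, since the two differ by a linear function.
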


\begin{proof}
Write \(z_i=y_i-\bar y\), so that \(N^{-1}\sum_i z_i=0\).  Convexity of
\(f_i\) at \(\bar y\) gives
\[
    f_i(y_i)
    \ge
    f_i(\bar y)+\ip{\grad f_i(\bar y)}{z_i}.
\]
Substituting this inequality into the corrected-objective differences and
averaging shows that their difference from \(f(\bar y)-f(u)\) is bounded
below by
\[
    \frac1N\sum_{i=1}^N
    \ip{
        \bigl(\grad f_i(\bar y)-\grad f(\bar y)\bigr)
        -
        \bigl(\grad f_i(w)-\grad f(w)\bigr)
    }{z_i}.
\]
The common terms vanish because the \(z_i\)'s average to zero.
Cauchy--Schwarz over the clients and
Assumption~\ref{ass:similarity} bound the absolute value of the last display
by
\[
    \delta\norm{\bar y-w}
    \left(\frac1N\sum_{i=1}^N\norm{z_i}^2\right)^{1/2}
    =
    \delta\norm{\bar y-w}\,e.
\]
This proves \eqref{eq:app-corrected-comparison}.
\end{proof}

In the following endpoint estimates, \(\alpha=K\eta_\ell\). We apply
them to the convex, \(L_{\mathrm{loc}}\)-smooth functions
\(\widetilde f_i\) at \(w=x^{r-2}\), with starting and comparison points
fixed under \(\mathcal H_r\).

\subsection{Conditional-mean endpoint comparison}
The next lemma separates two sources of randomness in a client's local
updates: a random correction \(q\) that is held constant throughout the
trajectory, and fresh local gradient noise \(\zeta_k\) at each step.
In our application, \(h=\widetilde f_i\) and
\(q=q_i=\bar e-e_i\), where
\(e_i=c_i^{r-1}-\grad f_i(x^{r-2})\) is the client's control-estimation
error and \(\bar e=N^{-1}\sum_i e_i\). Thus \(q\) is the error in the
correction \(-c_i^{r-1}+c^{r-1}\), not the full correction itself.
All expectations in this application are conditional on \(\mathcal H_r\).

\begin{lemma}[Conditional-mean endpoint comparison]
\label{lem:app-mean-endpoint}
Let \(h\) be convex and \(L_{\mathrm{loc}}\)-smooth, and let \(q\) be a
random vector satisfying \(\E q=0\) and \(\E\norm q^2\le\nu^2\).
Starting from deterministic \(Y_0=z\), consider the updates
\[
 Y_{k+1}=Y_k-\eta_\ell(\grad h(Y_k)+q+\zeta_k),
 \qquad k=0,\ldots,K-1,
\]
using the same realization of \(q\) at every step.
Conditionally on \(q\) and the previous local samples, \(\zeta_k\) has
mean zero and second moment at most \(\sigma^2\). Write \(m_k=\E Y_k\),
\(v_k=\E\norm{Y_k-m_k}^2\), where expectation includes \(q\).
If \(\eta_\ell>0\) and \(\eta_\ell L_{\mathrm{loc}}\le1\), then
\begin{equation}
 v_k\le2k\eta_\ell^2\sigma^2+2k^2\eta_\ell^2\nu^2.
 \label{eq:app-minibatch-local-variance}
\end{equation}
If also \(\alpha L_{\mathrm{loc}}\le1\), then every deterministic \(u\) satisfies
\begin{equation}
\begin{split}
 2\alpha[h(m_K)-h(u)]\le{}&\norm{z-u}^2-\norm{m_K-u}^2\\
 &+\frac{2L_{\mathrm{loc}}\alpha^3\sigma^2}{K}
    +\frac43 L_{\mathrm{loc}}\alpha^3\nu^2.
\end{split}
\label{eq:app-minibatch-mean-endpoint}
\end{equation}
\end{lemma}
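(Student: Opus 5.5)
The plan has two parts: a linear-response bound for the variance, and a deterministic gradient-descent argument on the mean trajectory with explicit error terms.

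\textbf{Variance bound.} Define the deterministic reference trajectory $\bar Y_{k+1}=\bar Y_k-\eta_\ell\grad h(\bar Y_k)$ with $\bar Y_0=z$. Since $v_k\le\E\norm{Y_k-\bar Y_k}^2$, it suffices to bound the latter. By convexity and $L_{\mathrm{loc}}$-smoothness, for $\eta_\ell L_{\mathrm{loc}}\le1$ the gradient step map $T(y)=y-\eta_\ell\grad h(y)$ is nonexpansive. Hence
\[
\Delta_{k+1}=T(Y_k)-T(\bar Y_k)-\eta_\ell q-\eta_\ell\zeta_k .
\]
I will track the two contributions separately. Split $\Delta_k=A_k+M_k$, where $M_k=-\eta_\ell\sum_{j<k}\zeta_j$ is a martingale part. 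Expanding $\norm{\Delta_{k+1}}^2$ and using $\E[\zeta_k\mid\cdot]=0$ gives
\[
\E\norm{\Delta_{k+1}}^2\le\E\norm{T(Y_k)-T(\bar Y_k)-\eta_\ell q}^2+\eta_\ell^2\sigma^2 .
\]
The nonexpansive part satisfies $\norm{T(Y_k)-T(\bar Y_k)-\eta_\ell q}\le\norm{\Delta_k}+\eta_\ell\norm q$. Squaring directly causes trouble, so instead I write $\Delta_k=\Delta_k^{(q)}+\Delta_k^{(\zeta)}$ with Young's inequality weighted by $2$. That is, I compare $Y$ with the noise-free but $q$-shifted trajectory $Y^q$. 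For the $Y^q$ versus $\bar Y$ gap, nonexpansiveness gives $\norm{Y^q_k-\bar Y_k}\le k\eta_\ell\norm q$, contributing $k^2\eta_\ell^2\nu^2$. For the $Y$ versus $Y^q$ gap, the recursion $\E\norm{\cdot}^2_{k+1}\le\E\norm{\cdot}^2_k+\eta_\ell^2\sigma^2$ contributes $k\eta_\ell^2\sigma^2$. Combining the two with factor $2$ gives \eqref{eq:app-minibatch-local-variance}.

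\textbf{Mean-endpoint bound.} Take expectations of the update. Since $q$ and $\zeta_k$ are centered,
\[
m_{k+1}=m_k-\eta_\ell\grad h(m_k)-\eta_\ell\epsilon_k,\qquad \epsilon_k=\E\grad h(Y_k)-\grad h(m_k).
\]
This is inexact gradient descent on $h$. I apply the deterministic analogue of Lemma~\ref{lem:app-refined-endpoint}, i.e.\ the last-iterate GD bound, which carries no log because there is no noise and GD is monotone. To handle the error, I bound $h$ along the mean path via convexity and smoothness in Jensen form: $h(m_k)\le\E h(Y_k)$ and $\E h(Y_k)\le h(m_k)+\tfrac{L_{\mathrm{loc}}}2v_k$. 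A cleaner route is to run the one-step inequality (Lemma~\ref{lem:app-stochastic-one-step}-type) on the pair $(m_k,u)$. The only error cross term is $2\eta_\ell\ip{\epsilon_k}{m_k-u}$ plus $O(\eta_\ell^2\norm{\epsilon_k}^2)$. I will aim to replace these cross terms by function-value gaps, namely $\E h(Y_{k+1})-h(m_{k+1})\le\tfrac{L_{\mathrm{loc}}}2 v_{k+1}$. Summing gives an additive error of order $\eta_\ell L_{\mathrm{loc}}\sum_k v_k$. With the bound on $v_k$ this becomes
\[
L_{\mathrm{loc}}\eta_\ell\Bigl(K^2\eta_\ell^2\sigma^2+\tfrac23K^3\eta_\ell^2\nu^2\Bigr)=\frac{L_{\mathrm{loc}}\alpha^3\sigma^2}{K}+\tfrac23L_{\mathrm{loc}}\alpha^3\nu^2 .
\]
This matches the stated constants up to the factor $2$.

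\textbf{Main obstacle.} The hard step is making the expected one-step inequality close with the \emph{deterministic} distance $\norm{m_K-u}^2$ rather than $\E\norm{Y_K-u}^2=\norm{m_K-u}^2+v_K$. The remedy is to apply Lemma~\ref{lem:app-stochastic-one-step} to the random iterates and then subtract the exact identity $\E\norm{Y_k-u}^2=\norm{m_k-u}^2+v_k$. The telescoping variance terms $v_k-v_{k+1}$ then cancel up to the per-step increase. What remains is to check that this residual is absorbed by the Jensen gap bounds. I expect $\alpha L_{\mathrm{loc}}\le1$ to be used exactly here, to keep the accumulated $v$-terms at order $L_{\mathrm{loc}}\alpha^3$.
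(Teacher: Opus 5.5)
Your variance bound is the paper's argument and is fine: compare $Y_k$ with the noise-free $q$-shifted trajectory, compare that with the unshifted one, and combine with the factor $2$. The mean-endpoint part, however, does not yet contain a working argument; two steps are missing. Write $L=L_{\mathrm{loc}}$, $\eta=\eta_\ell$, $g_k=\E\grad h(Y_k)$.

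\emph{The first-order cross term.} Suppose you run the one-step bound on $(m_k,u)$ with convexity at $m_k$. The residual $-2\eta\ip{\epsilon_k}{m_k-u}$ is then first order. Under mere $L$-smoothness, $\norm{\epsilon_k}$ can be of order $L\sqrt{v_k}$ (a curvature kink at $m_k$), and $\norm{m_k-u}$ is of the size of $\norm{z-u}$. So no Jensen-gap bound $0\le\E h(Y_k)-h(m_k)\le\frac{L}{2}v_k$ can absorb this term.

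Your fallback does not apply as stated either. Lemma~\ref{lem:app-stochastic-one-step} requires noise that is centered given the current iterate, but $q$ is not centered given $Y_k$. If you instead condition on $q$ and work with $h+\ip{q}{\cdot}$, you are left with $-2\eta\E\ip{q}{Y_{k+1}-m_{k+1}}\approx2\eta^2(k+1)\nu^2$ and $\eta^2(1+\eta L)\sigma^2$ per step. Their totals are of order $\alpha^2\nu^2$ and $\alpha^2\sigma^2/K$, a factor of order $1/(\alpha L)$ too large. Cancelling them against $-v_K$ needs an exact expansion of the variance recursion, i.e.\ a \emph{lower} bound on $v_{k+1}-v_k$, which \eqref{eq:app-minibatch-local-variance} does not give and which you leave unchecked.

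The paper avoids this altogether. It adds convexity at the \emph{random} point, $h(Y_k)-h(u)\le\ip{\grad h(Y_k)}{Y_k-u}$, to smoothness from $Y_k$ to the \emph{deterministic} point $m_{k+1}$. Taking expectations with $\E\norm{m_{k+1}-Y_k}^2=v_k+\eta^2\norm{g_k}^2$ gives $h(m_{k+1})-h(u)\le\ip{g_k}{m_{k+1}-u}+\frac{L}{2}(v_k+\eta^2\norm{g_k}^2)$. The exact identity for $m_{k+1}=m_k-\eta g_k$ then yields $2\eta[h(m_{k+1})-h(u)]\le\norm{m_k-u}^2-\norm{m_{k+1}-u}^2+\eta Lv_k$ directly in deterministic distances, so your \emph{main obstacle} never arises.

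\emph{The last iterate.} The mean recursion is inexact gradient descent and is not monotone. Take $h(y)=\frac{L}{2}y^2$ on $y\ge0$ and $\frac{\mu}{2}y^2$ on $y<0$ with $0<\mu<L$, and let $m_k=0$, $Y_k=\pm s$. Then $g_k=\frac{(L-\mu)s}{2}\neq0$ and $h(m_{k+1})>h(m_k)$. Such a configuration already occurs at $k=1$ with $z=0$ and symmetric $q$. So \emph{GD is monotone} does not give $h(m_K)\le\frac{1}{K}\sum_j h(m_j)$.

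What you need is the drift bound $h(m_{k+1})-h(m_k)\le\frac{\eta L^2}{2}v_k$, which follows from smoothness at $m_k$, $\eta L\le1$, and $\norm{g_k-\grad h(m_k)}^2\le L^2v_k$. After summing over suffixes it costs $\eta^2L^2\sum_k k\,v_k$. This is where $\alpha L\le1$ enters, through $\eta Lk\le\alpha L\le1$, not in the variance bookkeeping. It turns the total error into $2\eta L\sum_k v_k$, which gives exactly the stated constants. Your count that \emph{matches up to the factor $2$} is the trajectory-average bound with this step omitted.
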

\begin{proof}
The gradient map \(T_q(x)=x-\eta_\ell\grad h(x)-\eta_\ell q\) is nonexpansive:
cocoercivity gives
\[
 \norm{T_q(x)-T_q(y)}^2
 \le\norm{x-y}^2-\eta_\ell(2/L_{\mathrm{loc}}-\eta_\ell)
                         \norm{\grad h(x)-\grad h(y)}^2.
\]
Since \(\eta_\ell L_{\mathrm{loc}}\le1\), the subtracted term is
nonnegative, giving \(\norm{T_q(x)-T_q(y)}\le\norm{x-y}\).

Define the comparison trajectory without local gradient noise by
\[
 Z_0(q)=z,\qquad Z_{k+1}(q)=T_q(Z_k(q)).
\]
This trajectory is deterministic once \(q\) is fixed, but can be random
before conditioning on \(q\). We also define \(Z_k(0)\) by the same
recursion with \(q=0\); this latter trajectory is deterministic.
Since \(Y_{k+1}=T_q(Y_k)-\eta_\ell\zeta_k\),
\[
 Y_{k+1}-Z_{k+1}(q)
 =T_q(Y_k)-T_q(Z_k(q))-\eta_\ell\zeta_k.
\]
Let \(\mathcal F_k\) be the sigma-field generated by \(q\) and the local
samples preceding step \(k\). Then
\(T_q(Y_k)-T_q(Z_k(q))\) is \(\mathcal F_k\)-measurable, so the conditional
mean-zero property of \(\zeta_k\) makes the cross term vanish:
\begin{align*}
 \E[\norm{Y_{k+1}-Z_{k+1}(q)}^2\mid\mathcal F_k]
 &=\norm{T_q(Y_k)-T_q(Z_k(q))}^2-2\eta_\ell
      \ip{T_q(Y_k)-T_q(Z_k(q))}{\E[\zeta_k\mid\mathcal F_k]}+\eta_\ell^2\E[\norm{\zeta_k}^2\mid\mathcal F_k]\\
 &\le\norm{Y_k-Z_k(q)}^2+\eta_\ell^2\sigma^2.
\end{align*}
Taking expectations conditional on \(q\) and iterating from
\(Y_0=Z_0(q)=z\) gives
\[
 \E[\norm{Y_k-Z_k(q)}^2\mid q]\le k\eta_\ell^2\sigma^2.
\]

Next, \(T_q(x)=T_0(x)-\eta_\ell q\) and nonexpansiveness of \(T_0\)
give
\begin{align*}
 \norm{Z_{k+1}(q)-Z_{k+1}(0)}
 &=\norm{T_0(Z_k(q))-T_0(Z_k(0))-\eta_\ell q}\\
 &\le\norm{Z_k(q)-Z_k(0)}+\eta_\ell\norm q.
\end{align*}
Since \(Z_0(q)=Z_0(0)=z\), induction yields
\[
 \norm{Z_k(q)-Z_k(0)}\le k\eta_\ell\norm q.
\]

Finally, for any deterministic point \(a\),
\[
 \E\norm{Y_k-a}^2=v_k+\norm{m_k-a}^2\ge v_k.
\]
Choose \(a=Z_k(0)\), insert \(Z_k(q)\), and use
\(\norm{a+b}^2\le2\norm a^2+2\norm b^2\). The two comparison bounds
above, averaged also over \(q\), imply
\begin{align*}
 v_k
 &\le\E\norm{Y_k-Z_k(0)}^2\\
 &\le2\E\norm{Y_k-Z_k(q)}^2
      +2\E\norm{Z_k(q)-Z_k(0)}^2\\
 &\le2k\eta_\ell^2\sigma^2+2k^2\eta_\ell^2\E\norm q^2\\
 &\le2k\eta_\ell^2\sigma^2+2k^2\eta_\ell^2\nu^2,
\end{align*}
proving \eqref{eq:app-minibatch-local-variance}.

Put \(g_k=\E\grad h(Y_k)\). Taking expectations in the update gives
\[
 m_{k+1}
 =m_k-\eta_\ell\bigl(g_k+\E q+\E\zeta_k\bigr)
 =m_k-\eta_\ell g_k.
\]
Here \(\E q=0\) by assumption, and \(\E\zeta_k=0\) follows by
taking expectations of its conditional mean.

Convexity at \(Y_k\) and smoothness from \(Y_k\) to \(m_{k+1}\)
give, respectively,
\begin{align*}
 h(Y_k)-h(u)
 &\le\ip{\grad h(Y_k)}{Y_k-u},\\
 h(m_{k+1})-h(Y_k)
 &\le\ip{\grad h(Y_k)}{m_{k+1}-Y_k}
      +\frac{L_{\mathrm{loc}}}{2}\norm{m_{k+1}-Y_k}^2.
\end{align*}
Adding cancels \(h(Y_k)\) and combines the inner products into
\(\ip{\grad h(Y_k)}{m_{k+1}-u}\). Moreover, since
\(\E(Y_k-m_k)=0\),
\begin{align*}
 \E\norm{m_{k+1}-Y_k}^2
 &=\E\norm{(m_k-Y_k)+(m_{k+1}-m_k)}^2\\
 &=v_k+\norm{m_{k+1}-m_k}^2
 =v_k+\eta_\ell^2\norm{g_k}^2.
\end{align*}
Taking expectations in the summed inequality therefore yields
\[
 h(m_{k+1})-h(u)
 \le\ip{g_k}{m_{k+1}-u}
       +\frac{L_{\mathrm{loc}}}{2}(v_k+\eta_\ell^2\norm{g_k}^2).
\]
The mean update also implies the distance identity
\[
 2\eta_\ell\ip{g_k}{m_{k+1}-u}
 =\norm{m_k-u}^2-\norm{m_{k+1}-u}^2
   -\eta_\ell^2\norm{g_k}^2.
\]
Multiplying the objective bound by \(2\eta_\ell\) and substituting
this identity gives
\begin{equation}
\begin{split}
 2\eta_\ell[h(m_{k+1})-h(u)]\le{}&\norm{m_k-u}^2-\norm{m_{k+1}-u}^2\\
 &-\eta_\ell^2(1-\eta_\ell L_{\mathrm{loc}})\norm{g_k}^2+\eta_\ell L_{\mathrm{loc}} v_k.
\end{split}
\label{eq:app-minibatch-mean-one-step}
\end{equation}
To control increases in the objective along the mean trajectory,
smoothness at \(m_k\) gives
\[
 h(m_{k+1})-h(m_k)
 \le-\eta_\ell\ip{\grad h(m_k)}{g_k}
      +\frac{L_{\mathrm{loc}}\eta_\ell^2}{2}\norm{g_k}^2.
\]
Using \(\eta_\ell L_{\mathrm{loc}}\le1\), the right-hand side is at most
\begin{align*}
 -\eta_\ell\ip{\grad h(m_k)}{g_k}
     +\frac{\eta_\ell}{2}\norm{g_k}^2
 &=\frac{\eta_\ell}{2}\norm{g_k-\grad h(m_k)}^2
   -\frac{\eta_\ell}{2}\norm{\grad h(m_k)}^2\\
 &\le\frac{\eta_\ell}{2}\norm{g_k-\grad h(m_k)}^2.
\end{align*}
Jensen's inequality and smoothness further give
\begin{align*}
 \norm{g_k-\grad h(m_k)}^2
 &=\norm{\E[\grad h(Y_k)-\grad h(m_k)]}^2\\
 &\le\E\norm{\grad h(Y_k)-\grad h(m_k)}^2\\
 &\le L_{\mathrm{loc}}^2\E\norm{Y_k-m_k}^2
 =L_{\mathrm{loc}}^2v_k.
\end{align*}
Consequently,
\begin{align}
 h(m_{k+1})-h(m_k)
 &\le\frac{\eta_\ell L_{\mathrm{loc}}^2}{2}v_k.
 \label{eq:app-minibatch-mean-drift}
\end{align}
Summing \eqref{eq:app-minibatch-mean-one-step} over
\(k=0,\ldots,K-1\), dropping the nonpositive gradient terms, and using
\(m_0=z\) gives
\[
 2\eta_\ell\sum_{j=1}^K[h(m_j)-h(u)]
 \le\norm{z-u}^2-\norm{m_K-u}^2
       +\eta_\ell L_{\mathrm{loc}}\sum_{k=0}^{K-1}v_k.
\]
For each \(j=1,\ldots,K\), summing
\eqref{eq:app-minibatch-mean-drift} from \(k=j\) to \(K-1\) yields
\[
 h(m_K)-h(m_j)
 \le\frac{\eta_\ell L_{\mathrm{loc}}^2}{2}
       \sum_{k=j}^{K-1}v_k,
\]
where the sum is empty when \(j=K\). Summing over \(j\) and observing
that each \(v_k\) occurs for exactly \(k\) indices \(j=1,\ldots,k\),
\begin{align*}
 K[h(m_K)-h(u)]
 &\le\sum_{j=1}^K[h(m_j)-h(u)]
      +\frac{\eta_\ell L_{\mathrm{loc}}^2}{2}
        \sum_{j=1}^K\sum_{k=j}^{K-1}v_k\\
 &=\sum_{j=1}^K[h(m_j)-h(u)]
      +\frac{\eta_\ell L_{\mathrm{loc}}^2}{2}
        \sum_{k=0}^{K-1}k v_k.
\end{align*}
Multiplying by \(2\eta_\ell\), applying the trajectory-sum bound,
and recalling \(\alpha=K\eta_\ell\), we obtain
\[
 2\alpha[h(m_K)-h(u)]
 \le\norm{z-u}^2-\norm{m_K-u}^2
       +\eta_\ell L_{\mathrm{loc}}\sum_{k=0}^{K-1}v_k
       +\eta_\ell^2L_{\mathrm{loc}}^2\sum_{k=0}^{K-1}k v_k.
\]
Since \(\eta_\ell L_{\mathrm{loc}}k\le\alpha L_{\mathrm{loc}}\le1\),
the last two terms satisfy
\[
 \eta_\ell L_{\mathrm{loc}}\sum_{k=0}^{K-1}
       (1+\eta_\ell L_{\mathrm{loc}}k)v_k
 \le2\eta_\ell L_{\mathrm{loc}}\sum_{k=0}^{K-1}v_k.
\]
Finally, \eqref{eq:app-minibatch-local-variance} and the formulas for
the sums of integers and their squares give
\begin{align*}
 \sum_{k=0}^{K-1}v_k
 &\le2\eta_\ell^2\sigma^2\frac{K(K-1)}2
      +2\eta_\ell^2\nu^2\frac{K(K-1)(2K-1)}6\\
 &\le\eta_\ell^2K^2\sigma^2+\frac23\eta_\ell^2K^3\nu^2.
\end{align*}
Thus the total error is at most
\[
 2\eta_\ell L_{\mathrm{loc}}\sum_{k=0}^{K-1}v_k
 \le\frac{2L_{\mathrm{loc}}\alpha^3\sigma^2}{K}
       +\frac43L_{\mathrm{loc}}\alpha^3\nu^2,
\]
which proves \eqref{eq:app-minibatch-mean-endpoint}.
\end{proof}

\subsection{Variance of the averaged endpoint}
For a random vector \(U\), write \(\operatorname{Var}(U)=\E\norm{U-\E U}^2\).
\begin{lemma}[Variance of the server endpoint]
\label{lem:app-minibatch-server-variance}
In round \(r\), write \(Y_{i,k}=y_{i,k}^r\) and
\(X=N^{-1}\sum_iY_{i,K}\). All moments in this lemma and its proof are
conditional on \(\mathcal H_r\). If \(\eta_\ell L_{\mathrm{loc}}\le1\), then
\begin{equation}
 \operatorname{Var}(X)\le\frac{\alpha^2\sigma^2}{NK}
       +\frac{8\alpha^2\sigma^2}{NB}
       +\frac{2L_{\mathrm{loc}}^2\alpha^4\sigma^2}{K}.
 \label{eq:app-server-endpoint-variance}
\end{equation}
\end{lemma}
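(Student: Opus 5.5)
The plan is to split the two sources of randomness with the law of total variance. Conditionally on \(\mathcal H_r\), write \(\mathbf e=(e_1,\ldots,e_N)\) for the control-estimation errors from \eqref{eq:app-control-errors}. These are independent across clients and centered, with \(\E\norm{e_i}^2\le\sigma^2/B\). Then
\[
 \operatorname{Var}(X)=\E\bigl[\operatorname{Var}(X\mid\mathbf e)\bigr]+\operatorname{Var}\bigl(\E[X\mid\mathbf e]\bigr).
\]
I will bound the first term by \(\alpha^2\sigma^2/(NK)\) and the second by \(8\alpha^2\sigma^2/(NB)\). This proves \eqref{eq:app-server-endpoint-variance} with the term \(2L_{\mathrm{loc}}^2\alpha^4\sigma^2/K\) to spare. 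If some step below fails, a linearization route around the noiseless trajectory would produce exactly that extra term. Throughout, client \(i\)'s trajectory is
\[
 Y_{i,k+1}=T^{(i)}_{q_i}(Y_{i,k})-\eta_\ell\zeta_{i,k},
 \qquad T^{(i)}_{q}(x)=x-\eta_\ell\grad\widetilde f_i(x)-\eta_\ell q,
\]
with \(\widetilde f_i\) the corrected objective at anchor \(w=x^{r-2}\), and \(q_i=\bar e-e_i\).

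\textbf{Local-noise part.} Fix \(\mathbf e\). Then all \(q_i\) are fixed, and the local samples are drawn independently across clients. Hence the endpoints \(Y_{i,K}\) are conditionally independent and
\[
 \operatorname{Var}(X\mid\mathbf e)=N^{-2}\sum_i\operatorname{Var}(Y_{i,K}\mid\mathbf e).
\]
For each client, the first step of the proof of Lemma~\ref{lem:app-mean-endpoint} compares \(Y_{i,k}\) with the noiseless trajectory \(Z_{i,k}(q_i)\), which is deterministic given \(\mathbf e\). That argument uses nonexpansiveness of \(T^{(i)}_{q_i}\), valid because \(\eta_\ell L_{\mathrm{loc}}\le1\), and the vanishing cross term. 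It gives \(\E[\norm{Y_{i,K}-Z_{i,K}(q_i)}^2\mid\mathbf e]\le K\eta_\ell^2\sigma^2\). Since the variance is at most the second moment about any deterministic point, \(\operatorname{Var}(Y_{i,K}\mid\mathbf e)\le K\eta_\ell^2\sigma^2=\alpha^2\sigma^2/K\). Averaging over clients yields \(\alpha^2\sigma^2/(NK)\).

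\textbf{Control-error part.} Let \(\Phi(\mathbf e)=\E[X\mid\mathbf e]\), a function of the independent blocks \(e_1,\ldots,e_N\). I apply the Efron--Stein inequality, \(\operatorname{Var}\Phi\le\frac12\sum_j\E\norm{\Phi(\mathbf e)-\Phi(\mathbf e^{(j)})}^2\), where \(\mathbf e^{(j)}\) replaces \(e_j\) by an independent copy \(e_j'\). The key step is a Lipschitz estimate for \(q\mapsto\E[Y_{i,K}\mid q]\). I couple two trajectories of client \(i\) with the same local noise \(\zeta_{i,k}\) but shifts \(q\) and \(q'\). Using \(T_q(x)=T_0(x)-\eta_\ell q\) and nonexpansiveness of \(T_0\), the induction from Lemma~\ref{lem:app-mean-endpoint} gives pathwise \(\norm{Y_K-Y_K'}\le K\eta_\ell\norm{q-q'}=\alpha\norm{q-q'}\). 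Taking conditional expectations preserves this bound. Replacing \(e_j\) changes \(q_i\) by \((e_j-e_j')/N\) for \(i\ne j\), and \(q_j\) by \(-(1-1/N)(e_j-e_j')\). Averaging the endpoints therefore gives
\[
 \norm{\Phi(\mathbf e)-\Phi(\mathbf e^{(j)})}\le\frac{\alpha}{N}\Bigl(\frac{N-1}{N}+\frac{N-1}{N}\Bigr)\norm{e_j-e_j'}\le\frac{2\alpha}{N}\norm{e_j-e_j'}.
\]
By independence, \(\E\norm{e_j-e_j'}^2\le2\sigma^2/B\). Efron--Stein then gives \(\operatorname{Var}\Phi\le\frac12\cdot N\cdot\frac{4\alpha^2}{N^2}\cdot\frac{2\sigma^2}{B}=\frac{4\alpha^2\sigma^2}{NB}\), within the stated constant.

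\textbf{Main obstacle.} The main point is this step, which keeps the \(1/N\) factor despite the correlation between the \(q_i\). A direct bound via \(\E\norm{q_i}^2\) and Jensen over clients would lose \(1/N\), or would produce only \(L_{\mathrm{loc}}^2\alpha^4\sigma^2/B\). The cancellation \(\sum_iq_i=0\) enters only through the fact that a single batch moves the average by \(O(\alpha/N)\). I also need to check measurability: local samples must be conditionally independent of \(\mathbf e\) given \(\mathcal H_r\). This holds because the control batch for round \(r\) was drawn independently, so the conditional mean-zero and variance properties of \(\zeta_{i,k}\) survive conditioning on \(\mathbf e\).
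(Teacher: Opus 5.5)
Your total-variance split, the local-noise bound (comparison with the noiseless trajectory), and the Efron--Stein bookkeeping are all fine. The gap is the Lipschitz estimate for \(q\mapsto\E[Y_{i,K}\mid q]\), which is not available under Assumption~\ref{ass:stochastic-oracle}. This is exactly the step that forces the third term.

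The local noise \(\zeta_{i,k}=\grad f_i(y_{i,k}^r;\xi_{i,k}^r)-\grad f_i(y_{i,k}^r)\) depends on the current iterate. If you reuse the noise realized on the \(q\)-trajectory inside the \(q'\)-trajectory, the resulting process no longer has the law of the algorithm with shift \(q'\). Its mean is therefore not \(\E[Y_{i,K}\mid q']\). If you instead couple through the same samples \(\xi_{i,k}\), the noise does not cancel. The difference recursion then needs \(y\mapsto y-\eta_\ell\grad f_i(y;\xi)\) to be nonexpansive, which requires sample-wise convexity and \(L_{\mathrm{loc}}\)-smoothness. The paper deliberately avoids that assumption; its own lower-bound instances use nonconvex cutoff sampled losses.

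The claimed Lipschitz bound can actually fail. Take \(K\ge3\), let the noise switch on across a thin transition layer that \(Y_{i,1}\) crosses as \(q\) passes through \(0\), and give \(\widetilde f_i\) a curvature kink near the next mean iterate. Then \(\E[Y_{i,K}\mid q]\) changes by order \(L_{\mathrm{loc}}\eta_\ell^2\sigma\) over an arbitrarily short \(q\)-interval. Consequently \(\operatorname{Var}(\E[X\mid\mathbf e])\) does not vanish as \(B\to\infty\); for many clients it is of order \(L_{\mathrm{loc}}^2\eta_\ell^4\sigma^2/N\). Your bound \(4\alpha^2\sigma^2/(NB)\), by contrast, tends to zero. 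Your argument is valid only for additive state-independent noise or for sample-wise convex smooth losses.

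The repair is the linearization you mention only as a fallback, and it is what the paper does. Write \(\E[Y_{i,K}\mid\mathbf e]=Z_i(q_i)+b_i(\mathbf e)\), where \(Z_i(q)\) is the noiseless trajectory.

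\emph{Noiseless part.} Your coupling is legitimate for \(F(\mathbf e)=N^{-1}\sum_iZ_i(q_i)\), because the maps \(T_{i,q}\) are deterministic and nonexpansive. Efron--Stein then gives \(\operatorname{Var}(F)\le4\alpha^2\sigma^2/(NB)\).

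\emph{Bias part.} The bias is controlled in size, not in variation. With \(m_{i,k}=\E[Y_{i,k}\mid\mathbf e]\), smoothness of the expected objective and Jensen give \(\norm{m_{i,k+1}-T_{i,q_i}(m_{i,k})}\le\eta_\ell L_{\mathrm{loc}}\sqrt{\operatorname{Var}(Y_{i,k}\mid\mathbf e)}\le L_{\mathrm{loc}}\eta_\ell^2\sigma\sqrt k\). Nonexpansiveness accumulates these into \(\norm{b_i(\mathbf e)}\le L_{\mathrm{loc}}\alpha^2\sigma/\sqrt K\) almost surely.

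\emph{Combination.} Then \(\operatorname{Var}(F+\bar b)\le2\operatorname{Var}(F)+2\E\norm{\bar b}^2\) yields the coefficients \(8\) and \(2\). Because \(b_i\) need not be Lipschitz in \(\mathbf e\), its contribution carries no \(1/(NB)\) factor. The term \(2L_{\mathrm{loc}}^2\alpha^4\sigma^2/K\) is therefore genuinely needed, not spare.
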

\begin{proof}
Recall that \(e_i=c_i^{r-1}-\grad f_i(x^{r-2})\),
\(\bar e=N^{-1}\sum_i e_i\), and \(q_i=\bar e-e_i\).
All expectations below remain conditional on \(\mathcal H_r\).
Writing \(e=(e_1,\ldots,e_N)\), the total-variance identity gives
\[
 \operatorname{Var}(X)
 =\E[\operatorname{Var}(X\mid e)]
   +\operatorname{Var}(\E[X\mid e]).
\]
We first bound the local sampling variance with the controls fixed,
then bound the variation of the conditional mean as the controls vary.

For the corrected objective \(\widetilde f_i\) anchored at \(x^{r-2}\),
define
\[
 T_{i,q}(x)=x-\eta_\ell\bigl(\grad\widetilde f_i(x)+q\bigr).
\]
This map is nonexpansive by the argument in
Lemma~\ref{lem:app-mean-endpoint}. Conditional on \(e\), the local update is
\(Y_{i,k+1}=T_{i,q_i}(Y_{i,k})-\eta_\ell\zeta_{i,k}\), with centered
fresh noise of second moment at most \(\sigma^2\).
For any nonexpansive map \(T\) and random vector \(U\),
\[
 \operatorname{Var}(T(U))
 \le\E\norm{T(U)-T(\E U)}^2
 \le\E\norm{U-\E U}^2=\operatorname{Var}(U).
\]
The same argument applies conditionally on \(e\). Expanding the
conditional variance of the local update, its noise cross term vanishes
by conditioning first on the preceding local samples. Hence
\begin{align*}
 \operatorname{Var}(Y_{i,k+1}\mid e)
 &=\operatorname{Var}(T_{i,q_i}(Y_{i,k})\mid e)
   +\eta_\ell^2\E[\norm{\zeta_{i,k}}^2\mid e]\\
 &\le\operatorname{Var}(Y_{i,k}\mid e)+\eta_\ell^2\sigma^2.
\end{align*}
Since \(Y_{i,0}=x^{r-1}\) is fixed under \(\mathcal H_r\), iteration gives
\(\operatorname{Var}(Y_{i,k}\mid e)\le k\eta_\ell^2\sigma^2\).
Conditional on \(e\), clients use independent local sample streams, so
\[
 \operatorname{Var}(X\mid e)
 =\frac1{N^2}\sum_i\operatorname{Var}(Y_{i,K}\mid e)
 \le\frac{K\eta_\ell^2\sigma^2}{N}
 =\frac{\alpha^2\sigma^2}{NK}.
\]
Taking expectations over \(e\) bounds the first term in the
total-variance identity by the same quantity.

To bound the second term, define the trajectory without local noise by
\[
 Z_{i,0}(q)=x^{r-1},\qquad
 Z_{i,k+1}(q)=T_{i,q}(Z_{i,k}(q)),\qquad Z_i(q)=Z_{i,K}(q).
\]
For fixed \(q\), this trajectory is deterministic under the initial
conditioning. Write \(m_{i,k}(e)=\E[Y_{i,k}\mid e]\) and
\(b_i(e)=m_{i,K}(e)-Z_i(q_i)\). Conditional centering of the local noise
gives
\begin{align*}
 m_{i,k+1}(e)-T_{i,q_i}(m_{i,k}(e))
 =-\eta_\ell\bigl(
 \E[\grad\widetilde f_i(Y_{i,k})\mid e]
 -\grad\widetilde f_i(m_{i,k}(e))\bigr).
\end{align*}
Smoothness and Cauchy--Schwarz therefore imply
\begin{align*}
 \norm{m_{i,k+1}(e)-T_{i,q_i}(m_{i,k}(e))}
 &\le\eta_\ell L_{\mathrm{loc}}
       \E[\norm{Y_{i,k}-m_{i,k}(e)}\mid e]\\
 &\le\eta_\ell L_{\mathrm{loc}}
       \sqrt{\operatorname{Var}(Y_{i,k}\mid e)}.
\end{align*}
Using nonexpansiveness to compare with the deterministic trajectory gives
\begin{align*}
 \norm{m_{i,k+1}(e)-Z_{i,k+1}(q_i)}
 &\le\norm{m_{i,k}(e)-Z_{i,k}(q_i)}\\
 &\quad+\eta_\ell L_{\mathrm{loc}}
       \sqrt{\operatorname{Var}(Y_{i,k}\mid e)}.
\end{align*}
The two trajectories start at the same point. Iterating and applying the
conditional variance bound yields, almost surely in \(e\),
\begin{equation}
 \norm{b_i(e)}\le L_{\mathrm{loc}}\eta_\ell^2\sigma\sum_{k=0}^{K-1}\sqrt{k}
                \le\frac{L_{\mathrm{loc}}\alpha^2\sigma}{\sqrt K}.
 \label{eq:app-minibatch-conditional-bias}
\end{equation}
Here \(\sum_{k=0}^{K-1}\sqrt{k}\le K\sqrt K\) and
\(\alpha=K\eta_\ell\). This uses smoothness of the expected objective only.

Define \(F(e)=N^{-1}\sum_i Z_i(q_i)\). To control its variance, first
compare two fixed corrections \(q,q'\). Nonexpansiveness and
\(T_{i,q}(x)=T_{i,q'}(x)-\eta_\ell(q-q')\) imply
\[
 \norm{Z_{i,k+1}(q)-Z_{i,k+1}(q')}
 \le\norm{Z_{i,k}(q)-Z_{i,k}(q')}+\eta_\ell\norm{q-q'}.
\]
Starting from the common initial point and iterating gives
\(\norm{Z_i(q)-Z_i(q')}\le K\eta_\ell\norm{q-q'}
=\alpha\norm{q-q'}\).

Let \(e_j'\) be an independent copy of \(e_j\), conditional on
\(\mathcal H_r\), independent also of the other control errors. Denote by
\(e^{(j)}\) the vector obtained by replacing only \(e_j\) by \(e_j'\).
Since \(q_i=\bar e-e_i\),
\[
 q_i(e^{(j)})-q_i(e)
 =\begin{cases}
  -(1-1/N)(e_j'-e_j),&i=j,\\
  (e_j'-e_j)/N,&i\ne j.
 \end{cases}
\]
Applying the endpoint sensitivity bound to each client gives
\begin{align*}
 \norm{F(e)-F(e^{(j)})}
 &\le\frac\alpha N\sum_i\norm{q_i(e)-q_i(e^{(j)})}\\
 &=\frac\alpha N\left(1-\frac1N+\frac{N-1}{N}\right)
       \norm{e_j-e_j'}\\
 &=\frac{2\alpha(N-1)}{N^2}\norm{e_j-e_j'}
 \le\frac{2\alpha}{N}\norm{e_j-e_j'}.
\end{align*}
The centered, independent copies satisfy
\[
 \E\norm{e_j-e_j'}^2
 =2\E\norm{e_j}^2\le\frac{2\sigma^2}{B}.
\]
The Efron--Stein inequality \citep{efron1981}, applied conditionally to the
independent control errors, now yields
\begin{equation}
 \begin{aligned}
 \operatorname{Var}(F)
 &\le\frac12\sum_{j=1}^N\E\norm{F(e)-F(e^{(j)})}^2\\
 &\le\frac12\sum_{j=1}^N\frac{4\alpha^2}{N^2}
                \frac{2\sigma^2}{B}
 =\frac{4\alpha^2\sigma^2}{NB}.
 \end{aligned}
 \label{eq:app-minibatch-replacement-variance}
\end{equation}
One justification of the first inequality is the martingale
decomposition obtained by revealing the independent \(e_j\)'s one at a
time. Its squared increments sum to \(\operatorname{Var}(F)\), and conditional Jensen
bounds each increment by the conditional variance in its coordinate;
the latter is one half the mean squared independent-copy difference.
This argument applies to Euclidean vector-valued functions as well.

Write \(\bar b(e)=N^{-1}\sum_i b_i(e)\). Jensen's inequality and
\eqref{eq:app-minibatch-conditional-bias} give
\[
 \operatorname{Var}(\bar b)
 \le\E\norm{\bar b}^2
 \le\frac1N\sum_i\E\norm{b_i(e)}^2
 \le\frac{L_{\mathrm{loc}}^2\alpha^4\sigma^2}{K}.
\]
By definition, \(\E[X\mid e]=F(e)+\bar b(e)\). Applying the squared
triangle inequality to the centered vectors \(F-\E F\) and
\(\bar b-\E\bar b\), without assuming independence, yields
\[
 \operatorname{Var}(\E[X\mid e])=\operatorname{Var}(F+\bar b)
 \le2\operatorname{Var}(F)+2\operatorname{Var}(\bar b).
\]
Substituting the two bounds into the total-variance identity proves
\begin{align*}
 \operatorname{Var}(X)
 &\le\frac{\alpha^2\sigma^2}{NK}
       +2\frac{4\alpha^2\sigma^2}{NB}
       +2\frac{L_{\mathrm{loc}}^2\alpha^4\sigma^2}{K}\\
 &=\frac{\alpha^2\sigma^2}{NK}
       +\frac{8\alpha^2\sigma^2}{NB}
       +\frac{2L_{\mathrm{loc}}^2\alpha^4\sigma^2}{K},
\end{align*}
as claimed in \eqref{eq:app-server-endpoint-variance}.
\end{proof}

\subsection{Combining the endpoint bounds}
In one round, condition on \(\mathcal H_r\), put \(z=x^{r-1}\) and
\(w=x^{r-2}\), and use \(\widetilde f_i\) from
\eqref{eq:app-generic-corrected-objective}. The local update direction is
\(\grad\widetilde f_i(y)+q_i+\zeta_{i,k}\), where \(\zeta_{i,k}\) is the
local gradient noise. Write \(Y_{i,K}=y_{i,K}^r\) and \(X=x^r\).
All expectations below are conditional on \(\mathcal H_r\), so comparison
points \(u=x^\star\) and \(u=z\) are both allowed.

\paragraph{Case \(\alpha L_{\mathrm{loc}}\le1\).}
Let \(m_i=\E Y_{i,K}\) and \(\bar m=N^{-1}\sum_i m_i=\E X\).
Apply Lemma~\ref{lem:app-mean-endpoint} to each \(\widetilde f_i\), with
\(\nu^2\le\sigma^2/B\), then average to obtain
\begin{align*}
 \frac{2\alpha}{N}\sum_i
    [\widetilde f_i(m_i)-\widetilde f_i(u)]
 \le{}&\norm{z-u}^2-\frac1N\sum_i\norm{m_i-u}^2\\
 &+\frac{2L_{\mathrm{loc}}\alpha^3\sigma^2}{K}
   +\frac{4L_{\mathrm{loc}}\alpha^3\sigma^2}{3B}.
\end{align*}
Define the dispersion of the mean endpoints by
\(d_m^2=N^{-1}\sum_i\norm{m_i-\bar m}^2\). The distance identity is
\[
 \frac1N\sum_i\norm{m_i-u}^2
 =\norm{\bar m-u}^2+d_m^2.
\]
Meanwhile, \eqref{eq:app-corrected-comparison} gives
\[
 f(\bar m)-f(u)
 \le\frac1N\sum_i[\widetilde f_i(m_i)-\widetilde f_i(u)]
      +\delta\norm{\bar m-w}d_m.
\]
Combining these statements leaves the dispersion and similarity terms
\begin{align*}
 -d_m^2+2\alpha\delta\norm{\bar m-w}d_m
 &=-\bigl(d_m-\alpha\delta\norm{\bar m-w}\bigr)^2
    +\alpha^2\delta^2\norm{\bar m-w}^2\\
 &\le\alpha^2\delta^2\norm{\bar m-w}^2.
\end{align*}
Thus
\begin{align*}
 2\alpha[f(\bar m)-f(u)]\le{}&\norm{z-u}^2-\norm{\bar m-u}^2
 +\alpha^2\delta^2\norm{\bar m-w}^2\\
 &+\frac{2L_{\mathrm{loc}}\alpha^3\sigma^2}{K}+\frac{4L_{\mathrm{loc}}\alpha^3\sigma^2}{3B}.
\end{align*}
To replace the mean endpoint by the random server endpoint, smoothness
of \(f\) and \(\E(X-\bar m)=0\) give
\begin{align*}
 \E f(X)
 &\le f(\bar m)+\E\ip{\grad f(\bar m)}{X-\bar m}
       +\frac{L_{\mathrm{loc}}}{2}\E\norm{X-\bar m}^2\\
 &=f(\bar m)+\frac{L_{\mathrm{loc}}}{2}\operatorname{Var}(X).
\end{align*}
The distance terms satisfy
\begin{align*}
 -\norm{\bar m-u}^2
 &=-\E\norm{X-u}^2+\operatorname{Var}(X),\\
 \norm{\bar m-w}^2
 &\le\E\norm{X-w}^2.
\end{align*}
Consequently,
\begin{align*}
 2\alpha\E[f(X)-f(u)]
 \le{}&\norm{z-u}^2-\E\norm{X-u}^2
       +\alpha^2\delta^2\E\norm{X-w}^2\\
 &+\frac{2L_{\mathrm{loc}}\alpha^3\sigma^2}{K}
   +\frac{4L_{\mathrm{loc}}\alpha^3\sigma^2}{3B}\\
 &+(1+\alpha L_{\mathrm{loc}})\operatorname{Var}(X).
\end{align*}
Here the objective comparison contributes
\(\alpha L_{\mathrm{loc}}\operatorname{Var}(X)\), and the distance
identity contributes \(\operatorname{Var}(X)\). By
Lemma~\ref{lem:app-minibatch-server-variance} and
\(1+\alpha L_{\mathrm{loc}}\le2\), their sum is bounded by
\[
 (1+\alpha L_{\mathrm{loc}})\operatorname{Var}(X)
 \le\frac{2\alpha^2\sigma^2}{NK}
     +\frac{16\alpha^2\sigma^2}{NB}
     +\frac{4L_{\mathrm{loc}}^2\alpha^4\sigma^2}{K}.
\]
Since \(\alpha L_{\mathrm{loc}}\le1\), the last term is at most
\(4L_{\mathrm{loc}}\alpha^3\sigma^2/K\); adding it to the existing
\(2L_{\mathrm{loc}}\alpha^3\sigma^2/K\) term gives the coefficient 6
below. Enlarging the similarity coefficient from 1 to 2 gives
\begin{equation}
 2\alpha\E[f(X)-f(u)]\le\norm{z-u}^2-\E\norm{X-u}^2
                 +2\alpha^2\delta^2\E\norm{X-w}^2+Q,
 \label{eq:app-minibatch-one-round}
\end{equation}
where the factor 2 in the similarity term is chosen to agree with the
next case, and the error term satisfies
\begin{align}
 Q\le{}&\frac{2\alpha^2\sigma^2}{NK}
       +\frac{16\alpha^2\sigma^2}{NB}
       +\frac{6L_{\mathrm{loc}}\alpha^3\sigma^2}{K}
       +\frac{4L_{\mathrm{loc}}\alpha^3\sigma^2}{3B}\notag\\
 \le{}&\frac{16\alpha^2\sigma^2}{N}\left(\frac1K+\frac1B\right)
       +6L_{\mathrm{loc}}\alpha^3\sigma^2\left(\frac1K+\frac1B\right).
 \label{eq:app-minibatch-small-budget}
\end{align}
The second inequality uses \(2\le16\) and \(4/3\le6\) to group the
terms with denominators \(K\) and \(B\).

\paragraph{Case \(\alpha L_{\mathrm{loc}}\ge1\).}
Condition additionally on all control errors \(e=(e_1,\ldots,e_N)\).
The functions \(\widetilde f_i+\ip{q_i}{\cdot}\) are then fixed,
convex, and \(L_{\mathrm{loc}}\)-smooth. Apply
Lemma~\ref{lem:app-refined-endpoint} to each client and average:
\begin{align*}
 \frac{2\alpha}{N}\sum_i\E\bigl[
    \widetilde f_i(Y_{i,K})-\widetilde f_i(u)
    +\ip{q_i}{Y_{i,K}-u}\mid e\bigr]
 \le{}&\norm{z-u}^2
       -\frac1N\sum_i\E[\norm{Y_{i,K}-u}^2\mid e]\\
 &+\frac{2\alpha^2\sigma^2}{K}
       [1+\log(1+\alpha L_{\mathrm{loc}})].
\end{align*}
Here we used \(K\eta_\ell=\alpha\) and
\(K\eta_\ell^2=\alpha^2/K\). Define
\[
 d^2=\frac1N\sum_i\norm{Y_{i,K}-X}^2,\qquad
 v^2=\frac1N\sum_i\norm{q_i}^2.
\]
Since \(\sum_iq_i=0\), the linear corrections satisfy, for each
realization of the endpoints,
\begin{align*}
 \frac1N\sum_i\ip{q_i}{Y_{i,K}-u}
 &=\frac1N\sum_i\ip{q_i}{Y_{i,K}-X},\\
 \left|\frac1N\sum_i\ip{q_i}{Y_{i,K}-X}\right|
 &\le\left(\frac1N\sum_i\norm{q_i}^2\right)^{1/2}
       \left(\frac1N\sum_i\norm{Y_{i,K}-X}^2\right)^{1/2}
 =vd.
\end{align*}
The corrected-objective comparison and the endpoint distance identity give
\begin{align*}
 \frac1N\sum_i[\widetilde f_i(Y_{i,K})-\widetilde f_i(u)]
 &\ge f(X)-f(u)-\delta\norm{X-w}d,\\
 \frac1N\sum_i\norm{Y_{i,K}-u}^2
 &=\norm{X-u}^2+d^2.
\end{align*}
Substitute these bounds into the averaged endpoint inequality and take
expectations over the control errors. This yields
\begin{align*}
 2\alpha\E[f(X)-f(u)]
 \le{}&\norm{z-u}^2-\E\norm{X-u}^2+\E[-d^2+2\alpha\delta\norm{X-w}d+2\alpha vd]\\
 &+\frac{2\alpha^2\sigma^2}{K}
       [1+\log(1+\alpha L_{\mathrm{loc}})].
\end{align*}
Complete the square and then use \((a+b)^2\le2a^2+2b^2\):
\begin{equation}
\begin{aligned}
 -d^2+2\alpha\delta\norm{X-w}d+2\alpha vd
 &=-\bigl[d-\alpha(\delta\norm{X-w}+v)\bigr]^2
    +\alpha^2(\delta\norm{X-w}+v)^2\\
 &\le\alpha^2(\delta\norm{X-w}+v)^2\\
 &\le2\alpha^2\delta^2\norm{X-w}^2+2\alpha^2v^2.
\end{aligned}
 \label{eq:app-minibatch-square}
\end{equation}
By \eqref{eq:app-control-moments},
\[
 \E v^2=\frac1N\sum_i\E\norm{q_i}^2
 \le\left(1-\frac1N\right)\frac{\sigma^2}{B}
 \le\frac{\sigma^2}{B}.
\]
Together these estimates prove
\eqref{eq:app-minibatch-one-round} with
\begin{equation}
 Q\le\frac{2\alpha^2\sigma^2}{K}[1+\log(1+\alpha L_{\mathrm{loc}})]
                         +\frac{2\alpha^2\sigma^2}{B}.
 \label{eq:app-minibatch-large-budget}
\end{equation}
To combine the cases, put \(t=\alpha L_{\mathrm{loc}}\).
For \(0\le t\le1\), \(t\le2\log(1+t)\), so dividing
\eqref{eq:app-minibatch-small-budget} by \(\alpha\) gives
\begin{align*}
 \frac Q\alpha
 &\le\frac{16\alpha\sigma^2}{N}\left(\frac1K+\frac1B\right)
      +6\alpha\sigma^2\left(\frac1K+\frac1B\right)t\\
 &\le\frac{16\alpha\sigma^2}{N}\left(\frac1K+\frac1B\right)
      +12\alpha\sigma^2\left(\frac1K+\frac1B\right)\log(1+t).
\end{align*}
For \(t\ge1\), \(2[1+\log(1+t)]\le5\log(1+t)\) and
\(2\le5\log(1+t)\). Thus \eqref{eq:app-minibatch-large-budget} gives
\begin{align*}
 \frac Q\alpha
 &\le\frac{2\alpha\sigma^2}{K}[1+\log(1+t)]
      +\frac{2\alpha\sigma^2}{B}\\
 &\le5\alpha\sigma^2\left(\frac1K+\frac1B\right)\log(1+t).
\end{align*}
Enlarging 5 to 12 and adding the nonnegative term with coefficient 16
therefore gives the same bound. In both cases,
\eqref{eq:app-minibatch-one-round} holds with
\begin{equation}
 \frac Q\alpha\le
 \frac{16\alpha\sigma^2}{N}\left(\frac1K+\frac1B\right)
 +12\alpha\sigma^2\left(\frac1K+\frac1B\right)\log(1+\alpha L_{\mathrm{loc}}).
 \label{eq:app-minibatch-uniform-budget}
\end{equation}

Since \(\theta=2\alpha^2\delta^2\), the preceding bounds establish
\eqref{eq:app-joint-endpoint-template}. Lemma~\ref{lem:app-server-reduction}
and \eqref{eq:app-minibatch-uniform-budget} therefore give, for every \(B\ge1\),
\begin{equation}
 \E[f(\bar x^R)]-f^\star\le40\left[\frac{D_0^2}{\alpha R}
 +\frac{\alpha\sigma^2}{N}\left(\frac1K+\frac1B\right)+\alpha\sigma^2\left(\frac1K+\frac1B\right)
                         \log(1+\alpha L_{\mathrm{loc}})\right].
\label{eq:app-explicit-convex-rate}
\end{equation}
Indeed, the three coefficients before rounding are
\(113/35\), \(1296/35\), and \(972/35\), all at most 40. This is
Proposition~\ref{prop:fixed-stepsize}.
This bound also applies when \(\sigma=0\).

The distinction between the two cases is needed for the sharp rate.
Using \eqref{eq:app-minibatch-square} alone incurs a term of order
\(\alpha\sigma^2/B\) after server reduction. For
\(\alpha L_{\mathrm{loc}}\le1\),
Lemma~\ref{lem:app-mean-endpoint} instead uses centering of the controls
and gives a contribution of order
\(L_{\mathrm{loc}}\alpha^2\sigma^2/B\), together with a variance term
of order \(\alpha\sigma^2/(NB)\).
For \(\alpha L_{\mathrm{loc}}\ge1\), the direct term is already bounded by
\(\alpha\sigma^2\log(1+\alpha L_{\mathrm{loc}})/B\).

\subsection{A small-stepsize upper bound}

\begin{lemma}[Uniform small-stepsize baseline]
\label{lem:combined-initial-cap}
Suppose Assumptions~\ref{ass:convex-smooth} and~\ref{ass:stochastic-oracle}
hold, and \(D_0>0\). For any \(N,K,R,B\ge1\), Algorithm~\ref{alg:scaffold-option-i}
with
\begin{equation}
    \eta_\ell=\frac{D_0}{16KR(L_{\mathrm{loc}}D_0+\sigma)}
    \label{eq:app-small-stepsize}
\end{equation}
satisfies
\begin{equation}
    \Delta_R\le L_{\mathrm{loc}}D_0^2.
    \label{eq:combined-initial-cap}
\end{equation}
\end{lemma}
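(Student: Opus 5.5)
The plan is a crude stability argument that uses neither Assumption~\ref{ass:similarity} nor the server reduction of Lemma~\ref{lem:app-server-reduction}. Proposition~\ref{prop:fixed-stepsize} cannot be invoked here, because its condition \(\alpha\delta\le1/(3\sqrt2)\) is not part of the hypotheses. The stepsize \eqref{eq:app-small-stepsize} is so small that the total movement over all \(KR\) local steps is a small fraction of \(D_0\). Smoothness of \(f\) then converts this into the stated cap.

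Write \(\norm{U}_{2}\coloneqq(\E\norm{U}^2)^{1/2}\) for random vectors. This is a norm, so Minkowski's inequality applies to sums and to the server average \(x^r=N^{-1}\sum_iy_{i,K}^r\). The induction hypothesis is that every point generated so far satisfies \(\norm{p-x^\star}_2\le2D_0\). This covers all local iterates \(y_{i,k}^r\) and all server iterates \(x^r\), with the convention \(x^{-1}=x^0\).

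For a local direction, decompose
\[
d_{i,k}^r=\bigl(\grad f_i(y_{i,k}^r)-\grad f_i(x^{r-2})\bigr)+\grad f(x^{r-2})+\zeta_{i,k}^r-e_i+\bar e,
\]
with \(e_i,\bar e\) as in \eqref{eq:app-control-errors} and \(\zeta_{i,k}^r\) the local gradient noise.
\begin{itemize}
\item Local smoothness bounds the first term by \(L_{\mathrm{loc}}\norm{y_{i,k}^r-x^{r-2}}\le L_{\mathrm{loc}}(\norm{y_{i,k}^r-x^\star}+\norm{x^{r-2}-x^\star})\).
\item Since \(\grad f(x^\star)=0\), the second term is at most \(L_{\mathrm{loc}}\norm{x^{r-2}-x^\star}\).
\item By Assumption~\ref{ass:stochastic-oracle} and the minibatch variance bound, each of the three noise terms has \(\norm{\cdot}_2\le\sigma\).
\end{itemize}
Under the induction hypothesis this gives \(\norm{d_{i,k}^r}_2\le 6L_{\mathrm{loc}}D_0+3\sigma\le6(L_{\mathrm{loc}}D_0+\sigma)\). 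Summing over at most \(KR\) steps along any client path gives a total movement from \(x^0\) of at most
\[
KR\eta_\ell\cdot6(L_{\mathrm{loc}}D_0+\sigma)=\tfrac{6}{16}D_0 .
\]
Averaging over clients at the server does not increase this, by Minkowski's inequality. Hence every new point satisfies \(\norm{p-x^\star}_2\le D_0+\tfrac38D_0<2D_0\), which closes the induction.

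The step needing the most care is making the induction well founded. Each step's bound uses the hypothesis only on already generated points: \(y_{i,k}^r\), \(x^{r-1}\) and \(x^{r-2}\). The movement budget must therefore be accumulated along the chronological path \(x^0\to\cdots\to x^{r-1}\to y_{i,k}^r\). I would state the sharper claim that each point at global step index \(t\le KR\) lies within \(t\eta_\ell\cdot6(L_{\mathrm{loc}}D_0+\sigma)\) of \(x^0\) in \(\norm{\cdot}_2\). This claim is proved by induction on \(t\), with server averaging preserving the bound.

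To conclude, \(L_{\mathrm{loc}}\)-smoothness of \(f\) (implied by Assumption~\ref{ass:convex-smooth}) and \(\grad f(x^\star)=0\) give \(f(x^r)-f^\star\le\tfrac{L_{\mathrm{loc}}}{2}\norm{x^r-x^\star}^2\). Combining this with convexity of \(f\) at the averaged output \(\bar x^R\) yields
\[
\Delta_R\le\frac1R\sum_{r=1}^R\frac{L_{\mathrm{loc}}}{2}\E\norm{x^r-x^\star}^2\le\frac{L_{\mathrm{loc}}}2\Bigl(\frac{11}{8}D_0\Bigr)^2\le L_{\mathrm{loc}}D_0^2,
\]
which is \eqref{eq:combined-initial-cap}.
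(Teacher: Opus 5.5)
Your proof is correct and is essentially the paper's argument: the same pathwise bound \(L_{\mathrm{loc}}\norm{y-x^\star}+2L_{\mathrm{loc}}\norm{x^{r-2}-x^\star}\) on the exact corrected direction, Minkowski's inequality in \(L^2\) through the local steps and the server averaging (no independence needed), and smoothness at \(x^\star\) with the same final radius \(\tfrac{11}{8}D_0\). The only difference is bookkeeping: you close the induction with an a priori radius \(2D_0\) and a constant per-step movement \(6\eta_\ell(L_{\mathrm{loc}}D_0+\sigma)\), whereas the paper tracks the distance to \(x^\star\) through the linear recursion \(m_{t+1}=(1+3\eta_\ell L_{\mathrm{loc}})m_t+\eta_\ell\sigma(1+B^{-1/2})\) and bounds its closed-form solution.
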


\begin{proof}
We show that the prescribed stepsize keeps the root-mean-square distance
of every iterate to \(x^\star\) bounded by a constant multiple of \(D_0\).
Smoothness then converts this distance bound into the claimed objective bound.
At any local update the controls are evaluated at a previous server iterate
\(a=x^{r-2}\), with \(a=x^0\) in the first round.  Smoothness and
\(\nabla f(x^\star)=0\) give the pathwise bound
\[
\begin{aligned}
    \norm{\nabla f_i(y)-\nabla f_i(a)+\nabla f(a)}
       &\le L_{\mathrm{loc}}\norm{y-a}
                +L_{\mathrm{loc}}\norm{a-x^\star}\\
       &\le L_{\mathrm{loc}}\norm{y-x^\star}
                +2L_{\mathrm{loc}}\norm{a-x^\star}.
\end{aligned}
\]
In particular, possibly large individual gradients at \(x^\star\) do
not enter this bound; their constant offsets have canceled. The actual
direction additionally contains the local noise and the control error
\(q_i\) from \eqref{eq:app-control-errors}. By
\eqref{eq:app-control-moments}, \(\E\norm{q_i}^2\le\sigma^2/B\),
including in the first round.

Index the synchronous local-update stages by \(t=0,\ldots,KR\):
before local step \(k\) of round \(r\), the stage is
\(t=(r-1)K+k\). Each stage advances all clients by one local step.
We construct a deterministic bound \(m_t\) on the root-mean-square
distance to \(x^\star\) of every local or server point generated through
stage \(t\), including the server averaging and local restart at a round
boundary. Initially all clients start at \(x^0\), so \(m_0=D_0\).

Suppose the bound holds through stage \(t\), and write
\(y=y_{i,k}^r\) and \(y^+=y_{i,k+1}^r\). The anchor \(a=x^{r-2}\)
is an earlier server point (or \(x^0\) in the first round), so both
\(y\) and \(a\) have root-mean-square distance at most \(m_t\).
The update is
\[
 y^+=y-\eta_\ell\bigl(
     \grad f_i(y)-\grad f_i(a)+\grad f(a)+q_i+\zeta_{i,k}\bigr).
\]
Applying the triangle inequality in \(L^2\), the pathwise gradient bound
above, and the second-moment bounds on the two noise terms gives
\begin{align*}
 \bigl(\E\norm{y^+-x^\star}^2\bigr)^{1/2}
 \le{}&\bigl(\E\norm{y-x^\star}^2\bigr)^{1/2}\\
 &+\eta_\ell L_{\mathrm{loc}}
       \bigl(\E\norm{y-x^\star}^2\bigr)^{1/2}\\
 &+2\eta_\ell L_{\mathrm{loc}}
       \bigl(\E\norm{a-x^\star}^2\bigr)^{1/2}\\
 &+\eta_\ell\bigl(\E\norm{q_i}^2\bigr)^{1/2}
       +\eta_\ell\bigl(\E\norm{\zeta_{i,k}}^2\bigr)^{1/2}\\
 \le{}&(1+3\eta_\ell L_{\mathrm{loc}})m_t
       +\eta_\ell\sigma(1+B^{-1/2}).
\end{align*}
This estimate does not require independence between its terms. Define
\[
 m_{t+1}=(1+3\eta_\ell L_{\mathrm{loc}})m_t
          +\eta_\ell\sigma(1+B^{-1/2}).
\]
This sequence is nondecreasing, so it also bounds all previously
generated points. At the end of round \(r\), the server average satisfies
\begin{align*}
 \bigl(\E\norm{x^r-x^\star}^2\bigr)^{1/2}
 &=\left(\E\norm{\frac1N\sum_i(y_{i,K}^r-x^\star)}^2\right)^{1/2}\\
 &\le\frac1N\sum_i\bigl(\E\norm{y_{i,K}^r-x^\star}^2\bigr)^{1/2}
 \le m_{rK}.
\end{align*}
The next local trajectories restart from \(x^r\), preserving this bound
without adding a local-update stage. This completes the induction.
Solving the scalar recursion yields
\[
    m_{KR}=
       (1+3\eta_\ell L_{\mathrm{loc}})^{KR}D_0
       +\frac{\sigma(1+B^{-1/2})}{3L_{\mathrm{loc}}}
        \bigl((1+3\eta_\ell L_{\mathrm{loc}})^{KR}-1\bigr).
\]
For the stepsize in \eqref{eq:app-small-stepsize},
\(3\eta_\ell L_{\mathrm{loc}}KR\le3/16\).
Using \((1+u)^{KR}\le e^{uKR}\) and \(e^v-1\le2v\) for
\(0\le v\le3/16\), we obtain
\begin{align*}
 m_{KR}-D_0
 &\le 2KR\eta_\ell
       \bigl(3L_{\mathrm{loc}}D_0+\sigma(1+B^{-1/2})\bigr)\\
 &\le 6KR\eta_\ell(L_{\mathrm{loc}}D_0+\sigma)
 =\frac38D_0.
\end{align*}
The second inequality uses \(B\ge1\), so \(1+B^{-1/2}\le2\) and
\(3L_{\mathrm{loc}}D_0+2\sigma\le3(L_{\mathrm{loc}}D_0+\sigma)\).
Applying the same averaging inequality to the returned iterate gives
\[
 \bigl(\E\norm{\bar x^R-x^\star}^2\bigr)^{1/2}
 \le\frac1R\sum_{r=1}^R
       \bigl(\E\norm{x^r-x^\star}^2\bigr)^{1/2}
 \le m_{KR}\le\frac{11}{8}D_0.
\]
Since \(f\) is \(L_{\mathrm{loc}}\)-smooth and
\(\grad f(x^\star)=0\), for every \(x\),
\[
 f(x)-f(x^\star)
 \le\ip{\grad f(x^\star)}{x-x^\star}
      +\frac{L_{\mathrm{loc}}}{2}\norm{x-x^\star}^2
 =\frac{L_{\mathrm{loc}}}{2}\norm{x-x^\star}^2.
\]
Taking expectations with \(x=\bar x^R\) therefore gives
\[
 \Delta_R\le\frac{L_{\mathrm{loc}}}{2}m_{KR}^2
 \le\frac{121}{128}L_{\mathrm{loc}}D_0^2
 \le L_{\mathrm{loc}}D_0^2.
\]
\end{proof}

\subsection{Proof of the tuned rate}
\label{app:convex-tuning}

Set \(B=K\). We first prove \(\Delta_R\le200\mathcal T\) for the choice
\begin{equation}
\begin{aligned}
    \alpha^{-1}
    ={}&
    \max\left\{\frac{L_{\mathrm{loc}}}{K},3\sqrt2\delta\right\}
    +\frac{\sigma\sqrt R}{D_0\sqrt{NK}}\\
    &+\min\left\{
        \left(
            \frac{L_{\mathrm{loc}}\sigma^2R}{D_0^2K}
        \right)^{1/3},
        \frac{\sigma\sqrt{R\kappa}}{D_0\sqrt K}
    \right\},
    \quad
    \eta_\ell=\frac{\alpha}{K}.
\end{aligned}
\label{eq:tuned-stepsize}
\end{equation}
This choice combines the stability restriction with the balances of
the initial-error and noise terms. Let
\begin{equation}
\begin{aligned}
    a&\coloneqq
    \max\left\{\frac{L_{\mathrm{loc}}}{K},3\sqrt2\delta\right\},
    &
    b&\coloneqq
    \frac{\sigma\sqrt R}{D_0\sqrt{NK}},\\
    c&\coloneqq
    \min\left\{
        \left(
            \frac{L_{\mathrm{loc}}\sigma^2R}{D_0^2K}
        \right)^{1/3},
        \frac{\sigma\sqrt{R\kappa}}{D_0\sqrt K}
    \right\}.
\end{aligned}
\label{eq:app-tuning-components}
\end{equation}
Then \(\alpha^{-1}=a+b+c\).  Since \(\alpha\le1/a\),
\[
    \eta_\ell L_{\mathrm{loc}}\le 1,
    \qquad
    \alpha\delta\le\frac1{3\sqrt2},
\]
so the conditions of Proposition~\ref{prop:fixed-stepsize} hold.
Furthermore,
\begin{align}
    \frac{D_0^2a}{R}
    &\le
    3\sqrt2\max\left\{\frac{L_{\mathrm{loc}}}{K},\delta\right\}
        \frac{D_0^2}{R},
    \label{eq:app-tuning-a}\\
    \frac{D_0^2b}{R}
    &=\frac{\sigma D_0}{\sqrt{NKR}},
    \label{eq:app-tuning-b}\\
    \frac{D_0^2c}{R}
    &=
    \min\left\{
        \left(
            \frac{L_{\mathrm{loc}}\sigma^2D_0^4}{KR^2}
        \right)^{1/3},
        \frac{\sigma D_0\sqrt\kappa}{\sqrt{KR}}
    \right\}.
    \label{eq:app-tuning-c}
\end{align}

The client-averaging noise term is controlled using \(\alpha\le1/b\):
\begin{equation}
    \frac{\eta_\ell\sigma^2}{N}
    =\frac{\alpha\sigma^2}{KN}
    \le\frac{\sigma^2}{KNb}
    =\frac{\sigma D_0}{\sqrt{NKR}}.
    \label{eq:app-tuning-client-noise}
\end{equation}
It remains to control the logarithmic noise term.  If the first branch in
the definition of \(c\) is active, then \(\log(1+x)\le x\) and
\begin{equation}
    \frac{\alpha\sigma^2}{K}
        \log(1+\alpha L_{\mathrm{loc}})
    \le
    \frac{L_{\mathrm{loc}}\sigma^2}{Kc^2}
    =\frac{D_0^2c}{R}.
    \label{eq:app-tuning-cubic-branch}
\end{equation}
If the second branch is active, then \(\kappa\ge1\), \(\alpha\le1/c\), and
\[
    \log(1+\alpha L_{\mathrm{loc}})
    \le
    \log\left(
        e+\frac{L_{\mathrm{loc}}D_0\sqrt K}{\sigma\sqrt R}
    \right)
    =\kappa.
\]
Therefore,
\begin{equation}
    \frac{\alpha\sigma^2}{K}
        \log(1+\alpha L_{\mathrm{loc}})
    \le
    \frac{\sigma^2\kappa}{Kc}
    =\frac{D_0^2c}{R}.
    \label{eq:app-tuning-log-branch}
\end{equation}

Substituting \eqref{eq:app-tuning-a}--\eqref{eq:app-tuning-log-branch}
into \eqref{eq:app-explicit-convex-rate} gives
\begin{align*}
    \E[f(\bar x^R)]-f^\star
    \le{}&
    120\sqrt2
    \max\left\{\frac{L_{\mathrm{loc}}}{K},\delta\right\}
        \frac{D_0^2}{R}
    +120\frac{\sigma D_0}{\sqrt{NKR}}\\
    &+120
    \min\left\{
        \left(
            \frac{L_{\mathrm{loc}}\sigma^2D_0^4}{KR^2}
        \right)^{1/3},
        \frac{\sigma D_0\sqrt\kappa}{\sqrt{KR}}
    \right\}.
\end{align*}
Each numerical coefficient is at most \(200\), so the last display is
at most \(200\mathcal T\), with \(\mathcal T\) defined in
\eqref{eq:tuned-rate}.

For Theorem~\ref{thm:scaffold-convex}, use the following complete tuning:
\begin{equation}
 \eta_\ell=
 \begin{cases}
  \text{the stepsize in \eqref{eq:tuned-stepsize}},
      &\mathcal T\le L_{\mathrm{loc}}D_0^2,\\[2mm]
  \displaystyle\frac{D_0}{16KR(L_{\mathrm{loc}}D_0+\sigma)},
      &\mathcal T>L_{\mathrm{loc}}D_0^2.
 \end{cases}
 \label{eq:app-complete-tuning}
\end{equation}
In the first case, the preceding calculation gives
\(\Delta_R\le200\mathcal T\). In the second,
Lemma~\ref{lem:combined-initial-cap} gives
\(\Delta_R\le L_{\mathrm{loc}}D_0^2\).
Thus \(\Delta_R\le200\min\{L_{\mathrm{loc}}D_0^2,\mathcal T\}\),
proving Theorem~\ref{thm:scaffold-convex}.

\subsection{Other control batch sizes}
\label{app:convex-batch-size}
For \(B\ne K\), define, only in this subsection,
\[
 \sigma_B^2=\sigma^2\left(1+\frac KB\right),\qquad
 \kappa_B=\log\left(e+\frac{L_{\mathrm{loc}}D_0\sqrt K}{\sigma_B\sqrt R}\right).
\]
Replace \(\sigma\) and \(\kappa\) by \(\sigma_B\) and \(\kappa_B\) in
\eqref{eq:tuned-stepsize}. Then \eqref{eq:app-explicit-convex-rate} and
the same balancing calculation give
\begin{equation}
\begin{aligned}
 \E[f(\bar x^R)]-f^\star\le200\Bigg[
 &\max\left\{\frac{L_{\mathrm{loc}}}{K},\delta\right\}\frac{D_0^2}{R}
 +\frac{\sigma_BD_0}{\sqrt{NKR}}\\
 &+\min\left\{\left(\frac{L_{\mathrm{loc}}\sigma_B^2D_0^4}{KR^2}\right)^{1/3},
       \frac{\sigma_BD_0\sqrt{\kappa_B}}{\sqrt{KR}}\right\}\Bigg].
\end{aligned}
\label{eq:app-general-batch-rate}
\end{equation}
Thus any \(B\ge cK\), for fixed \(c>0\), yields the rate in
\eqref{eq:tuned-rate} up to constants depending only on \(c\).
The main theorem takes \(B=K\) and uses the original \(\sigma\) and \(\kappa\)
in the two-case tuning \eqref{eq:app-complete-tuning}.

\subsection{Details of the convex comparison table}
\label{app:convex-comparison}
Table~\ref{tab:convex-stochastic-comparison} uses the following
specializations of the cited guarantees.
Exact-gradient evaluations are counted separately from stochastic
samples, with no assumption on their relative cost.
\paragraph{SCAFFOLD Option II.}
The general-convex Option-II analysis of
\citet[Appendix~E.1, Lemma~14]{karimireddy2020scaffold}, with full participation,
\(\eta_g=\sqrt N\), and a tuned local stepsize, gives
\(\mathcal O(LD_0^2/R+\sigma D_0/\sqrt{NKR})\).
The row takes \(K=\Theta(1+\sigma^2/(NL\varepsilon))\).
Controls are initialized using \(K\) stochastic gradients per client,
as in their Remark~10; this cost is included in the gradient count.
\paragraph{FedRed-SGD.}
Communication and exact-gradient counts are in expectation.
Exact controls are cached between communications. The row specializes
Theorem~9 of \citet{jiang2024fedred} to \(\mu=0\).
\paragraph{S-DANE with SGD.}
The row specializes Theorem~20 and Corollary~21 of
\citet{jiang2024sdane} to full
participation, \(\mu=0\), \(\lambda=2\delta\), and single-sample local oracle
calls. The displayed computation bound is for this SGD local solver.
Exact gradients are used for the corrections and server update.

S-DANE's stochastic-gradient bound hides a factor
\(\log(e+L/\delta)\); our bound hides this factor only in the noise term,
as shown in \eqref{eq:convex-stochastic-computation}.

\clearpage

\section{Proof of Theorem~\ref{thm:constant-step-optimality}}
\label{app:lower-bound}

We prove the lower bound for the same initialized algorithm and averaged
server output as in Theorem~\ref{thm:scaffold-convex}. Throughout, \(L_{\mathrm{loc}},
\delta,\sigma\) are parameter upper bounds.  For each prescribed constant
stepsize we may choose a different instance, fixed before any samples are
drawn.  Identical-client examples are admissible for every similarity
upper bound, and deterministic examples for every variance upper bound.

The proof uses three families of hard instances.  Quadratic objectives give
the computation and similarity terms; a noisy flat quadratic gives
the statistical term; asymmetric piecewise quadratics give the nonlinear
stochastic term.  We prove each for the objective at the averaged server
iterate, then combine their worst-case guarantees with the upper bound
of Theorem~\ref{thm:scaffold-convex}. Every construction uses the same fixed sampled
losses for local and control queries, including initialization. The
constructions below apply for every finite control batch size \(B\ge1\);
we take \(B=K\) when combining them with the upper bound.
For the deterministic examples, all sampled losses coincide with their
expectations, so the controls are exact. For the noisy quadratic example,
the control errors cancel after server averaging. For the nonlinear
example, concavity preserves the mean bias despite the control errors.

For identical local objectives, the guarantee of
Theorem~\ref{thm:constant-step-optimality} with \(\delta=0\) holds
for every \(N\ge1\) and \(K\ge2\).
The hard instances satisfy the stronger condition that the gradient
noise has norm at most \(\sigma\) for every sample and every evaluation
point. For \(K=1\), the control corrections cancel in the server
update, giving minibatch SGD, so the additional local-update bias
term is not asserted.

\subsection{The deterministic computation lower bound}
\label{sec:computation}

We first establish a stepsize-independent computation lower bound.
A large-step example must be checked at the averaged server iterate: a multiplier
of \(-1\) can cancel when \(R\) is even. Using the threshold \(3\), rather
than \(2\), avoids that cancellation.

\begin{lemma}[Computation lower bound for the averaged server output]
\label{lem:computation}
For every \(L,D_0>0\), \(N,K,R\ge1\), and \(\eta>0\), identical noiseless
quadratic clients can attain
\[
 f(\bar x^R)-f^\star\ge LD_0^2/(48KR).
\]
If \(\eta L>3\), they can attain \(LD_0^2/2\).
\end{lemma}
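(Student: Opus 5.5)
The plan is to reduce to plain gradient descent on a one-dimensional quadratic and use exact formulas. Take all \(N\) clients equal to \(f_i(x)=\tfrac{\lambda}{2}x^2\) with \(0<\lambda\le L\) and \(x^0=D_0\), so \(x^\star=0\). Identical clients automatically satisfy Assumption~\ref{ass:similarity} with \(\delta=0\), and noiseless sampled losses make every control exact and equal to its average. Hence \(c_i^{r-1}=c^{r-1}\), and the correction \(-c_i^{r-1}+c^{r-1}\) vanishes in every local step. Each local trajectory is therefore gradient descent on \(f\), all clients coincide, and
\[
x^r=p^{\,r}D_0,\qquad p\coloneqq(1-\eta\lambda)^K,\qquad
f(\bar x^R)-f^\star=\frac{\lambda}{2}\Bigl(\frac1R\sum_{r=1}^R p^{\,r}\Bigr)^2D_0^2 .
\]
Everything then reduces to choosing \(\lambda\) as a function of the prescribed \(\eta\) and bounding this geometric average. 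That is allowed, because the instance may depend on the stepsize.

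\emph{Small and moderate stepsizes.} Choose \(\lambda=\min\{L,1/(2\eta KR)\}\). Then \(\eta\lambda\le1/2\), so \(1-\eta\lambda\ge0\), and Bernoulli's inequality gives \((1-\eta\lambda)^{rK}\ge1-rK\eta\lambda\ge1/2\) for all \(r\le R\). Thus the average of the \(x^r\) is at least \(D_0/2\), and the gap is at least \(\lambda D_0^2/8\). There are two cases.
\begin{itemize}
\item If \(\lambda=L\), the gap is at least \(LD_0^2/8\ge LD_0^2/(48KR)\).
\item Otherwise \(\lambda=1/(2\eta KR)\), and the gap is at least \(D_0^2/(16\eta KR)\). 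This is at least \(LD_0^2/(48KR)\) exactly when \(\eta L\le3\).
\end{itemize}
So the first claim holds whenever \(\eta L\le 3\).

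\emph{Large stepsizes, \(\eta L>3\).} Take \(\lambda=L\), so \(q\coloneqq1-\eta L<-2\) and \(s\coloneqq|p|=|q|^K>2\). The claim \(LD_0^2/2\) requires \(|\sum_{r=1}^R p^r|\ge R\). I will use the closed form rather than the triangle inequality, since the latter loses too much near \(|p|=2\).
\begin{itemize}
\item If \(p>0\), every term is positive and the sum is at least \(p^R\ge 2^R\ge R\).
\item If \(p=-s<0\), then
\[
\Bigl|\sum_{r=1}^R p^r\Bigr| = \frac{s\,|s^R-(-1)^R|}{s+1}\ \ge\ \frac{s(s^R-1)}{s+1}.
\]
For \(R=1\) the sum is \(s>2\ge1\) directly. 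For \(R\ge2\), the right-hand side is increasing in \(s\), so it is at least \(2(2^R-1)/3\ge R\).
\end{itemize}
This gives the second claim, and since \(LD_0^2/2\ge LD_0^2/(48KR)\) it also covers the first claim when \(\eta L>3\).

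The main obstacle is this large-step regime. When \(K\) is odd, an oscillating multiplier can make the averaged server iterate nearly cancel. This is exactly why the threshold is \(3\) and not \(2\), as remarked before the lemma: it forces \(|p|>2\), which keeps the alternating sum of order \(|p|^R\). The remaining bookkeeping, namely checking Assumptions~\ref{ass:convex-smooth} and~\ref{ass:stochastic-oracle} for this instance (convex, \(\lambda\le L\), \(\sigma\)-free), is routine.
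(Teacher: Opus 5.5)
Your proof is correct and follows the paper's argument: the same identical-client one-dimensional quadratic with vanishing corrections, Bernoulli's inequality when \(\eta L\le3\), and the closed-form alternating geometric sum with \(|p|>2\) when \(\eta L>3\). The differences are minor. You take the curvature \(\min\{L,1/(2\eta KR)\}\), while the paper uses the \(\eta\)-independent curvature \(L/(6KR)\). In the negative-multiplier case, you treat \(R=1\) separately and use monotonicity in \(s\), whereas the paper splits into odd and even \(R\).
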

\begin{proof}
For identical noiseless clients with \(f(x)=\lambda x^2/2\), the
control corrections cancel on each client. Starting from \(x^0=D_0\),
the local update is \(y_{i,k+1}^r=(1-\eta\lambda)y_{i,k}^r\), hence
\[
 x^r=(1-\eta\lambda)^{Kr}D_0,\qquad
 f(\bar x^R)=\frac{\lambda D_0^2}{2}
     \left(\frac1R\sum_{r=1}^R(1-\eta\lambda)^{Kr}\right)^2.
\]
If \(\eta L\le3\), choose \(\lambda=L/(6KR)\). Since
\(0\le\eta\lambda\le1/2\), Bernoulli's inequality gives, for every
\(1\le r\le R\),
\[
 (1-\eta\lambda)^{Kr}
 \ge(1-\eta\lambda)^{KR}
 \ge1-\eta\lambda KR=1-\eta L/6\ge1/2.
\]
Thus \(\bar x^R\ge D_0/2\), and
\(f(\bar x^R)\ge\lambda D_0^2/8=LD_0^2/(48KR)\).

If \(\eta L>3\), use \(f(x)=Lx^2/2\).
The multiplier over a round is \(q=(1-\eta L)^K\), with
\(Q=|q|>2^K\ge2\).
For positive \(q\), \(R^{-1}\sum_{r=1}^R q^r\ge Q^R/R\ge1\).
For negative \(q\), the absolute average is
\[
 \frac{Q(1+Q^R)}{R(1+Q)}\quad(R\text{ odd}),\qquad
 \frac{Q(Q^R-1)}{R(1+Q)}\quad(R\text{ even}).
\]
Using \(Q/(1+Q)\ge2/3\), \(2^R/R\ge2\), and
\((2^R-1)/R\ge3/2\) for even \(R\), each is at least one.
This gives \(LD_0^2/2\). Every sampled loss equals the deterministic objective.
\end{proof}

\subsection{Lower bound for the similarity term}

We now need a family in which heterogeneity and the one-round delay both
remain visible after averaging the clients.  Start with the simplest
two-client scalar example: local Hessians \(0\) and \(2h\), with mean \(h\).
One client is flat and the other has curvature \(2h\); their centered
Hessians are \(-h\) and \(h\).  Thus the similarity scale is exactly \(h\).
This example suggests the delayed recurrence, but repeating the two client
types equally would restrict us to even \(N\).

To avoid that restriction, give every client a two-dimensional quadratic
with curvature \(2h\) along one direction and zero curvature perpendicular
to it.  Choose the directions evenly around a half-circle.  Their
projection matrices average to \(I_2/2\), so the averaged dynamics are
exactly those of the scalar pair above, for every \(N\ge2\).
The term \emph{tight frame} below refers to this elementary projection identity.

The remaining issue is averaging across rounds: a large final iterate
alone does not prevent oscillating server iterates from canceling in
\(\bar x^R\). We therefore derive the delayed recurrence, then choose
its curvature so that the average of its first \(R\) multipliers stays
away from zero. The deterministic lower-bound theorem below combines
this construction with a slowly contracting quadratic for smaller stepsizes.

\begin{lemma}[Delayed recurrence for the quadratic construction]
\label{lem:stale-quadratic-recurrence}
Fix an integer \(N\ge2\), and define
\begin{equation}
    \varphi_i\coloneqq\frac{\pi(i-1)}N,
    \qquad
    u_i\coloneqq
    \begin{pmatrix}
        \cos\varphi_i\\
        \sin\varphi_i
    \end{pmatrix},
    \qquad
    P_i\coloneqq u_i u_i^\top.
    \label{eq:stale-lower-frame-projectors}
\end{equation}
For \(h>0\), consider the two-dimensional local objectives
\begin{equation}
    \phi_i(z)=h z^\top P_i z,
    \qquad
    \phi(z)=\frac h2\norm{z}^2,
    \label{eq:stale-lower-frame-objectives}
\end{equation}
and set
\begin{equation}
    v\coloneqq K\eta_\ell h,
    \qquad
    \tau_K(v)\coloneqq
    \left(1-\frac{2v}{K}\right)^K.
    \label{eq:stale-lower-v-tau}
\end{equation}
Define
\begin{equation}
    B_K(v)\coloneqq\frac{1+\tau_K(v)}2,
    \qquad
    C_K(v)\coloneqq\frac{\tau_K(v)+2v-1}{4},
    \label{eq:stale-lower-BC}
\end{equation}
and the scalar sequence
\begin{equation}
    Q_{-1}(v)=Q_0(v)=1,
    \qquad
    Q_r(v)=B_K(v)Q_{r-1}(v)-C_K(v)Q_{r-2}(v),
    \quad r\ge1.
    \label{eq:stale-lower-Q-recurrence}
\end{equation}
Then the initialized stale-control method applied to
\eqref{eq:stale-lower-frame-objectives} satisfies the vector identity
\begin{equation}
    z^r=Q_r(v)z^0
    \qquad
    \text{for every }r\ge0.
    \label{eq:stale-lower-Q-iterate}
\end{equation}
In particular,
\begin{equation}
    Q_1(v)
    =B_K(v)-C_K(v)
    =\frac{3+\tau_K(v)-2v}{4}.
    \label{eq:stale-lower-Q-first}
\end{equation}
\end{lemma}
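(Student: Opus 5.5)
The plan is to compute one round of the method in closed form, using the fact that all objectives are quadratic and the oracle is exact. Because every sampled loss equals its expectation, each control is exact and equals the gradient at the stale anchor \(w=z^{r-2}\), with the convention \(z^{-1}=z^0\). Since \(\grad\phi_i(y)=2hP_iy\) and \(\grad\phi(w)=hw\), the local update of client \(i\) in round \(r\) is affine:
\[
 y_{i,k+1}=A_iy_{i,k}+\eta_\ell h\,(2P_i-I)w,
 \qquad A_i\coloneqq I-2\eta_\ell hP_i .
\]
Unrolling from \(y_{i,0}=z^{r-1}\) gives
\[
 y_{i,K}=A_i^Kz^{r-1}+\Bigl(\sum_{j=0}^{K-1}A_i^j\Bigr)\eta_\ell h(2P_i-I)w .
\]

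Next I will use that \(P_i\) is a rank-one orthogonal projection. This gives \(A_i=(I-P_i)+(1-2\eta_\ell h)P_i\), and every power of \(A_i\) acts separately on the range of \(P_i\) and on its complement. Hence
\[
 A_i^K=(I-P_i)+\tau_K(v)P_i .
\]
The geometric sum splits the same way:
\[
 \sum_{j<K}A_i^j=K(I-P_i)+\frac{1-\tau_K(v)}{2\eta_\ell h}P_i .
\]
Multiply this by \(\eta_\ell h(2P_i-I)\), using \((2P_i-I)P_i=P_i\) and \((2P_i-I)(I-P_i)=-(I-P_i)\), and recall \(v=K\eta_\ell h\). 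The control-driven matrix becomes
\[
 -v(I-P_i)+\frac{1-\tau_K(v)}2P_i .
\]
One case needs care: if \(2\eta_\ell h=1\), the geometric sum on the range of \(P_i\) is computed directly. It equals \(1\), which agrees with the formula \((1-\tau_K(v))/(2\eta_\ell h)\) because then \(\tau_K(v)=0\).

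The key step is the tight-frame identity \(N^{-1}\sum_iP_i=I_2/2\) for every \(N\ge2\). Writing \(P_i=\tfrac12 I+\tfrac12\begin{pmatrix}\cos2\varphi_i&\sin2\varphi_i\\ \sin2\varphi_i&-\cos2\varphi_i\end{pmatrix}\), it suffices to check \(\sum_{i=1}^Ne^{2\mathrm{i}\varphi_i}=\sum_{j=0}^{N-1}e^{2\pi\mathrm{i}j/N}=0\). This holds because the terms are the \(N\)th roots of unity with \(N\ge2\); this is exactly where \(N\ge2\) enters. The same identity also confirms \(N^{-1}\sum_i\phi_i=\phi\).

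Averaging the endpoints and using this identity, the matrices become scalars:
\[
 N^{-1}\sum_iA_i^K=\tfrac12(1+\tau_K(v))I=B_K(v)I,
\]
\[
 N^{-1}\sum_i\Bigl[-v(I-P_i)+\tfrac{1-\tau_K(v)}2P_i\Bigr]=\Bigl(-\tfrac v2+\tfrac{1-\tau_K(v)}4\Bigr)I=-C_K(v)I .
\]
This yields the server recurrence
\[
 z^r=B_K(v)z^{r-1}-C_K(v)z^{r-2},\qquad r\ge1,\qquad z^{-1}=z^0 .
\]
Identity \eqref{eq:stale-lower-Q-iterate} then follows by strong induction on \(r\), since \(Q_r\) satisfies the same scalar recurrence with \(Q_{-1}=Q_0=1\). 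Formula \eqref{eq:stale-lower-Q-first} is the case \(r=1\).

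I expect no real obstacle. The only points needing care are the stale-anchor convention \(w=z^{r-2}\), including that the first round also uses \(w=z^0\), and the correct sign bookkeeping in the control-driven matrix.
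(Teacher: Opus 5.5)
Your proof is correct and follows essentially the same route as the paper. Both arguments split along $P_i$ and $I-P_i$ to get $A_i^K=(I-P_i)+\tau_K(v)P_i$ and the geometric sum, use the tight-frame identity $N^{-1}\sum_iP_i=I_2/2$ to turn the client averages into scalars, and close by induction with $z^{-1}=z^0$ covering the exactly initialized first round. The only difference is bookkeeping: the paper first subtracts the anchor $z^{r-2}$, so the constant forcing becomes the common vector $-\eta_\ell h z^{r-2}$, whereas you keep the client-dependent forcing $\eta_\ell h(2P_i-I)z^{r-2}$ and absorb the reflection before averaging. Your separate treatment of $2\eta_\ell h=1$ is harmless but unnecessary, since $\sum_{j<K}q^j=(1-q^K)/(1-q)$ holds for every $q\neq1$, including $q=0$.
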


\begin{proof}
The vectors \(u_i\) form a finite unit-norm tight frame. To verify the associated projector identity, write
\[
    P_i
    =
    \frac12
    \begin{pmatrix}
        1+\cos(2\varphi_i)&\sin(2\varphi_i)\\
        \sin(2\varphi_i)&1-\cos(2\varphi_i)
    \end{pmatrix},
\]
and
\[
    \sum_{i=1}^N e^{2\mathrm{i}\varphi_i}
    =
    \sum_{i=1}^N e^{2\pi\mathrm{i}(i-1)/N}
    =
    0.
\]
Hence
\begin{equation}
    \frac1N\sum_{i=1}^N P_i
    =
    \frac12 I_2.
    \label{eq:stale-lower-tight-frame}
\end{equation}
The local Hessians \(A_i=2hP_i\) therefore have mean
\(\bar A=hI_2\), which proves the formula for \(\phi\).  Moreover,
\[
    A_i-\bar A
    =
    h(2P_i-I_2),
\]
and \(P_i^2=P_i\) gives
\[
    (2P_i-I_2)^2=I_2.
\]
Thus every \(2P_i-I_2\) is an orthogonal reflection, and
\begin{equation}
    \frac1N\sum_{i=1}^N
    \norm{(A_i-\bar A)z}^2
    =
    h^2\norm{z}^2.
    \label{eq:stale-lower-frame-similarity}
\end{equation}
The similarity constant of this block is exactly \(h\), independently of
\(N\).

Define
\[
    M_i\coloneqq(I_2-\eta_\ell A_i)^K,
    \qquad
    S_i\coloneqq
    \sum_{k=0}^{K-1}(I_2-\eta_\ell A_i)^k.
\]
The matrices \(P_i\) and \(I_2-P_i\) project onto orthogonal directions.
Consequently
\[
    I_2-\eta_\ell A_i
    =(I_2-P_i)+(1-2\eta_\ell h)P_i,
\]
and taking any power just raises the second coefficient to that power.
Summing the resulting scalar geometric progression gives
\begin{align*}
    M_i
    &=
    (I_2-P_i)+\tau_K(v)P_i,\\
    S_i
    &=
    K(I_2-P_i)
    +
    \frac{1-\tau_K(v)}{2\eta_\ell h}P_i.
\end{align*}
Using \eqref{eq:stale-lower-tight-frame}, their client averages satisfy
\begin{equation}
\begin{aligned}
    \frac1N\sum_{i=1}^N M_i
    &=
    B_K(v)I_2,\\
    \eta_\ell h
    \frac1N\sum_{i=1}^N S_i
    &=
    \left[
        \frac v2+\frac{1-\tau_K(v)}4
    \right]I_2.
\end{aligned}
    \label{eq:stale-lower-frame-averages}
\end{equation}

To see where the delay enters, write the local update itself:
\[
    y_{i,k+1}^r
    =y_{i,k}^r-\eta_\ell
      \bigl[A_i(y_{i,k}^r-z^{r-2})+hz^{r-2}\bigr],
    \qquad y_{i,0}^r=z^{r-1}.
\]
Subtracting the fixed anchor \(z^{r-2}\) from both sides yields
\[
    y_{i,k+1}^r-z^{r-2}
    =(I_2-\eta_\ell A_i)(y_{i,k}^r-z^{r-2})
      -\eta_\ell hz^{r-2}.
\]
Iterate this identity \(K\) times.  The initial difference is multiplied
by \(M_i\), and the repeated constant forcing is summed by \(S_i\).
Therefore
\[
    y_{i,K}^r
    =
    z^{r-2}
    +M_i(z^{r-1}-z^{r-2})
    -\eta_\ell S_i h z^{r-2}.
\]
Using \eqref{eq:stale-lower-frame-averages}, averaging the endpoints gives
\[
    z^r
    =
    B_K(v)z^{r-1}-C_K(v)z^{r-2}.
\]
This is a second-order recurrence because the local run starts at
\(z^{r-1}\), whereas the correction is anchored at \(z^{r-2}\).
The convention \(z^{-1}=z^0\) incorporates the exactly initialized first
round.  Indeed, the preceding recurrence at \(r=1\) gives
\[
    z^1
    =
    \bigl(B_K(v)-C_K(v)\bigr)z^0,
\]
which is precisely the fresh-control first-round endpoint.  Induction now
proves \eqref{eq:stale-lower-Q-iterate}.  Thus every \(N\ge2\) gives
the same scalar recurrence as the two-client case.
\end{proof}

\begin{lemma}[Phase parameters for the stale recurrence]
\label{lem:lb-phase-parameters}
Let \(a\ge3\) and \(K\ge4a\). For \(v\in[a,2a]\), define
\(\psi(v)\in(0,\pi/2)\) and \(G(v)\) by
\begin{equation}
\begin{aligned}
    \cos\psi(v)
    &\coloneqq
    \frac{B_K(v)}{2\sqrt{C_K(v)}},\\
    G(v)
    &\coloneqq
    \frac{C_K(v)-B_K(v)/2}
         {\sqrt{C_K(v)-B_K(v)^2/4}}.
\end{aligned}
    \label{eq:stale-lower-phase-parameters}
\end{equation}
Then the recurrence \eqref{eq:stale-lower-Q-recurrence} satisfies
\begin{equation}
    Q_R(v)
    =
    C_K(v)^{R/2}
    \bigl[
        \cos(R\psi(v))-G(v)\sin(R\psi(v))
    \bigr].
    \label{eq:stale-lower-phase-solution}
\end{equation}
Both \(\psi\) and \(G\) are nondecreasing,
and \(\arctan G(2a)-\arctan G(a)\ge1/(32\sqrt a)\).
\end{lemma}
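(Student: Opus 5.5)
The plan has three parts: first pin down the sizes of \(B_K\) and \(C_K\), then solve the recurrence in closed form, and finally prove the monotonicity claims and the arctangent gap from explicit bounds.

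\textbf{Size of the coefficients.} For \(v\in[a,2a]\) and \(K\ge4a\) we have \(0\le 2v/K\le1\). Hence \(0\le\tau_K(v)\le e^{-2v}\le e^{-6}\). Also \(\tau_K'(v)=-2(1-2v/K)^{K-1}\in[-2e^{-2v}(1-2v/K)^{-1},0]\), and this is at most \(4e^{-6}\) in absolute value. Consequently:
\begin{itemize}
\item \(B_K\in[1/2,\,(1+e^{-6})/2]\) and \(B_K'=\tau_K'/2\le0\);
\item \(C_K\ge(2v-1)/4\ge5/4\) and \(C_K'=(2+\tau_K')/4\in[\tfrac12-e^{-6},\tfrac12]\).
\end{itemize}
In particular \(C_K-B_K^2/4\ge 5/4-1/4>0\). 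So \(\psi\) and \(G\) are well defined, and \(\cos\psi=B_K/(2\sqrt{C_K})\in(0,1)\), i.e.\ \(\psi\in(0,\pi/2)\).

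\textbf{Closed form.} The characteristic polynomial \(\lambda^2-B_K\lambda+C_K\) has complex roots \(\sqrt{C_K}\,e^{\pm\mathrm i\psi}\). Every solution therefore has the form \(Q_r=C_K^{r/2}(\cos r\psi+\beta\sin r\psi)\), where the coefficient of \(\cos\) is already fixed by \(Q_0=1\). The condition \(Q_{-1}=1\) gives \(\cos\psi-\beta\sin\psi=\sqrt{C_K}\). Since \(\sin\psi=\sqrt{C_K-B_K^2/4}/\sqrt{C_K}\), this yields
\[
\beta=\frac{B_K/2-C_K}{\sqrt{C_K-B_K^2/4}}=-G,
\]
which is \eqref{eq:stale-lower-phase-solution}. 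A quick check against \eqref{eq:stale-lower-Q-first} at \(r=1\) confirms the formula.

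\textbf{Monotonicity.} For \(\psi\): \(\cos\psi=B_K/(2\sqrt{C_K})\) has a nonincreasing numerator and an increasing denominator. Hence \(\cos\psi\) decreases and \(\psi\) is nondecreasing.

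For \(G\), write \(N=C_K-B_K/2\) and \(D=C_K-B_K^2/4\). The sign of \(G'\) is that of \(N'D-\tfrac12 N D'\), where \(N'=C_K'-B_K'/2\ge C_K'\) and \(D'=C_K'-B_KB_K'/2\le C_K'+\tfrac12|B_K'|\). Using \(D\ge N\) and \(D\ge5/4-1/4\), I will check directly that
\[
N'D-\tfrac12ND'\;\ge\; C_K'D-\tfrac12 D\bigl(C_K'+|B_K'|\bigr)\;\ge\;D\bigl(\tfrac14-O(e^{-6})\bigr)\;>\;0 .
\]

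\textbf{Arctangent gap.} By the mean value theorem,
\[
\arctan G(2a)-\arctan G(a)\;\ge\;\frac{G(2a)-G(a)}{1+G(2a)^2}.
\]
The numerator follows by integrating \(G'\ge (\text{const})/\sqrt D\), or more simply from explicit two-sided bounds. From the estimates above, \(N\) and \(D\) both lie within \(1/2\) of \(C_K\in[(2v-1)/4,(2v-1+e^{-6})/4]\). This gives:
\begin{itemize}
\item \(G(v)\) lies within \(O(1/\sqrt v)\) of \(\sqrt{(2v-1)/4}\);
\item \(G(2a)-G(a)\ge c_1\sqrt a\) with \(c_1\) near \((\sqrt2-1)/\sqrt2\), using \(a\ge3\) to absorb the errors;
\item \(1+G(2a)^2\le 1+a+O(1)\le c_2a\).
\end{itemize}
The ratio is then at least \((c_1/c_2)/\sqrt a\), and it remains to verify \(c_1/c_2\ge1/32\).

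\textbf{Main obstacle.} The main difficulty is the explicit constant bookkeeping in the last two steps, namely the sign of \(G'\) and the \(1/32\) bound. These must hold uniformly for \(a\ge3\) and all \(K\ge4a\), using only the crude bounds \(\tau_K\le e^{-6}\) and \(|\tau_K'|\le4e^{-6}\). If the derivative computation becomes messy, the fallback is to prove \(G\) nondecreasing via \(G^2\), treating it as a function of the pair \((C_K,B_K)\) that is increasing in \(C_K\) and decreasing in \(B_K\) on this range.
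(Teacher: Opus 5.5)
Your proposal is correct. Its first half follows the paper: the bounds $0\le\tau_K\le1$, $1/2\le B_K\le1$, $C_K>B_K^2/4$, the complex-root form of the solution, and the monotonicity of $\psi$. Fixing the sine coefficient from $Q_{-1}=1$ instead of the paper's $Q_1=B_K-C_K$ makes no real difference.

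The two arguments part ways in how they handle $G$.

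\emph{Monotonicity of $G$.} You differentiate $G=N/\sqrt D$ in $v$ and use two ingredients: the quantitative smallness of $\tau_K'$ and the crude comparison $N\le D$. The paper instead treats $G^2$ as a function of $(B,C)$ and checks $\partial_C\log G^2>0$ and $\partial_B\log G^2<0$. That argument needs only the signs $B_K'\le0\le C_K'$, with no estimate on $\tau_K'$. It is your stated fallback, and it is the more robust of the two.

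\emph{Arctangent gap.} You apply the mean value theorem with the factor $1/(1+G(2a)^2)$ and asymptotic estimates of $G$. The paper proves the exact sandwich $C-B\le G^2\le C-B/2$, which gives $G(a)^2\le a/2-1/4$ and $G(2a)^2\ge a-5/4$, and then applies the subtraction formula for $\arctan$. The sandwich also finishes your constant check in one line. It gives $G(2a)-G(a)\ge\sqrt a/12$. It also gives $G(2a)^2\le a-1/4$, so $1+G(2a)^2\le 4a/3$, and your mean value bound becomes $(\sqrt a/12)/(4a/3)=1/(16\sqrt a)$. That leaves ample slack over $1/(32\sqrt a)$.

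One justification needs repair. The inequality $|\tau_K'|\le 2e^{-2v}(1-2v/K)^{-1}$ does not give $4e^{-6}$ uniformly, because $(1-2v/K)^{-1}$ is unbounded as $2v/K\to1$. That limit is allowed here, for example $v=2a$ with $K=4a$. The bound you want is still true. Since $v\ge3$ and $K\ge12$, you have $(1-2v/K)^{K-1}\le e^{-2v(K-1)/K}\le e^{-11/2}$, hence $|\tau_K'|\le 2e^{-11/2}<4e^{-6}$. Alternatively, switch to the sign-only argument for $G$ and drop this estimate entirely.
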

\begin{proof}
For \(v\in[a,2a]\), the condition \(4a\le K\) gives
\[
    0\le1-\frac{2v}{K}\le1.
\]
Consequently, \(0\le\tau_K(v)\le1\), \(1/2\le B_K(v)\le1\), and
\[
    C_K(v)
    \ge
    \frac{2a-1}{4}
    >
    \frac{B_K(v)^2}{4}.
\]
The roots of the characteristic polynomial are therefore a complex
conjugate pair, so the phase parameters are well-defined.
The roots are \(\sqrt{C_K(v)}e^{\pm\mathrm{i}\psi(v)}\), where
\(0<\psi(v)<\pi/2\).  Any real solution is therefore
\(C_K(v)^{r/2}\) times a linear combination of
\(\cos(r\psi(v))\) and \(\sin(r\psi(v))\).
The condition \(Q_0=1\) sets the cosine coefficient to one.  To determine
the other coefficient, use \(Q_1=B_K-C_K\):
\[
    \frac{Q_1}{\sqrt{C_K}}
      -\cos\psi
    =\frac{B_K/2-C_K}{\sqrt{C_K}},
    \qquad
    \sin\psi=\frac{\sqrt{C_K-B_K^2/4}}{\sqrt{C_K}}.
\]
The initial conditions therefore give the sine coefficient \(-G(v)\), proving
\eqref{eq:stale-lower-phase-solution}.

We next show that the total phase changes by a controlled amount on
\([a,2a]\).  Differentiating \(\tau_K\) gives
\[
    \tau_K'(v)
    =-2\left(1-\frac{2v}{K}\right)^{K-1}\le0.
\]
Thus \(B_K\) is nonincreasing, whereas
\[
    C_K'(v)
    =
    \frac12
    \left[
        1-\left(1-\frac{2v}{K}\right)^{K-1}
    \right]
    \ge0.
\]
It follows directly from \eqref{eq:stale-lower-phase-parameters} that
\(\psi\) is nondecreasing.  To treat \(G\), regard it as a function of
\(B=B_K(v)\) and \(C=C_K(v)\).  Since
\[
    G^2=\frac{(C-B/2)^2}{C-B^2/4},
\]
we have
\begin{align*}
    \frac{\partial}{\partial C}\log G^2
    &=
    \frac{2}{C-B/2}-\frac1{C-B^2/4}>0,\\
    \frac{\partial}{\partial B}\log G^2
    &=
    -\frac1{C-B/2}
    +\frac{B}{2(C-B^2/4)}<0.
\end{align*}
Indeed, after taking common denominators, the two numerators are respectively
\(C+B/2-B^2/2>0\) and \(C(B/2-1)<0\).  Hence \(G(v)\) is nondecreasing.

The elementary comparisons
\begin{equation}
    G^2\le C-\frac B2,
    \qquad
    G^2\ge C-B
    \label{eq:stale-lower-G-comparisons}
\end{equation}
follow by multiplying through by \(C-B^2/4\).  For the first inequality,
the remaining condition is \(B\le2\); for the second, the difference of
the two numerators is
\(B^2(1+C-B)/4\ge0\).  Therefore
\begin{equation}
    G(a)^2\le\frac a2-\frac14,
    \qquad
    G(2a)^2\ge a-\frac54.
    \label{eq:stale-lower-G-endpoints}
\end{equation}
Set
\[
    x\coloneqq\sqrt{\frac a2-\frac14},
    \qquad
    y\coloneqq\sqrt{a-\frac54}.
\]
Since \(a\ge3\),
\[
    y-x
    =
    \frac{a/2-1}{x+y}
    \ge
    \frac{\sqrt a}{12},
    \qquad
    1+xy\le1+a\le\frac{4a}{3}.
\]
It follows that
\begin{equation}
\begin{aligned}
    \arctan G(2a)-\arctan G(a)
    &\ge
    \arctan y-\arctan x\\
    &=
    \arctan\left(\frac{y-x}{1+xy}\right)\\
    &\ge
    \frac1{32\sqrt a}.
\end{aligned}
    \label{eq:stale-lower-phase-width}
\end{equation}
Here the last step uses the monotonicity of \(\arctan\) and
\(\arctan t\ge t/2\) for \(0\le t\le1\).

\end{proof}
\begin{lemma}[Preventing cancellation in the averaged recurrence]
\label{lem:averaged-stale-phase}
There is a universal numerical constant \(c_{\mathrm{av}}>0\) such that,
for every \(R\ge1\), \(a\ge3\), and integer \(K\ge4a\), the recurrence in
\eqref{eq:stale-lower-Q-recurrence} satisfies
\begin{equation}
    \max_{v\in[a,2a]}
    \left(
        \frac1R\sum_{r=1}^R Q_r(v)
    \right)^2
    \ge
    \frac{c_{\mathrm{av}}}{R}.
    \label{eq:averaged-stale-phase}
\end{equation}
\end{lemma}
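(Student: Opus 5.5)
The plan is to use the closed form from Lemma~\ref{lem:lb-phase-parameters} and write the whole average as the real part of a geometric sum. Put \(\rho=\sqrt{C_K(v)}\) and \(\beta(v)=\arctan G(v)\). Then \(\cos(r\psi)-G\sin(r\psi)=\sqrt{1+G^2}\cos(r\psi+\beta)\), so
\[
 S(v)\coloneqq\sum_{r=1}^RQ_r(v)
 =\sqrt{1+G^2}\,\mathrm{Re}\Bigl[e^{\mathrm i\beta}\,
 \frac{w(w^R-1)}{w-1}\Bigr],\qquad w=\rho e^{\mathrm i\psi}.
\]
Writing \(w/(w-1)=|w/(w-1)|e^{\mathrm i\gamma}\), the main part equals
\[
 M(v)\cos\Theta(v),\qquad
 M=\frac{\sqrt{1+G^2}\,\rho^{R+1}}{|w-1|},\qquad
 \Theta=R\psi+\beta+\gamma .
\]
The remainder has modulus at most \(\sqrt{1+G^2}\,\rho/|w-1|=M\rho^{-R}\).

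\textbf{Scale.} From the proof of Lemma~\ref{lem:lb-phase-parameters}, on \([a,2a]\) with \(a\ge3\) we have \(\rho^2=C_K\ge(2a-1)/4\ge5/4\), \(|w-1|\le\rho+1\), and \(1+G^2\ge1\). Hence \(M\gtrsim\rho^{R}\) with \(\rho>1\) bounded away from \(1\), and \(M\) grows exponentially in \(R\). The target, \(|S|/R\ge\sqrt{c_{\mathrm{av}}/R}\), only requires \(|S|\gtrsim\sqrt R\). It therefore suffices to find a \(v\) with
\[
 |\cos\Theta(v)|\ \ge\ \rho^{-R}+C_0\sqrt R\,\rho^{-R-1}|w-1|\big/\sqrt{1+G^2}.
\]
Any lower bound of order \(1/\sqrt a\) will do once \(R\) exceeds a universal constant, because \(\rho^R\ge(5/4)^{R/2}\) beats \(\sqrt{aR}\) up to the relation \(\rho\gtrsim\sqrt a\). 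For the finitely many small \(R\), and in particular \(R=1\), where \(Q_1=(3+\tau-2v)/4\le(4-2a)/4\le-1/2\), I would check directly that the bound holds at \(v=a\) or \(v=2a\), possibly with a smaller constant.

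\textbf{Phase sweep.} The key fact is that \(\Theta\) is continuous on \([a,2a]\) and changes by at least \(\varepsilon\coloneqq1/(64\sqrt a)\). By Lemma~\ref{lem:lb-phase-parameters}, \(R\psi\) is nondecreasing and \(\beta\) increases by at least \(1/(32\sqrt a)\). If \(\gamma\) also moved in the nondecreasing direction, or varied by at most half of that increment, then \(\Theta(2a)-\Theta(a)\ge\varepsilon\). By continuity, \(\Theta\) would then sweep an interval of length \(\min\{\varepsilon,\pi\}\). Such an interval contains a point at distance at least \(\varepsilon/2\) from the zero set \(\pi/2+\pi\mathbb Z\) of the cosine, which gives \(|\cos\Theta(v)|\ge\sin(\varepsilon/2)\gtrsim1/\sqrt a\), as required. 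The case where \(R\psi\) alone sweeps more than \(\pi\) is even easier, since then some \(v\) has \(|\cos\Theta|=1\).

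\textbf{Main obstacle.} I expect the hard step to be controlling \(\gamma(v)=-\arg(1-\rho^{-1}e^{-\mathrm i\psi})\), since \(\rho^{-1}\) is not small for moderate \(a\). I would first try to show that \(\gamma\) is nondecreasing: as \(v\) increases, \(\psi\) increases and \(\rho\) increases, and both effects should rotate \(1-\rho^{-1}e^{-\mathrm i\psi}\) toward the real axis from below. If that monotonicity fails, the fallback is to bound \(|\gamma'(v)|\) by differentiating through \(\tau_K\), and to restrict to a sub-interval of \([a,2a]\) where the guaranteed increase of \(\beta\) dominates.
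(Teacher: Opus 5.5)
\textbf{Relation to the paper.} For large $R$ your argument is the paper's Steps~1--3 in different notation. Your main part $M\cos\Theta$ is exactly the paper's $\mathcal L_R/\mathcal D$, with $\mathcal D=|1-w|^2=1-B+C$. Your remainder equals $(B-2C)/\mathcal D$, and $\Theta$ is the paper's phase $\Phi_R$ shifted by $\pi$.

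The step you single out as the main obstacle is in fact immediate. Write $d=\sqrt{C-B^2/4}$. Since $w\bar w=C$ and $C-B/2=Gd$, one has $w/(w-1)=(C-w)/\mathcal D=d(G-\mathrm i)/\mathcal D$. Hence $\gamma=\arctan G-\pi/2$ exactly, and $\Theta=R\psi+2\arctan G-\pi/2$. This phase is nondecreasing in $v$ and increases by at least $1/(16\sqrt a)$ on $[a,2a]$. That is slightly cleaner than the paper's separate monotonicity proof for its extra phase $\chi$; indeed $\psi+\chi=\pi/2+\arctan G$. One also gets $M=C^{(R+2)/2}/d$.

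\textbf{The gap: small and intermediate $R$.} The sweep only gives $|\cos\Theta|\gtrsim1/\sqrt a$, so the guaranteed main term is of order $C^{(R+1)/2}/\sqrt a$. The remainder $(2C-B)/\mathcal D$ is of order one. The main term therefore dominates uniformly in $a\ge3$ only for $R\ge R_0$, with $R_0$ a universal constant (the paper takes $64$).

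For each fixed $2\le R<R_0$ you still need a lower bound on $\max_{v\in[a,2a]}|S_R(v)|$ that is uniform over the unbounded family $a\ge3$, integer $K\ge4a$. Your ``check directly at $v=a$ or $v=2a$'' is neither a finite computation nor obviously sufficient. The function $S_R$ has real zeros in $v$: already $S_3=C^2-(2+3B+B^2)C+B+B^2+B^3$ vanishes near $C\approx7/2$ when $B\approx1/2$, and larger $R$ give more zeros. You would have to rule out $a$ and $2a$ lying near zeros simultaneously, uniformly in $K$. Only $R=1$, and similarly $R=2$ (where $S_2=B+B^2-(2+B)C\le-9/8$), are genuinely direct.

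The paper closes this with a compactness argument. Suppose the maxima tended to zero along some sequence of $(a,K)$ with fixed $R$.
\begin{itemize}
\item For bounded $a$, the average is a real-analytic function of $v$, either for a fixed $K$ or in the limit $\tau_K(v)\to e^{-2v}$ as $K\to\infty$. It would vanish on an interval yet equals $1$ at $v=0$, a contradiction.
\item For $a\to\infty$, the expansion $\sum_{r\le R}Q_r=\pm c_{R,B}\,C^{\lceil R/2\rceil}+O(C^{\lceil R/2\rceil-1})$ with $|c_{R,B}|\ge1$ forces divergence.
\end{itemize}
Some argument of this kind is needed before $c_{\mathrm{av}}$ can be called universal.
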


\begin{proof}
The argument has four parts.  First, sum the recurrence exactly so that
all cancellation is retained.  Second, vary \(v\) to keep the oscillatory
part of that sum away from zero.  Third, use its exponentially growing
amplitude to handle sufficiently large \(R\). Finally, find a positive
constant that works for all smaller horizons and all allowed \(a,K\).
This last step is needed because \(a\) and \(K\) can still be arbitrarily
large, even though only finitely many values of \(R\) remain.

\paragraph{Step 1: an exact formula for the whole sum.}
Fix \(v\in[a,2a]\), and abbreviate
\[
    B=B_K(v),
    \qquad
    C=C_K(v),
    \qquad
    \tau=\tau_K(v).
\]
As in Lemma~\ref{lem:lb-phase-parameters}, the condition \(K\ge4a\)
implies \(0\le\tau\le1\), \(1/2\le B\le1\), and
\begin{equation}
    \frac{5a}{12}
    \le
    C
    \le
    a.
    \label{eq:averaged-stale-C-bounds}
\end{equation}
Indeed,
\[
    C\ge\frac{2a-1}{4}\ge\frac{5a}{12},
    \qquad
    C\le\frac{1+4a-1}{4}=a.
\]
In particular, \(C>B^2/4\).

Retain \(\psi(v)\) and \(G(v)\) from
\eqref{eq:stale-lower-phase-parameters}, and set
\[
    \rho\coloneqq\sqrt C,
    \qquad
    d\coloneqq\sqrt{C-\frac{B^2}{4}},
    \qquad
    w\coloneqq\frac B2+\mathrm{i}d
    =\rho e^{\mathrm{i}\psi}.
\]
Equation~\eqref{eq:stale-lower-phase-solution}, applied at every index,
can be written as
\begin{equation}
    Q_r(v)
    =
    \operatorname{Re}
    \left[
        (1+\mathrm{i}G)w^r
    \right].
    \label{eq:averaged-stale-complex-Q}
\end{equation}
Define
\[
    S_R(v)\coloneqq\sum_{r=1}^R Q_r(v),
    \qquad
    \mathcal D(v)\coloneqq|1-w|^2=1-B+C.
\]
Summing \eqref{eq:averaged-stale-complex-Q} and multiplying by
\(\mathcal D=(1-w)(1-\overline w)\) gives the exact decomposition
\begin{equation}
\begin{aligned}
    \mathcal D S_R
    &=
    B-2C+\mathcal L_R,\\
    \mathcal L_R
    &\coloneqq
    \operatorname{Re}
    \left[
        (1+\mathrm{i}G)
        w^{R+1}(\overline w-1)
    \right].
\end{aligned}
    \label{eq:averaged-stale-sum-decomposition}
\end{equation}
For completeness, we can check the boundary terms without complex
notation.  The initialization \(Q_{-1}=Q_0=1\) gives
\[
    \sum_{r=1}^R Q_{r-1}=1+S_R-Q_R,\qquad
    \sum_{r=1}^R Q_{r-2}=2+S_R-Q_R-Q_{R-1}.
\]
These identities hold also at \(R=1\).
Substitute them into \(S_R=B\sum Q_{r-1}-C\sum Q_{r-2}\) and rearrange:
\[
    (1-B+C)S_R
    =
    B-2C+(C-B)Q_R+CQ_{R-1},
\]
and
\[
    (C-B)w^R+Cw^{R-1}
    =
    w^{R+1}(\overline w-1).
\]
Thus the first term in
\eqref{eq:averaged-stale-sum-decomposition} is independent of \(R\),
whereas \(\mathcal L_R\) contains the growing oscillatory part.

\paragraph{Step 2: preventing cancellation in the oscillatory term.}
The extra factor \(\overline w-1\) in \(\mathcal L_R\) comes from
summing a geometric progression.  Its argument contributes an additional
phase, which we now bound.
Write
\begin{equation}
    \chi(v)
    \coloneqq
    \arctan
    \left(
        \frac{d}{1-B/2}
    \right).
    \label{eq:averaged-stale-chi}
\end{equation}
Up to an irrelevant integer multiple of \(\pi\), the phase of
\(\mathcal L_R\) is
\begin{equation}
    \Phi_R(v)
    \coloneqq
    (R+1)\psi(v)+\arctan G(v)+\chi(v).
    \label{eq:averaged-stale-total-phase}
\end{equation}
The proof of Lemma~\ref{lem:lb-phase-parameters} already showed that
\(\psi\) and \(G\) are nondecreasing and that
\begin{equation}
    \arctan G(2a)-\arctan G(a)
    \ge
    \frac1{32\sqrt a}.
    \label{eq:averaged-stale-reused-width}
\end{equation}
We now verify that the additional phase \(\chi\) is also nondecreasing.

Let
\[
    s(v)\coloneqq
    \left(1-\frac{2v}{K}\right)^{K-1},
    \qquad
    \tau'(v)=-2s(v).
\]
Using \(B=(1+\tau)/2\) and
\(C=(\tau+2v-1)/4\), we obtain
\[
    \tan^2\chi
    =
    \frac{8v+2\tau-\tau^2-5}{(3-\tau)^2}.
\]
Differentiation gives
\begin{equation}
    \frac{\mathrm d}{\mathrm dv}\tan^2\chi
    =
    \frac{
        8\left[
            (3-\tau)-s(4v-\tau-1)
        \right]
    }{
        (3-\tau)^3
    }.
    \label{eq:averaged-stale-chi-derivative}
\end{equation}
Because \(K\ge4a\ge12\) and \(v\ge3\),
\[
    s(v)
    \le
    e^{-2v(K-1)/K}
    \le
    e^{-v}.
\]
Hence \(4vs(v)<2\), and therefore
\[
    s(4v-\tau-1)<4vs<2\le3-\tau.
\]
The derivative in
\eqref{eq:averaged-stale-chi-derivative} is positive.  Consequently,
\(\chi\) is nondecreasing, and
\eqref{eq:averaged-stale-reused-width} implies
\begin{equation}
    \Phi_R(2a)-\Phi_R(a)
    \ge
    \frac1{32\sqrt a}.
    \label{eq:averaged-stale-total-width}
\end{equation}

Every interval of length \(d_{\mathrm{ph}}\le1\) contains a point at which the
absolute cosine is at least
\(\sin(d_{\mathrm{ph}}/2)\ge d_{\mathrm{ph}}/\pi\).  Thus there exists
\(v_\star\in[a,2a]\) such that
\[
    |\cos\Phi_R(v_\star)|
    \ge
    \frac1{32\pi\sqrt a}.
\]
Since
\[
    |1+\mathrm{i}G|\ge1,
    \qquad
    |w|^{R+1}=C^{(R+1)/2},
    \qquad
    |\overline w-1|=\sqrt{\mathcal D}\ge\sqrt C,
\]
we have
\begin{equation}
    |\mathcal L_R(v_\star)|
    \ge
    \frac1{32\pi\sqrt a}
    \left(\frac{5a}{12}\right)^{(R+2)/2}.
    \label{eq:averaged-stale-oscillatory-lower}
\end{equation}

\paragraph{Step 3: the growing term dominates at large horizons.}
The lower bound on \(\lvert\mathcal L_R\rvert\) does not immediately
bound \(\lvert S_R\rvert\), since the fixed term \(B-2C\) could have
the opposite sign. We therefore show that \(\lvert\mathcal L_R\rvert\)
is sufficiently larger than \(\lvert B-2C\rvert\). We first treat
\(R\ge64\); Step 4 handles
the remaining horizons.  Put
\[
    A_R(a)
    \coloneqq
    \frac1{32\pi\sqrt a}
    \left(\frac{5a}{12}\right)^{(R+2)/2}.
\]
The bounds \(C\le a\) and \(1/2\le B\le1\) imply
\[
    |B-2C|\le2a,
    \qquad
    \mathcal D\le1+a.
\]
It follows from
\eqref{eq:averaged-stale-sum-decomposition} and
\eqref{eq:averaged-stale-oscillatory-lower} that
\begin{equation}
    \max_{v\in[a,2a]}|S_R(v)|
    \ge
    \frac{A_R(a)-2a}{1+a}.
    \label{eq:averaged-stale-large-R-S}
\end{equation}
Define
\[
    s_\star
    \coloneqq
    \frac14
    \left[
        \frac1{32\pi\sqrt3}
        \left(\frac54\right)^{33}
        -6
    \right]
    >0.
\]
The positivity follows, for example, from
\(\pi<22/7\), \(\sqrt3<7/4\), and
\((5/4)^{10}>9\).  Here \(A_{64}(a)\) is a positive constant times \(a^{65/2}\).
Consequently,
\[
    \frac{\mathrm d}{\mathrm da}
       \frac{A_{64}(a)-2a}{1+a}
    =
    \frac{
      A_{64}(a)\left[\frac{65(1+a)}{2a}-1\right]-2
    }{(1+a)^2}>0.
\]
For the last inequality, \(A_{64}(a)\ge A_{64}(3)>6\) and the expression
in square brackets exceeds \(63/2\).
Thus the ratio is increasing on \([3,\infty)\), and its value is at
least \(s_\star\).
Moreover,
\[
    A_R(a)
    =
    A_{64}(a)
    \left(\frac{5a}{12}\right)^{(R-64)/2}.
\]
Since \(5a/12\ge5/4\), equation
\eqref{eq:averaged-stale-large-R-S} gives
\[
    \max_{v\in[a,2a]}|S_R(v)|
    \ge
    s_\star
    \left(\frac54\right)^{(R-64)/2}.
\]
The function
\[
    R\longmapsto\frac{(5/4)^{R-64}}{R}
\]
is increasing for integers \(R\ge64\).  Consequently,
\begin{equation}
    \max_{v\in[a,2a]}
    \left(\frac{S_R(v)}R\right)^2
    \ge
    \frac{s_\star^2}{64R},
    \qquad
    R\ge64.
    \label{eq:averaged-stale-large-R-final}
\end{equation}

\paragraph{Step 4: uniformity over the remaining horizons.}
For \(1\le R\le63\), we use a different argument: the maximum absolute
value of the averaged multiplier on the curvature interval cannot approach zero.
This gives a positive lower bound independent of \(R,a,K\), without
specifying its numerical value.
Define
\begin{equation}
    m_{\mathrm{fin}}
    \coloneqq
    \inf_{\substack{
        1\le R\le63,\ a\ge3,\\
        K\in\mathbb N,\ K\ge4a
    }}
    \quad
    \max_{v\in[a,2a]}
    \left|
        \frac1R\sum_{r=1}^R Q_r(v)
    \right|.
    \label{eq:averaged-stale-finite-infimum}
\end{equation}
We claim that \(m_{\mathrm{fin}}>0\).
For fixed \(K,R\), the average is an analytic function of \(v\) and
equals one at \(v=0\), so it cannot be zero throughout \([a,2a]\).
We must also rule out its maximum approaching zero as \(a\) and \(K\)
vary. We do this by separating the cases where these parameters stay
bounded or grow without bound.

Suppose otherwise.  There would be sequences \(R_j,a_j,K_j\) for which
the maxima in
\eqref{eq:averaged-stale-finite-infimum} tend to zero.  Passing to a
subsequence, \(R_j=R\) is constant.  First suppose that \(a_j\) is
bounded, and pass to a further subsequence with
\(a_j\to a_\infty\ge3\). Write \(v=a_jt\), so that all maxima are
taken over the fixed interval \(1\le t\le2\).
If \(K_j\) is bounded, it has a constant subsequence.  The corresponding analytic function
\[
    v\longmapsto\frac1R\sum_{r=1}^R Q_r(v)
\]
would vanish on \([a_\infty,2a_\infty]\), hence would vanish
identically.  This is impossible because its value at \(v=0\) is one.
If \(K_j\to\infty\), then
\[
    \left(1-\frac{2v}{K_j}\right)^{K_j}
    \longrightarrow e^{-2v}
\]
uniformly on bounded intervals. Thus the coefficients \(B_K,C_K\)
converge uniformly there. Since \(R\) is fixed, applying the recurrence
successively shows that \(Q_1,\ldots,Q_R\) also converge uniformly,
to the analytic functions obtained by replacing \(\tau_K(v)\) with
\(e^{-2v}\). After the same substitution \(v=a_jt\), the limiting averaged
multiplier would vanish on \([a_\infty,2a_\infty]\).
A real-analytic function on \(\R\) that vanishes on a nonempty interval
vanishes identically, contradicting its value one at \(v=0\).

Finally, suppose that \(a_j\to\infty\), and evaluate at \(v=a_j\).  Then
\[
    0\le\tau_K(a_j)\le e^{-2a_j}\longrightarrow0,
    \qquad
    B_K(a_j)\longrightarrow\frac12,
    \qquad
    C_K(a_j)\sim\frac{a_j}{2}.
\]
For fixed \(R\), the recurrence is a polynomial in \(B,C\), so we can
read off what happens as \(C\) grows.  The first two values are
\[
    Q_1=B-C,\qquad Q_2=B^2-BC-C.
\]
They have leading terms \(-C\) and \(-(1+B)C\).
If the leading terms of \(Q_{2m-1},Q_{2m}\) have the form below,
then \(Q_{2m+1}=BQ_{2m}-CQ_{2m-1}\) has leading term
\((-1)^{m+1}C^{m+1}\).  Applying the recurrence once more gives
the coefficient \((-1)^{m+1}(1+(m+1)B)\) for \(Q_{2m+2}\).
Induction therefore gives the following expansions. The constants in the
\(O(C^{m-1})\) terms can be chosen independently of \(B\in[1/2,1]\):
\begin{align*}
    Q_{2m-1}
    &=
    (-1)^m C^m+O(C^{m-1}),\\
    Q_{2m}
    &=
    (-1)^m(1+mB)C^m+O(C^{m-1}).
\end{align*}
Therefore
\begin{align*}
    \sum_{r=1}^{2m-1}Q_r
    &=
    (-1)^m C^m+O(C^{m-1}),\\
    \sum_{r=1}^{2m}Q_r
    &=
    (-1)^m(2+mB)C^m+O(C^{m-1}).
\end{align*}
The exponentially small difference
\(\tau_K(a_j)=O(e^{-2a_j})\) does not affect these leading terms.
Hence the averaged multiplier at \(v=a_j\) diverges in absolute value,
contradicting convergence of the maximum to zero.  This proves
\(m_{\mathrm{fin}}>0\).

Taking
\[
    c_{\mathrm{av}}
    \coloneqq
    \min\left\{
        m_{\mathrm{fin}}^2,
        \frac{s_\star^2}{64}
    \right\}
    >0
\]
and combining the two horizon ranges proves
\eqref{eq:averaged-stale-phase}.
\end{proof}

\begin{theorem}[Deterministic lower bound for the averaged server output]
\label{thm:stale-convex-averaged-lower-bound}
For \(L_{\mathrm{loc}},D_0>0\), \(0\le\delta\le L_{\mathrm{loc}}/2\), and integers \(N\ge2\), \(K,R\ge1\), for every local stepsize
\(\eta_\ell>0\), there exist \(N\) convex quadratic local objectives in
dimension at most five and an initial point \(x^0\) such that each local objective
is \(L_{\mathrm{loc}}\)-smooth, the smallest constant in
Assumption~\ref{ass:similarity} is exactly \(\delta\), the global
minimizer is \(x^\star=0\), and
\(\norm{x^0-x^\star}=D_0\).  For this instance, the initialized
stale-control method satisfies
\begin{equation}
    f(\bar x^R)-f^\star
    \ge
    c_{\mathrm{avg,lb}}
    \left(
        \frac{L_{\mathrm{loc}}}{K}+\delta
    \right)
    \frac{D_0^2}{R},
    \label{eq:stale-convex-averaged-lower-bound}
\end{equation}
where \(c_{\mathrm{avg,lb}}>0\) is a universal numerical constant.
\end{theorem}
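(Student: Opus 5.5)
The plan is to build the instance as an orthogonal direct sum of blocks, each driven by its own scalar recursion. On every client the blocks decouple, and so do the objective gap and the similarity constant. Since \(\max\{L_{\mathrm{loc}}/K,\delta\}\) is within a factor \(2\) of the sum, it suffices to prove two separate lower bounds, \(c\,L_{\mathrm{loc}}D_0^2/(KR)\) and \(c\,\delta D_0^2/R\). For each, the initial mass \(D_0^2\) is placed on one block, and the instance is chosen depending on \(\eta_\ell\), which is allowed because the stepsize is fixed before the instance. The computation term comes directly from Lemma~\ref{lem:computation}, applied to a one-dimensional block with identical clients. That block has zero similarity and is \(L_{\mathrm{loc}}\)-smooth, and it gives \(f(\bar x^R)-f^\star\ge L_{\mathrm{loc}}D_0^2/(48KR)\) for every \(\eta_\ell\). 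To make the smallest similarity constant exactly \(\delta\), I add an inert two-dimensional frame block from Lemma~\ref{lem:stale-quadratic-recurrence} with \(h=\delta\) and zero initial mass. It is \(2\delta\le L_{\mathrm{loc}}\) smooth, its similarity is exactly \(\delta\) by \eqref{eq:stale-lower-frame-similarity}, and it contributes nothing to the gap. These blocks use at most five dimensions.

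For the similarity term, assume \(\delta>0\) and split on \(\eta_\ell\). First suppose \(K\eta_\ell\delta\) is small, say \(K\eta_\ell\le c_0/\delta\). Then a slowly contracting identical-client quadratic with curvature \(\lambda=\delta/(6R)\) works, by the same Bernoulli argument as in Lemma~\ref{lem:computation} with \(\eta_\ell\lambda KR\) bounded by a constant below \(1/2\). It gives the bound \(\lambda D_0^2/8\gtrsim\delta D_0^2/R\). In the remaining case \(v_0:=K\eta_\ell\delta\) is large. I use the frame block of Lemma~\ref{lem:stale-quadratic-recurrence} with curvature \(h\in[\delta/2,\delta]\) and put the initial mass there. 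Then \(z^r=Q_r(v)z^0\), with \(v=K\eta_\ell h\) ranging over \([v_0/2,v_0]\). Lemma~\ref{lem:averaged-stale-phase}, with \(a=v_0/2\), gives a choice of \(h\) in this range such that \((R^{-1}\sum_rQ_r)^2\ge c_{\mathrm{av}}/R\). Hence
\[
f(\bar x^R)-f^\star=\tfrac h2\bigl(\tfrac1R\textstyle\sum_r Q_r\bigr)^2D_0^2\ge \tfrac{\delta c_{\mathrm{av}}}{4}\tfrac{D_0^2}{R}.
\]
The similarity of the frame block is \(h\le\delta\), and the inert block keeps the smallest constant exactly \(\delta\).

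The lemma needs \(a\ge3\) and \(K\ge4a\), and the intermediate regime between the two cases must be covered. If \(\eta_\ell L_{\mathrm{loc}}>3\), Lemma~\ref{lem:computation} already gives \(L_{\mathrm{loc}}D_0^2/2\ge\delta D_0^2/R\), so assume \(\eta_\ell L_{\mathrm{loc}}\le3\). Then \(v_0=K\eta_\ell\delta\le3K\delta/L_{\mathrm{loc}}\le 3K/2\), which is not quite \(K\ge4a=2v_0\). I will handle this by shrinking \(h\) to \(\delta/8\) scales, or by sending the large-\(v\) regime (where \(2v/K\) approaches \(1\)) to the computation block, which still gives \(\gtrsim L_{\mathrm{loc}}/K\cdot D_0^2/R\ge\delta D_0^2/R\) up to constants when \(v_0\gtrsim K\). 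For moderate \(v_0\), between \(c_0\) and \(6\), I would either rescale the small-step quadratic construction or run the Bernoulli argument on the frame block. I expect this bookkeeping to be the main obstacle: tiling \(v_0\in(0,\infty)\) so that every regime meets the hypotheses of some lemma, while curvatures stay below \(L_{\mathrm{loc}}\) and the similarity constant stays exactly \(\delta\). The oscillation analysis itself is already done in Lemma~\ref{lem:averaged-stale-phase}. Finally, I take \(c_{\mathrm{avg,lb}}\) to be the minimum of the constants from all cases.
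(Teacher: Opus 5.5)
Your plan is essentially the paper's proof. It uses the same inert frame block $2\delta P_i$ to make the similarity constant exactly $\delta$, Lemma~\ref{lem:computation} for the $L_{\mathrm{loc}}/K$ term and for $\eta_\ell L_{\mathrm{loc}}>3$, a slowly contracting identical quadratic when $p=K\eta_\ell\delta$ is small, and the frame block with Lemma~\ref{lem:averaged-stale-phase} when $p$ is large. The paper places mass $D_0/\sqrt2$ on two blocks of a single instance instead of taking the better of two instances, but this difference is immaterial.

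The bookkeeping you left open is closed exactly by your first suggestion. Since $p\le 3K/2$, take $a=p/6$; then $4a\le K$ and $h=v\delta/p\in[\delta/6,\delta/3]$. Cover the remaining range $p\le 18$ by giving the identical quadratic curvature $\delta/(36R)$, so that $KR\eta_\ell\mu=p/36\le 1/2$.

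Your alternative of routing $v_0\gtrsim K$ to the computation block would not work. In that regime $\delta$ is of order $L_{\mathrm{loc}}$, while that block only yields $L_{\mathrm{loc}}D_0^2/(KR)$, which is a factor $K$ too small.
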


\begin{proof}
Write \(L=L_{\mathrm{loc}}\), \(T=KR\), and
\[
    p\coloneqq K\eta_\ell\delta.
\]
Let \(P_1,\ldots,P_N\) be the tight-frame projectors in
\eqref{eq:stale-lower-frame-projectors}.  Add an inactive two-dimensional block,
give client \(i\) the Hessian \(2\delta P_i\) there, and initialize it at
zero.  It remains zero and makes the smallest similarity constant exactly
\(\delta\).

First suppose that \(\eta_\ell L>3\).  Use a common quadratic of curvature \(L\), and place all the initial distance in this coordinate.  The inactive
block remains at zero.  By
Lemma~\ref{lem:computation},
\[
    f(\bar x^R)-f^\star
    \ge
    \frac{LD_0^2}{2}.
\]
Since \(\delta\le L/2\), \(K,R\ge1\), and therefore
\[
    \left(
        \frac LK+\delta
    \right)\frac1R
    \le
    \frac{3L}{2},
\]
this is at least one third of the right-hand scale in
\eqref{eq:stale-convex-averaged-lower-bound}.

It remains to consider \(\eta_\ell L\le3\).  In this regime,
\begin{equation}
    p
    =
    K\eta_\ell\delta
    \le
    \frac{3K}{2}.
    \label{eq:averaged-stale-p-upper}
\end{equation}
Give a common quadratic coordinate of curvature \(L/(6T)\) and initial magnitude \(D_0/\sqrt2\).  Give an active plane an initial
vector of norm \(D_0/\sqrt2\).  The inactive block is
initialized at zero.  Lemma~\ref{lem:computation} gives
\begin{equation}
    f_{\mathrm{av,cmp}}(\bar x^R)
    -
    f_{\mathrm{av,cmp}}^\star
    \ge
    \frac{LD_0^2}{96KR}.
    \label{eq:averaged-stale-computation-contribution}
\end{equation}

We choose the active block according to the size of \(p\).
By \eqref{eq:averaged-stale-p-upper}, choosing \(a=p/6\) ensures
\(4a\le K\); the phase lemma then
requires \(p>18\).  For the complementary regime \(p\le18\),
the curvature \(\delta/(36R)\) makes the total contraction parameter
at most one half.  These choices cover every stepsize while controlling the averaged output.

\paragraph{The regime \(p\le18\).}
Give every client the active Hessian
\[
    \mu I_2,
    \qquad
    \mu\coloneqq\frac{\delta}{36R}.
\]
The correction cancels in this block, and
\[
    KR\eta_\ell\mu
    =
    \frac p{36}
    \le
    \frac12.
\]
Every server endpoint is a nonnegative scalar multiple of its initial
vector, and Bernoulli's inequality gives
\[
    \left(1-\eta_\ell\mu\right)^{Kr}
    \ge
    \left(1-\eta_\ell\mu\right)^{KR}
    \ge
    \frac12.
\]
The same lower bound holds for the norm of the averaged active vector.
Since its initial squared distance is \(D_0^2/2\), this
block contributes
\begin{equation}
    f_{\mathrm{act}}(\bar x^R)
    -
    f_{\mathrm{act}}^\star
    \ge
    \frac{\delta D_0^2}{576R}.
    \label{eq:averaged-stale-small-p-contribution}
\end{equation}

\paragraph{The regime \(p>18\).}
Set
\[
    a\coloneqq\frac p6.
\]
Then \(a>3\), and
\eqref{eq:averaged-stale-p-upper} gives
\[
    4a=\frac{2p}{3}\le K.
\]
Lemma~\ref{lem:averaged-stale-phase} supplies
\(v\in[a,2a]\) such that
\[
    \left(
        \frac1R\sum_{r=1}^R Q_r(v)
    \right)^2
    \ge
    \frac{c_{\mathrm{av}}}{R}.
\]
Set
\[
    h
    \coloneqq
    \frac{v}{K\eta_\ell}
    =
    \frac vp\delta.
\]
Then
\[
    \frac\delta6\le h\le\frac\delta3.
\]
Give client \(i\) the active Hessian \(2hP_i\).  By
Lemma~\ref{lem:stale-quadratic-recurrence}, its averaged server vector
equals
\[
    \frac1R\sum_{r=1}^R z^r
    =
    \left(
        \frac1R\sum_{r=1}^R Q_r(v)
    \right)z^0.
\]
The active block therefore contributes
\begin{equation}
\begin{aligned}
    f_{\mathrm{act}}(\bar x^R)
    -
    f_{\mathrm{act}}^\star
    &=
    \frac h2\frac{D_0^2}{2}
    \left(
        \frac1R\sum_{r=1}^R Q_r(v)
    \right)^2\\
    &\ge
    \frac{c_{\mathrm{av}}}{24}
    \frac{\delta D_0^2}{R}.
\end{aligned}
    \label{eq:averaged-stale-large-p-contribution}
\end{equation}

We verify the parameter bounds directly.  The common computation block
does not contribute to similarity.  In the small-\(p\) regime, the active
Hessians coincide.  In the large-\(p\) regime, the active and inactive
centered Hessians are \(h(2P_i-I_2)\) and
\(\delta(2P_i-I_2)\), respectively.  Since these normalized matrices are
orthogonal reflections and \(h\le\delta\), the smallest similarity
constant is exactly \(\delta\).  All local Hessian eigenvalues lie in
\([0,L]\), because
\[
    2\delta\le L,
    \qquad
    \mu\le\delta\le L,
    \qquad
    2h\le\frac{2\delta}{3}\le L.
\]

Combining
\eqref{eq:averaged-stale-computation-contribution} with either
\eqref{eq:averaged-stale-small-p-contribution} or
\eqref{eq:averaged-stale-large-p-contribution} proves
\eqref{eq:stale-convex-averaged-lower-bound}, for example with
\[
    c_{\mathrm{avg,lb}}
    \coloneqq
    \min\left\{
        \frac13,
        \frac1{96},
        \frac1{576},
        \frac{c_{\mathrm{av}}}{24}
    \right\}.
\]
\end{proof}

\subsection{The statistical lower bound}
\label{sec:statistical-lower}

To obtain the statistical term, we track the variance of client-averaged
noise. A flat quadratic makes the weights of independent random signs
in the returned averaged server iterate explicit. The following
recursion also shows that the control-error contributions cancel after
server averaging, for every control batch size \(B\ge1\).

\begin{theorem}[Statistical lower bound for every constant stepsize]
\label{thm:statistical-lower}
Fix \(L_{\mathrm{loc}},D_0,\sigma>0\) and integers \(N,K,R\ge1\).
For every constant local stepsize \(\eta_\ell>0\), there exist identical
one-dimensional convex quadratic client objectives, two fixed convex
\(L_{\mathrm{loc}}\)-smooth sampled losses on each client, and an initial
point at distance \(D_0\) from the common minimizer such that
Algorithm~\ref{alg:scaffold-option-i} satisfies
\begin{equation}
    \E[f(\bar x^R)]-f^\star
    \ge \frac1{128}
       \min\left\{
           L_{\mathrm{loc}}D_0^2,\,
           \frac{\sigma D_0}{\sqrt{NKR}}
       \right\}.
\label{eq:statistical-lower}
\end{equation}
The similarity constant is zero.  The sampled losses are drawn
independently across clients and calls, and their gradient noise has
norm exactly \(\sigma\) at every input.
\end{theorem}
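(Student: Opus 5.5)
We take all clients identical with \(f_i(x)=\lambda x^2/2\) on \(\R\), \(\lambda\in(0,L_{\mathrm{loc}}]\) to be chosen, and two sampled losses \(f_i(x;\pm)=\lambda x^2/2\pm\sigma x\), each drawn with probability \(1/2\). Both are convex and \(L_{\mathrm{loc}}\)-smooth, and the gradient noise is \(\pm\sigma\), of norm exactly \(\sigma\) at every input. The minimizer is \(x^\star=0\), and we start at \(x^0=D_0\).

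\textbf{Step 1: the averaged recursion is exactly linear.} Write \(\rho=1-\eta_\ell\lambda\) and let \(\zeta_{i,k}^r\in\{\pm\sigma\}\) be the local noises. Since \(\grad f_i(y)-\grad f_i(x^{r-2})+\grad f(x^{r-2})=\lambda y\), the local step is
\[
y_{i,k+1}^r=\rho\,y_{i,k}^r-\eta_\ell(\zeta_{i,k}^r+q_i),\qquad q_i=\bar e-e_i .
\]
Because the corrections \(q_i\) are constant over the round and \(\sum_iq_i=0\), averaging gives
\[
x^r=\rho^Kx^{r-1}-\eta_\ell\sum_{k=0}^{K-1}\rho^{K-1-k}\bar\zeta_k^r,\qquad \bar\zeta_k^r=\frac1N\sum_i\zeta_{i,k}^r .
\]
So the control errors cancel exactly for every \(B\). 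Unrolling, \(\bar x^R=\beta D_0+\sum_{r,k}w_{r,k}\bar\zeta_k^r\) with \(\beta=R^{-1}\sum_{r=1}^R\rho^{Kr}\). The weights \(w_{r,k}\) are deterministic and the \(\bar\zeta\)'s are independent and centered with variance \(\sigma^2/N\). Hence
\[
\E f(\bar x^R)=\frac\lambda2\Bigl(\beta^2D_0^2+\frac{\sigma^2}{N}\sum w_{r,k}^2\Bigr).
\]

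\textbf{Step 2: case split on the stepsize, with \(\lambda\) chosen per case.} Set \(T=KR\).

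\emph{Large stepsize, \(\eta_\ell L_{\mathrm{loc}}>3\).} Take \(\lambda=L_{\mathrm{loc}}\) and use the argument of Lemma~\ref{lem:computation}: \(|\beta|\ge1\), giving the gap \(L_{\mathrm{loc}}D_0^2/2\).

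\emph{Moderate stepsize, \(\eta_\ell L_{\mathrm{loc}}\le3\).} Let \(\lambda_0=\sigma/(D_0\sqrt{NT})\), the scale at which the bias and noise terms balance. Aim for \(\lambda=\min\{L_{\mathrm{loc}},\lambda_0\}\), rescaled if needed so that \(\eta_\ell\lambda\le1/2\); this rescaling, however, changes the target bound. Split further.

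\emph{Subcase \(\eta_\ell\lambda T\le1/2\) (small \(\eta_\ell\)).} Bernoulli gives \(\beta\ge1/2\), so the gap is at least \(\lambda D_0^2/8\), which is of order \(\min\{L_{\mathrm{loc}}D_0^2,\sigma D_0/\sqrt{NT}\}\).

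\emph{Subcase \(\eta_\ell\lambda T>1/2\) (large \(\eta_\ell\)).} The bias may vanish, so we instead lower-bound the noise term. The last round's last step alone gives \(w\ge\eta_\ell/R\), which is too weak. We need \(\sum w_{r,k}^2\gtrsim\eta_\ell^2T\cdot\min\{1,(\eta_\ell\lambda T)^{-2}\}\cdot\) (appropriate factor). Instead of taking \(\lambda=\lambda_0\) here, choose \(\lambda=1/(\eta_\ell T)\), capped at \(L_{\mathrm{loc}}\) and at \(1/(2\eta_\ell)\). Then \(\rho^{j}\ge\) a constant for all \(j\le T\), so every \(w_{r,k}\) with \(r\le R/2\) is of order \(\eta_\ell\). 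This gives \(\sum w^2\gtrsim\eta_\ell^2T\) and a gap of order \(\lambda\sigma^2\eta_\ell^2T/N=\eta_\ell\sigma^2/N\). Since \(\eta_\ell>1/(2\lambda_0T)\) in this subcase, \(\eta_\ell\sigma^2/N\gtrsim\sigma D_0/\sqrt{NT}\).

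\textbf{Main obstacle.} The delicate point is the case boundaries: \(\lambda\) must stay in \((0,L_{\mathrm{loc}}]\) with \(\eta_\ell\lambda\le1/2\), while the two subcases together cover all \(\eta_\ell\le3/L_{\mathrm{loc}}\), and the argument must still reach the stated constant \(1/128\). Concretely, I would take \(\lambda=\min\{L_{\mathrm{loc}},1/(6\eta_\ell T),\lambda_0\}\)-type choices. Then:
\begin{itemize}
\item when \(\lambda=L_{\mathrm{loc}}\) or \(\lambda=\lambda_0\), the bias bound applies;
\item when \(\lambda=1/(6\eta_\ell T)<\lambda_0\), the stepsize is large relative to \(\lambda_0\) and the variance bound applies.
\end{itemize}
Finally, note that the weights \(w_{r,k}=(\eta_\ell/R)\sum_{s\ge r}\rho^{K(s-r)+K-1-k}\) are nonnegative, so no cancellation arises in the variance term.
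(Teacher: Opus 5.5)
\paragraph{Assessment.}
Your instance, the exact cancellation of the control errors under server averaging, and the mean--variance identity for $\E f(\bar x^R)$ are exactly those of the paper's proof. The same holds for taking the curvature of order $1/(\eta_\ell KR)$ so that the noise weights from the first half of the horizon stay of order $\eta_\ell$. What your proposal does not deliver is the stated constant $1/128$: you flag it as the main obstacle but leave it open, and the estimates you sketch fall short of it.

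In your subcase $\eta_\ell\lambda T>1/2$ with $\lambda'=1/(\eta_\ell T)$, the bounds $\rho^T\ge1/4$ and $w_{s,k}\ge1/8$ for $s\le\lceil R/2\rceil$ give only $\E f(\bar x^R)\ge\eta_\ell\sigma^2/(256N)$. Combined with $\eta_\ell>1/(2\lambda_0T)$, this is only $\sigma D_0/(512\sqrt{NT})$. Your variant $\lambda=\min\{L_{\mathrm{loc}},1/(6\eta_\ell T),\lambda_0\}$ gives roughly $\sigma D_0/(830\sqrt{NT})$ by the same count. The loss comes mainly from switching to a variance-only bound as soon as $\eta_\ell$ exceeds a fraction of $1/(\lambda_0T)$. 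In that range the discarded bias term is still the larger contribution.

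\paragraph{How the paper obtains $1/128$.}
The paper uses the single curvature $\lambda=\min\{L_{\mathrm{loc}},1/(2\eta_\ell KR)\}$ for every stepsize. It then splits on $\eta_\ell$ relative to the target $\mathcal E=\min\{L_{\mathrm{loc}}D_0^2,\sigma D_0/\sqrt{NKR}\}$, rather than on $\lambda$ relative to $\lambda_0$.
\begin{itemize}
\item If $\eta_\ell\le D_0^2/(KR\mathcal E)$, the bias term alone gives $\frac18\min\{L_{\mathrm{loc}}D_0^2,D_0^2/(2\eta_\ell KR)\}\ge\mathcal E/16$.
\item Otherwise $\eta_\ell L_{\mathrm{loc}}KR>1$ forces $\lambda=1/(2\eta_\ell KR)$. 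The variance term then gives $\eta_\ell\sigma^2/(128N)>\sigma^2D_0^2/(128NKR\mathcal E)\ge\mathcal E/128$, since $\mathcal E^2\le\sigma^2D_0^2/(NKR)$.
\end{itemize}
Because $\eta_\ell\lambda\le1/(2KR)$ keeps $q=1-\eta_\ell\lambda\in(0,1)$ for every stepsize, this also covers $\eta_\ell L_{\mathrm{loc}}>3$. Your separate large-step case via Lemma~\ref{lem:computation} and the cap at $\lambda_0$ are therefore unnecessary.

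\paragraph{Alternative fix within your scheme.}
You can instead keep the bias term in your variance subcase and combine the two terms by AM--GM. With your choice $1/(6\eta_\ell T)$, this gives a constant of about $1/24$, comfortably above $1/128$.
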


\begin{proof}
The proof must prevent two possible ways of making the returned error
small: nearly eliminating the initial displacement, and averaging away
the stochastic perturbations.  A sufficiently flat quadratic lets us
control both quantities explicitly.  We first derive them and only then
compare the prescribed stepsize with the target error scale.

\paragraph{Choice of curvature.}
For the fixed \(\eta_\ell\), set
\begin{equation}
    \lambda\coloneqq
       \min\left\{L_{\mathrm{loc}},
                  \frac1{2\eta_\ell KR}\right\},
    \qquad q\coloneqq1-\eta_\ell\lambda.
    \label{eq:statistical-curvature}
\end{equation}
The curvature is fixed throughout the run.  Since
\(0<\eta_\ell\lambda\le1/(2KR)\), Bernoulli's inequality gives
\begin{equation}
    0<q<1,
    \qquad
    q^{KR}\ge1-\eta_\ell\lambda KR\ge\frac12.
    \label{eq:statistical-contraction}
\end{equation}
Consequently, even after the full horizon, at least half of a
deterministic displacement remains.  The same bound will control the
weight of each noise sample in later iterates.

On every client, take
\begin{equation}
\begin{aligned}
    f_i(x)&=f(x)=\frac{\lambda}{2}x^2,\\
    f_i(x;\omega)&=\frac{\lambda}{2}x^2+\sigma\omega x,
    \qquad
    \mathbb P(\omega=1)=\mathbb P(\omega=-1)=\frac12,\\
    x^0&=D_0,\qquad x^\star=0,\qquad f^\star=0.
\end{aligned}
\label{eq:statistical-instance}
\end{equation}
Both sampled losses are convex and \(L_{\mathrm{loc}}\)-smooth because
their curvature is \(\lambda\le L_{\mathrm{loc}}\).  Their mean is \(f_i\);
their gradient noise is \(\sigma\omega\).  Fresh independent signs
therefore satisfy all the conditional moment and client-independence
assumptions.

The client objectives are identical, but the independently sampled
controls need not agree. The local update is
\begin{equation}
    y_{i,k+1}^r=qy_{i,k}^r
       -\eta_\ell\bigl(c^{r-1}-c_i^{r-1}+\sigma\omega_{i,k}^r\bigr),
    \qquad y_{i,0}^r=x^{r-1}.
    \label{eq:statistical-local-recursion}
\end{equation}
Unrolling this recursion, the contribution of the control correction to
client \(i\)'s endpoint is
\[
 -\eta_\ell\left(\sum_{k=0}^{K-1}q^{K-1-k}\right)
                 (c^{r-1}-c_i^{r-1}).
\]
The coefficient is common to all clients, and
\(\sum_i(c^{r-1}-c_i^{r-1})=0\). Thus these contributions cancel
exactly after server averaging, including in the first round, regardless
of the control batch size.

\paragraph{Decomposition of the returned average.}
Unrolling the local recursion and then averaging over clients gives
\begin{equation}
    x^r=q^Kx^{r-1}
       -\frac{\eta_\ell\sigma}{N}
        \sum_{i=1}^N\sum_{k=0}^{K-1}
             q^{K-1-k}\omega_{i,k}^r.
    \label{eq:statistical-server-recursion}
\end{equation}
Unrolling this server recursion and averaging \(x^1,\ldots,x^R\)
therefore yields
\begin{equation}
    \bar x^R
       =\frac{D_0}{R}\sum_{r=1}^R q^{Kr}
        -\frac{\eta_\ell\sigma}{N}
          \sum_{i=1}^N\sum_{s=1}^R\sum_{k=0}^{K-1}
               w_{s,k}\omega_{i,k}^s,
    \qquad
    w_{s,k}\coloneqq\frac1R
          \sum_{r=s}^R q^{K(r-s)+K-1-k}.
    \label{eq:statistical-output}
\end{equation}
Here \(w_{s,k}\) is the weight of a noise sampled in round \(s\),
local step \(k\), in the returned averaged server iterate.  The signs are
independent, centered, and have variance one.  Thus
\begin{align}
    \E\bar x^R
       &=\frac{D_0}{R}\sum_{r=1}^R q^{Kr}
         \ge\frac{D_0}{2},
       \label{eq:statistical-mean}\\
    \operatorname{Var}(\bar x^R)
       &=\frac{\eta_\ell^2\sigma^2}{N}
         \sum_{s=1}^R\sum_{k=0}^{K-1}w_{s,k}^2.
       \label{eq:statistical-variance}
\end{align}
The first inequality follows from \eqref{eq:statistical-contraction}.
The second identity is where the client-averaging factor \(1/N\)
appears.  No independence between server iterates is assumed.

\paragraph{Variance of the averaged server iterate.}
For \(1\le s\le\lceil R/2\rceil\), the sum defining \(w_{s,k}\)
contains \(R-s+1\ge R/2\) terms.  Each exponent lies between zero
and \(KR-1\), so every term is at least \(q^{KR}\ge1/2\).
Consequently,
\[
    w_{s,k}\ge\frac14,
    \qquad
    1\le s\le\lceil R/2\rceil,\quad 0\le k\le K-1.
\]
There are at least \(KR/2\) such pairs \((s,k)\).  Substituting into
\eqref{eq:statistical-variance} gives
\begin{equation}
    \operatorname{Var}(\bar x^R)
       \ge\frac{\eta_\ell^2\sigma^2KR}{32N}.
    \label{eq:statistical-variance-lower}
\end{equation}
This counting argument includes \(R=1\) and \(K=1\).
Early perturbations affect many subsequent server endpoints, so their
contribution cannot be bounded by treating those endpoints as independent.

Because \(f\) is quadratic, the objective at the returned average has the
exact mean--variance decomposition
\begin{equation}
\begin{aligned}
    \E[f(\bar x^R)]-f^\star
       &=\frac{\lambda}{2}
         \left((\E\bar x^R)^2+\operatorname{Var}(\bar x^R)\right)\\
       &\ge\frac{\lambda D_0^2}{8}
             +\frac{\lambda\eta_\ell^2\sigma^2KR}{64N}.
\end{aligned}
\label{eq:statistical-bias-variance}
\end{equation}
\paragraph{Comparison across stepsizes.}
For this comparison only, denote the target scale by
\[
    \mathcal E\coloneqq
       \min\left\{L_{\mathrm{loc}}D_0^2,
                  \frac{\sigma D_0}{\sqrt{NKR}}\right\}>0.
\]
If \(\eta_\ell\le D_0^2/(KR\mathcal E)\), the first term in
\eqref{eq:statistical-bias-variance} gives
\[
    \E[f(\bar x^R)]-f^\star
       \ge\frac18
          \min\left\{L_{\mathrm{loc}}D_0^2,
                     \frac{D_0^2}{2\eta_\ell KR}\right\}
       \ge\frac{\mathcal E}{16}.
\]
Otherwise,
\(\eta_\ell>D_0^2/(KR\mathcal E)\).  Since \(\mathcal E\le L_{\mathrm{loc}}D_0^2\),
this implies \(\eta_\ell L_{\mathrm{loc}}KR>1\), and hence the
curvature in \eqref{eq:statistical-curvature} is
\(\lambda=1/(2\eta_\ell KR)\).  The second term in
\eqref{eq:statistical-bias-variance} now gives
\[
    \E[f(\bar x^R)]-f^\star
       \ge\frac{\eta_\ell\sigma^2}{128N}
       >\frac{\sigma^2D_0^2}{128NKR\mathcal E}
       \ge\frac{\mathcal E}{128},
\]
where the last inequality uses
\(\mathcal E^2\le\sigma^2D_0^2/(NKR)\).
The two cases prove \eqref{eq:statistical-lower}.
\end{proof}

\subsection{Lower bound from local stochastic updates}
\label{sec:nonlinear-lower}

The upper-bound proof distinguishes short suffixes from longer ones. A matching
construction must distinguish the same time scales. It must also give a
lower bound at the returned average: a lower bound for a single final iterate, or
for an average of objective values, would not suffice.

We adapt the asymmetric piecewise-quadratic analysis of
\citet{glasgow2022}, building on the construction of
\citet{woodworth2020localsgd}, but assign a
different curvature and a different noise-injection time to each coordinate.
We construct the fixed sampled losses and prove their lower bound directly,
rather than applying Glasgow et al.'s final-iterate theorem.

\begin{theorem}[Nonlinear stochastic lower bound for every constant stepsize]
\label{thm:nonlinear-lower}
Fix \(L_{\mathrm{loc}},D_0,\sigma>0\), \(K\ge2\), and \(N,R\ge1\).
For every \(\eta_\ell>0\), an identical-client instance satisfying the
assumptions has
\begin{equation}
\begin{aligned}
 \E[f(\bar x^R)]-f^\star
 \ge c_{\mathrm b}\min\Bigg\{
L_{\mathrm{loc}}D_0^2,\left(\frac{L_{\mathrm{loc}}\sigma^2D_0^4}{KR^2}\right)^{1/3},
 \frac{\sigma D_0}{\sqrt{KR}}
 \sqrt{\log\left(e+
       \frac{L_{\mathrm{loc}}D_0\sqrt K}{\sigma\sqrt R}\right)}
 \Bigg\}
 \label{eq:nonlinear-lower}
\end{aligned}
\end{equation}
for a universal \(c_{\mathrm b}>0\).
The noisy construction has dimension \(K+1\), and can be realized using
two fixed differentiable sampled losses with independent symmetric
sampling and gradient-noise norm at most \(\sigma\) everywhere.
\end{theorem}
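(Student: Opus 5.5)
Since all clients are identical with \(\delta=0\), the control corrections enter each local update as a common-coefficient shift \(c^{r-1}-c_i^{r-1}\) whose client sum vanishes. The plan is to reduce the problem to a single-client Local SGD bias analysis and then handle three regimes for \(\eta_\ell\).

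\textbf{Large and small stepsizes.} If \(\eta_\ell L_{\mathrm{loc}}>3\), Lemma~\ref{lem:computation} already gives \(L_{\mathrm{loc}}D_0^2/2\). If \(\eta_\ell\) is so small that \(KR\eta_\ell\) is below the target scale divided by \(D_0^2\), a flat deterministic quadratic coordinate with curvature \(\min\{L_{\mathrm{loc}},1/(2\eta_\ell KR)\}\) retains half the initial displacement, as in Theorem~\ref{thm:statistical-lower}. This gives a gap of order \(\min\{L_{\mathrm{loc}}D_0^2,D_0^2/(\eta_\ell KR)\}\). It therefore remains to show that for intermediate \(\eta_\ell\) the local noise produces a bias of order \(\eta_\ell\sigma^2\min\{\eta_\ell L_{\mathrm{loc}}K,\log(1+\eta_\ell L_{\mathrm{loc}}K)\}\) in the objective at \(\bar x^R\). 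Balancing this bias against \(D_0^2/(\eta_\ell KR)\) reproduces exactly the cube-root and \(\sqrt{\kappa}\) branches; the calculation mirrors the tuning in Appendix~\ref{app:convex-tuning}.

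\textbf{Bias construction.} The bias comes from asymmetric piecewise quadratics. For each coordinate \(j\in\{1,\dots,K\}\), take \(g_j(x)=\tfrac{\lambda_j}{2}x^2\) for \(x\ge0\) and \(\tfrac{\lambda_j'}{2}x^2\) for \(x<0\), with \(\lambda_j'\ll\lambda_j\le L_{\mathrm{loc}}\). The sampled loss adds \(\pm\sigma_j x_j\) only at a designated local step \(k_j\). This can be realized with two fixed sampled losses by letting the sign variable act on a coordinate selected by step-dependent curvature, or, more simply, by summing the contributions using \(\sum_j\sigma_j^2\le\sigma^2\).

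After a kick of size \(\eta_\ell\sigma_j\) at step \(k_j\), the coordinate decays for \(K-k_j\) remaining steps. On the positive side it contracts at rate \((1-\eta_\ell\lambda_j)\); on the negative side it barely contracts. The endpoint mean is therefore displaced by roughly \(-\eta_\ell\sigma_j\min\{1,\eta_\ell\lambda_j(K-k_j)\}/2\), and this displacement is not removed by client averaging.

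Choosing \(\lambda_j\approx 1/(\eta_\ell (K-k_j))\), capped at \(L_{\mathrm{loc}}\), with equal noise split \(\sigma_j^2\approx\sigma^2/K\), the objective contributions \(\lambda_j'\)-weighted sum to a harmonic series. That series yields the \(\log(1+\eta_\ell L_{\mathrm{loc}}K)\) factor. When \(\eta_\ell L_{\mathrm{loc}}K\le1\), a single coordinate with \(\lambda=L_{\mathrm{loc}}\) gives the linear branch.

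\textbf{Persistence across rounds.} The displacement must persist through rounds and survive server averaging. The minimizer sits at the kink, and each round re-injects bias of the same sign from the perturbed start, so the mean drifts monotonically. Concavity of the expected local endpoint map in its starting point keeps the mean shift at least its value from the minimizer, despite the centered minibatch-control errors \(q_i\); Jensen applies because those errors are independent of the local samples. I would verify this concavity coordinatewise using the explicit piecewise-linear update. The objective at \(\bar x^R\) is then bounded below by \(\lambda_j'(\E\bar x_j^R)^2/2\) via Jensen.

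\textbf{Main obstacle.} The hardest step is making the log-branch construction quantitative. The flat-side curvatures \(\lambda_j'\) must be small enough that the negative excursion is not corrected within a round, yet large enough that the gap \(\lambda_j'(\text{bias})^2\) is of the right order, all while every sampled loss remains \(L_{\mathrm{loc}}\)-smooth. I would first try \(\lambda_j'=\lambda_j/\Theta(1)\) and scale the bias accumulation over \(R\) rounds to size \(D_0\).
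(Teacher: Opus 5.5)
Your outline follows the paper's architecture. It uses Lemma~\ref{lem:computation} for $\eta_\ell L_{\mathrm{loc}}>3$, a flat noiseless coordinate of curvature $\min\{L_{\mathrm{loc}},1/(2\eta_\ell KR)\}$, and asymmetric quadratics with $\lambda_q\approx1/(\eta_\ell q)$ kicked about $q$ steps before the end of each round. It absorbs the centered control errors by joint concavity of the expected endpoint in the start and the constant correction, applies Jensen at $\bar x^R$, and balances with Lemma~\ref{lem:scalar-tradeoff}.

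\textbf{The bias estimate is off by a factor $K$.} The variance bound in Assumption~\ref{ass:stochastic-oracle} applies to each oracle call. In your schedule only one coordinate is perturbed per local step, so that coordinate can carry the entire budget; the paper uses magnitude $a=\sigma/5$ on $e_{K-k}$ at step $k$. With your split $\sigma_j^2\approx\sigma^2/K$, coordinate $q$ contributes only $\lambda_q(\eta_\ell\sigma_q)^2\approx\eta_\ell\sigma^2/(Kq)$. The harmonic sum then gives $\eta_\ell\sigma^2\log(1+t)/K$ instead of $\eta_\ell\sigma^2\log(1+t)$, where $t=\eta_\ell L_{\mathrm{loc}}K$.

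Your linear branch has the same defect. A single coordinate with $\lambda=L_{\mathrm{loc}}$ kicked once per round has per-round bias of order $\eta_\ell\sigma t$. This accumulates over $\min\{R,1/t\}$ rounds to order $\eta_\ell\sigma\min\{tR,1\}$ and yields only $L_{\mathrm{loc}}\eta_\ell^2\sigma^2=\eta_\ell\sigma^2t/K$. The paper instead keeps all of the at least $K/2$ coordinates $q\ge\max\{2,\lceil K/2\rceil\}$. Each has $\lambda_q\ge L_{\mathrm{loc}}/2$ and receives a full-size kick every round. It then uses the accumulation bound $-\E\bar x_q^R\ge\frac{b_q}{8}\min\{R,(1-\beta_q)^{-1}\}$ with $1-\beta_q\le t$.

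\textbf{The realization by two fixed sampled losses is missing.} Your ``more simple'' option of adding $\pm\sum_j\sigma_jx_j$ at every call does not implement the schedule, because every coordinate is then perturbed at every step. Roughly, each coordinate's spread and mean displacement are then at most $\eta_\ell\sigma_j\sqrt{\tau_j}$, with $\tau_j=1/(\eta_\ell\lambda_j)$. Its contribution is then at most of order $\lambda_j\eta_\ell^2\sigma_j^2\tau_j=\eta_\ell\sigma_j^2$ whatever its curvature, so the total is $O(\eta_\ell\sigma^2)$ with no logarithm.

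A fixed loss cannot see the step index, and ``step-dependent curvature'' does not supply it. The paper's mechanism is that the noiseless coordinate equals $D_0(1-\eta_\ell\mu)^{K(r-1)+k}$ at local step $k$ of round $r$. This is strictly decreasing in the global index, so each query point reveals its phase. The set of reachable query points is finite, over all sign histories. It includes control queries at server points, which have phase $k=0$ and consistently receive $e_K$. Over this set, a $C^1$ sum of disjoint cutoffs $\psi_0$ satisfies $\nabla\psi_0(p)=e_{q(p)}$ and $\norm{\nabla\psi_0}\le5$ everywhere. The losses $f\pm a\psi_0$ then reproduce the scheduled process exactly.

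\textbf{Curvature asymmetry.} Taking $\lambda_j'\ll\lambda_j$ is unnecessary and only shrinks the objective on the negative side. The factor-two asymmetry of $g_\lambda$ already gives $(1-\eta_\ell\lambda/2)^{q-1}-(1-\eta_\ell\lambda)^{q-1}\ge1/16$ when $\eta_\ell\lambda=1/(2q)$ and $q\ge2$. So the obstacle you anticipate does not arise.
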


\subsubsection{Choice of coordinate curvatures}

In the proof write \(L=L_{\mathrm{loc}}\), \(\eta=\eta_\ell\), and
\(t=\eta LK\). Define
\begin{equation}
 g_\lambda(v)=
 \begin{cases}\lambda v^2/4,&v<0,\\
               \lambda v^2/2,&v\ge0,
 \end{cases}
 \qquad
 g_\lambda'(v)=\frac{\lambda}{4}(3v+|v|).
 \label{eq:nlb-piecewise}
\end{equation}
The derivative is continuous, monotone, and \(\lambda\)-Lipschitz, including
across zero. Thus \(g_\lambda\) is convex and \(\lambda\)-smooth, and
\(g_\lambda(v)\ge\lambda v^2/4\).

Give every client the objective
\begin{equation}
\begin{aligned}
 f(z)&=\frac{\mu}{2}z_0^2+\sum_{q=1}^K g_{\lambda_q}(z_q),\\
 \mu&=\min\{L,1/(2\eta KR)\},\qquad
 \lambda_q=\min\{L,1/(2\eta q)\},\\
 x^0&=D_0e_0,\qquad x^\star=0,\qquad a=\sigma/5.
\end{aligned}
\label{eq:multiscale-objective}
\end{equation}
Its coordinate gradient Lipschitz constants are at most \(L\), hence
its Euclidean gradient Lipschitz constant is at most \(L\), independently
of the dimension. The expected client objectives are identical, so the
control corrections contain only sampling error.

We first describe the desired noise directions as an explicit schedule to
simplify the analysis. We then construct fixed sampled losses whose
stochastic gradients reproduce this schedule along every possible
trajectory. In this auxiliary process, the noise at local step \(k\)
is \(a\omega_{i,k}^r e_{K-k}\), and the noise at every control query is
\(a\omega e_K\). With \(x^{-1}=x^0\), the controls are
\begin{equation}
 c_i^s=\grad f(x^{s-1})+\frac aB\sum_{b=1}^B\omega_{i,b}^{s,c}e_K,
 \qquad c^s=\frac1N\sum_i c_i^s,\qquad 0\le s<R.
 \label{eq:multiscale-controls}
\end{equation}
The local updates and server average satisfy
\begin{equation}
\begin{aligned}
 y_{i,k+1}^r&=y_{i,k}^r-\eta
     \bigl(\grad f(y_{i,k}^r)+c^{r-1}-c_i^{r-1}
                        +a\omega_{i,k}^r e_{K-k}\bigr),\\
 y_{i,0}^r&=x^{r-1},\qquad x^r=\frac1N\sum_i y_{i,K}^r.
\end{aligned}
 \label{eq:multiscale-process}
\end{equation}
All signs are independently symmetric. There is no reset of the noisy
coordinates. The local noise gives coordinate \(q=K-k\) one perturbation
per round, followed by exactly \(q-1\) further local steps. The control
correction is centered, supported on coordinate \(K\), and constant
within the round. The schedule is only a device for analyzing the construction;
the fixed sampled losses introduced below do not depend on the iteration.

The noise-free coordinate is deterministic:
\[
 x_0^r=D_0(1-\eta\mu)^{Kr}\ge D_0/2.
\]
Indeed \(\eta\mu KR\le1/2\), so Bernoulli's inequality applies over the
whole horizon. Its contribution to the returned objective is at least
\begin{equation}
 \frac{\mu}{2}\left(\frac1R\sum_{r=1}^R x_0^r\right)^2
 \ge\frac1{16}\min\{LD_0^2,D_0^2/(\eta KR)\}.
 \label{eq:nlb-slow-coordinate}
\end{equation}

\subsubsection{Expected last local iterate with random starting points and control errors}

In coordinate \(q\), put
\[
 \Phi_q(v)=v-\eta g_{\lambda_q}'(v)
   =(1-3\eta\lambda_q/4)v-\eta\lambda_q|v|/4.
\]
Since \(0<\eta\lambda_q\le1/(2q)\), this map is increasing and concave.
For \(x\le y\),
\[
 \Phi_q(y)-\Phi_q(x)\ge(1-\eta\lambda_q)(y-x).
\]
Let \(M_q(s,u)\) be the expected endpoint of one local round in coordinate
\(q\), starting at \(s\), with constant additional gradient \(u\) and
the scheduled local noise. Write \(M_q(s)=M_q(s,0)\).
For fixed local signs, write \(H_k(s,u)\) for the coordinate after
\(k\) steps. Then \(H_0(s,u)=s\), and
\[
 H_{k+1}(s,u)=\Phi_q(H_k(s,u))-\eta u
       -\eta a\omega_{i,k}^r\mathbf 1_{\{k=K-q\}}.
\]
If \(H_k\) is jointly concave, then for any two pairs \((s,u),(s',u')\)
and \(\theta\in[0,1]\), monotonicity followed by concavity of \(\Phi_q\)
gives
\begin{align*}
 &\Phi_q\bigl(H_k(\theta s+(1-\theta)s',\theta u+(1-\theta)u')\bigr)\\
 &\quad\ge\Phi_q\bigl(\theta H_k(s,u)+(1-\theta)H_k(s',u')\bigr)\\
 &\quad\ge\theta\Phi_q(H_k(s,u))+(1-\theta)\Phi_q(H_k(s',u')).
\end{align*}
The remaining terms in the update are affine in \((s,u)\), so induction
proves joint concavity of \(H_K\). Expectation preserves concavity,
giving joint concavity of \(M_q(s,u)=\E H_K(s,u)\). Monotonicity in
\(s\) follows from the same recursion and monotonicity of \(\Phi_q\).
In particular, a centered control correction satisfies
\begin{equation}
 \E_u M_q(s,u)\le M_q(s,0)=M_q(s).
 \label{eq:multiscale-control-jensen}
\end{equation}
For the zero-control map, coupling two starts with the same local sign gives
\begin{equation}
 M_q(y)-M_q(x)\ge\beta_q(y-x),\qquad
 \beta_q=(1-\eta\lambda_q)^K\in(0,1),\quad x\le y.
 \label{eq:multiscale-map-slope}
\end{equation}

Starting from zero with \(u=0\), the coordinate stays zero until its single
local perturbation. It then undergoes \(q-1\) further gradient steps. Therefore
\begin{equation}
 M_q(0)=-b_q,\qquad
 b_q=\frac{\eta a}{2}
 \left[(1-\eta\lambda_q/2)^{q-1}
              -(1-\eta\lambda_q)^{q-1}\right]\ge0.
 \label{eq:multiscale-bias}
\end{equation}
The positive perturbation contracts faster than the negative one.

Write \(m_{r,q}=\E x_q^r\). Condition on \(\mathcal H_r\), as defined
in Appendix~\ref{app:scaffold-convex-proof}, before revealing the control
batch used in round \(r\). The start \(x^{r-1}\) is fixed under this
conditioning, while each correction \(c^{r-1}-c_i^{r-1}\) is centered
and independent of the current local signs. Apply
\eqref{eq:multiscale-control-jensen} to its \(q\)-th coordinate and
average over clients. Taking expectations, then using concavity in the
starting point, gives
\[
 m_{r,q}\le\E M_q(x_q^{r-1})\le M_q(m_{r-1,q}).
\]
The control corrections may be dependent across clients; this argument
uses their individual conditional means only.
Starting from \(m_{0,q}=0\), monotonicity proves \(m_{r,q}\le-b_q\le0\).
Moreover, \eqref{eq:multiscale-map-slope} at a nonpositive start gives
\[
 M_q(s)\le\beta_qs-b_q\quad(s\le0),\qquad
 m_{r,q}\le-b_q\sum_{j=0}^{r-1}\beta_q^j.
\]
Consequently,
\begin{equation}
 -\E\bar x_q^R\ge b_q,\qquad
 -\E\bar x_q^R\ge
       \frac{b_q}{8}\min\{R,(1-\beta_q)^{-1}\}.
 \label{eq:multiscale-averaged-mean}
\end{equation}
To verify the second bound, for \(0<\beta<1\), Bernoulli's inequality gives
\(\beta^{-r}\ge1+r(\beta^{-1}-1)\ge1+r(1-\beta)\). Therefore
\[
 \sum_{j=0}^{r-1}\beta^j
 =\frac{1-\beta^r}{1-\beta}
 \ge\frac{r}{1+r(1-\beta)}.
\]
Retaining the at least \(R/2\) indices \(r\ge\lceil R/2\rceil\), we obtain
\[
 \frac1R\sum_{r=1}^R\sum_{j=0}^{r-1}\beta^j
 \ge\frac{R}{4(1+R(1-\beta))}
 \ge\frac18\min\{R,(1-\beta)^{-1}\}.
\]
The last inequality uses \(1+x\le2\max\{1,x\}\). Substituting
\(\beta=\beta_q\) proves the stated bound.
This argument does not assume independence between server iterates.

Finally,
\begin{equation}
 \E f(\bar x^R)\ge f(\E\bar x^R)
 \ge\frac{\mu}{2}(\E\bar x_0^R)^2
                  +\frac14\sum_{q=1}^K\lambda_q(\E\bar x_q^R)^2.
 \label{eq:multiscale-jensen}
\end{equation}
This is a lower bound for the objective at the returned average itself.

\subsubsection{The case \texorpdfstring{\(\eta LK\ge1\)}{eta LK >= 1}: the logarithmic term}

Suppose \(t=\eta LK\ge1\), and let
\(q_0=\max\{2,\lceil K/(2t)\rceil\}\).
For \(q\ge q_0\), \(\lambda_q=1/(2\eta q)\). The difference-of-powers
identity and Bernoulli's inequality give
\[
 (1-1/(4q))^{q-1}-(1-1/(2q))^{q-1}
 \ge\frac{q-1}{4q}(1-1/(2q))^{q-2}\ge\frac1{16}.
\]
Thus \(b_q\ge\eta a/32\). Using
\(-\E\bar x_q^R\ge b_q\) and \(\lambda_q=1/(2\eta q)\), each such
coordinate contributes to \eqref{eq:multiscale-jensen} at least
\[
 \frac{\lambda_q}{4}(\E\bar x_q^R)^2
 \ge\frac1{8\eta q}\left(\frac{\eta a}{32}\right)^2
 =\frac{\eta a^2}{8192q}.
\]
Summing these contributions produces the harmonic sum below.

For every \(K\ge2,t\ge1\),
\begin{equation}
 \sum_{q=q_0}^K\frac1q
 \ge\frac14\log(1+\min\{t,K\}).
 \label{eq:multiscale-tail}
\end{equation}
The bound holds for every \(K\ge2\). For \(K=2\), the sum
is \(1/2\), which suffices. For \(K\ge3\) and \(1\le t\le2\),
\(q_0\le\lceil K/2\rceil\), so integration gives a lower bound \(\log2\).
For \(2\le t\le K/2\), \(q_0\le K/t\), giving \(\log t\).
For \(t\ge K/2\), \(q_0=2\), giving
\(\log((K+1)/2)\ge\frac12\log(K+1)\).
These bounds imply \eqref{eq:multiscale-tail} on their respective ranges.

Since \(a=\sigma/5\), we have proved
\begin{equation}
 \E f(\bar x^R)\ge
 \frac{\eta\sigma^2}{819200}
           \log(1+\min\{t,K\}),\qquad t\ge1.
 \label{eq:multiscale-log-noise}
\end{equation}
The logarithmic factor arises from the mean displacement and persists
under averaging across clients and rounds.

\subsubsection{The case \texorpdfstring{\(\eta LK\le1\)}{eta LK <= 1}: the cube-root term}

Suppose \(t\le1\). Retain
\(q\ge\max\{2,\lceil K/2\rceil\}\); there are at least \(K/2\) such
coordinates, and \(\lambda_q\ge L/2\), \(q-1\ge K/4\).
Writing \(u=\eta\lambda_q\), we have \(uq\le1/2\), so
\[
 (1-u/2)^{q-1}-(1-u)^{q-1}
 \ge(q-1)\frac u2(1-u)^{q-2}\ge(q-1)u/4.
\]
Equation~\eqref{eq:multiscale-bias} gives \(b_q\ge\eta a t/64\).
Also \(1-\beta_q\le K\eta\lambda_q\le t\), so
\eqref{eq:multiscale-averaged-mean} yields
\[
 -\E\bar x_q^R\ge\frac{\eta a}{512}\min\{tR,1\}.
\]
Summing \eqref{eq:multiscale-jensen} over the retained coordinates proves
\begin{equation}
 \E f(\bar x^R)\ge
 \frac{L\eta^2\sigma^2K}{104857600}\min\{tR,1\}^2,\qquad t\le1.
 \label{eq:multiscale-cubic-noise}
\end{equation}
When \(tR\ge1\), the bias has had enough rounds to accumulate.
When \(tR<1\), the deterministic contribution
\eqref{eq:nlb-slow-coordinate} already gives at least \(LD_0^2/16\).

\subsubsection{Two fixed sampled losses for all rounds}

For the prescribed finite \(N,K,R,B\), let \(\mathcal P_{\mathrm{query}}\)
be all possible local and control query points of
\eqref{eq:multiscale-controls}--\eqref{eq:multiscale-process}, over all
local and control sign histories. Include the initialization queries and
the server averages that determine later starts. This is a finite set. At local step \(k\) of round \(r\), its
noise-free coordinate is
\[
 D_0(1-\eta\mu)^{K(r-1)+k}.
\]
These values strictly decrease with the local update index. Thus the
noise-free coordinate also identifies the local phase \(k\) from the
query point itself, allowing the scheduled noise direction to be encoded
by fixed sampled losses. A control query is at a server point, whose
index is a multiple of \(K\).
Assign direction \(e_{K-k}\) at each local query and \(e_K\) at each
control query. If a point is queried in both ways, its local phase is
\(k=0\), so the assignments agree. Repeated control queries, including
those at initialization, also receive the same direction. Thus the
assigned direction \(e_{q(p)}\) is well defined at every query point.

Choose \(\rho>0\) smaller than half the minimum separation of distinct
query points. Define the single fixed function
\begin{equation}
 \psi_0(z)=\sum_{p\in\mathcal P_{\mathrm{query}}}
     \ip{e_{q(p)}}{z-p}
       \left(1-\frac{\norm{z-p}^2}{\rho^2}\right)_+^2.
 \label{eq:multiscale-fixed-cutoff}
\end{equation}
The supporting balls are disjoint. Each summand is continuously
differentiable across its boundary, and
\(\nabla\psi_0(p)=e_{q(p)}\).
Inside one ball, with \(s=\norm{z-p}^2/\rho^2\in[0,1]\),
\[
 \nabla\psi_0(z)
 =(1-s)^2e_{q(p)}
 -\frac{4(1-s)}{\rho^2}\ip{e_{q(p)}}{z-p}(z-p).
\]
Its norm is at most \((1-s)^2+4s(1-s)\le5\).
Outside the balls the gradient is zero. Hence the fixed losses
\begin{equation}
 f_i(z;\omega)=f(z)+a\omega\psi_0(z),
 \qquad \Pr(\omega=1)=\Pr(\omega=-1)=1/2
 \label{eq:multiscale-fixed-losses}
\end{equation}
have mean \(f\), unbiased gradients, and gradient-noise norm at most
\(5a=\sigma\) at every input.

Use the same signs for the fixed-loss oracle and the auxiliary process.
Their gradients agree at each query point. Starting at initialization,
each control estimate, local update, and server average therefore agrees
as well. Thus the two processes produce the same trajectories in every
round, for every finite \(B\).
Including the control sign histories enlarges the finite query set but
does not change the dimension or the gradient-noise bound.
The functions depend on the prescribed parameters and stepsize, but are
fixed before the actual signs are sampled. Their distribution never changes.

The concave maps \(M_q(s,u)\) above describe the auxiliary process for
arbitrary scalar starts and constant control corrections. Since the two
processes have the same trajectories, these bounds also apply to the
fixed-loss oracle. This does not require the cutoff losses to be concave.
The sampled losses are not assumed to be convex or
\(L_{\mathrm{loc}}\)-smooth; these conditions apply to their expectations.
Thus the construction satisfies our local-objective assumptions, not the
stronger sample-wise convex/smooth assumptions of Glasgow et al.

\subsubsection{Combining the lower bounds}

The next lemma gives a scalar comparison valid for every positive stepsize.

\begin{lemma}[The scalar noise--progress tradeoff]
\label{lem:scalar-tradeoff}
For \(v>0\), put
\[
 \mathfrak m(v)=\min\{v^{2/3},v\sqrt{\log(e+1/v)}\}.
\]
For every \(t>0\),
\begin{equation}
 \frac1t+v^2t\log(1+t)\ge\frac14\mathfrak m(v).
 \label{eq:scalar-tradeoff}
\end{equation}
Conversely, \(t=1/\mathfrak m(v)\) makes the left side at most \(2\mathfrak m(v)\).
\end{lemma}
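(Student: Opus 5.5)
The plan is a direct case analysis on \(t\) relative to the threshold \(1/\mathfrak m(v)\), followed by a second split according to which branch of the minimum defines \(\mathfrak m(v)\). Write
\[
F(t)\coloneqq\frac1t+v^2t\log(1+t),\qquad \ell\coloneqq\log(e+1/v)\ge1,
\]
so that \(\mathfrak m(v)=\min\{v^{2/3},v\sqrt\ell\}\).

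\textbf{The converse bound.} I would prove this first, because it is the easiest part and shows which inequalities must hold. At \(t=1/\mathfrak m\) we have \(1/t=\mathfrak m\). The second term equals \(v^2\log(1+1/\mathfrak m)/\mathfrak m\), so it suffices to show \(v^2\log(1+1/\mathfrak m)\le\mathfrak m^2\).
\begin{itemize}
\item If \(\mathfrak m=v^{2/3}\), then \(\log(1+x)\le x\) with \(x=v^{-2/3}\) gives \(v^2\log(1+1/\mathfrak m)\le v^{4/3}=\mathfrak m^2\).
\item If \(\mathfrak m=v\sqrt\ell\), then \(\sqrt\ell\ge1\) gives \(1+1/(v\sqrt\ell)\le e+1/v\). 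Hence \(\log(1+1/\mathfrak m)\le\ell\), and \(v^2\ell=\mathfrak m^2\).
\end{itemize}
In both cases \(F(1/\mathfrak m)\le2\mathfrak m\).

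\textbf{The lower bound.} If \(t\le1/\mathfrak m\), then already \(1/t\ge\mathfrak m\ge\mathfrak m/4\). If \(t\ge1/\mathfrak m\), the second term is increasing in \(t\), so
\[
F(t)\ge \frac{v^2}{\mathfrak m}\log\!\left(1+\frac1{\mathfrak m}\right).
\]
It therefore suffices to prove
\[
v^2\log(1+1/\mathfrak m)\ge\mathfrak m^2/4,
\]
which is the reverse of the converse inequality, up to the factor \(4\).

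\textbf{Branch \(\mathfrak m=v^{2/3}\).} Set \(x=v^{-2/3}\). The target becomes \(\log(1+x)/x\ge1/4\). This branch is active exactly when \(v^{2/3}\le v\sqrt\ell\), that is, when \(x\le\log(e+x^{3/2})\). That condition forces \(x\) to be bounded by an absolute constant \(x_0\), since the right side grows only logarithmically; a crude numerical check gives \(x_0\le3\). Because \(\log(1+x)/x\) is decreasing, it is enough to verify \(\log(1+x_0)\ge x_0/4\) at that constant.

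\textbf{Branch \(\mathfrak m=v\sqrt\ell\).} Here \(v^{1/3}\sqrt\ell\le1\), so \(v\) is small and \(v\sqrt\ell\le v^{2/3}\). The target becomes \(\log\bigl(1+1/(v\sqrt\ell)\bigr)\ge\ell/4\). Since \(1/(v\sqrt\ell)\ge v^{-2/3}\), it suffices to show
\[
\log\bigl(1+v^{-2/3}\bigr)\ge\tfrac14\log(e+1/v).
\]
I expect this to follow from \((1+v^{-2/3})^4\ge e+1/v\). The leading terms compare as \(v^{-8/3}\) against \(v^{-1}\), and the constants are checked using that \(v\) is bounded above in this branch.

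The main obstacle is only the bookkeeping near the branch boundary \(v^{1/3}\sqrt\ell=1\). There I must confirm that the absolute constants survive with the factor \(1/4\), which I would do by the explicit crude bounds \(x_0\le3\) and \(\ell\ge1\).
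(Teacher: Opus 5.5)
Your proposal is correct. The converse half matches the paper's argument. For the lower bound you take a genuinely different decomposition.

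\textbf{Your route.} You split once at \(t=1/\mathfrak m(v)\) and use that \(t\log(1+t)\) is increasing. This collapses the whole lower bound to the single inequality \(v^2\log(1+1/\mathfrak m)\ge\mathfrak m^2/4\). That is the reverse, up to the factor \(4\), of the inequality \(v^2\log(1+1/\mathfrak m)\le\mathfrak m^2\) you proved for the converse. You then check it branch by branch.

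\textbf{The paper's route.} The paper splits on \(v\ge1\) versus \(v\le1\).
\begin{itemize}
\item For \(v\ge1\), the minimum is automatically \(v^{2/3}\). When \(t\le1\), the paper minimizes \(1/t+v^2t^2/2\) explicitly.
\item For \(v\le1\), it never decides which branch of the minimum is active. It uses only \(\mathfrak m\le\min\{1,b\}\) with \(b=v\sqrt{\log(e+1/v)}\), and the thresholds \(t\le1\), \(t\le4/b\), \(t>4/b\). In the last range it shows \(\log(1+t)\ge\tfrac12\log(e+1/v)\).
\end{itemize}

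\textbf{What each buys.} Your version gives a unified picture: \(t=1/\mathfrak m\) balances the two terms within constant factors in both directions. The price is that you must locate the branch boundary. This requires the small numerical fact that \(x\le\log(e+x^{3/2})\) forces \(x\le3\). The fact holds, because \(x-\log(e+x^{3/2})\) has derivative \(1-\tfrac32\sqrt x/(e+x^{3/2})>0\) and is positive at \(x=3\); the actual boundary is near \(x\approx1.5\). This sits well inside the range \(x\le8\) where \(\log(1+x)\ge x/4\). The paper's split at \(v=1\) avoids any such numerics.

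\textbf{Your second branch.} It is easier than you feared. There \(v<1\), so \(y=v^{-2/3}\ge1\), and
\[
(1+y)^4\ge1+4y+6y^2\ge e+y^{3/2}=e+1/v.
\]
No delicate bookkeeping near the branch boundary is needed.
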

\begin{proof}
For the converse, if the first term in the minimum defines \(\mathfrak m\), use
\(\log(1+t)\le t\). If the second term defines it, use
\(t\le1/v\) and \(\log(1+t)\le\log(e+1/v)\).

For the lower bound, first suppose \(v\ge1\), so \(\mathfrak m(v)=v^{2/3}\).
If \(t\ge1\), the noise term is at least \(v^2\log2\).
If \(t\le1\), use \(\log(1+t)\ge t/2\); minimizing
\(1/t+v^2t^2/2\) gives \(3v^{2/3}/2\).

Now suppose \(v\le1\), and put \(\ell_v=\log(e+1/v)\),
\(b=v\sqrt{\ell_v}\). If \(t\le1\), then \(1/t\ge1\ge\mathfrak m(v)\).
If \(1/t\ge b/4\), the claim also follows.
Otherwise \(t>4/b\), and \(\log(1+t)\ge\ell_v/2\).
Indeed, when \(\ell_v\le2\), \(b\le\sqrt2\), which gives the inequality.
When \(\ell_v\ge2\), use \(1/v=e^{\ell_v}-e\ge e^{\ell_v}/2\) and
\(\sqrt{\ell_v}\le e^{\ell_v/2}\).
Thus the noise term is at least \(2v^2\ell_v/b=2b\ge2\mathfrak m(v)\).
\end{proof}

\begin{proof}[Proof of Theorem~\ref{thm:nonlinear-lower}]
Fix the stepsize. If \(\eta L>3\), the deterministic construction in
Lemma~\ref{lem:computation} gives at least \(LD_0^2/2\).
Otherwise \(t\le3K\). If \(tR<1\), use the slow coordinate as already noted.

It remains to consider \(tR\ge1\), \(t\le3K\).
For \(t\le1\), \eqref{eq:multiscale-cubic-noise} and
\(\log(1+t)\le t\) give a universal constant times
\(\eta\sigma^2\log(1+t)\).
For \(t\ge1\), use \eqref{eq:multiscale-log-noise} and
\[
 \log(1+\min\{t,K\})\ge\tfrac12\log(1+t)
 \quad(1\le t\le3K,\ K\ge2).
\]
Together with the slow coordinate, these give, for one universal \(c_0>0\),
\[
 \E f(\bar x^R)\ge
 c_0\left[\frac{D_0^2}{\eta KR}
                   +\eta\sigma^2\log(1+t)\right]
 =c_0\frac{LD_0^2}{R}
       \left[\frac1t+\nu^2t\log(1+t)\right],
 \quad
 \nu=\frac{\sigma\sqrt R}{LD_0\sqrt K}.
\]
Lemma~\ref{lem:scalar-tradeoff} now proves the desired minimum.
The two excluded stepsize regimes supplied \(LD_0^2\) up to constants,
which accounts for the term \(LD_0^2\) in the minimum in \eqref{eq:nonlinear-lower}.
All comparison constants are independent of \(N,K,R,L,D_0,\sigma\).
\end{proof}

\subsection{Assembling the matching theorem}
\label{sec:assembly}

We combine the nonlinear, statistical, and deterministic lower bounds
for the same prescribed stepsize and averaged server output. The matching
upper bound follows from Theorem~\ref{thm:scaffold-convex}.

\subsubsection{Combining the hard instances}

All parameters in \(\mathcal P\) are upper bounds on smoothness,
similarity, and variance. An identical-client instance is admissible for
any prescribed similarity upper bound; a noiseless instance is admissible
for any prescribed variance upper bound.

For the nonzero-similarity contribution,
Theorem~\ref{thm:stale-convex-averaged-lower-bound} gives
\begin{equation}
 \forall\eta_\ell>0\quad\exists\mathcal I_{\mathrm{det}}\in\mathcal P:
 \quad f(\bar x^R)-f^\star
 \ge c_{\mathrm{avg,lb}}
 \left(\frac{L_{\mathrm{loc}}}{K}+\delta\right)\frac{D_0^2}{R}.
 \label{eq:combined-deterministic-input}
\end{equation}
For the homogeneous class, Lemma~\ref{lem:computation} supplies the
needed deterministic term for every \(N\ge1\).

Theorems~\ref{thm:nonlinear-lower} and~\ref{thm:statistical-lower} have
the same quantifier structure:
\[
 \forall\eta_\ell>0\quad\exists\mathcal I_{\eta_\ell}\in\mathcal P.
\]
Thus, for each prescribed stepsize and horizon, we may select the hardest
admissible instance before drawing any samples.

\begin{proof}[Proof of Theorem~\ref{thm:constant-step-optimality}]
For a fixed stepsize, denote the three nonnegative summands of
\(\mathcal T\) by \(A_1,A_2,A_3\), and write \(U=L_{\mathrm{loc}}D_0^2\)
only in this proof. The deterministic lower bound, the statistical theorem,
and the nonlinear theorem give, with a common universal \(c_1>0\),
\[
 \sup_{\mathcal I\in\mathcal P}
       \bigl(\E f(\bar x^R)-f^\star\bigr)
 \ge c_1\max_{j=1,2,3}\min\{U,A_j\}.
\]
Since
\[
 \max_{j=1,2,3}\min\{U,A_j\}
 \ge\frac13\min\{U,A_1+A_2+A_3\},
\]
the lower bound holds uniformly for every prescribed stepsize. Taking
its infimum gives the first inequality of \eqref{eq:lb-optimality}.

The upper inequality follows from Theorem~\ref{thm:scaffold-convex}.
In the homogeneous class,
replace \eqref{eq:combined-deterministic-input} by
Lemma~\ref{lem:computation}; every other argument is unchanged.
\end{proof}

\section{Proofs of Proposition~\ref{prop:nc-fixed-stepsize} and Theorem~\ref{thm:scaffold-nonconvex}}
\label{app:nonconvex}

Selection in Algorithm~\ref{alg:scaffold-nonconvex} precedes
the round's local-gradient sampling and may be deterministic or use
independent randomness, subject to the conditional independence
requirement stated there.
The selected client returns a uniformly sampled local iterate as one
additional vector in its response. The server stores the \(R\) returned
iterates and samples one uniformly after the last round. This output is available without an additional
communication exchange.

We first prove Proposition~\ref{prop:nc-fixed-stepsize}.
Run Algorithm~\ref{alg:scaffold-nonconvex} with \(B=K\),
\(R,K\ge1\), and a constant stepsize satisfying
\begin{equation}
    \eta_\ell L_g\le\frac14,\qquad
    \eta_\ell K\bar\delta_S\le\frac18.
    \label{eq:nc-stepsize}
\end{equation}
Then
\begin{equation}
\begin{aligned}
    \E\norm{\grad f(\widehat x)}^2
    \le\frac{2F_0}{\eta_\ell RK}
       +32\bigl(L_g\eta_\ell+\bar\delta_S^2K\eta_\ell^2\bigr)\sigma^2.
\end{aligned}
\label{eq:nc-fixed-rate}
\end{equation}
This is Proposition~\ref{prop:nc-fixed-stepsize}.

We first derive a bound for arbitrary control batch size \(B\ge1\),
then set \(B=K\) to obtain the displayed bound.
Set \(x^{-1}=x^0\). Let \(\mathcal H_r\) contain the preceding
iterates, previously used controls, and the selected index \(i_r\), but
not the control batch \((c_i^{r-1})_{i=1}^N\) used in round \(r\).
This conditioning is valid under the stated selection rule: that batch
did not generate \(x^{r-1}\) or determine \(i_r\). Define its error in the
selected client's correction by
\begin{equation}
 q^r=c^{r-1}-c_{i_r}^{r-1}
       -\bigl(\grad f(x^{r-2})-\grad f_{i_r}(x^{r-2})\bigr).
 \label{eq:nc-control-error}
\end{equation}
The independent control samples and their conditional moment bounds give
\begin{equation}
 \E[q^r\mid\mathcal H_r]=0,\qquad
 \E[\norm{q^r}^2\mid\mathcal H_r]
 \le\left(1-\frac1N\right)\frac{\sigma^2}{B}\le\frac{\sigma^2}{B}.
 \label{eq:nc-control-moments}
\end{equation}
Indeed, writing each control as its expected gradient plus its centered
batch error, the selected error has coefficient \(-(1-1/N)\), and every
other error has coefficient \(1/N\). Their variances therefore sum to at
most \((1-1/N)\sigma^2/B\). This applies to initialization as well.

For the selected client, define the corrected direction with exact
gradients and controls:
\begin{equation}
 \mu_k^r=\grad f_{i_r}(y_{i_r,k}^r)
       -\grad f_{i_r}(x^{r-2})+\grad f(x^{r-2}).
 \label{eq:nc-mean-direction}
\end{equation}
By uniform similarity,
\begin{equation}
 \norm{\mu_k^r-\grad f(y_{i_r,k}^r)}
 \le\delta_{i_r}\norm{y_{i_r,k}^r-x^{r-2}}
 \le\bar\delta_S\norm{y_{i_r,k}^r-x^{r-2}}.
 \label{eq:nc-correction-error}
\end{equation}
Define the local noise by
\(\zeta_{i_r,k}^r=\grad f_{i_r}(y_{i_r,k}^r;\xi_{i_r,k}^r)
-\grad f_{i_r}(y_{i_r,k}^r)\). Then
\[
 y_{i_r,k+1}^r=y_{i_r,k}^r
                  -\eta_\ell(\mu_k^r+q^r+\zeta_{i_r,k}^r).
\]
Given the controls and preceding local samples, \(\mu_k^r+q^r\) is the
conditional mean direction. The local noise is centered and has second
moment at most \(\sigma^2\). For \(s<t\), conditioning on the controls
and local samples preceding step \(t\) gives
\[
 \E\ip{\zeta_{i_r,s}^r}{\zeta_{i_r,t}^r}=0,
\]
because the earlier noise is known and the new noise has conditional
mean zero. Expanding the squared norm therefore gives
\begin{equation}
 \E\norm{\sum_{t=0}^{k-1}\zeta_{i_r,t}^r}^2
 =\sum_{t=0}^{k-1}\E\norm{\zeta_{i_r,t}^r}^2\le k\sigma^2.
 \label{eq:nc-noise-sum}
\end{equation}
The same control error \(q^r\), in contrast, is used at every local step.
All expectations below are unconditional unless a conditioning is shown.

\subsection{The case \texorpdfstring{\(\eta_\ell KL_g\le1\)}{eta K Lg <= 1}}

\paragraph{Expected decrease in one local step.}
Smoothness and the local noise bounds give
\begin{align*}
 \E f(y_{i_r,k+1}^r)\le{}\E f(y_{i_r,k}^r)
 -\eta_\ell\E\ip{\grad f(y_{i_r,k}^r)}{\mu_k^r+q^r}+\frac{L_g\eta_\ell^2}{2}\E\norm{\mu_k^r+q^r}^2
 +\frac{L_g\eta_\ell^2}{2}\sigma^2.
\end{align*}
Use \(\norm{\mu_k^r+q^r}^2\le2\norm{\mu_k^r}^2+2\norm{q^r}^2\),
\(-2\ip{a}{b}=\norm{a-b}^2-\norm a^2-\norm b^2\),
\(\eta_\ell L_g\le1/4\), and \eqref{eq:nc-correction-error}.
In particular, the terms involving \(\norm{\mu_k^r}^2\) satisfy
\[
 -\frac{\eta_\ell}{2}\norm{\mu_k^r}^2
 +L_g\eta_\ell^2\norm{\mu_k^r}^2
 \le-\frac{\eta_\ell}{4}\norm{\mu_k^r}^2.
\]
Together with \(\E\norm{q^r}^2\le\sigma^2/B\), this gives
\begin{align}
 \E f(y_{i_r,k+1}^r)\le{}&\E f(y_{i_r,k}^r)
 -\frac{\eta_\ell}{2}\E\norm{\grad f(y_{i_r,k}^r)}^2
 -\frac{\eta_\ell}{4}\E\norm{\mu_k^r}^2\notag\\
 &+\frac{\eta_\ell\bar\delta_S^2}{2}\E\norm{y_{i_r,k}^r-x^{r-2}}^2
 -\eta_\ell\E\ip{\grad f(y_{i_r,k}^r)}{q^r}
 +\frac{L_g\eta_\ell^2\sigma^2}{B}
 +\frac{L_g\eta_\ell^2\sigma^2}{2}.
 \label{eq:nc-one-step}
\end{align}

\paragraph{Using the centered control error.}
Equation~\eqref{eq:nc-control-moments} implies
\(\E\ip{\grad f(x^{r-1})}{q^r}=0\). Thus
\[
 \E\ip{\grad f(y_{i_r,k}^r)}{q^r}
 =\E\ip{\grad f(y_{i_r,k}^r)-\grad f(x^{r-1})}{q^r}.
\]
Unroll the local updates:
\[
 y_{i_r,k}^r-x^{r-1}
 =-\eta_\ell\sum_{t=0}^{k-1}\mu_t^r-\eta_\ell kq^r
       -\eta_\ell\sum_{t=0}^{k-1}\zeta_{i_r,t}^r.
\]
Global smoothness first bounds the absolute inner product by
\(L_g\E\left[\norm{y_{i_r,k}^r-x^{r-1}}\cdot\norm{q^r}\right]\).
Substituting the unrolled update and applying Cauchy--Schwarz to each
term, with \eqref{eq:nc-noise-sum}, gives
\begin{align*}
 \left|\E\ip{\grad f(y_{i_r,k}^r)}{q^r}\right|
 \le L_g\eta_\ell\Bigg[
 \frac{\sigma\sqrt{k}}{\sqrt B}
   \left(\sum_{t=0}^{k-1}\E\norm{\mu_t^r}^2\right)^{1/2}+\frac{k\sigma^2}{B}+\frac{\sqrt{k}\sigma^2}{\sqrt B}\Bigg].
\end{align*}
Here we used
\(\E\norm{\sum_{t<k}\mu_t^r}^2\le k\sum_{t<k}\E\norm{\mu_t^r}^2\);
no independence between \(q^r\) and the local iterates is required.
Multiply by \(\eta_\ell\) and sum over \(k\), using
\(\sum_{k<K}\sqrt{k}\le K^{3/2}\), \(\sum_{k<K}k\le K^2/2\),
and bounding each partial sum of direction norms by the full sum. This gives
\begin{align*}
 \eta_\ell\sum_{k=0}^{K-1}
 \left|\E\ip{\grad f(y_{i_r,k}^r)}{q^r}\right|
 \le L_g\eta_\ell^2\Bigg[
 &\frac{\sigma K^{3/2}}{\sqrt B}
       \left(\sum_{k=0}^{K-1}\E\norm{\mu_k^r}^2\right)^{1/2}
 +\frac{K^2\sigma^2}{2B}
 +\frac{K^{3/2}\sigma^2}{\sqrt B}\Bigg].
\end{align*}
For the first term, Young's inequality and \(\eta_\ell KL_g\le1\) yield
\[
 \frac{L_g\eta_\ell^2\sigma K^{3/2}}{\sqrt B}
       \left(\sum_k\E\norm{\mu_k^r}^2\right)^{1/2}
 \le\frac{\eta_\ell}{16}\sum_k\E\norm{\mu_k^r}^2
       +\frac{4L_g^2\eta_\ell^3K^3\sigma^2}{B}
 \le\frac{\eta_\ell}{16}\sum_k\E\norm{\mu_k^r}^2
       +\frac{4L_g\eta_\ell^2K^2\sigma^2}{B}.
\]
For the last term use
\(K^{3/2}/\sqrt B\le(K^2/B+K)/2\). Consequently,
\begin{equation}
 \eta_\ell\sum_{k=0}^{K-1}
 \left|\E\ip{\grad f(y_{i_r,k}^r)}{q^r}\right|
 \le\frac{\eta_\ell}{16}\sum_{k=0}^{K-1}\E\norm{\mu_k^r}^2
 +\frac{5L_g\eta_\ell^2K^2\sigma^2}{B}
 +\frac{L_g\eta_\ell^2K\sigma^2}{2}.
 \label{eq:nc-control-inner-product}
\end{equation}

\paragraph{Local movement and server displacement.}
The displacement from the stale anchor has four terms:
\[
 y_{i_r,k}^r-x^{r-2}
 =(x^{r-1}-x^{r-2})-\eta_\ell\sum_{t=0}^{k-1}\mu_t^r
   -\eta_\ell kq^r-\eta_\ell\sum_{t=0}^{k-1}\zeta_{i_r,t}^r.
\]
Apply \(\norm{a+b+c+d}^2\le4(\norm a^2+\norm b^2+\norm c^2+\norm d^2)\)
and sum over \(k\). The accumulated directions satisfy
\begin{align*}
 \sum_{k=0}^{K-1}\E\norm{\sum_{t=0}^{k-1}\mu_t^r}^2
 &\le\sum_{k=0}^{K-1}k\sum_{t=0}^{k-1}\E\norm{\mu_t^r}^2\\
 &\le\frac{K^2}{2}\sum_{t=0}^{K-1}\E\norm{\mu_t^r}^2.
\end{align*}
For the control and local noise terms, use
\(\E\norm{q^r}^2\le\sigma^2/B\) and
\eqref{eq:nc-noise-sum}, respectively. Together these yield
\begin{align}
 \sum_{k=0}^{K-1}\E\norm{y_{i_r,k}^r-x^{r-2}}^2\le{}&
 4K\E\norm{x^{r-1}-x^{r-2}}^2
 +2\eta_\ell^2K^2\sum_{k=0}^{K-1}\E\norm{\mu_k^r}^2\notag\\
 &+\frac{4\eta_\ell^2K^3\sigma^2}{3B}
 +2\eta_\ell^2K^2\sigma^2.
 \label{eq:nc-path}
\end{align}
Here we used \(\sum_{k<K}k\le K^2/2\) and
\(\sum_{k<K}k^2\le K^3/3\). At the endpoint, since
\(x^r=y_{i_r,K}^r\),
\[
 x^r-x^{r-1}
 =-\eta_\ell\sum_{t=0}^{K-1}\mu_t^r
   -\eta_\ell Kq^r-\eta_\ell\sum_{t=0}^{K-1}\zeta_{i_r,t}^r.
\]
Applying the three-term squared-norm inequality and the same moment
bounds gives
\begin{equation}
 \E\norm{x^r-x^{r-1}}^2
 \le3\eta_\ell^2K\sum_{k=0}^{K-1}\E\norm{\mu_k^r}^2
       +\frac{3\eta_\ell^2K^2\sigma^2}{B}
       +3\eta_\ell^2K\sigma^2.
 \label{eq:nc-endpoint}
\end{equation}

\paragraph{Summation over local steps and rounds.}
Sum \eqref{eq:nc-one-step} over local steps and apply
\eqref{eq:nc-control-inner-product} and \eqref{eq:nc-path}.
The path bound contributes the previous-round displacement term
\[
 \frac{\eta_\ell\bar\delta_S^2}{2}
       \cdot4K\E\norm{x^{r-1}-x^{r-2}}^2
 =2\eta_\ell\bar\delta_S^2K\E\norm{x^{r-1}-x^{r-2}}^2.
\]
To match this coefficient and obtain cancellation across rounds, add
\(2\eta_\ell\bar\delta_S^2K\E\norm{x^r-x^{r-1}}^2\) to both sides.
Bound its contribution on the right by \eqref{eq:nc-endpoint}.
The coefficient of \(\sum_k\E\norm{\mu_k^r}^2\) receives contributions
from the one-step decrease, the control inner product, the path bound,
and the added endpoint term, respectively:
\begin{align*}
 -\frac{\eta_\ell}{4}+\frac{\eta_\ell}{16}
   +\eta_\ell^3\bar\delta_S^2K^2
   +6\eta_\ell^3\bar\delta_S^2K^2
 &=-\frac{3\eta_\ell}{16}+7\eta_\ell^3\bar\delta_S^2K^2\\
 &\le-\frac{5\eta_\ell}{64}\le0,
\end{align*}
where the last line uses \(\eta_\ell K\bar\delta_S\le1/8\).
Discarding it and using \(K\le K^2\), we obtain
\begin{align}
 \E\left[f(x^r)+2\eta_\ell\bar\delta_S^2K\norm{x^r-x^{r-1}}^2\right]
 \le{}&\E\left[f(x^{r-1})+2\eta_\ell\bar\delta_S^2K\norm{x^{r-1}-x^{r-2}}^2\right]
 -\frac{\eta_\ell}{2}\sum_{k=0}^{K-1}\E\norm{\grad f(y_{i_r,k}^r)}^2\notag\\
 &+L_g\eta_\ell^2K\sigma^2
 +\frac{6L_g\eta_\ell^2K^2\sigma^2}{B}
 +7\eta_\ell^3\bar\delta_S^2K^2\sigma^2
 +\frac{7\eta_\ell^3\bar\delta_S^2K^3\sigma^2}{B}.
 \label{eq:nc-round}
\end{align}
The displacement terms telescope over rounds. Since \(x^{-1}=x^0\),
\(f(x^R)\ge f_{\inf}\), and the output is uniform over the \(RK\) local
iterates, summing and dividing by \(\eta_\ell RK/2\) gives
\begin{equation}
 \E\norm{\grad f(\widehat x)}^2\le\frac{2F_0}{\eta_\ell RK}
 +2L_g\eta_\ell\sigma^2+\frac{12L_g\eta_\ell K\sigma^2}{B}
 +14\bar\delta_S^2\eta_\ell^2K\sigma^2\left(1+\frac KB\right).
 \label{eq:nc-small-local-bound}
\end{equation}

\subsection{The case \texorpdfstring{\(\eta_\ell KL_g\ge1\)}{eta K Lg >= 1}}

For this case, keep the control error inside the conditional mean
direction \(\mu_k^r+q^r\). A direct bound on the control error will
contribute \(2\sigma^2/B\) to the final rate. Since
\(\eta_\ell KL_g\ge1\), this is at most
\(2L_g\eta_\ell K\sigma^2/B\), so the centering argument from the
previous case is not needed. The direction's deviation from the global
gradient obeys
\[
 \norm{\mu_k^r+q^r-\grad f(y_{i_r,k}^r)}^2
 \le2\bar\delta_S^2\norm{y_{i_r,k}^r-x^{r-2}}^2+2\norm{q^r}^2.
\]
The same smoothness calculation, using \(\eta_\ell L_g\le1/4\), now gives
\begin{align*}
 \E f(y_{i_r,k+1}^r)\le{}&\E f(y_{i_r,k}^r)
 -\frac{\eta_\ell}{2}\E\norm{\grad f(y_{i_r,k}^r)}^2
 -\frac{\eta_\ell}{4}\E\norm{\mu_k^r+q^r}^2\\
 &+\eta_\ell\bar\delta_S^2\E\norm{y_{i_r,k}^r-x^{r-2}}^2
 +\frac{\eta_\ell\sigma^2}{B}+\frac{L_g\eta_\ell^2\sigma^2}{2}.
\end{align*}
Only the local noise remains outside this direction in the unrolled
updates:
\[
 y_{i_r,k}^r-x^{r-2}
 =(x^{r-1}-x^{r-2})
   -\eta_\ell\sum_{t=0}^{k-1}(\mu_t^r+q^r)
   -\eta_\ell\sum_{t=0}^{k-1}\zeta_{i_r,t}^r.
\]
At \(k=K\), subtracting \(x^{r-1}\) instead leaves only the two sums.
Applying the three-term and two-term squared-norm inequalities, with the
same sum estimates as in the previous case, gives
\begin{align*}
 \sum_{k=0}^{K-1}\E\norm{y_{i_r,k}^r-x^{r-2}}^2\le{}&
 3K\E\norm{x^{r-1}-x^{r-2}}^2
 +\frac32\eta_\ell^2K^2\sum_{k=0}^{K-1}\E\norm{\mu_k^r+q^r}^2
 +\frac32\eta_\ell^2K^2\sigma^2,\\
 \E\norm{x^r-x^{r-1}}^2\le{}&
 2\eta_\ell^2K\sum_{k=0}^{K-1}\E\norm{\mu_k^r+q^r}^2
 +2\eta_\ell^2K\sigma^2.
\end{align*}
Summing the decrease over local steps and substituting the path bound
produces \(3\eta_\ell\bar\delta_S^2K\E\norm{x^{r-1}-x^{r-2}}^2\).
To match this coefficient and telescope across rounds, add
\(3\eta_\ell\bar\delta_S^2K\E\norm{x^r-x^{r-1}}^2\) to both sides.
Bound the added term on the right using the endpoint estimate.
The coefficient of \(\sum_k\E\norm{\mu_k^r+q^r}^2\) then becomes
\begin{align*}
 -\frac{\eta_\ell}{4}
 +\frac32\eta_\ell^3\bar\delta_S^2K^2
 +6\eta_\ell^3\bar\delta_S^2K^2
 &=-\frac{\eta_\ell}{4}+\frac{15}{2}\eta_\ell^3\bar\delta_S^2K^2\\
 &\le-\frac{17\eta_\ell}{128}\le0,
\end{align*}
where the inequality uses \(\eta_\ell K\bar\delta_S\le1/8\).
After discarding this term, the remaining per-round noise terms are
\[
 \frac{\eta_\ell K\sigma^2}{B}
 +\frac{L_g\eta_\ell^2K\sigma^2}{2}
 +\frac{15}{2}\eta_\ell^3\bar\delta_S^2K^2\sigma^2.
\]
The displacement term has coefficient \(3\eta_\ell\bar\delta_S^2K\) on both
sides, so summation over rounds and the same output identity yield
\begin{equation}
 \E\norm{\grad f(\widehat x)}^2\le\frac{2F_0}{\eta_\ell RK}
 +L_g\eta_\ell\sigma^2+15\bar\delta_S^2\eta_\ell^2K\sigma^2
 +\frac{2\sigma^2}{B}.
 \label{eq:nc-large-local-bound}
\end{equation}
Since \(\eta_\ell KL_g\ge1\), the last term is at most
\(2L_g\eta_\ell K\sigma^2/B\).

\subsection{Conclusion for arbitrary control batch sizes}
Equations~\eqref{eq:nc-small-local-bound} and
\eqref{eq:nc-large-local-bound} both imply
\begin{equation}
 \E\norm{\grad f(\widehat x)}^2
 \le\frac{2F_0}{\eta_\ell RK}
 +16\bigl(L_g\eta_\ell+\bar\delta_S^2K\eta_\ell^2\bigr)
       \sigma^2\left(1+\frac KB\right).
 \label{eq:nc-general-batch-rate}
\end{equation}
The stepsize is constant, so the same case holds in every round.
The displacement coefficient therefore remains constant in each
telescoping argument. Taking \(B=K\) proves \eqref{eq:nc-fixed-rate}.
All expectations are finite: global smoothness and uniform similarity
give each local gradient at most linear growth, and the finite-horizon
updates with bounded noise variance preserve finite second moments.
\hfill\(\square\)

\subsection{Stepsize choices and communication complexity}
\label{app:nonconvex-tuning}
The following choices use \(B=K\), as in the main theorem.

\begin{corollary}[Refined stochastic tuning]
\label{cor:nc-refined-tuned}
Choose
\begin{equation}
 \eta_\ell=\left(4L_g+8K\bar\delta_S
 +4\sigma\sqrt{\frac{L_gRK}{F_0}}
 +\left(\frac{16\bar\delta_S^2K^2R\sigma^2}{F_0}\right)^{1/3}\right)^{-1}.
 \label{eq:nc-refined-tuned-step}
\end{equation}
Then
\begin{equation}
 \E\norm{\grad f(\widehat x)}^2
 \le\frac{8L_gF_0}{RK}+\frac{16\bar\delta_SF_0}{R}
 +16\sigma\sqrt{\frac{L_gF_0}{RK}}
 +4\sqrt[3]{16}\left(\frac{\bar\delta_S^2\sigma^2F_0^2}{KR^2}\right)^{1/3}.
 \label{eq:nc-refined-tuned-rate}
\end{equation}
\end{corollary}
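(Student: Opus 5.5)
The plan is to substitute the stepsize \eqref{eq:nc-refined-tuned-step} into the fixed-stepsize bound \eqref{eq:nc-fixed-rate} of Proposition~\ref{prop:nc-fixed-stepsize}. Write \(\eta_\ell^{-1}=A_1+A_2+A_3+A_4\) with
\[
 A_1=4L_g,\quad A_2=8K\bar\delta_S,\quad
 A_3=4\sigma\sqrt{L_gRK/F_0},\quad
 A_4=\bigl(16\bar\delta_S^2K^2R\sigma^2/F_0\bigr)^{1/3}.
\]

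\textbf{Admissibility.} Since \(\eta_\ell\le 1/A_1\), we get \(\eta_\ell L_g\le 1/4\). Since \(\eta_\ell\le 1/A_2\), we get \(\eta_\ell K\bar\delta_S\le 1/8\). So both conditions \eqref{eq:nc-stepsize} hold and the proposition applies.

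\textbf{Optimization term.} The term \(2F_0/(\eta_\ell RK)\) splits into four pieces, \(2F_0A_j/(RK)\) for \(j=1,\dots,4\):
\begin{itemize}
 \item \(j=1\) gives \(8L_gF_0/(RK)\).
 \item \(j=2\) gives \(16\bar\delta_SF_0/R\).
 \item \(j=3\) gives \(8\sigma\sqrt{L_gF_0/(RK)}\).
 \item \(j=4\) gives \(2\cdot 16^{1/3}\bigl(\bar\delta_S^2\sigma^2F_0^2/(KR^2)\bigr)^{1/3}\), after simplifying \(F_0 (K^2R/F_0)^{1/3}/(RK)\).
\end{itemize}

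\textbf{Noise terms.} These are controlled using only the larger pieces of \(\eta_\ell^{-1}\).
\begin{itemize}
 \item For the first, use \(\eta_\ell\le 1/A_3\):
 \[
 32L_g\eta_\ell\sigma^2\le \frac{32L_g\sigma^2}{4\sigma\sqrt{L_gRK/F_0}}=8\sigma\sqrt{\frac{L_gF_0}{RK}}.
 \]
 Added to the \(j=3\) piece, this gives the coefficient \(16\).
 \item For the second, use \(\eta_\ell^2\le 1/A_4^2\):
 \[
 32\bar\delta_S^2K\eta_\ell^2\sigma^2\le \frac{32\bar\delta_S^2K\sigma^2}{(16\bar\delta_S^2K^2R\sigma^2/F_0)^{2/3}}.
 \]
 This simplifies to \(32\cdot16^{-2/3}\bigl(\bar\delta_S^2\sigma^2F_0^2/(KR^2)\bigr)^{1/3}=2\cdot16^{1/3}(\cdots)^{1/3}\). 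Added to the \(j=4\) piece, this gives \(4\sqrt[3]{16}\).
\end{itemize}
Summing all pieces gives \eqref{eq:nc-refined-tuned-rate}.

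\textbf{Main obstacle.} The only step needing care is the exponent bookkeeping in the cube-root term. The constant \(16\) inside \(A_4\) is chosen so that the two contributions match. I will also check the degenerate cases \(\sigma=0\) or \(\bar\delta_S=0\): there the corresponding \(A_j\) vanish, the matching noise term is zero, and the bounds hold trivially, so no division by zero actually arises.
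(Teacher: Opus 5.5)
Your proposal is correct and follows the paper's proof essentially step for step. You check admissibility from the first two denominator pieces, split $2F_0/(\eta_\ell RK)$ into four contributions, and bound each noise term using $\eta_\ell\le 1/A_3$ and $\eta_\ell^2\le 1/A_4^2$ respectively. Your exponent bookkeeping ($32\cdot16^{-2/3}=2\sqrt[3]{16}$) and your handling of the degenerate cases $\sigma=0$ or $\bar\delta_S=0$ match the paper's argument.
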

\begin{proof}
The denominator is at least \(4L_g\) and \(8K\bar\delta_S\), so the
stepsize conditions hold. The two noise-dependent terms in the denominator
contribute, respectively,
\[
 8\sigma\sqrt{\frac{L_gF_0}{RK}},\qquad
 2\sqrt[3]{16}\left(\frac{\bar\delta_S^2\sigma^2F_0^2}{KR^2}\right)^{1/3}
\]
to \(2F_0/(\eta_\ell RK)\). When the corresponding noise coefficients
are positive, bounding \(\eta_\ell\) by the reciprocal of each
noise-dependent denominator term gives
\begin{align*}
 32L_g\eta_\ell\sigma^2
 &\le\frac{32L_g\sigma^2}
              {4\sigma\sqrt{L_gRK/F_0}}
 =8\sigma\sqrt{\frac{L_gF_0}{RK}},\\
 32\bar\delta_S^2K\eta_\ell^2\sigma^2
 &\le\frac{32\bar\delta_S^2K\sigma^2}
              {(16\bar\delta_S^2K^2R\sigma^2/F_0)^{2/3}}\\
 &=2\sqrt[3]{16}
      \left(\frac{\bar\delta_S^2\sigma^2F_0^2}{KR^2}\right)^{1/3}.
\end{align*}
Adding these estimates in \eqref{eq:nc-fixed-rate} proves the claim and
establishes Theorem~\ref{thm:scaffold-nonconvex}.
If a noise coefficient vanishes, its corresponding bound is zero.
\end{proof}

For completeness, \(\eta_\ell K\bar\delta_S\le1/8\) also gives
\begin{equation}
 \E\norm{\grad f(\widehat x)}^2
 \le\frac{2F_0}{\eta_\ell RK}
       +32\left(L_g+\frac{\bar\delta_S}{8}\right)\eta_\ell\sigma^2.
 \label{eq:nc-simple-fixed-rate}
\end{equation}
This yields the following alternative tuning.
\begin{corollary}[Simpler stochastic tuning]
\label{cor:nc-tuned}
Choose
\begin{equation}
 \eta_\ell=\left(4L_g+8K\bar\delta_S
 +4\sigma\sqrt{\frac{(L_g+\bar\delta_S/8)RK}{F_0}}\right)^{-1}.
 \label{eq:nc-tuned-step}
\end{equation}
Then
\begin{equation}
 \E\norm{\grad f(\widehat x)}^2
 \le\frac{8L_gF_0}{RK}+\frac{16\bar\delta_SF_0}{R}
 +16\sigma\sqrt{\frac{(L_g+\bar\delta_S/8)F_0}{RK}}.
 \label{eq:nc-simple-tuned-rate}
\end{equation}
\end{corollary}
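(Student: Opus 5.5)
The plan is to derive this corollary directly from the fixed-stepsize bound \eqref{eq:nc-simple-fixed-rate}, exactly as in the proof of Corollary~\ref{cor:nc-refined-tuned}. The argument is a two-term balancing rather than three. Write \(\eta_\ell^{-1}=D_1+D_2+D_3\) with
\[
D_1=4L_g,\qquad D_2=8K\bar\delta_S,\qquad D_3=4\sigma\sqrt{(L_g+\bar\delta_S/8)RK/F_0}.
\]

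First, I verify admissibility. Since \(\eta_\ell\le1/D_1\) and \(\eta_\ell\le1/D_2\), the conditions \eqref{eq:nc-stepsize} hold: \(\eta_\ell L_g\le1/4\) and \(\eta_\ell K\bar\delta_S\le1/8\). Hence Proposition~\ref{prop:nc-fixed-stepsize} applies. Using \(\eta_\ell K\bar\delta_S\le1/8\) inside the term \(32\bar\delta_S^2K\eta_\ell^2\sigma^2\) gives \eqref{eq:nc-simple-fixed-rate}. That step is just \(\bar\delta_S^2K\eta_\ell^2\le\bar\delta_S\eta_\ell/8\).

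Next, I expand the optimization term linearly:
\[
\frac{2F_0}{\eta_\ell RK}=\frac{2F_0(D_1+D_2+D_3)}{RK}.
\]
The contributions are \(8L_gF_0/(RK)\), \(16\bar\delta_SF_0/R\), and \(8\sigma\sqrt{(L_g+\bar\delta_S/8)F_0/(RK)}\).

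For the noise term, if \(\sigma>0\), I bound \(\eta_\ell\le 1/D_3\). This gives
\[
32(L_g+\bar\delta_S/8)\eta_\ell\sigma^2\le\frac{32(L_g+\bar\delta_S/8)\sigma^2}{4\sigma\sqrt{(L_g+\bar\delta_S/8)RK/F_0}}=8\sigma\sqrt{\frac{(L_g+\bar\delta_S/8)F_0}{RK}}.
\]
If \(\sigma=0\), the noise term vanishes and \(D_3=0\), so this bound is trivially true.

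Summing the two \(8\sigma\sqrt{\cdot}\) contributions yields the coefficient \(16\), which proves \eqref{eq:nc-simple-tuned-rate}. No step is a real obstacle. The only care needed is handling the degenerate case \(\sigma=0\) separately, and noting that \(L_g>0\) keeps \(\eta_\ell\) finite.
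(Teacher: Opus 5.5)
Your proposal is correct and follows essentially the same route as the paper: check that the stepsize satisfies \eqref{eq:nc-stepsize}, apply \eqref{eq:nc-simple-fixed-rate}, split \(2F_0/(\eta_\ell RK)\) term by term, and bound the noise term using \(\eta_\ell\le 1/D_3\), which contributes the matching \(8\sigma\sqrt{(L_g+\bar\delta_S/8)F_0/(RK)}\). Your arithmetic checks out. The only unstated point is that this uses \(B=K\), which the paper's tuning section assumes throughout.
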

\begin{proof}
The stepsize satisfies \eqref{eq:nc-stepsize}. The noise-dependent part
of \(2F_0/(\eta_\ell RK)\) equals
\(8\sigma\sqrt{(L_g+\bar\delta_S/8)F_0/(RK)}\), and the last term of
\eqref{eq:nc-simple-fixed-rate} is bounded by the same quantity.
\end{proof}

Neither tuned bound is uniformly smaller. For a prescribed horizon,
one may compare \eqref{eq:nc-refined-tuned-rate} and
\eqref{eq:nc-simple-tuned-rate} and use the corresponding stepsize.
The refined bound retains the quadratic dependence on the stepsize in
the similarity-dependent noise term; the simpler bound uses
\(\eta_\ell K\bar\delta_S\le1/8\) to make that dependence linear.

For a target \(\E\norm{\grad f(\widehat x)}^2\le\varepsilon\),
where \(\varepsilon>0\) denotes squared-gradient accuracy, the refined
tuning suffices with
\[
 R=\left\lceil\max\left\{
 \frac{32L_gF_0}{K\varepsilon},\quad
 \frac{64\bar\delta_SF_0}{\varepsilon},\quad
 \frac{4096L_g\sigma^2F_0}{K\varepsilon^2},\quad
 \frac{256\bar\delta_S\sigma F_0}{\sqrt K\,\varepsilon^{3/2}}
 \right\}\right\rceil.
\]
Each term in \eqref{eq:nc-refined-tuned-rate} is then at most
\(\varepsilon/4\). Apart from the initial control aggregation, the
round complexity, including the effect of rounding up, is therefore
\begin{equation}
 \mathcal O\!\left(1+
 \left(\frac{L_g}{K}+\bar\delta_S\right)\frac{F_0}{\varepsilon}
 +\frac{L_g\sigma^2F_0}{K\varepsilon^2}
 +\frac{\bar\delta_S\sigma F_0}{\sqrt K\,\varepsilon^{3/2}}\right).
 \label{eq:nc-refined-complexity}
\end{equation}
Alternatively, \eqref{eq:nc-tuned-step} suffices with
\[
 R=\left\lceil\max\left\{
 \frac{24L_gF_0}{K\varepsilon},\quad
 \frac{48\bar\delta_SF_0}{\varepsilon},\quad
 \frac{2304(L_g+\bar\delta_S/8)\sigma^2F_0}{K\varepsilon^2}
 \right\}\right\rceil,
\]
giving round complexity
\[
 \mathcal O\!\left(1+
 \left(\frac{L_g}{K}+\bar\delta_S\right)\frac{F_0}{\varepsilon}
 +\frac{(L_g+\bar\delta_S)\sigma^2F_0}{K\varepsilon^2}\right).
\]
One can choose the smaller sufficient round count and use its corresponding
stepsize. Each round uses one communication exchange with all clients,
\(K\) local gradient evaluations at the selected client, and \(B\)
gradient evaluations per client for the controls.

For general \(B\), replace \(\sigma\) in the tunings and their resulting
bounds by \(\sigma\sqrt{(1+K/B)/2}\). Equation~\eqref{eq:nc-general-batch-rate}
then gives the same calculations. In particular, the refined rate is
\[
 \mathcal O\!\left(
 \left(\frac{L_g}{K}+\bar\delta_S\right)\frac{F_0}{R}
 +\sigma\sqrt{\left(1+\frac KB\right)\frac{L_gF_0}{KR}}
 +\left(\frac{\bar\delta_S^2\sigma^2(1+K/B)F_0^2}{KR^2}\right)^{1/3}
 \right).
\]
Thus \(B\ge cK\), for any fixed \(c>0\), preserves the rate in
\eqref{eq:nc-tuned-rate} up to constants depending only on \(c\).

\clearpage
\subsection{Comparison with existing nonconvex guarantees}
\label{app:nonconvex-comparison}

\begin{table}[!htbp]
\centering
\caption{Nonconvex convergence guarantees under full participation.
The sum of the last two columns bounds
\(\E\norm{\grad f(\widehat x)}^2\) at the specified output, up to
universal constants. Here \(L=L_{\mathrm{loc}}\),
\(F_0=f(x^0)-f_{\inf}\), and \(\sigma^2\) bounds within-client gradient
variance. The parameters \(\delta\) and \(\bar\delta\) denote averaged
and uniform Hessian similarity, respectively.
For minibatch SGD, \(K\) is the per-client batch size; otherwise it
describes local work, subject to the notes below.}
\label{tab:nonconvex-comparison}
\smallskip
\small
\setlength{\tabcolsep}{4pt}
\renewcommand{\arraystretch}{1.25}
\begin{tabular*}{\textwidth}{@{\extracolsep{\fill}}
 >{\raggedright\arraybackslash}m{0.32\textwidth}
 >{\centering\arraybackslash}m{0.29\textwidth}
 >{\centering\arraybackslash}m{0.35\textwidth}@{}}
\toprule
Method / assumptions & Optimization term & Stochastic terms \\
\midrule
\textbf{Minibatch SGD}\newline
{\footnotesize\citep[Lemma~1]{luo2025revisiting}\newline
No similarity assumption; stochastic}
 & \(\displaystyle \frac{LF_0}{R}\)
 & \(\displaystyle \sigma\sqrt{\frac{LF_0}{NKR}}\) \\
\midrule
\textbf{SCAFFOLD}\textsuperscript{a}\newline
{\footnotesize\citep{karimireddy2020scaffold}\newline
No similarity assumption; stochastic}
 & \(\displaystyle \frac{LF_0}{R}\)
 & \(\displaystyle \sigma\sqrt{\frac{LF_0}{NKR}}\) \\
\midrule
\textbf{SCAFFOLD}\textsuperscript{b}\newline
{\footnotesize\citep{luo2025revisiting}\newline
Averaged similarity; stochastic}
 & \(\displaystyle \left(\frac LK+\sqrt{L\delta}+\rho\right)\frac{F_0}{R}\)
 & \(\displaystyle \sigma\sqrt{\frac{LF_0}{NKR}}
       +\left(\frac{L^2F_0^2\sigma^2}{KR^2}\right)^{1/3}\) \\
\midrule
\textbf{FedRed-SGD}\textsuperscript{c}\newline
{\footnotesize\citep{jiang2024fedred}\newline
Uniform similarity; exact controls}
 & \(\displaystyle \frac{\bar\delta F_0}{R}\)
 & \(\displaystyle \sigma\sqrt{\frac{\bar\delta F_0}{R}}\) \\
\midrule
\textbf{CE-LSGD}\textsuperscript{d}\newline
{\footnotesize\citep{patel2022communication}\newline
Uniform similarity; two-point oracle}
 & \(\displaystyle \left(\frac{\bar L}{\sqrt K}+\bar\delta\right)\frac{F_0}{R}\)
 & \(\displaystyle \frac{\sigma^2}{NKR}
       +\left(\frac{\sigma\bar L F_0}{NKR}\right)^{2/3}\) \\
\midrule
\textbf{This work (\(B=K\))}\newline
{\footnotesize Uniform similarity; stochastic}
 & \(\displaystyle \left(\frac LK+\bar\delta\right)\frac{F_0}{R}\)
 & \(\displaystyle \sigma\sqrt{\frac{LF_0}{KR}}\) \\
\bottomrule
\end{tabular*}
\par\smallskip
\begin{minipage}{\textwidth}
\footnotesize
Minibatch SGD and the original general nonconvex SCAFFOLD analysis use
a sampled server iterate. FedRed-SGD, CE-LSGD, and our method use sampled
local iterates. All clients participate in our control updates, while
one client performs the local steps.
\par\noindent
\textsuperscript{a}The general nonconvex guarantee uses the prescribed
stochastic initialization and tuned local and server stepsizes
(e.g., server stepsize \(\eta_g=\sqrt N\)); see Appendix~E of
\citet{karimireddy2020scaffold}.
\par\noindent
\textsuperscript{b}Theorem~4 of \citet{luo2025revisiting}, with local
weak-convexity parameter \(\rho\), concerns within-round client averages, which are generally
unavailable at the server. The algorithm aggregates fresh minibatch controls.
\par\noindent
\textsuperscript{c}For \(0<\bar\delta\le L\), Theorem~8 of
\citet{jiang2024fedred} sets communication
probability \(p=\bar\delta/L\), so \(K=L/\bar\delta\) in expectation. The displayed
bound substitutes \(R=pT\) into the rate after \(T\) local steps;
\(R\) is an expected communication count. Exact controls are cached
between communications.
\par\noindent
\textsuperscript{d}Theorem~3.1(a) of \citet{patel2022communication}.
The constant \(\bar L\) bounds both local smoothness
and the additional stochastic-gradient regularity in the cited result;
it need not equal \(L_{\mathrm{loc}}\). The oracle evaluates gradients
at two inputs using the same sample. The displayed theorem assumes
\(\bar L F_0/R\lesssim\sigma^2/\sqrt{NK}\) and uses an initial batch of
\(KR\) samples per client. This batch preserves order \(NKR\) total
queries but exceeds the ordinary per-round batch size. The separate
deterministic theorem has the \(L/K+\bar\delta\) dependence.
\par\noindent
Luo et al.'s SCAFFOLD and CE-LSGD use two communication exchanges per
outer iteration; their rates are expressed in terms of communication
counts with this constant factor absorbed. Our method uses one exchange
per round after initialization.
\end{minipage}
\end{table}

\clearpage
\section{Proof of Theorem~\ref{thm:nc-lower}}
\label{app:nc-lower}

\subsection{Instance class and proof organization}
Fix the parameters and eligible set in Theorem~\ref{thm:nc-lower}.
Let \(\mathcal P_{\mathrm{nc}}\) be the class of finite-dimensional
instances satisfying
\[
 f=\frac1N\sum_i f_i,\qquad
 f(x^0)-\inf_x f(x)\le F_0,
\]
local \(L_{\mathrm{loc}}\)-smoothness, uniform similarity with common
upper bound \(\bar\delta\), and the conditional oracle bounds
\eqref{eq:stochastic-unbiasedness}--\eqref{eq:stochastic-oracle}.
The parameters are upper bounds, not required equalities. We use exactly
the client-selection variant and output \eqref{eq:nc-output}, including
minibatch initialization. The constructions below apply for every finite
\(B\ge1\); the matching statement takes \(B=K\). The same sampled losses
are used for local gradients and control batches.
For each fixed positive stepsize and permitted selection rule, we
construct hard instances in this class. The homogeneous instances adapt
\citet{drori2020}; the communication instance adapts
\citet{patel2022communication} and \citet{carmon2020}.
All estimates are proved below with conservative numerical constants.

\subsection{Homogeneous instances: adaptation of quadratic SGD examples}

For deterministic homogeneous instances, every control batch equals the
same gradient, so the correction vanishes and the local updates are
\(T=KR\) consecutive gradient steps. For noisy homogeneous quadratics,
the control corrections need not vanish. We show instead that the second
moment at each local iterate is at least that of SGD with the same
initialization and local noise. A translation gives all examples below
the same initial point.

\begin{lemma}\label{nclb:lem:sgd}
For every constant stepsize and \(T=KR\ge2\), the supremum over
homogeneous instances is at least
\begin{equation}
 \frac18\left[\frac{L_{\mathrm{loc}}F_0}{T}
 +\min\left\{L_{\mathrm{loc}}F_0,
       \sigma\sqrt{\frac{L_{\mathrm{loc}}F_0}{T}}\right\}\right].
 \label{nclb:eq:sgd}
\end{equation}
\end{lemma}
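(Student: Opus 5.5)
Fix the stepsize \(\eta=\eta_\ell\), write \(L=L_{\mathrm{loc}}\) and \(T=KR\). The plan is to give three one-dimensional homogeneous quadratic instances, one for each of the three terms in \eqref{nclb:eq:sgd}. The supremum is then at least the maximum of the three bounds, hence at least a third of their sum, and the constants can be adjusted to reach \(1/8\). For homogeneous instances the expected gradients are identical across clients, so the similarity constant is zero and any selection rule is admissible. Without noise the controls are exact, the correction \(\grad f(x^{r-2})-\grad f_{i_r}(x^{r-2})\) vanishes, and the selected client's local iterates form one run of \(T\) gradient steps \(z_{t+1}=(1-\eta\lambda)z_t\) on \(f(x)=\lambda x^2/2\). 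The output \(\widehat x\) is uniform over \(z_0,\ldots,z_{T-1}\), so
\[
\E\norm{\grad f(\widehat x)}^2=\frac{\lambda^2 z_0^2}{T}\sum_{t<T}(1-\eta\lambda)^{2t}.
\]
We always set \(z_0^2=2F_0/\lambda\) so that the initial gap equals \(F_0\).

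\textbf{Deterministic instances.} If \(\eta L\ge 1/T\), take \(\lambda=\min\{L,1/(2\eta T)\}\). Then \(\eta\lambda T\le 1/2\), and Bernoulli's inequality gives \((1-\eta\lambda)^{2t}\ge 1/2\) for every \(t<T\), so the squared gradient is at least \(\lambda F_0\). If \(\lambda=L\) this is \(LF_0\ge LF_0/T\). Otherwise \(\lambda=1/(2\eta T)\ge L/(2T)\) once \(\eta L\le T\), and if \(\eta L>T\ge 2\) we switch to \(\lambda=L\), where the multiplier \(|1-\eta L|\ge1\) makes the squared gradient at least \(2LF_0\). If \(\eta L<1/T\), take \(\lambda=L\): the iterates barely move and the bound \(\ge LF_0/2\) follows directly. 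In every case the squared gradient is at least \(\tfrac12\min\{LF_0,\ \cdot\}\), which covers the \(LF_0/T\) term.

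\textbf{Noisy instances.} Use sampled losses \(\lambda x^2/2+\sigma\omega x\) with Rademacher \(\omega\), drawn independently for local steps and for controls. The local recursion becomes \(z_{t+1}=(1-\eta\lambda)z_t-\eta(q^r+\sigma\omega_t)\), where \(q^r\) is the centered control error of the selected client. The key claim is that \(\E z_t^2\) is at least the value obtained when \(q\equiv 0\). To prove it, condition on the history before the round's control batch; \(q^r\) is centered and independent of that history by the selection rule. Write \(z_t\) as an affine function of \(q^r\) plus independent noise. Expanding the square, the cross terms with \(q^r\) vanish by centering and leave a nonnegative added term. I expect this to be the main obstacle: the same \(q^r\) feeds into later rounds through \(x^r\), so the induction has to be set up on second moments across rounds, using independence of the next round's batch from \(q^r\). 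Once the claim holds, \(\E z_t^2\ge \eta^2\sigma^2\sum_{s<t}(1-\eta\lambda)^{2s}\).

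\textbf{Tuning the noisy instance.} For \(\eta L\le1\), choose \(\lambda=\min\{L,1/(\eta T)\}\). Then the noise alone gives \(\E\norm{\grad f(\widehat x)}^2\gtrsim \lambda\eta\sigma^2\), via \(\lambda^2\eta^2\sigma^2 t\) for \(t\lesssim 1/(\eta\lambda)\). Meanwhile the deterministic part, taken from the instance with the same \(\lambda\), gives \(\gtrsim \lambda F_0\min\{1,1/(\eta\lambda T)\}\). If \(\lambda=L\), these two pieces are \(L\eta\sigma^2\) and \(F_0/(\eta T)\), and minimizing \(F_0/(\eta T)+L\eta\sigma^2\) over \(\eta\) gives \(\sigma\sqrt{LF_0/T}\). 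If \(\lambda=1/(\eta T)\), the deterministic part is at least \(F_0/(\eta T)\) and the noise part is \(\sigma^2/T\). Comparing with \(\sigma\sqrt{LF_0/T}\) together with \(\eta L\le1\), the same case split shows that \(\min\{LF_0,\sigma\sqrt{LF_0/T}\}\) is attained up to a constant. Large stepsizes with \(\eta L>1\) are already covered by the deterministic instances, which give order \(LF_0\).
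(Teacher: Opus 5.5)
Your homogeneous construction (a flat noiseless quadratic plus a noisy quadratic, combined through the maximum over instances) and your second-moment comparison match the paper. The cross-round issue you flag is handled as you suggest: condition on $\mathcal H_r$, which fixes $x^{r-1}$ and $i_r$ but not the control batch, expand within the round, and induct on $\E (x^{r})^2$. The gap is in the tuning. You give the noisy instance curvature $\lambda=\min\{L,1/(\eta T)\}$ and pair it with the noiseless instance of the \emph{same} curvature.

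Whenever $1/T<\eta L\le1$, this forces $\lambda=1/(\eta T)$. The noise then contributes only $\lambda^2\eta^2\sigma^2T=\sigma^2/T$, and for $B=K$ the discarded control-error term is of the same order. So your instances certify only $O(F_0/(\eta T)+\sigma^2/T)$. Concretely, at $\eta L=1$ and $\sigma^2=LF_0$, every instance you build has $\E\norm{\grad f(\widehat x)}^2=O(LF_0/T)$, while the lemma demands order $LF_0/\sqrt T$. In the other branch, $\eta LT\le1$, the noise piece is $L^2\eta^2\sigma^2T$, not $L\eta\sigma^2$; that branch survives only because the noiseless part already gives order $LF_0$.

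Your heuristic $F_0/(\eta T)+L\eta\sigma^2\ge2\sigma\sqrt{LF_0/T}$ is the right one, but its two terms must come from \emph{different} instances. The bias comes from a flat noiseless quadratic of curvature $\min\{L,1/(2\eta T)\}$. The noise floor comes from a \emph{steep} noisy quadratic of curvature $L$, because the stationary value $\eta\lambda\sigma^2/(2-\eta\lambda)$ of $\lambda^2\E z^2$ increases with $\lambda$. The paper uses curvature $L$ with $v_0^2=F_0/L$, so the exact second-moment identity makes $\E|f'(v_t)|^2$ a weighted average of $LF_0$ and $q\sigma^2/(2-q)$, with $q=\eta L$. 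This gives $\frac12\min\{LF_0,q\sigma^2\}$ at every $t$, with no transient to handle, and then $\max\{LF_0/(qT),q\sigma^2\}\ge\sigma\sqrt{LF_0/T}$.

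Your large-step claim also fails. For $1<\eta L<2$, every noiseless quadratic of curvature at most $L$ contracts, so deterministic instances do not give order $LF_0$; at $\eta L=3/2$ they give only $O(LF_0/T)$. The paper covers $1\le q<2$ with the same curvature-$L$ noisy instance. The comparison holds for every $q>0$ because the retained coefficients are squares, so $\E|f'(v_t)|^2\ge q^2\sigma^2\ge\sigma^2$ for $t\ge1$. Since $T\ge2$, the average is at least $\sigma^2/2$, and $\max\{2LF_0/T,\sigma^2/2\}\ge\sigma\sqrt{LF_0/T}$. Only for $q\ge2$ does a noiseless instance give order $LF_0$.

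Two smaller slips in your deterministic paragraph are harmless. Bernoulli gives $(1-\eta\lambda)^{2t}\ge1/4$, not $1/2$. Also, $1/(2\eta T)\ge L/(2T)$ needs $\eta L\le1$, not $\eta L\le T$. Neither matters, because the $LF_0/T$ term holds for every stepsize from the initial iterate alone of $Lx^2/2$, which contributes $2LF_0/T$ to the uniform average.
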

\begin{proof}
Only in this proof, write \(L=L_{\mathrm{loc}}\),
\(\eta=\eta_\ell\), \(M=LF_0\), and \(q=\eta L\).
These scalar calculations specialize the quadratic arguments of
\citet[Supplement A.4--A.5]{drori2020} to our expected-output criterion.

\paragraph{Small-step example.}
Take \(f(u)=au^2/2\), where
\(a=\min\{L,1/(2\eta T)\}\), initialize \(u_0^2=2F_0/a\), and use
no noise. The iterates and squared gradients are
\[
 u_t=(1-\eta a)^t u_0,\qquad
 |f'(u_t)|^2=a^2u_0^2(1-\eta a)^{2t}
             =2aF_0(1-\eta a)^{2t}.
\]
For \(0\le t<T\), Bernoulli's inequality gives
\((1-\eta a)^t\ge1-t\eta a\ge1/2\). Consequently
\begin{equation}
 \frac1T\sum_{t=0}^{T-1}|f'(u_t)|^2
 \ge\frac{aF_0}{2}
 \ge\frac14\min\{M,M/(qT)\}.
 \label{nclb:eq:slow}
\end{equation}

\paragraph{Noise example.}
Give every client the fixed sampled losses
\[
 f_i(v;\omega)=\frac L2v^2+\sigma\omega v,\qquad
 \Pr(\omega=1)=\Pr(\omega=-1)=\frac12,
 \qquad v_0^2=F_0/L.
\]
All local and control samples are independent. The expected objective is
\(f(v)=Lv^2/2\), and the initial gap is \(F_0/2\). Denote the local
noise by \(\zeta_{i_r,k}^r=\sigma\omega_{i_r,k}^r\).
Write \(v_{(r-1)K+k}=y_{i_r,k}^r\) for \(0\le k<K\), and
\(v_{rK}=x^r\). In this homogeneous instance, write
\(e_r=c^{r-1}-c_{i_r}^{r-1}\) for the scalar control error in
\eqref{eq:nc-control-error}. Conditional on \(\mathcal H_r\), it is centered
and independent of the current local signs, including in the first round.
This is where the restriction on client selection is used.

Unrolling the scalar local updates, for \(0\le k\le K\), gives
\begin{equation}
\begin{aligned}
 v_{(r-1)K+k}={}&(1-q)^k v_{(r-1)K}
 -\eta\left(\sum_{j=0}^{k-1}(1-q)^j\right)e_r
 -\eta\sum_{j=0}^{k-1}(1-q)^{k-1-j}\zeta_{i_r,j}^r.
\end{aligned}
 \label{nclb:eq:batch-quadratic-update}
\end{equation}
The conditional cross terms vanish, so
\begin{equation}
\begin{aligned}
 \E[v_{(r-1)K+k}^2\mid\mathcal H_r]
 ={}&(1-q)^{2k}v_{(r-1)K}^2
 +\eta^2\left(\sum_{j=0}^{k-1}(1-q)^j\right)^2
             \E[(e_r)^2\mid\mathcal H_r]\\
 &+\eta^2\sigma^2\sum_{j=0}^{k-1}(1-q)^{2j}\\
 \ge{}&(1-q)^{2k}v_{(r-1)K}^2
       +\eta^2\sigma^2\sum_{j=0}^{k-1}(1-q)^{2j}.
\end{aligned}
 \label{nclb:eq:batch-second-moment}
\end{equation}
This inequality also holds at \(k=K\), the next server iterate.
Induction over rounds therefore yields, for every \(0\le t\le T\),
\begin{equation}
 \E v_t^2\ge(1-q)^{2t}v_0^2
       +\eta^2\sigma^2\sum_{j=0}^{t-1}(1-q)^{2j}.
 \label{nclb:eq:sgd-second-moment-comparison}
\end{equation}
The right-hand side is the SGD second moment. The comparison holds for
every \(q>0\), including when \(1-q\le0\), since the retained
coefficients are squares.

For \(0<q<1\), evaluating the geometric sum gives
\begin{equation}
 \E|f'(v_t)|^2\ge(1-q)^{2t}M+
  \bigl(1-(1-q)^{2t}\bigr)\frac{q\sigma^2}{2-q}
 \ge\frac12\min\{M,q\sigma^2\}.
 \label{nclb:eq:noise}
\end{equation}
The first expression is a weighted average of \(M\) and
\(q\sigma^2/(2-q)\), with nonnegative weights summing to one.
It is therefore at least their minimum; using \(1/(2-q)\ge1/2\)
gives the last inequality.
Unlike the high-probability argument in the source, this second-moment
calculation needs neither increasing dimension nor Gaussian concentration.
Rademacher noise has the required first two moments. For \(q\ge1\),
\eqref{nclb:eq:sgd-second-moment-comparison} gives
\(\E|f'(v_t)|^2\ge q^2\sigma^2\) for every \(t\ge1\), hence the
average is at least \(\sigma^2/2\).

\paragraph{All stepsizes.}
For \(q<1\), the maximum of the two example bounds is at least
\(M/(4T)\), and also at least
\[
 \frac14\min\{M,\max\{M/(qT),q\sigma^2\}\}
 \ge\frac14\min\{M,\sigma\sqrt{M/T}\}.
\]
Here the last inequality follows from
\[
 \max\{M/(qT),q\sigma^2\}
 \ge\sqrt{\frac{M}{qT}\,q\sigma^2}
 =\sigma\sqrt{M/T}.
\]
The maximum of these last two lower bounds is at least their half-sum,
which proves \eqref{nclb:eq:sgd}.
For \(1\le q<2\), a deterministic \(L\)-quadratic with initial gap
\(F_0\) contributes at least \(2M/T\), just from its initial iterate.
Together with the noisy example, the worst-case error is at least
\(\max\{2M/T,\sigma^2/2\}\ge\sigma\sqrt{M/T}\), and again implies
\eqref{nclb:eq:sgd}. For \(q\ge2\), that deterministic quadratic has
nondecreasing gradient magnitude, giving error at least \(2M\), which
also implies \eqref{nclb:eq:sgd}. These are separate admissible instances;
their maximum is a lower bound on the same supremum.
\end{proof}

\subsection{Communication instance: an alternating nonconvex chain}

\paragraph{Origin and modifications.}
We use the alternating-pair allocation of
\citet[Appendix B, equations (B.1)--(B.4), Theorem B.1]{patel2022communication}, based on
\citet{carmon2020}. We use the sum normalization for the global chain,
rescale client groups to handle every \(N\), and prove a support invariant
for the server and control updates of Algorithm~\ref{alg:scaffold-nonconvex}.
The elementary chain estimates are included to
make the normalization and smoothness constants explicit.

The two client groups act on alternating pairs of coordinates: one on
\((2,3),(4,5),\ldots\), and the other on \((1,2),(3,4),\ldots\).
The first group also has a term depending only on coordinate 1, which
starts the updates from zero. We show that after round \(r\), only the
first \(2r\) coordinates can be nonzero, however many local steps are
taken. Choosing \(d=2R+2\) therefore keeps the final coordinate zero
throughout the run. The chain below has gradient norm at least one
whenever that coordinate is zero; rescaling it gives the required
lower bound under the prescribed smoothness and initial gap.

For even \(d\), define
\[
 \psi(t)=\begin{cases}0,&t\le1/2,\\
 \exp\bigl(1-(2t-1)^{-2}\bigr),&t>1/2,
 \end{cases}
 \qquad
 \phi(t)=\sqrt e\int_{-\infty}^t e^{-s^2/2}\,ds,
\]
and \(b(u,v)=\psi(-u)\phi(-v)-\psi(u)\phi(v)\). Set
\begin{align}
 H_1(z)&=-\phi(z_1)+\sum_{j=1}^{d/2-1}b(z_{2j},z_{2j+1}),\notag\\
 H_2(z)&=\sum_{j=1}^{d/2}b(z_{2j-1},z_{2j}),\qquad H=H_1+H_2.
 \label{nclb:eq:chain}
\end{align}
Let \(E_m=\{z:z_j=0\text{ for }j>m\}\), with \(E_0=\{0\}\).
Thus \(E_m\) consists of points whose only possibly nonzero coordinates
are the first \(m\).

\begin{lemma}\label{nclb:lem:chain}
The functions in \eqref{nclb:eq:chain} are smooth and bounded, and satisfy:
\begin{enumerate}
\item \(H(0)-\inf H\le20d\).
\item Each \(H_j\) has gradient Lipschitz constant at most \(C=3000\).
\item If \(z_d=0\), then \(\norm{\grad H(z)}\ge1\).
\item For \(z\in E_m\), \(\grad H_j(z)\in E_{m+1}\) (with indices
truncated at \(d\)). Moreover, \(E_p\) is invariant under \(\grad H_1\)
when \(p\) is odd and under \(\grad H_2\) when \(p\) is even.
\end{enumerate}
\end{lemma}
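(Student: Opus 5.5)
The plan is to notice first that \(H=H_1+H_2=-\phi(z_1)+\sum_{i=1}^{d-1}b(z_i,z_{i+1})\), because the two groups together cover every consecutive pair exactly once. Hence \(H\) is the zero-chain of \citet{carmon2020}, and items 1 and 3 reduce to their estimates. I will take from their Lemma~2 the elementary facts \(0\le\psi\le e\), \(0\le\psi'\le\sqrt{54/e}\), \(0<\phi<\sqrt{2\pi e}\), \(0<\phi'\le\sqrt e\), and the uniform bounds on \(\psi''\) and \(\phi''\). I will also use that \(\psi\equiv0\) on \((-\infty,1/2]\), so \(\psi(0)=\psi'(0)=0\). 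Smoothness and boundedness of \(H_1,H_2\) follow immediately, since they are finite sums of products of these bounded \(C^\infty\) functions.

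\textbf{Items 1 and 2.} Since \(\psi(0)=0\) we get \(b(0,0)=0\), so \(H(0)=-\phi(0)\le0\). For the infimum, each \(b\) term is at least \(-\psi(u)\phi(v)\ge-e\sqrt{2\pi e}>-12\), and \(-\phi(z_1)>-\sqrt{2\pi e}>-5\). This gives \(H(0)-\inf H\le 5+12(d-1)\le20d\). For the smoothness bound, each \(H_j\) is a sum of functions of disjoint coordinate pairs. The extra term \(-\phi(z_1)\) in \(H_1\) acts on coordinate \(1\), which no pair of \(H_1\) touches. So \(\grad^2H_j\) is block diagonal with \(2\times2\) (or \(1\times1\)) blocks, and its operator norm is the maximum block norm. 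Each block entry is a product such as \(\psi''\phi\), \(\psi'\phi'\) or \(\psi\phi''\), bounded via the Lemma~2 constants. Bounding the block norm by the Frobenius norm gives a constant of order a few hundred, well below \(C=3000\).

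\textbf{Item 3.} I will follow \citet[Lemma~3]{carmon2020}. Since \(z_d=0\), there is a smallest index \(j\) with \(|z_j|<1\). The case \(j=1\) is handled by the \(-\phi(z_1)\) term together with the neighbouring \(b\) derivative. For \(j>1\), look at \(\partial_jH\). It contains \(-\psi(-z_{j-1})\phi'(-z_j)-\psi(z_{j-1})\phi'(z_j)\), in which \(|z_{j-1}|\ge1\) makes one \(\psi\) factor at least \(1\) and \(\phi'\ge\sqrt e\,e^{-1/2}=1\) on \(|t|<1\). It also contains the term \(-\psi'(\pm z_j)\phi(\pm z_{j+1})\) coming from the pair \((z_j,z_{j+1})\), which has the same sign. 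So the two contributions do not cancel and \(|\partial_jH|\ge1\). The sign bookkeeping is the main obstacle: one must split on the signs of \(z_{j-1}\) and \(z_j\). I would copy Carmon's case analysis verbatim, checking that our normalization of \(b\), which uses \(\psi(-u)\phi(-v)-\psi(u)\phi(v)\), matches theirs.

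\textbf{Item 4.} The derivatives of a pair term are
\[
\partial_ub=-\psi'(-u)\phi(-v)-\psi'(u)\phi(v),\qquad
\partial_vb=-\psi(-u)\phi'(-v)-\psi(u)\phi'(v).
\]
Both vanish when \(u=0\). Now take \(z\in E_m\). A pair \((z_i,z_{i+1})\) with \(i>m\) has \(u=0\) and contributes nothing. A pair with \(i\le m\) contributes only to coordinates \(i,i+1\le m+1\), and \(-\phi(z_1)\) contributes to coordinate \(1\). Hence \(\grad H_j(z)\in E_{m+1}\). For the invariance claim, note that leakage into coordinate \(p+1\) can only come from the pair \((z_p,z_{p+1})\). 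This pair belongs to \(H_1\) exactly when \(p\) is even and to \(H_2\) exactly when \(p\) is odd. So for \(p\) odd, \(H_1\) has no pair starting at \(p\), and since \(p\ge1\) its \(z_1\) term stays inside \(E_p\). For \(p\) even, \(H_2\) has no pair starting at \(p\), and it has no \(\phi\) term. Both invariances follow, including \(E_0\) under \(\grad H_2\).
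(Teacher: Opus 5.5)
Your route is the same as the paper's: you identify \(H\) with the Carmon et al.\ zero chain and check the four items link by link, using \(\psi(0)=\psi'(0)=0\), the block-diagonal structure of \(\grad^2H_j\), and the first index \(j\) with \(|z_j|<1\). Items 1, 3 and 4 go through as you wrote them.

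The gap is in item 2, which is the one place where an explicit constant is required. The only uniform bound on \(\psi''\) that Carmon et al.\ supply is their crude general-\(k\) estimate \(\sup_t|\psi^{(k)}(t)|\le\exp\bigl(\tfrac{5k}{2}\log(4k)\bigr)\), which equals \(8^5=32768\) at \(k=2\). With that constant, the entry \(\psi''(\pm u)\phi(\pm v)\) of a link Hessian is bounded only by about \(32768\sqrt{2\pi e}\approx1.35\times10^5\). So neither ``a few hundred'' nor \(C=3000\) follows from the constants you cite. You need to compute \(\psi''\) directly, as the paper does. With \(u=(2t-1)^{-1}>0\),
\[
 \psi'(t)=4eu^3e^{-u^2},\qquad \psi''(t)=e\,(16u^6-24u^4)\,e^{-u^2},
\]
and \(\sup_{u\ge0}u^me^{-u^2}=(m/(2e))^{m/2}\) gives \(|\psi''|\le432e^{-2}+96e^{-1}<100\). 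At most one of \(\psi^{(k)}(u),\psi^{(k)}(-u)\) is nonzero, and \(|\phi''(t)|=\sqrt e\,|t|e^{-t^2/2}\le1\). Hence the link entries satisfy \(|b_{uu}|<100\sqrt{2\pi e}<420\), \(|b_{uv}|\le\sqrt{54}<8\), and \(|b_{vv}|\le e<3\). Either the Frobenius or the max-row-sum bound then stays far below \(3000\); the paper uses the looser \(|\psi''|\le200\) and obtains \(2040\).

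A small simplification for item 3: you do not need a case split on the signs of \(z_{j-1}\) and \(z_j\). Since \(\psi,\psi'\ge0\) and \(\phi,\phi'>0\), every term of \(\partial_ub=-\psi'(-u)\phi(-v)-\psi'(u)\phi(v)\) and \(\partial_vb=-\psi(-u)\phi'(-v)-\psi(u)\phi'(v)\) is nonpositive for all arguments. With the convention \(z_0=1\) and \(\phi'\) even, this gives \(\partial_jH\le-\psi(|z_{j-1}|)\phi'(z_j)<-1\) directly, which is how the paper argues.
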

\begin{proof}
Both scalar functions are smooth, including the flat junction of \(\psi\).
Elementary bounds are
\[
 0\le\psi\le3,\quad 0\le\psi'\le10,\quad |\psi''|\le200,
 \qquad 0<\phi\le5,\quad 0<\phi'\le2,\quad |\phi''|\le2.
\]
For example, putting \(u=(2t-1)^{-1}>0\) gives
\(\psi'=4e u^3e^{-u^2}\) and
\(\psi''=e(16u^6-24u^4)e^{-u^2}\).
The bound \(\sup_{u\ge0}u^m e^{-u^2}=(m/(2e))^{m/2}\)
verifies the stated constants. Each link is at least \(-15\),
the unary term is at least \(-5\), and \(H(0)=-\phi(0)<0\).
This proves the gap bound and boundedness.

The absolute Hessian entries of one link obey
\[
 |b_{uu}|\le2000,\qquad |b_{uv}|\le40,\qquad |b_{vv}|\le12.
\]
The symmetric Hessian therefore has operator norm at most 2040, by its
maximum absolute row sum. Each \(H_j\) consists of disjoint two-coordinate
blocks and, for \(H_1\), a disjoint unary term. Thus 3000 bounds its
Hessian operator norm independently of dimension.

To prove the gradient claim, put \(z_0=1\), and choose the first
\(j\in\{1,\ldots,d\}\) with \(|z_j|<1\); it exists because \(z_d=0\).
Then \(|z_{j-1}|\ge1\). The contribution to \(\partial_jH\) from the
preceding link is
\[
 -\psi(-z_{j-1})\phi'(-z_j)-\psi(z_{j-1})\phi'(z_j)\le-1,
\]
since \(\psi(t)\ge1\) for \(t\ge1\) and \(\phi'(t)\ge1\) for
\(|t|\le1\). For \(j=1\), the unary derivative \(-\phi'(z_1)\le-1\)
gives the same conclusion. The contribution from the following link,
when present, is
\(-\psi'(-z_j)\phi(-z_{j+1})-\psi'(z_j)\phi(z_{j+1})\le0\).
Hence \(\partial_jH\le-1\).

Finally \(\psi(0)=\psi'(0)=0\), so a pair of zero coordinates has zero
gradient. A prefix can activate at most its adjacent coordinate.
The pairs of \(H_1\) end at odd indices and those of \(H_2\) at even
indices, proving the two stronger invariances.
\end{proof}

\begin{lemma}\label{nclb:lem:communication}
For \(0<\bar\delta\le2L_{\mathrm{loc}}\), every \(N\ge2\), every nonempty
\(S\), every positive stepsize, and every permitted selection rule, there
is a deterministic instance in \(\mathcal P_{\mathrm{nc}}\) for which
\begin{equation}
 \norm{\grad f(y_{i_r,k}^r)}^2
 \ge\frac{\bar\delta F_0}{480CR},\qquad C=3000,
 \quad 1\le r\le R,\quad 0\le k\le K.
 \label{nclb:eq:communication}
\end{equation}
The inequality holds for every realization of the selection randomness.
\end{lemma}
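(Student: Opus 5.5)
We build the instance from the chain \(H=H_1+H_2\) of Lemma~\ref{nclb:lem:chain} with \(d=2R+2\), assigning the two halves to two client groups. Split the clients into groups \(\mathcal G_1,\mathcal G_2\) and give group \(j\) the objective \(f_i=\lambda^2\beta\,\omega_j H_j(z/\lambda)\). The weights \(\omega_j>0\) satisfy \(\frac1N\sum_i f_i=\lambda^2\beta H(z/\lambda)\). Concretely, with \(|\mathcal G_j|=N_j\) we take \(\omega_j=N/(2N_j)\), so \(f=\tfrac12\lambda^2\beta H(\cdot/\lambda)\) after absorbing the factor \(\tfrac12\). Since \(N_1,N_2\ge1\) and \(N\ge2\), the weights are at most \(N/2\). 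This blows up smoothness, so instead we put the chain on a small curvature scale and add a common quadratic-free part.

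\textbf{Normalization.} The centered part of each client is \(f_i-f=\lambda^2\beta(\omega_jH_j-\tfrac12H)(\cdot/\lambda)\), whose gradient has Lipschitz constant at most \(\beta(\omega_j+1)C\). Smoothness of each \(f_i\) is at most \(\beta\omega_jC\). To keep both below \(\bar\delta\le2L_{\mathrm{loc}}\) uniformly in \(N\), we group the clients so the weights are \(O(1)\): put \(\lceil N/2\rceil\) clients in one group and \(\lfloor N/2\rfloor\ge1\) in the other, giving \(\omega_j\le1\) up to constant factors. We then set \(\beta=\bar\delta/(cC)\) for a small absolute \(c\), which yields \(\bar\delta\)-uniform similarity and \(L_{\mathrm{loc}}\)-smoothness.

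Next, choose \(\lambda\) so that the initial gap \(\lambda^2\beta\cdot 20d\) equals \(F_0\), that is, \(\lambda^2=F_0/(20d\beta)\). Whenever the last coordinate vanishes, item~3 gives
\[
\norm{\grad f}^2\ge\lambda^2\beta^2/4\gtrsim\beta F_0/d\asymp\bar\delta F_0/(CR),
\]
with the constant \(480\) emerging from \(d=2R+2\le4R\) and the factors above. We start at \(x^0=0\).

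\textbf{Support invariant.} This is the main step. By induction on \(r\), the claim is that \(x^{r-1},x^{r-2}\in E_{2r-2}\) and all controls \(c_i^{r-1}\) lie in \(E_{2r-1}\). Controls are exact gradients, since the instance is deterministic, evaluated at \(x^{r-2}\in E_{2r-2}\), so they lie in \(E_{2r-1}\) by item~4.

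The selected client's direction is \(\grad f_{i_r}(y)-\grad f_{i_r}(x^{r-2})+\grad f(x^{r-2})\), and we must show every local iterate stays in \(E_{2r}\) whatever the stepsize or \(K\). If the client belongs to the group whose invariant subspace is \(E_{2r}\) (even index, so \(H_2\)), then \(\grad f_{i_r}\) maps \(E_{2r}\) into itself, and the correction terms lie in \(E_{2r-1}\subset E_{2r}\). Hence \(E_{2r}\) is invariant. If the client belongs to the \(H_1\) group, then \(E_{2r-1}\) is invariant under \(\grad H_1\). The correction lies in \(E_{2r-1}\), and the start point lies in \(E_{2r-2}\subset E_{2r-1}\), so the iterates stay in \(E_{2r-1}\subset E_{2r}\).

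In either case \(x^r=y_{i_r,K}^r\in E_{2r}\). The delicate point is checking that the stale anchor \(x^{r-2}\), rather than \(x^{r-1}\), does not break the parity argument; this works because \(x^{r-2}\in E_{2r-4}\subset E_{2r-2}\). Since \(2R<d\), the last coordinate stays zero at every \(y^r_{i_r,k}\) for \(k\le K\), regardless of selection, which gives \eqref{nclb:eq:communication}.
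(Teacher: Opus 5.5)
Your proposal is correct and is essentially the paper's proof: the same two-group allocation of the alternating chain with $d=2R+2$, weights $N/(2N_j)$ (the paper's $N/n_j$ up to your absorbed factor $\tfrac12$), rescaling, and the parity-based invariant placing every local iterate of round $r$ in $E_{2r}$, with the stale anchor handled by $x^{r-2}\in E_{2r-4}$. The only slip is constant bookkeeping: since your $f=\tfrac12\lambda^2\beta H(\cdot/\lambda)$, reaching the stated $480$ requires the specific constant $c=3$ (large enough that $c\ge 2\omega_j$, not a ``small'' one) and $\lambda^2=F_0/(10d\beta)$, whereas your stated $\lambda^2=F_0/(20d\beta)$ gives $960$.
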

\begin{proof}
Choose \(d=2R+2\). Partition the clients into two nonempty groups of
sizes \(n_1=\lfloor N/2\rfloor\) and \(n_2=N-n_1\). Define
\[
 a=\frac{F_0}{20d},\qquad b_0^2=\frac{6Ca}{\bar\delta},\qquad
 f_i(x)=\frac{N}{n_j}\,aH_j(x/b_0)
 \quad\text{for a client in group }j.
\]
Then \(f=aH(x/b_0)\) and \(N/n_j\le3\). By Lemma~\ref{nclb:lem:chain},
\[
 \operatorname{Lip}(\grad f_i)\le\frac{3Ca}{b_0^2}
 =\frac{\bar\delta}2\le L_{\mathrm{loc}},\qquad
 \operatorname{Lip}(\grad f)\le\frac{2Ca}{b_0^2}=\frac{\bar\delta}3.
\]
The gradient of every \(f_i-f\) is therefore Lipschitz with constant at
most \(5\bar\delta/6\le\bar\delta\). Boundedness and the initial gap condition
follow from the same lemma. Set \(x^0=0\) and take every sampled loss to
equal its expected local objective. Every local gradient and control
batch, including at initialization, is then exact for every \(B\ge1\).

We prove that all local iterates in round \(r\) lie in \(E_{2r}\).
Initially every control is supported in \(E_1\). In general the correction
used in round \(r\) is
\[
 -c_{i_r}^{r-1}+c^{r-1}
 =-\grad f_{i_r}(x^{r-2})+\grad f(x^{r-2}).
\]
For \(r=1\) use \(x^{-1}=0\). For \(r\ge2\), the induction hypothesis
gives \(x^{r-2}\in E_{2r-4}\), so this correction lies in
\(E_{2r-3}\). In both cases it lies in \(E_{2r-1}\), while the starting
point lies in \(E_{2r-2}\).

To see why this restricts every local step, if \(y\), \(\grad f_i(y)\),
and the correction all belong to \(E_p\), then
\[
 y-\eta_\ell\bigl(\grad f_i(y)-c_i^{r-1}+c^{r-1}\bigr)\in E_p,
\]
because \(E_p\) is a linear subspace. For a group-1 client,
Lemma~\ref{nclb:lem:chain} ensures that \(y\in E_{2r-1}\) implies
\(\grad f_i(y)\in E_{2r-1}\). This subspace contains the correction
and starting point, so repeated application of the update keeps all
\(K\) local iterates there. For a group-2 client, the same argument
applies to \(E_{2r}\). Scaling the functions and coordinates preserves
these zero coordinates. Thus every choice yields local iterates in \(E_{2r}\),
including the endpoint. This proves the induction, for arbitrary \(K\)
and arbitrary selection sequences. It also covers singleton eligible sets.

All these points have last coordinate zero because \(2r\le2R<d=2R+2\).
The gradient bound therefore holds at every local iterate from which the
output is sampled, as well as at each server endpoint.
Lemma~\ref{nclb:lem:chain} and \(\grad f(x)=(a/b_0)\grad H(x/b_0)\) give
\[
 \norm{\grad f(y_{i_r,k}^r)}^2
 \ge\frac{a^2}{b_0^2}
 =\frac{\bar\delta F_0}{120Cd}
 \ge\frac{\bar\delta F_0}{480CR}.
\]
The client weighting handles odd and even \(N\) in the same proof.
The instance is deterministic, and so satisfies every prescribed variance
upper bound. No oracle samples or choices enter its definition.
\end{proof}

\subsection{Combination and matching upper bound}

\begin{proof}[Proof of Theorem~\ref{thm:nc-lower}]
Fix a stepsize and selection rule. For \(RK\ge2\),
Lemma~\ref{nclb:lem:sgd} lower bounds the worst-case error by
\[
 \frac18\left[\frac{L_{\mathrm{loc}}F_0}{KR}
 +\min\left\{L_{\mathrm{loc}}F_0,
 \sigma\sqrt{\frac{L_{\mathrm{loc}}F_0}{KR}}\right\}\right].
\]
Lemma~\ref{nclb:lem:communication} lower bounds the same supremum by
\(\bar\delta F_0/(480CR)\); for \(\bar\delta=0\) this term is absent.
The maximum of the two bounds is at least half their sum after using
the smaller coefficient. Since \(a+\min\{m,b\}\ge\min\{m,a+b\}\)
for nonnegative \(a,b,m\), this proves \eqref{eq:nc-lower} with
\(c=1/(960C)\).
All constructions use either a noiseless oracle or a fixed quadratic
with an iid random linear perturbation. Taking the infimum over positive
constant stepsizes preserves this lower bound.
\end{proof}

\subsection{Specialization of the upper bound and communication complexity}
\label{app:nc-local-tuning}
Apply Corollary~\ref{cor:nc-tuned} with \(L_g=L_{\mathrm{loc}}\) and
\(\bar\delta_S=\bar\delta\). Its stepsize is admissible when these are upper
bounds on the actual constants. It gives
\[
 \E\norm{\grad f(\widehat x)}^2\le
 \frac{8L_{\mathrm{loc}}F_0}{RK}+\frac{16\bar\delta F_0}{R}
 +16\sigma\sqrt{\frac{(L_{\mathrm{loc}}+\bar\delta/8)F_0}{RK}}.
\]
Since \(\bar\delta\le2L_{\mathrm{loc}}\), the last coefficient is at most
\(8\sqrt5<18\) times the square-root term in
\eqref{eq:nc-local-rate}. We next obtain the other branch of the minimum.

\begin{lemma}[A small-stepsize upper bound]
\label{lem:nc-initial-cap}
Suppose the local objectives are \(L_{\mathrm{loc}}\)-smooth, the initial
objective gap is at most \(F_0>0\), and the stochastic oracle satisfies
Assumption~\ref{ass:stochastic-oracle}. For Algorithm~\ref{alg:scaffold-nonconvex}
with any permitted selection rule, \(R,K,B\ge1\), the constant stepsize
\begin{equation}
 \eta_\ell=
 \frac{\sqrt{2L_{\mathrm{loc}}F_0}}
 {8L_{\mathrm{loc}}KR(\sqrt{2L_{\mathrm{loc}}F_0}+2\sigma)}
 \label{eq:nc-small-stepsize}
\end{equation}
gives \(\E\norm{\grad f(\widehat x)}^2\le4L_{\mathrm{loc}}F_0\).
\end{lemma}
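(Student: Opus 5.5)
The plan mirrors Lemma~\ref{lem:combined-initial-cap}. We show that the tiny stepsize \eqref{eq:nc-small-stepsize} keeps every local iterate close to \(x^0\) in root-mean-square sense, and then use smoothness to bound the gradient. Write \(L=L_{\mathrm{loc}}\). Local smoothness gives each \(f_i\) and \(f\) the constant \(L\). Boundedness below of \(f\) gives the standard estimate
\[
 \norm{\grad f(x^0)}^2\le 2L(f(x^0)-f_{\inf})\le 2LF_0,
\]
so \(g_0\coloneqq\norm{\grad f(x^0)}\le\sqrt{2LF_0}\). We do not control \(\grad f_i(x^0)\) individually. However, the corrected direction \(\grad f_i(y)-\grad f_i(a)+\grad f(a)\), with anchor \(a=x^{r-2}\), only involves gradient differences plus \(\grad f(a)\). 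Hence, pathwise,
\[
 \norm{\grad f_i(y)-\grad f_i(a)+\grad f(a)}\le L\norm{y-a}+L\norm{a-x^0}+g_0\le L\norm{y-x^0}+2L\norm{a-x^0}+g_0.
\]

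Next we run the same synchronous induction as in Lemma~\ref{lem:combined-initial-cap}, now measured relative to \(x^0\) rather than \(x^\star\). Let \(m_t\) bound \((\E\norm{p-x^0}^2)^{1/2}\) for every point \(p\) generated through stage \(t=(r-1)K+k\); this includes the selected client's local iterates, the server iterates \(x^r=y^r_{i_r,K}\), and the anchors. Only the selected client moves, and the server takes its endpoint, so no averaging step is even needed. The update also contains the control error \(q^r\), with second moment at most \(\sigma^2/B\) by \eqref{eq:nc-control-moments}, and the local noise, with second moment at most \(\sigma^2\). The \(L^2\) triangle inequality then gives
\[
 m_{t+1}\le(1+3\eta_\ell L)m_t+\eta_\ell(g_0+2\sigma),
 \qquad m_0=0.
\]
With \(T=KR\), solving this recursion yields \(m_T\le (e^{3\eta_\ell LT}-1)(g_0+2\sigma)/(3L)\). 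The choice of \(\eta_\ell\) gives \(\eta_\ell LT\le 1/8\), so \(e^{3/8}-1\le 2\cdot 3/8\) and hence \(m_T\le 2T\eta_\ell(g_0+2\sigma)\). The stepsize is designed so that
\[
 T\eta_\ell(\sqrt{2LF_0}+2\sigma)=\frac{\sqrt{2LF_0}}{8L},
\]
and since \(g_0\le\sqrt{2LF_0}\), this gives \(m_T\le \sqrt{2LF_0}/(4L)\).

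Finally, for any local iterate \(y\), smoothness gives \(\norm{\grad f(y)}\le g_0+L\norm{y-x^0}\). Taking second moments and using \((a+b)^2\le 2a^2+2b^2\),
\[
 \E\norm{\grad f(y)}^2\le 2g_0^2+2L^2m_T^2\le 4LF_0+\tfrac14 LF_0.
\]
This slightly exceeds the target, so the constants need tightening. The right tool is Minkowski's inequality:
\[
 (\E\norm{\grad f(y)}^2)^{1/2}\le g_0+Lm_T\le \sqrt{2LF_0}\,(1+\tfrac14),
\]
which gives \(\E\norm{\grad f(y)}^2\le\tfrac{25}{16}\cdot 2LF_0<4LF_0\). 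The output \(\widehat x\) is a uniform mixture of local iterates by \eqref{eq:nc-output}, so averaging this bound finishes the proof.

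The main obstacle is the selection rule. The index \(i_r\) may depend on the history, so the moments of \(q^r\) must be taken conditionally on \(\mathcal H_r\). Since the bounds hold conditionally and the triangle-inequality argument never uses independence, the induction goes through unchanged. A second point to check is that the large individual gradients \(\grad f_i(x^0)\) genuinely cancel in the corrected direction; this cancellation is exactly what the pathwise estimate in the first paragraph records.
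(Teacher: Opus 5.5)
Your proposal is correct and follows essentially the same argument as the paper: the pathwise bound on the corrected direction using $\norm{\grad f(x^0)}\le\sqrt{2L_{\mathrm{loc}}F_0}$, the scalar recursion $m_{t+1}=(1+3\eta_\ell L_{\mathrm{loc}})m_t+\eta_\ell(\sqrt{2L_{\mathrm{loc}}F_0}+2\sigma)$ for the root-mean-square distance from $x^0$, the estimate $e^v-1\le 2v$ giving $m_T\le\sqrt{2L_{\mathrm{loc}}F_0}/(4L_{\mathrm{loc}})$, and Minkowski's inequality giving $\tfrac{25}{8}L_{\mathrm{loc}}F_0\le 4L_{\mathrm{loc}}F_0$. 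Your discarded first attempt with $(a+b)^2\le 2a^2+2b^2$ is only a detour, and the argument you end with is the paper's.
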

\begin{proof}
Write \(L=L_{\mathrm{loc}}\), \(T=KR\), and
\(G=\sqrt{2LF_0}\). Smoothness and the initial gap give
\(\norm{\grad f(x^0)}\le G\). For a selected local iterate \(y\)
and its control evaluation point \(w\), the exact corrected direction
satisfies
\[
 \norm{\grad f_i(y)-\grad f_i(w)+\grad f(w)}
 \le G+L\norm{y-x^0}+2L\norm{w-x^0}.
\]
The local noise has second moment at most \(\sigma^2\), and
\eqref{eq:nc-control-moments} bounds the control error's second moment
by \(\sigma^2/B\). Define the deterministic sequence
\[
 m_0=0,\qquad
 m_{t+1}=(1+3\eta_\ell L)m_t+\eta_\ell(G+2\sigma).
\]
By induction and the triangle inequality in mean square, \(m_t\)
bounds the root-mean-square distance from \(x^0\) of every iterate
reached in the first \(t\) local updates. The control evaluation point
is a preceding server iterate, so its distance is covered by the same
bound. This argument does not require independence between the current
iterate and its control error. Solving the recursion yields
\[
 m_T=\frac{G+2\sigma}{3L}
       \bigl((1+3\eta_\ell L)^T-1\bigr).
\]
The choice \eqref{eq:nc-small-stepsize} gives
\(3\eta_\ell LT\le3/8\). Using \(e^v-1\le2v\) for
\(0\le v\le3/8\), we obtain
\[
 m_T\le2\eta_\ell T(G+2\sigma)=\frac{G}{4L}.
\]
Since the output is sampled from these local iterates and \(f\) is
\(L\)-smooth,
\[
 \sqrt{\E\norm{\grad f(\widehat x)}^2}
 \le G+Lm_T\le\frac54G.
\]
Squaring gives \(\E\norm{\grad f(\widehat x)}^2
\le25LF_0/8\le4LF_0\).
\end{proof}

For \(B=K\), use \eqref{eq:nc-tuned-step} with
\(L_g=L_{\mathrm{loc}}\) and \(\bar\delta_S=\bar\delta\) when
\[
 \left(\frac{L_{\mathrm{loc}}}{K}+\bar\delta\right)\frac{F_0}{R}
 +\sigma\sqrt{\frac{L_{\mathrm{loc}}F_0}{KR}}
 \le L_{\mathrm{loc}}F_0,
\]
and use \eqref{eq:nc-small-stepsize} otherwise. The preceding estimates
then prove \eqref{eq:nc-local-rate}, with universal constant 18.

\begin{corollary}[Communication rounds]\label{nclb:cor:rounds}
Let \(0<\varepsilon<L_{\mathrm{loc}}F_0/16\). For the prescribed method
to guarantee \(\E\norm{\grad f(\widehat x)}^2\le\varepsilon\)
uniformly over \(\mathcal P_{\mathrm{nc}}\), it is necessary that
\[
 R\ge\max\left\{
 \frac{L_{\mathrm{loc}}F_0}{8K\varepsilon},
 \frac{\bar\delta F_0}{480C\varepsilon},
 \frac{L_{\mathrm{loc}}F_0\sigma^2}{64K\varepsilon^2}
 \right\},\qquad C=3000.
\]
Conversely, the stepsize \eqref{eq:nc-tuned-step}, specialized to
\(L_g=L_{\mathrm{loc}}\) and \(\bar\delta_S=\bar\delta\), guarantees
this accuracy for
\[
 R=\left\lceil\max\left\{
 \frac{24L_{\mathrm{loc}}F_0}{K\varepsilon},
 \frac{48\bar\delta F_0}{\varepsilon},
 \frac{2916L_{\mathrm{loc}}F_0\sigma^2}{K\varepsilon^2}
 \right\}\right\rceil.
\]
Thus, up to universal factors and integer rounding, the necessary and
sufficient round count has scale
\[
 \left(\frac{L_{\mathrm{loc}}}{K}+\bar\delta\right)\frac{F_0}{\varepsilon}
 +\frac{L_{\mathrm{loc}}F_0\sigma^2}{K\varepsilon^2}.
\]
The initial control aggregation is separate from these optimization rounds.
\end{corollary}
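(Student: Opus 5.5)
The corollary has two directions, a necessity statement and a sufficiency statement, and I will treat them separately.

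For the necessity direction I invert the lower bounds of Lemma~\ref{nclb:lem:sgd} and Lemma~\ref{nclb:lem:communication}; each holds for every positive constant stepsize and every permitted selection rule. Fix the stepsize and suppose the method guarantees accuracy \(\varepsilon\) uniformly over \(\mathcal P_{\mathrm{nc}}\).
\begin{itemize}
\item \emph{Deterministic term.} Lemma~\ref{nclb:lem:sgd} gives \(\frac18 L_{\mathrm{loc}}F_0/(KR)\le\varepsilon\), so \(R\ge L_{\mathrm{loc}}F_0/(8K\varepsilon)\). The case \(RK=1\) does not arise: there the lemma's hypothesis \(T\ge2\) fails, but the \(k=0\) iterate alone forces error at least \(2L_{\mathrm{loc}}F_0>\varepsilon\) for a quadratic with gap \(F_0\), which excludes it.
\item \emph{Similarity term.} Lemma~\ref{nclb:lem:communication} holds pathwise at every sampled local iterate, so \(\bar\delta F_0/(480CR)\le\varepsilon\). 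This gives the second bound when \(\bar\delta>0\); it is trivial when \(\bar\delta=0\).
\item \emph{Noise term.} Again from Lemma~\ref{nclb:lem:sgd},
\[
\tfrac18\min\{L_{\mathrm{loc}}F_0,\sigma\sqrt{L_{\mathrm{loc}}F_0/(KR)}\}\le\varepsilon .
\]
Since \(\varepsilon<L_{\mathrm{loc}}F_0/16\), i.e.\ \(8\varepsilon< L_{\mathrm{loc}}F_0/2\), the minimum cannot be attained by \(L_{\mathrm{loc}}F_0\). Hence \(\sigma\sqrt{L_{\mathrm{loc}}F_0/(KR)}\le8\varepsilon\), and squaring gives \(R\ge L_{\mathrm{loc}}F_0\sigma^2/(64K\varepsilon^2)\).
\end{itemize}
This is exactly where the hypothesis \(\varepsilon<L_{\mathrm{loc}}F_0/16\) is needed, to rule out the branch \(L_{\mathrm{loc}}F_0\). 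Taking the maximum of the three bounds gives the necessity statement.

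For the sufficiency direction I apply the bound displayed at the start of Appendix~\ref{app:nc-local-tuning}, with \(L_g=L_{\mathrm{loc}}\) and \(\bar\delta_S=\bar\delta\):
\[
\frac{8L_{\mathrm{loc}}F_0}{RK}+\frac{16\bar\delta F_0}{R}+16\sigma\sqrt{\frac{(L_{\mathrm{loc}}+\bar\delta/8)F_0}{RK}} .
\]
I make each of the three terms at most \(\varepsilon/3\).
\begin{itemize}
\item The first two terms give \(R\ge 24L_{\mathrm{loc}}F_0/(K\varepsilon)\) and \(R\ge48\bar\delta F_0/\varepsilon\).
\item For the third, use \(\bar\delta\le2L_{\mathrm{loc}}\), so \(L_{\mathrm{loc}}+\bar\delta/8\le\frac54L_{\mathrm{loc}}\). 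It then suffices that
\[
256\cdot\tfrac54\cdot 9\, L_{\mathrm{loc}}F_0\sigma^2/(RK)\le\varepsilon^2,
\qquad\text{i.e.}\qquad R\ge 2880\,L_{\mathrm{loc}}F_0\sigma^2/(K\varepsilon^2).
\]
This is covered by the constant \(2916=54^2\), which corresponds to the cruder factor \(\sqrt{L_{\mathrm{loc}}+\bar\delta/8}\le\frac{9}{8}\sqrt{L_{\mathrm{loc}}}\cdot\ldots\); either way the displayed constant suffices.
\end{itemize}
The ceiling only increases \(R\), and every term decreases in \(R\), so the displayed round count achieves the target accuracy.

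Comparing the two maxima term by term, they agree up to universal factors, which gives the stated scale. The only real obstacle is the branch elimination in the noise term of the necessity direction; the rest is constant bookkeeping.
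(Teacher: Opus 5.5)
Your proposal is correct and follows the paper's proof essentially step for step. For necessity you invert Lemma~\ref{nclb:lem:sgd} and Lemma~\ref{nclb:lem:communication}, excluding \(RK=1\) via the quadratic with squared gradient \(2L_{\mathrm{loc}}F_0\) and ruling out the \(L_{\mathrm{loc}}F_0\) branch of the minimum via \(8\varepsilon<L_{\mathrm{loc}}F_0\); for sufficiency you make each of the three terms of the specialized tuned bound at most \(\varepsilon/3\). The only difference is cosmetic: you keep the coefficient \(16\sqrt{5/4}\) and obtain the threshold \(2880\), whereas the paper relaxes that coefficient to \(18\), which is where \(2916=(3\cdot 18)^2\) comes from.
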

\begin{proof}
If \(RK=1\), the returned point is \(x^0\), and a homogeneous quadratic
can give squared gradient \(2L_{\mathrm{loc}}F_0>\varepsilon\).
Otherwise apply Lemma~\ref{nclb:lem:sgd}.
Its first term gives the first necessary inequality. The minimum in
\eqref{nclb:eq:sgd}, together with \(8\varepsilon<L_{\mathrm{loc}}F_0\), implies
\(\sigma\sqrt{L_{\mathrm{loc}}F_0/(KR)}\le8\varepsilon\), giving
the third. Lemma~\ref{nclb:lem:communication} gives the second.
For sufficiency, the three terms of the tuned upper bound, with stochastic
coefficient relaxed to 18, are each at most \(\varepsilon/3\).
\end{proof}

\section{Partial participation}
\label{app:partial-participation}

We extend the deterministic analysis to uniform client sampling.  The
upper bound recovers \eqref{eq:deterministic-rate} under full participation.
For strict partial participation, a lower-bound construction shows that
a term of order \(L_{\mathrm{loc}}D_0^2/R\), with a participation-dependent
coefficient, is necessary even for identical client objectives.
Throughout this section, gradients are exact and expectations are over
client sampling only.

\subsection{Algorithm and convergence guarantee}

Let \(N\ge2\), \(K\ge1\), and \(1\le m\le N\).  We modify
Algorithm~\ref{alg:scaffold-option-i} by sampling, independently in each
round \(r\), a uniform subset \(\mathcal S_r\subseteq[N]\) of size \(m\).
In this deterministic setting, initialize \(c_i^0=\grad f_i(x^0)\) and
\(c^0=N^{-1}\sum_i c_i^0\).  Selected clients start at
\(y_{i,0}^r=x^{r-1}\) and perform the exact-gradient updates
\begin{equation}
 y_{i,k+1}^r=y_{i,k}^r-\eta_\ell
 \bigl(\grad f_i(y_{i,k}^r)-c_i^{r-1}+c^{r-1}\bigr),
 \qquad k=0,\ldots,K-1.
 \label{eq:pp-local-update}
\end{equation}
The server and controls are updated by
\begin{align}
 x^r&=x^{r-1}+\frac{\gamma}{m}
       \sum_{i\in\mathcal S_r}(y_{i,K}^r-x^{r-1}),
 \label{eq:pp-server-update}\\
 c_i^r&=\begin{cases}
       \grad f_i(x^{r-1}),&i\in\mathcal S_r,\\
       c_i^{r-1},&i\notin\mathcal S_r,
       \end{cases}
 \qquad c^r=\frac1N\sum_{i=1}^N c_i^r.
 \label{eq:pp-global-control-update}
\end{align}
In particular, the global control averages all stored controls, including
those of unselected clients.  We choose the server stepsize as
\begin{equation}
 p\coloneqq\frac mN,\qquad
 \rho\coloneqq\frac{N-m}{m(N-1)},\qquad
 \eta_g=\gamma\coloneqq\frac1{1+\rho}\in[1/2,1].
 \label{eq:pp-sampling-parameters}
\end{equation}
The coefficient \(\rho\) is the variance factor for uniform sampling
without replacement.  Retain \(\bar x^R\), \(\Delta_R\), \(D_0\), and
\(\alpha=K\eta_\ell\) from the main text, and set
\begin{equation}
 L_{\mathrm{inc}}^2\coloneqq
 \min\{L_{\mathrm{loc}}^2,L_g^2+\delta^2\},\qquad
 \Lambda_p\coloneqq\frac{2(1-p)(2-p)}{p^2}.
 \label{eq:pp-Linc}
\end{equation}
Here \(L_g\le L_{\mathrm{loc}}\) is a smoothness constant of \(f\);
one may always take \(L_g=L_{\mathrm{loc}}\).

\begin{theorem}[Partial participation]
\label{thm:pp-main}
Suppose Assumptions~\ref{ass:convex-smooth} and~\ref{ass:similarity}
hold.  Run \eqref{eq:pp-local-update}--\eqref{eq:pp-global-control-update}
with the initialization and sampling above.  Choose
\begin{equation}
 \alpha^{-1}=\max\left\{
 \frac{L_{\mathrm{loc}}}{K},\;8\sqrt2\,\delta,\;
 8L_{\mathrm{inc}}\frac{\sqrt{(1-p)(2-p)}}p\right\},
 \qquad \eta_\ell=\frac\alpha K.
 \label{eq:pp-chosen-stepsize}
\end{equation}
Then, for every \(R\ge1\),
\begin{equation}
 \Delta_R=\mathcal{O}\!\left(
 \max\left\{\frac{L_{\mathrm{loc}}}{K},\delta,
 L_{\mathrm{inc}}\frac{\sqrt{(1-p)(2-p)}}p\right\}
 \frac{D_0^2}{R}\right).
 \label{eq:pp-main-rate}
\end{equation}
\end{theorem}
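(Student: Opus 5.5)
The plan is to reduce Theorem~\ref{thm:pp-main} to the server-reduction machinery of Lemma~\ref{lem:app-server-reduction}, with a modified one-round comparison. I would establish a template like \eqref{eq:app-joint-endpoint-template} with \(Q=0\), taken in expectation over the sampling in round \(r\). The staleness term, however, must now be replaced by a Lyapunov term that tracks how outdated the stored controls are. Define the control-lag potential
\[
 \Psi_r\coloneqq\frac1N\sum_{i=1}^N\norm{c_i^r-\grad f_i(x^r)+\grad f(x^r)-\bar c^r}^2,
\]
where \(\bar c^r\) is chosen so that the correction error is centered. An alternative I would try first is simply \(\frac1N\sum_i\norm{w_i^r-x^r}^2\), with \(w_i^r\) the point at which client \(i\)'s control was last evaluated. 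In that version the correction error of client \(i\) is \(\grad h_i(x^{r-1})-\grad h_i(w_i^{r-1})\) up to the global-control mismatch, and this error is bounded by \(L_{\mathrm{inc}}\norm{x^{r-1}-w_i^{r-1}}\) via either local smoothness or \(L_g+\delta\).

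\textbf{Step 1 (local endpoint bound).} For each selected client I would apply the deterministic version (\(\sigma=0\)) of Lemma~\ref{lem:app-refined-endpoint} to the convex corrected objective anchored at \(w_i^{r-1}\). This uses \(\eta_\ell L_{\mathrm{loc}}\le1\) and gives
\[
 2\alpha[\widetilde f_i(y_{i,K})-\widetilde f_i(u)]\le\norm{x^{r-1}-u}^2-\norm{y_{i,K}-u}^2 .
\]
Because anchors now differ across clients, the corrected objectives no longer average to \(f\). The global control also includes unselected stored controls. I would therefore write the correction as the exact one at a common anchor plus a linear perturbation \(\ip{\epsilon_i}{\cdot}\), where \(\norm{\epsilon_i}\) is controlled by \(L_{\mathrm{inc}}\) times the anchor lags.

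\textbf{Step 2 (server averaging with sampling).} I would average over \(\mathcal S_r\) using sampling without replacement. The server step \(x^r=(1-\gamma)x^{r-1}+\gamma\,\mathrm{avg}_{\mathcal S_r}y_{i,K}\) has a variance factor \(\rho\), and the choice \(\gamma=1/(1+\rho)\) is exactly what makes \(\E\norm{x^r-u}^2\) compare correctly with the averaged endpoint distances. Lemma~\ref{lem:app-corrected-comparison} then handles the similarity term as in the full-participation proof, with dispersion absorbing \(\delta\). The linear perturbations are handled by completing the square against the same dispersion, as in \eqref{eq:app-minibatch-square}. This produces \(\alpha^2\delta^2\norm{x^r-w}^2\) plus \(\alpha^2L_{\mathrm{inc}}^2\) times the lag potential.

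\textbf{Step 3 (lag recursion, the main obstacle).} Each client refreshes with probability \(p\), so in expectation the potential contracts:
\[
 \E\Psi_{r}\le(1-p)\bigl[(1+\tfrac p2)\Psi_{r-1}+(1+\tfrac2p)\E b_r^2\bigr],
\]
roughly \(\E\Psi_r\le(1-p/2)\Psi_{r-1}+\frac{(1-p)(2-p)}{p}\E b_r^2\). Summing over rounds gives \(\sum_r\Psi_r\lesssim\Lambda_p\sum_r\E b_r^2\), which is where \(\Lambda_p\) enters. I would then add \(\alpha^2L_{\mathrm{inc}}^2\) times a multiple of the potential to the server Lyapunov quantity \(P_r\) of Lemma~\ref{lem:app-server-reduction}. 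The stepsize condition \(8\alpha L_{\mathrm{inc}}\sqrt{(1-p)(2-p)}/p\le1\) makes the resulting \(\alpha^2L_{\mathrm{inc}}^2\Lambda_p\) coefficient a small constant, so it is absorbed by the \(-(1-4\theta)\E b_r^2\) displacement budget, alongside the \(\theta\)-term from \(\delta\).

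I expect the delicate part to be the bookkeeping of dependence: the lag of client \(i\) depends on whether it was sampled, and that same sampling is correlated with the server step. I would condition on the history before \(\mathcal S_r\) and use the exact inclusion probability \(p\). Finally, as in the proof of Lemma~\ref{lem:app-server-reduction}, telescoping the distances and applying convexity of \(f\) at \(\bar x^R\) gives \(\Delta_R\lesssim D_0^2/(\alpha R)\), which is \eqref{eq:pp-main-rate}.
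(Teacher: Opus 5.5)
Your plan follows the paper's route in outline. It uses the deterministic endpoint lemma on linearly corrected objectives, cancels the without-replacement variance through $\gamma=1/(1+\rho)$, applies the corrected-objective comparison with an extra lag term absorbed by completing the square, uses a geometric lag recursion that produces $\Lambda_p$, and finishes with a two-recursion server argument. Your Lyapunov bookkeeping is equivalent to the paper's summed bound \eqref{eq:pp-cumulative-lag}.

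Two steps, however, fail as stated. First, no inequality of the form \eqref{eq:app-joint-endpoint-template} with $Q=0$ holds once $\gamma<1$. You may compare at the contracted points $v_i^r=x^{r-1}+\gamma(y_{i,K}^r-x^{r-1})$, as the paper does, or apply convexity to the relaxation afterwards. Either way you are left with $2\alpha(1-\gamma)\bigl(f(x^{r-1})-f(u)\bigr)$ on the right, which is a previous-gap term $2\alpha(1-\gamma)\E G_{r-1}$ for $u=x^\star$; it vanishes only for $u=x^{r-1}$. A smooth Huber-type function far from its minimizer already violates the $Q=0$ version. You must handle this term explicitly:
\begin{itemize}
\item sum it and absorb it via $\sum_{r=1}^R\E G_{r-1}\le G_0+\sum_{r=1}^R\E G_r$, which leaves $2\alpha\gamma$ on the left;
\item bound the leftover by $2\alpha(1-\gamma)G_0\le\alpha\rho L_gD_0^2\le D_0^2/8$, using $1-\gamma\le\rho$ and the fact that the third stepsize condition together with $L_g\le L_{\mathrm{inc}}$ forces $\rho\alpha L_g\le1/8$.
\end{itemize}
The same smallness is needed on the objective side of sampling, $\E_{r-1}f(x^r)\le f(z^r)+\tfrac{L_g\rho}{2}e_r^2$. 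The choice of $\gamma$ cancels only the distance-side variance $\rho e_r^2$, so the dispersion budget is $1-\rho\alpha L_g$ rather than $1$.

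Second, the lag potential must be chosen with care. The anchor-lag version $\frac1N\sum_i\norm{w_i^r-x^r}^2$ cannot give $L_{\mathrm{inc}}$. Assumption~\ref{ass:similarity} bounds $\frac1N\sum_i\norm{\grad f_i(x)-\grad f_i(y)}^2$ only at points $x,y$ shared by all clients, so with client-specific anchors you obtain only $L_{\mathrm{loc}}$. A per-client bound of the form $L_g+\delta$ would require uniform similarity. The paper instead tracks the uncentered gradient lag $H_{r-1}=\frac1N\sum_i\norm{c_i^{r-1}-\grad f_i(x^{r-2})}^2$, measured at the point where the freshest controls were evaluated. Refreshed clients then have zero lag, and unselected clients change by $\grad f_i(x^{r-1})-\grad f_i(x^{r-2})$, evaluated at common server points. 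This yields \eqref{eq:pp-lag-recursion}, with increment $b_{r-1}$ and the factor $1-p$ that makes the extra term vanish at $p=1$. The remaining staleness from $x^{r-2}$ to $x^r$ is charged to $\delta$ through Lemma~\ref{lem:app-corrected-comparison} at the common anchor $x^{r-2}$. Only the centered part of $c_i^{r-1}-\grad f_i(x^{r-2})$, whose mean square is at most $H_{r-1}$, enters that comparison. Your $\Psi_r$, anchored at $x^r$, gives freshly refreshed clients nonzero lag, so the prefactor $1-p$ on the $b_r^2$ term of your recursion does not hold for it. A centered potential also does not close, because refreshing only a subset of clients turns the common part of the lag into dispersion.
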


For \(p=1\), the additional term vanishes and \(\gamma=1\), recovering
the full-participation deterministic rate.  For \(p<1\), even identical
clients can hold controls evaluated at different server iterates, so
\(\delta=0\) alone does not eliminate this error.  The lower bound below
shows that a \(K\)-independent \(1/R\) term is necessary for this method;
we do not claim optimal dependence on \(p\).

\subsection{Proof of the upper bound}

The local endpoint estimate is unchanged.  We need only account for
the unrefreshed controls, sampling of client endpoints, and the server
stepsize \(\gamma\).  Let \(\mathcal F_{r-1}\) denote the history before
sampling round \(r\), and write
\(\E_{r-1}[\cdot]=\E[\cdot\mid\mathcal F_{r-1}]\).
Conditional on this history, define \eqref{eq:pp-local-update} for every
client as an analysis device.  Only selected clients compute these
updates.  With \(x^{-1}=x^0\), set
\[
 G_r\coloneqq f(x^r)-f^\star,\qquad
 D_r\coloneqq\norm{x^r-x^\star},\qquad
 b_r\coloneqq\norm{x^r-x^{r-1}},\qquad
 s_r\coloneqq\norm{x^r-x^{r-2}}.
\]
Thus \(b_0=0\), \(s_1=b_1\), and
\(s_r^2\le2b_r^2+2b_{r-1}^2\) for \(r\ge2\).

\paragraph{Control error.}
The centered gradient increments have zero population mean, so
\[
\begin{aligned}
 \frac1N\sum_i\|\grad f_i(x)-\grad f_i(y)\|^2
 &=\|\grad f(x)-\grad f(y)\|^2\\
 &\quad+\frac1N\sum_i
 \|[\grad f_i(x)-\grad f(x)]-[\grad f_i(y)-\grad f(y)]\|^2.
\end{aligned}
\]
Combining this identity with similarity and local smoothness gives
\begin{equation}
 \frac1N\sum_i\|\grad f_i(x)-\grad f_i(y)\|^2
 \le L_{\mathrm{inc}}^2\|x-y\|^2,
 \qquad L_g\le L_{\mathrm{inc}}.
 \label{eq:pp-increment-bound}
\end{equation}

At the beginning of round \(r\), compare every stored control with the exact
gradient at the preceding server model \(x^{r-2}\):
\begin{equation}
    \ell_i^{r-1}
    \coloneqq
    c_i^{r-1}-\grad f_i(x^{r-2}),
    \qquad
    H_{r-1}
    \coloneqq
    \frac1N\sum_{i=1}^N\norm{\ell_i^{r-1}}^2.
    \label{eq:pp-lag-definition}
\end{equation}

Exact initialization gives
\(H_0=0\).  Round~1 refreshes a selected control to the same value it already
has, while every unselected control remains equal to its initialized value.
Therefore
\begin{equation}
    H_0=H_1=0.
    \label{eq:pp-initial-lag}
\end{equation}

Define
\begin{equation}
    \chi_p\coloneqq\frac{(1-p)(2-p)}p.
    \label{eq:pp-lag-constants}
\end{equation}
Both \(\chi_p\) and \(\Lambda_p\) are zero when \(p=1\).

\begin{lemma}[One-round and cumulative lag bounds]
\label{lem:pp-lag}
For every \(r\ge2\),
\begin{equation}
    \E_{r-1}H_r
    \le
    \left(1-\frac p2\right)H_{r-1}
    +\chi_p L_{\mathrm{inc}}^2 b_{r-1}^2.
    \label{eq:pp-lag-recursion}
\end{equation}
Consequently, for every \(R\ge1\),
\begin{equation}
    \sum_{r=2}^R\E H_{r-1}
    \le
    \Lambda_p L_{\mathrm{inc}}^2
    \sum_{j=1}^{R-1}\E b_j^2.
    \label{eq:pp-cumulative-lag}
\end{equation}
Empty sums are understood as zero.
\end{lemma}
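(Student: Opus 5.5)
Both claims follow from tracking how each stored control evolves under the random refresh. In round \(r\), client \(i\) is selected with probability \(p=m/N\). The round-\(r\) control errors are measured against \(x^{r-1}\), since \(\ell_i^r=c_i^r-\grad f_i(x^{r-1})\).

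\textbf{One-round bound.} If \(i\in\mathcal S_r\), then \(c_i^r=\grad f_i(x^{r-1})\), so \(\ell_i^r=0\). Otherwise \(c_i^r=c_i^{r-1}\), and
\[
\ell_i^r=\ell_i^{r-1}+\bigl(\grad f_i(x^{r-2})-\grad f_i(x^{r-1})\bigr).
\]
The points \(x^{r-1},x^{r-2}\) and all \(\ell_i^{r-1}\) are \(\mathcal F_{r-1}\)-measurable, and \(\mathcal S_r\) is independent of this history. Hence
\[
\E_{r-1}H_r=\frac{1-p}{N}\sum_i\norm{\ell_i^{r-1}+\Delta_i}^2,\qquad \Delta_i\coloneqq\grad f_i(x^{r-2})-\grad f_i(x^{r-1}).
\]
We apply Young's inequality with parameter \(\epsilon\): \(\norm{a+b}^2\le(1+\epsilon)\norm a^2+(1+1/\epsilon)\norm b^2\). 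Choosing \(\epsilon=p/(2(1-p))\) gives \((1-p)(1+\epsilon)=1-p/2\) and \((1-p)(1+1/\epsilon)=(1-p)(2-p)/p=\chi_p\). We then bound \(N^{-1}\sum_i\norm{\Delta_i}^2\le L_{\mathrm{inc}}^2b_{r-1}^2\) by \eqref{eq:pp-increment-bound}, which yields \eqref{eq:pp-lag-recursion}. The case \(p=1\) is trivial: \(H_r=0\), and we avoid dividing by \(1-p\).

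\textbf{Cumulative bound.} We take full expectations in \eqref{eq:pp-lag-recursion} and write \(h_r=\E H_r\). This gives \(h_r\le(1-p/2)h_{r-1}+\chi_pL_{\mathrm{inc}}^2\E b_{r-1}^2\) for \(r\ge2\), with \(h_1=0\) by \eqref{eq:pp-initial-lag}. Unrolling from \(h_1=0\),
\[
h_r\le\chi_pL_{\mathrm{inc}}^2\sum_{j=1}^{r-1}(1-p/2)^{r-1-j}\E b_j^2.
\]
We need \(\sum_{r=2}^R h_{r-1}=\sum_{s=1}^{R-1}h_s\). Summing and exchanging the order, each \(\E b_j\) with \(j\le R-2\) collects a geometric weight of at most \(\sum_{t\ge0}(1-p/2)^t=2/p\). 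The total coefficient is therefore at most \(2\chi_p/p=2(1-p)(2-p)/p^2=\Lambda_p\), which is \eqref{eq:pp-cumulative-lag}. The sum even extends only to \(j=R-2\), but enlarging it to \(R-1\) is harmless. For \(R=1\) the sum is empty, and for \(R=2\) it equals \(h_1=0\).

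\textbf{Main care points.} The only delicate step is the conditioning in round \(r\). The refreshed value \(\grad f_i(x^{r-1})\) and the anchor \(x^{r-2}\) must both be fixed under \(\mathcal F_{r-1}\), so that only the selection indicator is random and has mean \(p\). Once this is in place, the rest is the Young-inequality choice and the geometric summation.
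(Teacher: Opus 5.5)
Your proposal is correct and follows the paper's proof essentially step for step: the same conditional expectation using only the exclusion probability \(1-p\), the same Young parameter \(p/(2(1-p))\) yielding the coefficients \(1-p/2\) and \(\chi_p\), and the same unrolling from \(H_1=0\) with geometric weights bounded by \(2/p\), so that the total coefficient is \(\Lambda_p\). The only slip is notational: in the cumulative step you write \(\E b_j\) where you mean \(\E b_j^2\).
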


\begin{proof}
Let
\[
    g_i^{r-1}
    =\grad f_i(x^{r-1})-\grad f_i(x^{r-2}).
\]
After round \(r\), the new lag relative to \(x^{r-1}\) is zero if client
\(i\) was selected.  If it was not selected, then
\[
    \ell_i^r=\ell_i^{r-1}-g_i^{r-1}.
\]
Every client is excluded with probability \(1-p\), so
\begin{equation}
    \E_{r-1}H_r
    =(1-p)\frac1N\sum_{i=1}^N
      \norm{\ell_i^{r-1}-g_i^{r-1}}^2.
    \label{eq:pp-lag-exact}
\end{equation}
When \(p<1\), Young's inequality with
\(t=p/[2(1-p)]\) gives
\[
    \norm{a-b}^2
    \le(1+t)\norm a^2+(1+t^{-1})\norm b^2.
\]
Multiplication by \(1-p\) produces the coefficients
\[
    (1-p)(1+t)=1-\frac p2,
    \qquad
    (1-p)(1+t^{-1})=\chi_p.
\]
Equation~\eqref{eq:pp-increment-bound} gives
\[
    \frac1N\sum_{i=1}^N\norm{g_i^{r-1}}^2
    \le L_{\mathrm{inc}}^2 b_{r-1}^2,
\]
which proves \eqref{eq:pp-lag-recursion}.  When \(p=1\), every control is
refreshed and both sides of \eqref{eq:pp-lag-recursion} are zero.

Taking total expectations in \eqref{eq:pp-lag-recursion} and iterating
from \(H_1=0\) gives, for \(r\ge2\),
\[
 \E H_{r-1}
 \le\chi_p L_{\mathrm{inc}}^2
       \sum_{j=1}^{r-2}(1-p/2)^{r-2-j}\E b_j^2.
\]
When summing over \(r\), the total weight of each \(\E b_j^2\) is
at most \(\sum_{s=0}^\infty(1-p/2)^s=2/p\). Extending the resulting
sum of nonnegative movement terms to \(j=R-1\) therefore gives
\[
    \sum_{r=2}^R\E H_{r-1}
    \le\frac{2\chi_p}{p}L_{\mathrm{inc}}^2
       \sum_{j=1}^{R-1}\E b_j^2,
\]
and \(2\chi_p/p=\Lambda_p\).
\end{proof}

\paragraph{Endpoint comparison.}

Fix a round and condition on \(\mathcal F_{r-1}\).  Write
\(x=x^{r-1}\), \(w=x^{r-2}\), \(c_i=c_i^{r-1}\), and \(c=c^{r-1}\).
The local steps are gradient steps on the same type of linearly corrected objective as
under full participation:
\[
    \phi_i(y)=f_i(y)+\ip{c-c_i}{y-w}.
\]
Lemma~\ref{lem:app-refined-endpoint}, with \(\nu=0\), applies
conditionally to any fixed linear correction.
Thus, when \(\eta_\ell L_{\mathrm{loc}}\le1\), it gives directly
\begin{equation}
 2\alpha[\phi_i(y_{i,K}^r)-\phi_i(u)]
 \le\|x-u\|^2-\|y_{i,K}^r-u\|^2.
 \label{eq:pp-local-endpoint}
\end{equation}

To express the server relaxation, move each endpoint toward the round's
starting point by defining
\begin{equation}
    v_i^r
    \coloneqq
    x+\gamma(y_{i,K}^r-x),
    \qquad
    z^r\coloneqq\frac1N\sum_{i=1}^N v_i^r,
    \qquad
    e_r^2\coloneqq\frac1N\sum_{i=1}^N\norm{v_i^r-z^r}^2.
    \label{eq:pp-contracted-endpoints}
\end{equation}
These points rewrite the existing server update; they require no
additional local steps. Equation~\eqref{eq:pp-server-update} becomes
\begin{equation}
    x^r=\frac1m\sum_{i\in\mathcal S_r}v_i^r.
    \label{eq:pp-server-sampled-v}
\end{equation}
Uniform sampling without replacement gives
\begin{equation}
 \E_{r-1}x^r=z^r,\qquad
 \E_{r-1}\|x^r-z^r\|^2=\rho e_r^2.
 \label{eq:pp-sampling-variance}
\end{equation}
Indeed, the inclusion probabilities are \(m/N\) and
\(m(m-1)/(N(N-1))\) for one and two distinct clients, respectively.
Since \(\sum_i(v_i^r-z^r)=0\), the cross terms satisfy
\[
 \sum_{i\ne j}\ip{v_i^r-z^r}{v_j^r-z^r}
 =-\sum_i\norm{v_i^r-z^r}^2.
\]
Expanding the squared norm using the inclusion probabilities therefore gives
\begin{align*}
 \E_{r-1}\norm{x^r-z^r}^2
 &=\frac1{m^2}\left[\frac mN-\frac{m(m-1)}{N(N-1)}\right]
       \sum_i\norm{v_i^r-z^r}^2\\
 &=\frac{N-m}{m(N-1)}e_r^2=\rho e_r^2.
\end{align*}
Consequently, for every comparison point
\(u\) fixed given \(\mathcal F_{r-1}\),
\begin{align}
 \E_{r-1}\|x^r-u\|^2&=\|z^r-u\|^2+\rho e_r^2,
 \label{eq:pp-sampling-distance}\\
 \E_{r-1}f(x^r)&\le f(z^r)+\frac{L_g\rho}{2}e_r^2.
 \label{eq:pp-sampling-objective}
\end{align}
The second inequality follows from global smoothness.

The full-participation corrected-objective comparison
(Lemma~\ref{lem:app-corrected-comparison}) changes only through the stored controls.
With \(\ell_i=c_i-\grad f_i(w)\) and
\(\bar\ell=N^{-1}\sum_i\ell_i\), the centered correction at \(z=z^r\) is
\[
 [\grad f_i(z)-\grad f(z)]-[\grad f_i(w)-\grad f(w)]
 +\bar\ell-\ell_i.
\]
Its root-mean-square norm is at most
\(\delta\|z-w\|+\sqrt{H_{r-1}}\), since
\(N^{-1}\sum_i\|\ell_i-\bar\ell\|^2\le H_{r-1}\).
The same convexity and Cauchy--Schwarz argument therefore gives
\begin{equation}
 \frac1N\sum_i[\phi_i(v_i^r)-\phi_i(u)]
 \ge f(z^r)-f(u)
       -(\delta\|z^r-w\|+\sqrt{H_{r-1}})e_r.
 \label{eq:pp-comparison}
\end{equation}
The additional \(\sqrt{H_{r-1}}e_r\) term is the only change to this
comparison.  The cancellation still relies on
\(c=N^{-1}\sum_i c_i\), including unselected clients' stored controls.

\paragraph{One-round progress.}

We now combine local descent, contraction, corrected-objective comparison, and
sampling.  This is the central one-round estimate.

\begin{lemma}[Joint endpoint inequality]
\label{lem:pp-joint-endpoint}
Suppose \(\eta_\ell L_{\mathrm{loc}}\le1\), and define
\begin{equation}
    \tau
    \coloneqq
    \frac1\gamma-\rho(1+\alpha L_g).
    \label{eq:pp-tau}
\end{equation}
Then, conditionally on \(\mathcal F_{r-1}\), for every comparison point \(u\) fixed given this history,
\begin{align}
    2\alpha\E_{r-1}[f(x^r)-f(u)]
    \le{}&
    2\alpha(1-\gamma)\bigl(f(x^{r-1})-f(u)\bigr)
    +\norm{x^{r-1}-u}^2
    -\E_{r-1}\norm{x^r-u}^2
    \nonumber\\
    &-\tau e_r^2
    +2\alpha
      \bigl(\delta\norm{z^r-x^{r-2}}+\sqrt{H_{r-1}}\bigr)e_r.
    \label{eq:pp-joint-before-square}
\end{align}
If, in addition, \(\tau\ge7/8\), then
\begin{align}
    2\alpha\E_{r-1}[f(x^r)-f(u)]
    \le{}&
    2\alpha(1-\gamma)\bigl(f(x^{r-1})-f(u)\bigr)
    +\norm{x^{r-1}-u}^2
    -\E_{r-1}\norm{x^r-u}^2
    \nonumber\\
    &+\frac{16}{7}\alpha^2\delta^2
       \E_{r-1}\norm{x^r-x^{r-2}}^2
    +\frac{16}{7}\alpha^2H_{r-1}.
    \label{eq:pp-joint-endpoint}
\end{align}
\end{lemma}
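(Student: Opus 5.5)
The plan is to combine the local endpoint bound \eqref{eq:pp-local-endpoint}, applied to each client at its endpoint, with convexity of \(\phi_i\) to pass to the contracted points \(v_i^r\). Then apply the comparison \eqref{eq:pp-comparison} at the average \(z^r\), and finally use the sampling identities \eqref{eq:pp-sampling-distance}--\eqref{eq:pp-sampling-objective} to move from \(z^r\) to \(x^r\).

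\textbf{Contracting the endpoints.} Since \(v_i^r=(1-\gamma)x+\gamma y_{i,K}^r\) and \(\phi_i\) is convex,
\[
 \phi_i(v_i^r)-\phi_i(u)\le(1-\gamma)\bigl(\phi_i(x)-\phi_i(u)\bigr)+\gamma\bigl(\phi_i(y_{i,K}^r)-\phi_i(u)\bigr).
\]
Multiply by \(2\alpha\), insert \eqref{eq:pp-local-endpoint}, and average over all \(N\) clients. Because \(c=N^{-1}\sum_ic_i\), the linear corrections cancel at the common points \(x\) and \(u\), so \(N^{-1}\sum_i(\phi_i(x)-\phi_i(u))=f(x)-f(u)\). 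For the distances, write \(y_{i,K}^r-u=(v_i^r-u)/\gamma-(1-\gamma)(x-u)/\gamma\). Expanding \(\gamma N^{-1}\sum_i\|y_{i,K}^r-u\|^2\) and using the variance decomposition \(N^{-1}\sum_i\|v_i^r-u\|^2=\|z^r-u\|^2+e_r^2\) gives
\[
 -\gamma\frac1N\sum_i\|y_{i,K}^r-u\|^2=-\frac1\gamma\bigl(\|z^r-u\|^2+e_r^2\bigr)+\frac{2(1-\gamma)}{\gamma}\ip{z^r-u}{x-u}-\frac{(1-\gamma)^2}{\gamma}\|x-u\|^2.
\]
Combining this with \(\gamma\|x-u\|^2\) should yield, after completing the square, \(\|x-u\|^2-\|z^r-u\|^2-\gamma^{-1}e_r^2\) minus a nonnegative term \(\frac{1-\gamma}{\gamma}\|z^r-x\|^2\), which we drop. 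I expect this algebra to be the main place where constants must be tracked carefully; the factor \(1/\gamma\) on \(e_r^2\) is what produces the first term of \(\tau\).

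\textbf{Comparison and sampling.} Lower bound the averaged \(\phi_i(v_i^r)-\phi_i(u)\) by \eqref{eq:pp-comparison}. This gives \(2\alpha(f(z^r)-f(u))\) on the left, plus the error \(2\alpha(\delta\|z^r-w\|+\sqrt{H_{r-1}})e_r\) on the right. Next use \eqref{eq:pp-sampling-objective}, which costs \(\alpha L_g\rho e_r^2\), and \eqref{eq:pp-sampling-distance}, i.e.\ \(-\|z^r-u\|^2=-\E_{r-1}\|x^r-u\|^2+\rho e_r^2\). The coefficient of \(e_r^2\) becomes \(-1/\gamma+\rho+\rho\alpha L_g=-\tau\), which proves \eqref{eq:pp-joint-before-square}.

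\textbf{Completing the square.} For \eqref{eq:pp-joint-endpoint}, bound
\[
 -\tau e^2+2\alpha Ae\le\frac{\alpha^2A^2}{\tau}\le\frac87\alpha^2A^2\le\frac{16}7\alpha^2\bigl(\delta^2\|z^r-w\|^2+H_{r-1}\bigr),
\]
where \(A=\delta\|z^r-w\|+\sqrt{H_{r-1}}\). Then use Jensen: since \(z^r=\E_{r-1}x^r\) and \(w\) is \(\mathcal F_{r-1}\)-measurable, \(\|z^r-w\|^2\le\E_{r-1}\|x^r-x^{r-2}\|^2\).
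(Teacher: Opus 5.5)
Your proposal is correct and follows essentially the same route as the paper. The steps match: convexity to pass to the contracted points \(v_i^r\), the distance identity producing \(\|x-u\|^2-\|z^r-u\|^2-\gamma^{-1}e_r^2-\frac{1-\gamma}{\gamma}\|z^r-x\|^2\), the corrected-objective comparison, the two sampling identities giving the coefficient \(-\tau\), and completing the square with Jensen. Your expansion does reproduce that identity exactly; the paper instead states it per client before averaging. The dropped term is nonnegative because \(\gamma=(1+\rho)^{-1}\le1\).
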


\begin{proof}
For brevity, write \(y_i=y_{i,K}^r\), \(v_i=v_i^r\), \(z=z^r\), and
\(e=e_r\).  Since \(v_i=(1-\gamma)x+\gamma y_i\), convexity of
\(\phi_i\) and \eqref{eq:pp-local-endpoint} imply
\begin{align}
    2\alpha\bigl(\phi_i(v_i)-\phi_i(u)\bigr)
    \le{}&
    2\alpha(1-\gamma)\bigl(\phi_i(x)-\phi_i(u)\bigr)
    \nonumber\\
    &+\gamma\bigl(\norm{x-u}^2-\norm{y_i-u}^2\bigr).
    \label{eq:pp-contract-local}
\end{align}
The last term has the exact representation
\begin{equation}
    \gamma\bigl(\norm{x-u}^2-\norm{y_i-u}^2\bigr)
    =\norm{x-u}^2-\norm{v_i-u}^2
     -\frac{1-\gamma}{\gamma}\norm{v_i-x}^2.
    \label{eq:pp-contraction-identity}
\end{equation}
Average \eqref{eq:pp-contract-local} over clients.  The control corrections have
zero population mean, so
\[
    \frac1N\sum_{i=1}^N
    \bigl(\phi_i(x)-\phi_i(u)\bigr)=f(x)-f(u).
\]
Apply \eqref{eq:pp-comparison}, and use
\[
    \frac1N\sum_i\norm{v_i-u}^2=\norm{z-u}^2+e^2,
    \qquad
    \frac1N\sum_i\norm{v_i-x}^2=\norm{z-x}^2+e^2.
\]
We obtain
\begin{align}
    2\alpha(f(z)-f(u))
    \le{}&
    2\alpha(1-\gamma)(f(x)-f(u))
    +\norm{x-u}^2-\norm{z-u}^2
    -\frac1\gamma e^2
    \nonumber\\
    &-\frac{1-\gamma}{\gamma}\norm{z-x}^2
    +2\alpha
      \bigl(\delta\norm{z-w}+\sqrt{H_{r-1}}\bigr)e.
    \label{eq:pp-population-endpoint}
\end{align}
The term \(\norm{z-u}^2\) is converted to the sampled server
iterate using \eqref{eq:pp-sampling-distance}.  The objective is converted using
\eqref{eq:pp-sampling-objective}.  Their two variance contributions are
\(\rho e^2\) and \(\alpha L_g\rho e^2\), respectively.  Dropping the
nonpositive term involving \(\norm{z-x}^2\) gives
\eqref{eq:pp-joint-before-square}.

If \(\tau\ge7/8\), completing the square yields
\begin{align*}
    -\tau e^2
    +2\alpha
      \bigl(\delta\norm{z-w}+\sqrt{H_{r-1}}\bigr)e
    &\le
    \frac{\alpha^2}{\tau}
    \bigl(\delta\norm{z-w}+\sqrt{H_{r-1}}\bigr)^2\\
    &\le
    \frac{16}{7}\alpha^2
    \bigl(\delta^2\norm{z-w}^2+H_{r-1}\bigr).
\end{align*}
Finally, \(z=\E_{r-1}x^r\), and the bias--variance identity gives
\[
    \norm{z-w}^2
    \le\E_{r-1}\norm{x^r-w}^2.
\]
Substitution proves \eqref{eq:pp-joint-endpoint}.
\end{proof}

The choice \(\gamma=(1+\rho)^{-1}\) changes
\eqref{eq:pp-tau} into
\begin{equation}
    \tau=1-\rho\alpha L_g.
    \label{eq:pp-tau-cancel}
\end{equation}
Thus the finite-population variance \(\rho e_r^2\) is canceled by the
dispersion supplied by endpoint contraction; only the smoothness remainder
\(\rho\alpha L_g e_r^2\) remains to be absorbed.

\begin{proof}[Proof of Theorem~\ref{thm:pp-main}]
We adapt the telescoping distance and descent inequalities used in
Lemma~\ref{lem:app-server-reduction}.  Its statement does not apply
directly because the new error \(H_{r-1}\) varies with the history.
We prove the stronger fixed-stepsize estimate
\begin{equation}
 \sum_{r=1}^R\E G_r\le\frac{227}{224\alpha\gamma}D_0^2
 \label{eq:pp-sum-gap}
\end{equation}
whenever
\begin{equation}
 \eta_\ell L_{\mathrm{loc}}\le1,\qquad
 \alpha^2\delta^2\le\frac1{128},\qquad
 \alpha^2\Lambda_p L_{\mathrm{inc}}^2\le\frac1{32}.
 \label{eq:pp-stepsize-conditions}
\end{equation}
The third stepsize condition and \(L_g\le L_{\mathrm{inc}}\) imply, for
\(p<1\),
\[
 \alpha L_g\le\frac{p}{8\sqrt{(1-p)(2-p)}},\qquad
 \rho\alpha L_g\le\frac{\sqrt{1-p}}{8(N-1)\sqrt{2-p}}\le\frac18.
\]
For \(p=1\), \(\rho=0\).  Hence \(\tau\ge7/8\) in both cases.

Set
\[
 a_1=\frac{16}{7}\alpha^2\delta^2,\qquad
 a_2=\frac{16}{7}\alpha^2,\qquad
 a_3=a_2\Lambda_p L_{\mathrm{inc}}^2.
\]
The stepsize conditions give \(a_1\le1/56\) and \(a_3\le1/14\).
As in the full-participation proof, take \(u=x^\star\) and \(u=x^{r-1}\)
in the joint endpoint inequality.  After total expectation, the two
recursions are
\begin{align}
 2\alpha\E G_r
 &\le2\alpha(1-\gamma)\E G_{r-1}
   +\E D_{r-1}^2-\E D_r^2
   +a_1\E s_r^2+a_2\E H_{r-1},
 \label{eq:pp-gap-recursion}\\
 2\alpha(\E G_{r-1}-\E G_r)
 &\ge\E b_r^2-a_1\E s_r^2-a_2\E H_{r-1}.
 \label{eq:pp-descent-recursion}
\end{align}
Relative to full participation, these introduce a previous-gap term and
the control error; the displacement terms have the same form.

\paragraph{Initialization.}
Relaxation leaves an initial-gap contribution, but it does not introduce
an extra factor \(\alpha L_g\) into the final bound:
\begin{equation}
 2\alpha(1-\gamma)G_0
 \le\alpha(1-\gamma)L_gD_0^2\le\frac18D_0^2,
 \label{eq:pp-initial-gap-residual}
\end{equation}
because \(1-\gamma\le\rho\).  For \(p=1\) the contribution is zero.
Using \(H_0=0\), \(s_1=b_1\), and
\(b_1^2\le2D_0^2+2D_1^2\) in \eqref{eq:pp-gap-recursion} gives
\[
 2\alpha\E G_1\le\frac{65}{56}D_0^2-\frac{54}{56}\E D_1^2.
\]
In particular,
\begin{equation}
 \E D_1^2\le\frac{65}{54}D_0^2,\qquad
 \E b_1^2\le\frac{119}{27}D_0^2,\qquad
 2\alpha\E G_1\le\frac{65}{56}D_0^2.
 \label{eq:pp-first-round-bounds}
\end{equation}
This is the same separate-first-round step used under full participation.

\paragraph{Movement bound.}
Let \(U_2=\sum_{r=2}^R\E b_r^2\) and \(U=\E b_1^2+U_2\).
Summing \eqref{eq:pp-descent-recursion} over \(r=2,\ldots,R\) and
dropping the nonpositive terminal term \(-2\alpha\E G_R\) gives
\[
 U_2\le2\alpha\E G_1
       +a_1\sum_{r=2}^R\E s_r^2
       +a_2\sum_{r=2}^R\E H_{r-1}.
\]
Use \(\sum_{r=2}^R\E s_r^2\le4U_2+2\E b_1^2\) and the cumulative
lag bound \eqref{eq:pp-cumulative-lag}. Since
\(\sum_{j=1}^{R-1}\E b_j^2\le U_2+\E b_1^2\), this yields
\[
 U_2\le2\alpha\E G_1
       +a_1(4U_2+2\E b_1^2)
       +a_3(U_2+\E b_1^2).
\]
Rearranging gives
\[
 (1-4a_1-a_3)U_2
 \le2\alpha\E G_1+(2a_1+a_3)\E b_1^2.
\]
Thus the control error changes the full-participation absorption coefficient by
\(a_3\).  Since \(1-4a_1-a_3\ge6/7\) and \(2a_1+a_3\le3/28\),
\eqref{eq:pp-first-round-bounds} yields
\begin{equation}
 U_2\le\frac{823}{432}D_0^2,\qquad
 U\le\frac{101}{16}D_0^2.
 \label{eq:pp-movement-bound}
\end{equation}
Empty sums cover \(R=1\) as well.

\paragraph{Summing the objective-gap inequalities.}
Let \(T_R=\sum_{r=1}^R\E G_r\). The previous-round gaps satisfy
\[
 \sum_{r=1}^R\E G_{r-1}
 =G_0+T_R-\E G_R\le G_0+T_R.
\]
Summing \eqref{eq:pp-gap-recursion}, using this identity,
\(\sum_r\E s_r^2\le4U\), and the cumulative lag bound, then absorbing
\(2\alpha(1-\gamma)T_R\) on the left gives
\[
\begin{aligned}
 2\alpha\gamma T_R
 &\le 2\alpha(1-\gamma)G_0+D_0^2+(4a_1+a_3)U\\
 &\le\left(\frac18+1+\frac17\frac{101}{16}\right)D_0^2
 =\frac{227}{112}D_0^2.
\end{aligned}
\]
This proves \eqref{eq:pp-sum-gap}.  The choice
\eqref{eq:pp-chosen-stepsize} satisfies \eqref{eq:pp-stepsize-conditions}, since
\(\sqrt{32\Lambda_p}=8\sqrt{(1-p)(2-p)}/p\).
Convexity and \(\gamma\ge1/2\) then give
\(\Delta_R\le227D_0^2/(224\alpha\gamma R)\), proving
\eqref{eq:pp-main-rate}.
\end{proof}

\subsection{Limits of the \texorpdfstring{\(1/K\)}{1/K} improvement}

\begin{theorem}[A \(K\)-independent lower bound]
\label{thm:pp-lower}
Fix \(N\ge2\), \(1\le m<N\), \(K\ge1\), \(R\ge4\), \(L_{\mathrm{loc}}>0\), and \(D_0>0\).
Define
\begin{equation}
    \beta\coloneqq\gamma^2p(1-p)>0.
    \label{eq:pp-beta}
\end{equation}
Run \eqref{eq:pp-local-update}--\eqref{eq:pp-global-control-update} with the exact initialization and uniform sampling above.  For every constant local stepsize \(0<\eta_\ell\le1/L_{\mathrm{loc}}\), there exist identical,
differentiable, convex, \(L_{\mathrm{loc}}\)-smooth client objectives and an initialization
with \(\norm{x^0-x^\star}=D_0\), such that
\begin{equation}
    \Delta_R
    \ge
    \frac{\gamma^2p(1-p)}{1024}
    \frac{L_{\mathrm{loc}}D_0^2}{R}.
    \label{eq:pp-lower}
\end{equation}
The objectives satisfy Assumption~\ref{ass:similarity} with
\(\delta=0\).
\end{theorem}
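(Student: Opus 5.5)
The plan is a one-dimensional (or low-dimensional) quadratic with identical clients, $f_i(x)=f(x)=\tfrac{\lambda}{2}x^2$ with $\lambda\le L_{\mathrm{loc}}$, $x^\star=0$ and $x^0=D_0$. The only source of error is then the staleness of the stored controls, and everything can be computed in closed form.

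\textbf{Step 1: the linear dynamics.} Write $c_i^{r-1}=\lambda w_i^{r-1}$, where $w_i^{r-1}$ is the server point at which client $i$ last refreshed, and let $\bar w^{r-1}=N^{-1}\sum_i w_i^{r-1}$. Put $\tau\coloneqq(1-\eta_\ell\lambda)^K\in[0,1)$. Unrolling \eqref{eq:pp-local-update} gives
\[
 y_{i,K}^r=\tau x^{r-1}+(1-\tau)\bigl(w_i^{r-1}-\bar w^{r-1}\bigr).
\]
Substituting into \eqref{eq:pp-server-update} yields
\[
 x^r=\bigl(1-\gamma(1-\tau)\bigr)x^{r-1}+\gamma(1-\tau)\,\varepsilon_r,
 \qquad
 \varepsilon_r\coloneqq\frac1m\sum_{i\in\mathcal S_r}\bigl(w_i^{r-1}-\bar w^{r-1}\bigr).
\]
Since $\mathcal S_r$ is drawn independently of the history, $\E_{r-1}\varepsilon_r=0$. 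By the sampling-without-replacement identity \eqref{eq:pp-sampling-variance}, $\E_{r-1}\varepsilon_r^2=\rho\,W_{r-1}$, where $W_{r-1}=N^{-1}\sum_i(w_i^{r-1}-\bar w^{r-1})^2$ is the dispersion of the stored anchors.

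Two consequences follow. The mean $\E x^r=(1-\gamma(1-\tau))^rD_0$ is deterministic. Because $f$ is quadratic,
\[
 \Delta_R=\tfrac{\lambda}{2}\bigl[(\E\bar x^R)^2+\operatorname{Var}(\bar x^R)\bigr],
\]
so it suffices to lower bound one of these two terms.

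\textbf{Step 2: the dispersion recursion.} Each client's anchor is replaced by $x^{r-1}$ with probability $p$ and kept otherwise. Rounds~1 and~2 are deterministic, and after round~2 the anchors take the two values $x^0$ and $x^1$ in proportions $1-p$ and $p$. This gives $\E W_2=p(1-p)(x^1-x^0)^2$. In general I will derive a recursion of the form
\[
 \E W_{r}\ge(1-p)\E W_{r-1}+p(1-p)\,\E\bigl(x^{r-1}-\bar w^{r-1}\bigr)^2,
\]
either exactly or as a lower bound. The second term is fed by the server movement, which is itself of order $(1-\tau)\gamma\,x$. Since the $\varepsilon_r$ are martingale differences, the contributions to $\bar x^R=R^{-1}\sum_r x^r$ add in variance: each $\varepsilon_s$ enters with weight $\gamma(1-\tau)R^{-1}\sum_{r\ge s}(1-\gamma(1-\tau))^{r-s}$.

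\textbf{Step 3: cases in $\eta_\ell$.} Take $\lambda=L_{\mathrm{loc}}$ and set $s=1-\tau$.
\begin{itemize}
\item If $s$ is small, the deterministic mean $\E\bar x^R$ stays of order $D_0\min\{1,(sR)^{-1}\}$, which already gives order $L_{\mathrm{loc}}D_0^2/R$. This mirrors the slow-coordinate bound of Lemma~\ref{lem:computation} and Theorem~\ref{thm:statistical-lower}.
\item If $s$ is bounded below by a constant, the mean decays quickly. In that case I will lower bound $\operatorname{Var}(\bar x^R)$ by summing the weighted variances $\gamma^2s^2\rho\,\E W_{r-1}$ from Step~2. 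The aim is to show that this sum is at least $c\,\gamma^2p(1-p)D_0^2/R$ once $R\ge4$, so that at least one dispersed round and its echoes are available.
\end{itemize}
The hypothesis $\eta_\ell\le1/L_{\mathrm{loc}}$ ensures $\tau\ge0$, which rules out sign cancellations.

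\textbf{Main obstacle.} The hard part is the variance case: obtaining the full $1/R$ scale rather than $1/R^2$. A single perturbation injected in round~3 contributes only $O(1/R^2)$ to $\operatorname{Var}(\bar x^R)$. So I need either the echoes, where a displaced $x^r$ creates new anchor dispersion and hence new injections, to accumulate over $\Theta(R)$ rounds; or a different choice of curvature, such as a second coordinate with contraction calibrated to $1/R$, so that the per-round movement stays of order $D_0/R$ throughout the run while the injected dispersion stays of order $p(1-p)D_0^2/R^2$ per round. I would try the second option first, using a two-coordinate instance. I would also check whether $\rho\gamma^2\le\gamma^2$ loses the $p(1-p)$ factor that is claimed in \eqref{eq:pp-beta}.
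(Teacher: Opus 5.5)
\paragraph{Verdict.} There is a genuine gap: for $M=K\eta_\ell L_{\mathrm{loc}}\gg1/\beta$ and large $R$, no identical-client quadratic instance can reach the bound, and the missing ingredient is a nonlinear instance.

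\paragraph{The small-step case.} Your first bullet is essentially the paper's first case, but only once the curvature depends on the stepsize. With $\lambda=L_{\mathrm{loc}}$ fixed, the mean gives only $\frac{L_{\mathrm{loc}}}{2}D_0^2\min\{1,(\gamma sR)^{-1}\}^2$. This is of order $1/R^2$ unless $s\lesssim R^{-1/2}$. The paper instead takes $\lambda=\beta L_{\mathrm{loc}}/(128R)$, which keeps $\E x^r\ge D_0/2$ whenever $M\le64/\beta$.

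\paragraph{Why quadratics fail for $M\gg1/\beta$.} This is the case you flag as the main obstacle, and neither echoes nor a second calibrated coordinate can close it.
\begin{itemize}
\item An identical-client quadratic splits into independent scalar coordinates. In each, the mean is exactly $(1-\gamma s)^rD_0$, as you derived. Optimizing the curvature, the mean contributes at most order $\frac{L_{\mathrm{loc}}D_0^2}{R}\bigl(\frac1{\gamma M}+\frac1{\gamma^2R}\bigr)$.
\item For the fluctuation, write $a=\gamma s$. Then $\E_{r-1}(x^r)^2=(1-a)^2(x^{r-1})^2+a^2\rho W_{r-1}$ and $\E_{r-1}N^{-1}\sum_i(w_i^r)^2=p(x^{r-1})^2+(1-p)N^{-1}\sum_i(w_i^{r-1})^2$.
\item Hence $x^2+c\,N^{-1}\sum_iw_i^2$ contracts geometrically for any $a^2\rho/p<c<a(2-a)/p$. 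Such $c$ exists because $a\le\gamma<2\gamma$. So $\sum_t\E(x^t)^2$ is bounded independently of $R$.
\item Each $\varepsilon_j$ enters $\bar x^R$ with weight at most $1/R$. Each anchor is an earlier server iterate whose selection is independent of its value. Therefore $\operatorname{Var}(\bar x^R)\le\frac{\rho}{R^2}\bigl(\sum_t\E(x^t)^2+D_0^2/p\bigr)=O(R^{-2})$, so the echoes die out.
\item A slow coordinate with $a\sim1/R$ is smaller still, since its injections are scaled by $a^2$.
\end{itemize}
Thus for $M\gg1/\beta$ and large $R$, both the mean and the variance parts fall below $\beta L_{\mathrm{loc}}D_0^2/(1024R)$.

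\paragraph{The missing idea.} You need a nonlinear instance that turns zero-mean control dispersion into a persistent \emph{mean} displacement. For $M>64/\beta$ the paper uses the convex function with curvature $L_{\mathrm{loc}}$ on $x\ge0$ and $\mu=1/(\alpha R)$ on $x<0$, with $x^0=D_0$.
\begin{itemize}
\item Rounds~1--2 are deterministic. At round~3 the stored controls give corrections $h_->0$ in magnitude on a fraction $p$ of clients (negative sign) and $h_+>0$ on a fraction $1-p$, with zero population mean.
\item The local $K$-step map is concave in the correction. A positive correction stays on the steep side and shifts the endpoint by at most $h_+/L_{\mathrm{loc}}$. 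A negative one enters the flat side and shifts it by $(1-n)h_-/\mu$, where $(1-n)/\mu\ge\alpha/2$.
\item This Jensen gap gives $\E x^3\le-\beta MD_0/8$.
\item The zero-correction server map $G$ is concave and nondecreasing, with $G(x)=vx$ and $v\ge1-1/R$ on the negative half-line. Hence $\E x^r\le v^{r-3}\E x^3$ and $\E\bar x^R\le-\beta MD_0/128$.
\item Jensen then gives $\Delta_R\ge\frac{\mu}{2}(\E\bar x^R)^2\ge\frac{\beta^2M}{32768}\frac{L_{\mathrm{loc}}D_0^2}{R}$. This exceeds $\frac{\beta}{512}\frac{L_{\mathrm{loc}}D_0^2}{R}$ precisely because $\beta M>64$.
\end{itemize}
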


For fixed \(N,m\) with \(m<N\), the coefficient in
\eqref{eq:pp-lower} is positive and independent of \(K,R\).  Consequently,
no choice of constant \(0<\eta_\ell\le1/L_{\mathrm{loc}}\) guarantees
\(\mathcal{O}(L_{\mathrm{loc}}D_0^2/(KR))\) uniformly over this class,
even with an additional \(\mathcal{O}(L_{\mathrm{loc}}D_0^2/R^2)\) term.
The instance may depend on the prescribed stepsize and horizon.
This establishes the need for a \(K\)-independent \(1/R\) term for the
specified algorithm and output, but does not match the participation
dependence in Theorem~\ref{thm:pp-main}.

The proof uses two explicit objectives.  If \(K\eta_\ell\) is small, a shallow quadratic has insufficient progress.
If it is large, an asymmetric convex objective makes the stale corrections
push the expected server iterate far into a region of small curvature.
That negative mean persists for the remainder of the horizon.

\subsection{Proof of the lower bound}

The asymmetric-curvature mechanism is also used in
Section~\ref{sec:nonlinear-lower}.  Here the perturbations come from
unequal stored controls, even though every local gradient is exact.
We first show that the mean displacement persists under later sampling.

Fix \(0<\mu\le L_{\mathrm{loc}}\) and let every client have the scalar objective
\begin{equation}
    \varphi(x)=
    \begin{cases}
        \dfrac{\mu}{2}x^2,&x<0,\\[2pt]
        \dfrac{L_{\mathrm{loc}}}{2}x^2,&x\ge0.
    \end{cases}
    \label{eq:pp-asymmetric}
\end{equation}
This function is differentiable at zero, convex, and has an
\(L_{\mathrm{loc}}\)-Lipschitz gradient.  Its unique minimizer is zero.

For a scalar correction \(h\), let \(T_h(x)\) be the endpoint of \(K\) steps
\begin{equation}
    y_0=x,\qquad
    y_{k+1}=y_k-\eta_\ell\varphi'(y_k)+\eta_\ell h.
    \label{eq:pp-local-map}
\end{equation}
Define the zero-correction server map
\begin{equation}
    G(x)=(1-\gamma)x+\gamma T_0(x).
    \label{eq:pp-G}
\end{equation}

\begin{lemma}[Concavity and propagation of the mean]
\label{lem:pp-concavity}
If \(0<\eta_\ell\le1/L_{\mathrm{loc}}\), then \(T_h(x)\) is jointly concave in \((x,h)\),
and \(G\) is concave and nondecreasing.  Under
\eqref{eq:pp-local-update}--\eqref{eq:pp-global-control-update},
\begin{equation}
    \E x^r\le G(\E x^{r-1}).
    \label{eq:pp-mean-recursion}
\end{equation}
In particular, with
\[
    n=(1-\eta_\ell\mu)^K,\qquad
    v=1-\gamma+\gamma n,
\]
one has \(G(x)=vx\) for every \(x<0\).
\end{lemma}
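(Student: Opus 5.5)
The argument copies the concavity induction in Section~\ref{sec:nonlinear-lower}, now for a piecewise-linear gradient map with no noise. Write the one-step local map as $y\mapsto\Psi(y)+\eta_\ell h$, where
\[
 \Psi(y)=y-\eta_\ell\varphi'(y)=
 \begin{cases}(1-\eta_\ell\mu)y,&y<0,\\(1-\eta_\ell L_{\mathrm{loc}})y,&y\ge0.\end{cases}
\]
Both slopes lie in $[0,1)$, because $\eta_\ell\mu\le\eta_\ell L_{\mathrm{loc}}\le1$. The left slope is at least the right slope, so $\Psi$ is nondecreasing and concave: it is the minimum of two linear functions through the origin. Joint concavity of $T_h(x)$ in $(x,h)$ then follows by induction on $k$, exactly as for $H_k$ in the nonlinear lower bound. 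Suppose $y_k(x,h)$ is jointly concave. Then $\Psi\circ y_k$ is concave, since a nondecreasing concave function composed with a concave function is concave. Adding the affine term $\eta_\ell h$ preserves concavity. The same induction shows that $T_h(x)$ is nondecreasing in $x$, so $T_0$ is concave and nondecreasing. Therefore $G$, a convex combination with weights $1-\gamma\ge0$ and $\gamma>0$ of the identity and $T_0$, is also concave and nondecreasing. For $x<0$, every iterate of $T_0$ stays negative because the slope is $1-\eta_\ell\mu\ge0$, so $T_0(x)=(1-\eta_\ell\mu)^Kx=nx$ and $G(x)=vx$.

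For the mean recursion, fix the history $\mathcal F_{r-1}$. Client $i$ uses the constant correction $h_i=c_i^{r-1}-c^{r-1}$, taken with the sign that matches the local-map convention. Since all clients are identical, $\sum_i h_i=0$, and the starting point $x^{r-1}$ is common to all clients. As in the proof of Theorem~\ref{thm:pp-main}, I would treat every client's endpoint as defined. Uniform sampling makes each client enter with probability $p$, so
\[
 \E_{r-1}x^r=(1-\gamma)x^{r-1}+\frac{\gamma}{N}\sum_{i=1}^N T_{h_i}(x^{r-1}).
\]
Jensen's inequality for the concave map $h\mapsto T_h(x^{r-1})$, averaged over the $N$ clients with equal weights, gives $\frac1N\sum_iT_{h_i}(x^{r-1})\le T_{\bar h}(x^{r-1})=T_0(x^{r-1})$. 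Hence $\E_{r-1}x^r\le G(x^{r-1})$. Taking total expectations and applying Jensen's inequality to the concave $G$ gives $\E x^r\le\E G(x^{r-1})\le G(\E x^{r-1})$.

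The delicate point is that $\bar h=0$ holds only because the global control averages all stored controls, selected or not, and only because the objectives are identical. I would check this carefully: since $c^{r-1}=N^{-1}\sum_ic_i^{r-1}$ exactly, the average of the $h_i$ vanishes pathwise. No independence between the controls and the sampling is needed, because the controls are $\mathcal F_{r-1}$-measurable and the $N$-point average uses deterministic equal weights rather than the sampled set.
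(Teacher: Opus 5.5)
Your proposal is correct and follows the paper's proof essentially step for step: the piecewise-linear one-step map is concave and nondecreasing, induction gives joint concavity of $T_h$, the exact identity $c^{r-1}=N^{-1}\sum_i c_i^{r-1}$ combined with Jensen in $h$ gives $\E_{r-1}x^r\le G(x^{r-1})$, and a second Jensen step for the concave $G$ finishes the recursion. One small correction: $\bar h=0$ follows from the definition of $c^{r-1}$ alone. The identical objectives are needed for a different reason, namely so that every client runs the same map $T_h$, which is what makes the Jensen step in $h$ valid. Also, when $\eta_\ell\mu=1$ the iterates of $T_0$ become zero rather than staying negative, but $T_0(x)=nx$ still holds.
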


\begin{proof}
The one-step map without correction is
\[
    Q(y)=y-\eta_\ell\varphi'(y)=
    \begin{cases}
        (1-\eta_\ell\mu)y,&y<0,\\
        (1-\eta_\ell L_{\mathrm{loc}})y,&y\ge0.
    \end{cases}
\]
Its left slope is at least its right slope, and both slopes are nonnegative.
Thus \(Q\) is concave and nondecreasing.
Starting from the affine function \(y_0(x,h)=x\), induction in
\(y_{k+1}(x,h)=Q(y_k(x,h))+\eta_\ell h\) proves joint concavity:
composition with the nondecreasing concave map \(Q\) preserves concavity,
and addition of the linear term preserves it as well.
The same induction proves monotonicity in \(x\).
Consequently \(G\) is concave and nondecreasing.

Condition on \(\mathcal F_{r-1}\) and put
\(h_i=c_i^{r-1}-c^{r-1}\).  The invariant in \eqref{eq:pp-global-control-update} gives
\(N^{-1}\sum_i h_i=0\).  Uniform sampling and concavity in the correction imply
\begin{align*}
    \E[x^r\mid\mathcal F_{r-1}]
    &=(1-\gamma)x^{r-1}
      +\frac{\gamma}{N}\sum_{i=1}^N T_{h_i}(x^{r-1})\\
    &\le (1-\gamma)x^{r-1}+\gamma T_0(x^{r-1})
     =G(x^{r-1}).
\end{align*}
Taking expectation and applying Jensen's inequality to the concave function
\(G\) proves \eqref{eq:pp-mean-recursion}.
For a negative starting point and zero correction, all local iterates stay
nonpositive and are multiplied by \(1-\eta_\ell\mu\) at each step.
This proves \(G(x)=vx\) on the negative half-line.
\end{proof}

\begin{proof}[Proof of Theorem~\ref{thm:pp-lower}]
Let
\[
    M=\alpha L_{\mathrm{loc}}=K\eta_\ell L_{\mathrm{loc}}.
\]
We distinguish two ranges of \(M\).  All functions in either
construction are the same across clients; hence their gradient differences
relative to the global objective are identically zero, giving \(\delta=0\).

\paragraph{Case \(M\le64/\beta\).}

Take identical objectives on \(\R^2\),
\begin{equation}
    f_i(x,z)=\frac{\lambda}{2}x^2+\frac{L_{\mathrm{loc}}}{2}z^2,
    \qquad
    \lambda=\frac{\beta L_{\mathrm{loc}}}{128R},
    \qquad
    (x^0,z^0)=(D_0,0).
    \label{eq:pp-slow-quadratic}
\end{equation}
The second coordinate stays zero and makes \(L_{\mathrm{loc}}\) the exact smoothness
constant.  We analyze only the first coordinate.
For this quadratic, unrolling the local updates with a constant
correction \(h\) gives the endpoint
\[
 (1-\eta_\ell\lambda)^Kx
 +\eta_\ell h\sum_{j=0}^{K-1}(1-\eta_\ell\lambda)^j
 =(1-\eta_\ell\lambda)^Kx
   +\frac{1-(1-\eta_\ell\lambda)^K}{\lambda}h.
\]
The coefficient of \(h\) is common to all clients. Since
\(h_i=c_i^{r-1}-c^{r-1}\) satisfies \(N^{-1}\sum_i h_i=0\),
the correction terms cancel in the population average, even when the
stored controls differ. Uniform sampling and the server update then give
\[
    \E[x^r\mid\mathcal F_{r-1}]=q_\lambda x^{r-1},
    \qquad
    q_\lambda
    =1-\gamma\bigl[1-(1-\eta_\ell\lambda)^K\bigr].
\]
Bernoulli's inequality yields
\[
    q_\lambda\ge1-\gamma\alpha\lambda
    \ge1-\frac1{2R},
\]
where we used \(M\le64/\beta\) and \(\gamma\le1\).
Therefore, for \(1\le r\le R\),
\[
    \E x^r=q_\lambda^rD_0
    \ge\left(1-\frac r{2R}\right)D_0
    \ge\frac{D_0}{2}.
\]
Convexity then gives
\begin{equation}
    \E[f(\bar x^R)]-f^\star
    \ge\frac{\lambda}{2}(\E\bar x^R)^2
    \ge\frac{\beta L_{\mathrm{loc}}D_0^2}{1024R}.
    \label{eq:pp-small-time-lower}
\end{equation}

\paragraph{Case \(M>64/\beta\).}

Use \(\varphi\) from \eqref{eq:pp-asymmetric}, with
\begin{equation}
    \mu=\frac1{\alpha R}=\frac{L_{\mathrm{loc}}}{MR},
    \qquad x^0=D_0.
    \label{eq:pp-mu}
\end{equation}
One may append the inactive term \(L_{\mathrm{loc}}z^2/2\) to keep both constructions in
\(\R^2\).  Since \(\beta\le1/4\), this choice satisfies \(0<\mu<L_{\mathrm{loc}}\).
Set
\begin{equation}
    q_L=(1-\eta_\ell L_{\mathrm{loc}})^K,\quad
    n=(1-\eta_\ell\mu)^K,\quad
    q=1-\gamma+\gamma q_L,\quad
    v=1-\gamma+\gamma n.
    \label{eq:pp-mnqv}
\end{equation}
We will use
\begin{equation}
    1-q_L\ge\frac12,\qquad
    \frac{1-n}{\mu}\ge\frac\alpha2,\qquad
    v\ge1-\frac1R.
    \label{eq:pp-basic-scalar-bounds}
\end{equation}
Indeed, \(q_L\le e^{-M}\le1/2\).
Also \(\eta_\ell\mu=1/(KR)\).  The elementary bounds
\[
    (1-u)^K\ge1-Ku,\qquad
    (1-u)^K\le\frac1{1+Ku}\qquad(0\le u\le1)
\]
give
\[
    1-\frac1R\le n\le\frac{R}{R+1}.
\]
Hence \(1-n\ge1/(2R)\), which proves the middle inequality in
\eqref{eq:pp-basic-scalar-bounds}, and the last follows from \(\gamma\le1\).

\paragraph{The first two rounds.}
Initially every control equals \(L_{\mathrm{loc}}D_0\).
The first round refreshes selected controls to exactly the same value.
Thus all local corrections are zero in the first two rounds, and
\begin{equation}
    x^1=qD_0,\qquad x^2=q^2D_0.
    \label{eq:pp-first-two}
\end{equation}
After round two, the fraction \(p\) of clients selected in that round has
control \(L_{\mathrm{loc}}qD_0\); the remaining fraction \(1-p\) has control \(L_{\mathrm{loc}}D_0\).
Since \(1-q=\gamma(1-q_L)\), the corrections at the start of round three are
\begin{align}
    h_-&=-\gamma(1-q_L)(1-p)L_{\mathrm{loc}}D_0
       &&\text{on a fraction }p\text{ of clients},\nonumber\\
    h_+&=\gamma(1-q_L)pL_{\mathrm{loc}}D_0
       &&\text{on a fraction }1-p\text{ of clients}.
       \label{eq:pp-split-corrections}
\end{align}
The identity of these clients is random, but the two values and their
population fractions are deterministic.

\paragraph{The third round.}
For a positive correction \(h_+\), the local path stays nonnegative, so
\begin{equation}
    T_{h_+}(x^2)=q_L q^2D_0+(1-q_L)\frac{h_+}{L_{\mathrm{loc}}}.
    \label{eq:pp-positive-endpoint}
\end{equation}
For the negative correction we use the pointwise inequality
\[
    y-\eta_\ell\varphi'(y)\le(1-\eta_\ell\mu)y
    \qquad\text{for every }y\in\R.
 \]
The coefficient \(1-\eta_\ell\mu\) is nonnegative.
Induction in the local recursion therefore gives
\begin{equation}
    T_{h_-}(x^2)\le n q^2D_0+(1-n)\frac{h_-}{\mu}.
    \label{eq:pp-negative-endpoint}
\end{equation}
Taking the population average of these two endpoints is exactly the
conditional expectation of the sampled average. Since \(x^2\), the two
correction values, and their fractions are deterministic,
\[
 \E x^3=(1-\gamma)x^2
       +\gamma\bigl[pT_{h_-}(x^2)+(1-p)T_{h_+}(x^2)\bigr].
\]
Although the corrections have zero population mean, their effects need
not cancel: positive corrections keep the local path in the region of
curvature \(L_{\mathrm{loc}}\), while negative corrections can move it
into the region of smaller curvature \(\mu\). Substituting the endpoint
bounds and \eqref{eq:pp-split-corrections} gives
\begin{align}
    \E x^3
    \le{}&
    [1-\gamma+\gamma(pn+(1-p)q_L)]q^2D_0
    +\beta(1-q_L)^2D_0
    \nonumber\\
    &-\beta(1-q_L)(1-n)\frac{L_{\mathrm{loc}}}{\mu} D_0
    \nonumber\\
    \le{}&\left(1+\beta-\frac{\beta M}{4}\right)D_0
    \le-\frac{\beta M}{8}D_0.
    \label{eq:pp-negative-mean}
\end{align}
For the second inequality, the coefficient of \(q^2D_0\) is at most
one, \(q^2\le1\), and \((1-q_L)^2\le1\). The magnitude of the
negative term is bounded below using \eqref{eq:pp-basic-scalar-bounds}:
\[
 (1-q_L)\frac{1-n}{\mu}L_{\mathrm{loc}}
 \ge\frac12\cdot\frac\alpha2 L_{\mathrm{loc}}=\frac M4.
\]
For the third, note that \(\beta\le1/4\) and \(\beta M>64\).

\paragraph{Persistence of the mean.}
Since \(\E x^3<0\), Lemma~\ref{lem:pp-concavity} and
\(G(x)=vx\) for \(x<0\) first give
\[
 \E x^4\le G(\E x^3)=v\E x^3<0.
\]
Repeating this step, using that \(G\) is nondecreasing, yields
\begin{equation}
    \E x^r\le v^{r-3}\E x^3
    \qquad (3\le r\le R).
    \label{eq:pp-persistence}
\end{equation}
This controls the expected iterate; individual sampled trajectories need
not remain negative.
For \(3\le r\le R\) and \(R\ge4\),
\[
    v^{r-3}\ge
    \left(1-\frac1R\right)^{R-1}
    \ge e^{-1}>\frac14.
\]
The middle inequality follows from
\(\log(1-u)\ge-u/(1-u)\), applied with \(u=1/R\).
Because \(\E x^3<0\), \eqref{eq:pp-persistence} implies
\begin{equation}
    \E x^r\le-\frac{\beta M}{32}D_0
    \qquad (3\le r\le R).
    \label{eq:pp-mean-through-horizon}
\end{equation}
Combining this with \(x^1,x^2\le D_0\),
\begin{equation}
    \E\bar x^R
    \le\frac{2D_0}{R}
      -\frac{R-2}{R}\frac{\beta M}{32}D_0
    \le-\frac{\beta M}{128}D_0.
    \label{eq:pp-mean-output}
\end{equation}
For the last inequality it is enough that
\(\beta M(3R-8)\ge256\), which follows from \(R\ge4\) and \(\beta M>64\).

\paragraph{Objective gap.}
Jensen's inequality and \eqref{eq:pp-mu} give
\begin{align}
    \E[\varphi(\bar x^R)]
    &\ge \varphi(\E\bar x^R)
     =\frac{\mu}{2}(\E\bar x^R)^2\nonumber\\
    &\ge \frac{\beta^2 M}{32768}\frac{L_{\mathrm{loc}}D_0^2}{R}
     >\frac{\beta}{512}\frac{L_{\mathrm{loc}}D_0^2}{R}.
    \label{eq:pp-large-time-lower}
\end{align}
This is stronger than \eqref{eq:pp-lower}.  Together with
\eqref{eq:pp-small-time-lower}, it completes the proof.
\end{proof}

\section{Experimental details}
\label{app:experiments}

\subsection{Exact constants and initialization}
For \(x=(w,z)\in\R^m\times\R^2\), define
\begin{equation}
    f_i(w,z)=\frac12\sum_{j=1}^m\lambda_j w_j^2
        +\delta(a_i^\top z)^2,
    \label{eq:experiment-objective}
\end{equation}
where the \(\lambda_j\) are logarithmically spaced between
\(\lambda_{\min}\) and \(L\), and the \(a_i\) are evenly spaced unit
directions. The shared coordinates determine local smoothness, while the
two-dimensional client-dependent component determines Hessian similarity.
For \(0<\delta\le L/2\), the corresponding constants are exactly
\(L_{\mathrm{loc}}=L\) and \(\delta\), with \(f^\star=f(0)=0\).

We run Algorithm~\ref{alg:scaffold-option-i} with exact gradients,
\(N=10\), \(m=160\), \(L=1\), \(\lambda_{\min}=10^{-8}\), and
\(D_0=1\). We vary \(\delta\in\{0.01,0.03,0.1\}\) and
\(K\in\{1,2,4,\ldots,256\}\), measuring the objective at the averaged
server iterate, as in \eqref{eq:averaged-output-gap}.

In \eqref{eq:experiment-objective}, we use
\[
    \lambda_j=\lambda_{\min}(L/\lambda_{\min})^{(j-1)/(m-1)},
    \qquad
    a_i=(\cos\varphi_i,\sin\varphi_i)^\top,
    \qquad \varphi_i=\pi(i-1)/N.
\]
For \eqref{eq:experiment-objective}, the local and global Hessians are
\[
    A_i=\operatorname{diag}(\Lambda,2\delta P_i),
    \qquad
    \bar A=\operatorname{diag}(\Lambda,\delta I_2),
\]
where \(\Lambda=\operatorname{diag}(\lambda_1,\ldots,\lambda_m)\) and
\(P_i=a_i a_i^\top\). The tight-frame identity
\(N^{-1}\sum_i P_i=I_2/2\) is the same as in
Lemma~\ref{lem:stale-quadratic-recurrence}. Since
\(\lambda_m=L\) and \(2\delta\le L\), every client has exact smoothness
\(L\). Moreover, \(2P_i-I_2\) is an orthogonal reflection, so for
\(v=(v_w,v_z)\),
\[
    \frac1N\sum_i\norm{(A_i-\bar A)v}^2
    =\delta^2\norm{v_z}^2.
\]
Thus the smallest constant in Assumption~\ref{ass:similarity} is exactly
\(\delta\). The global objective is
\[
    f(w,z)=\frac12\sum_j\lambda_j w_j^2
        +\frac\delta2\norm{z}^2,
\]
with unique minimizer zero. We initialize
\[
    w_j^0=\sqrt{0.9/m},\qquad z^0=(\sqrt{0.1},0)^\top,
\]
so \(D_0=1\), with both components nonzero. All gradients and controls
are exact.

\subsection{Slow progress and the measured output}
The common coordinates evolve independently as
\[
    w_j^r=(1-\eta_\ell\lambda_j)^{Kr}w_j^0.
\]
Consequently, their contribution at the averaged server output is exactly
\[
\begin{split}
    f_c(\bar w^R)&=\frac12\sum_j\lambda_j(w_j^0)^2 m_j(R)^2,\\
    m_j(R)&=\frac1R\sum_{r=1}^R q_j^r,
    \qquad q_j=(1-\eta_\ell\lambda_j)^K.
\end{split}
\]
For a dense logarithmic spectrum, write
\(\rho=L/\lambda_{\min}\) and \(D_c^2=0.9\). Approximating the sum
by an integral and the relevant small-curvature multipliers by exponentials
gives
\[
    f_c(\bar w^R)\approx
    \frac{D_c^2}{2\log\rho}
    \int_{\lambda_{\min}}^L
    \left(\frac{1-e^{-\alpha R\lambda}}{\alpha R\lambda}\right)^2
    d\lambda
    \approx\frac{D_c^2\log2}{\log\rho}\frac1{\alpha R},
\]
when \(\lambda_{\min}\alpha R\ll1\ll L\alpha R\).

\subsection{Exact stability boundary}
The common block contracts when \(0<\eta_\ell L<2\), or
\(0<\alpha<2K/L\). For the heterogeneous block, set
\[
    v=\alpha\delta,\qquad
    \tau_K(v)=(1-2v/K)^K.
\]
Lemma~\ref{lem:stale-quadratic-recurrence} gives
\[
    z^r=B_K(v)z^{r-1}-C_K(v)z^{r-2},
    \qquad z^{-1}=z^0,
\]
where \(B_K(v)=(1+\tau_K(v))/2\) and
\(C_K(v)=(\tau_K(v)+2v-1)/4\). Both roots of
\(q^2-B_K(v)q+C_K(v)\) lie strictly inside the unit disk precisely when
\[
\begin{split}
    1-B_K+C_K&=\frac{1-\tau_K+2v}{4}>0,\\
    1+B_K+C_K&=\frac{5+3\tau_K+2v}{4}>0,\\
    1-C_K&=\frac{5-\tau_K-2v}{4}>0.
\end{split}
\]
Within \(\alpha<2K/L\) and \(\delta\le L/2\), we have
\(\lvert1-2v/K\rvert<1\), so the first two inequalities hold.
Therefore the full strict stability region is
\begin{equation}
    0<\alpha<\frac{2K}{L},\qquad
    2\alpha\delta+(1-2\alpha\delta/K)^K<5.
    \label{eq:experiment-stability}
\end{equation}
We define \(\alpha_{\max}\) as the supremum of this region. The left side
of the second inequality increases with \(\alpha\) in the local stability
interval. If it reaches \(5\) before \(2K/L\), we locate that root by
bisection; otherwise \(\alpha_{\max}=2K/L\). At fixed
\(v=\alpha\delta\) and large \(K\), the heterogeneous boundary tends
to the solution of \(2v+e^{-2v}=5\), approximately \(2.497\).

\subsection{Stepsize selection and reproducibility}
\begin{figure}[t]
    \centering
    \includegraphics[width=0.55\textwidth]{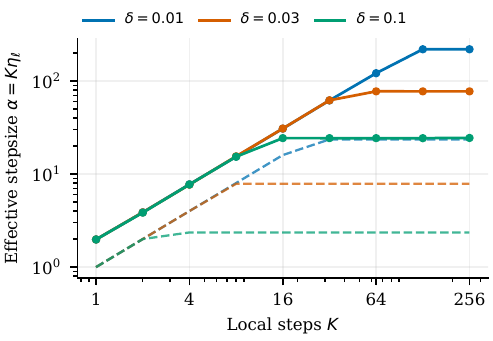}
    \caption{Effective stepsizes \(\alpha=K\eta_\ell\) for the quadratic
    experiment in Figure~\ref{fig:convex-communication}. Dashed curves
    use the theoretical rule; solid curves show the tuned choices.}
    \label{fig:convex-stepsizes}
\end{figure}
Figure~\ref{fig:convex-stepsizes} shows how tuning permits larger effective
stepsizes while retaining the similarity-dependent limit on local work.

The theoretical deterministic rule is
\[
    \alpha_{\mathrm{ref}}=\frac1{\max\{L/K,3\sqrt2\delta\}},
    \qquad \eta_{\ell,\mathrm{ref}}=\alpha_{\mathrm{ref}}/K.
\]
This is exactly the choice in the deterministic case following
Theorem~\ref{thm:scaffold-convex}, yielding \eqref{eq:deterministic-rate}.

For each \((\delta,K)\), we evaluate \(40\) logarithmically spaced
values of \(\alpha\) from \(0.1\alpha_{\mathrm{ref}}\) to
\(0.99\alpha_{\max}\), and include \(\alpha_{\mathrm{ref}}\)
explicitly. We then evaluate \(20\) additional interior logarithmic
points between the neighboring grid values of the candidate with the
earliest first target crossing, using a one-sided interval at an endpoint.
This gives \(61\) candidates per setting. Each candidate uses a constant
local stepsize \(\eta_\ell=\alpha/K\) and is evaluated for
\(H=20{,}000\) rounds.

For the target \(\varepsilon=10^{-5}\), we score each candidate by
\[
    R_\varepsilon^{(H)}(\alpha)
    =\min\bigl\{r\in\{1,\ldots,H\}:
       f(\bar x^s)-f^\star\le\varepsilon
       \text{ for all }s=r,\ldots,H\bigr\}.
\]
This criterion accounts for trajectories that cross the target and later
exceed it. The plotted round counts exclude control initialization.
We select the candidate with the smallest score, breaking ties
toward smaller \(\alpha\). Candidates that do not reach the target within
the horizon are excluded from selection.

We evaluate the exact common-coordinate formula and delayed recurrence,
then compute the objective at the average of the server iterates
\(x^1,\ldots,x^R\). We verify the selected averaged trajectories against
explicit client updates for up to \(200\) rounds per setting.

\subsection{Neural-network experiments}
\label{app:nn-experiments}

\subsubsection{Experimental setup}

\paragraph{Data and model.}
We use CIFAR-10 with \(45{,}000\) training images, \(5{,}000\) validation
images (\(500\) per class), and the official \(10{,}000\)-image test set.
The training set is split among \(N=10\) clients. The IID partition is a
random equal-size split, so each client receives a sample from the same
pooled empirical distribution. For the heterogeneous partition, we draw
\(\pi^{(c)}\sim\operatorname{Dirichlet}(0.1,\ldots,0.1)\) independently
for each class \(c\), where \(\pi_i^{(c)}\) is the fraction of that
class assigned to client \(i\). Small concentration favors allocations
in which a few clients receive most examples of a class. We randomly
shuffle each class, round its allocation counts down, and assign the
remaining examples to a randomly chosen client. This yields client sizes
\(8786,10467,3841,3762,809,8996,614,2454,1384,3887\).
The split and partition seeds are both \(2024\); each partition is held
fixed across methods, stepsizes, and training seeds.
The two partitions preserve the pooled training set and the global
objective while changing the local datasets. They test whether similar
client data make local computation more effective. Data similarity
motivates, but does not itself establish, the gradient-difference bounds
in our assumptions; these runs do not estimate \(\delta\).

The CIFAR-style ResNet-18 has a \(3\times3\), stride-one input convolution,
no initial max pooling, and four residual stages with two blocks each.
We use GroupNorm with \(32\) groups in place of BatchNorm, random
initialization, and no dropout or pretrained weights. GroupNorm computes
normalization statistics separately for each example, so no running
batch statistics need to be maintained or aggregated. BN-SCAFFOLD
\citep{quintana2024bnscaffold} identifies drift in BatchNorm statistics
as a separate difficulty for SCAFFOLD; our choice avoids introducing
that issue into the comparison of local updates and aggregation.
The architecture, preprocessing, and partitioning code are adapted from
the S-DANE implementation\footnote{\url{https://github.com/mlolab/S-DANE}}.
Training uses random crops with padding four and random horizontal flips.
Images are normalized channelwise with mean
\((0.4914,0.4822,0.4465)\) and standard deviation
\((0.2023,0.1994,0.2010)\).

\paragraph{Objectives and updates.}
Let \(n_i\) be the size of client \(i\)'s dataset and
\(n=\sum_i n_i\). Each client gradient is multiplied by \(Nn_i/n\),
and endpoints and controls are averaged uniformly. Equivalently,
\(f_i(x)=(Nn_i/n)\E[\ell(x;\xi_i)]\), where \(\xi_i\) samples a local
image and its augmentation. Then \(N^{-1}\sum_i f_i\) is the
example-weighted global cross-entropy objective.

All methods use \(400\) rounds, minibatches of \(b=64\) examples sampled
with replacement, and a constant stepsize. We use no momentum, weight
decay, clipping, or learning-rate schedule. Both SCAFFOLD variants
initialize controls at the initial model using \(K\) minibatches per
client. In each round, local steps use the stored controls, while new
controls estimated at the round's starting model become active in the
next round. Averaging uses all client endpoints. Selection chooses one
client uniformly before the local steps and uses its endpoint; all
clients still estimate controls. The server stepsize is one.
Minibatch SGD averages \(2NK\) minibatch gradients at the same model
before taking one update. It samples from the pooled training set, so
its paired IID and Dirichlet runs give identical results and serve as
the same baseline in both partitions.

\paragraph{Stepsize selection and evaluation.}
For each method, partition, and \(K\), we select the stepsize with the
smallest validation loss averaged over rounds
\(380,385,390,395,400\) and two tuning seeds. Exact ties are resolved in
favor of the smaller stepsize. The initial grid is
\(\{0.0003,0.001,0.003,0.01,0.03\}\). For IID averaging, the tested
grid also includes \(\{0.06,0.1,0.2,0.3\}\). For both SCAFFOLD variants
on the Dirichlet partition, it also includes
\(\{0.00003,0.0001\}\) at \(K=10\), and
\(\{0.000003,0.00001,0.00003,0.0001\}\) at \(K=40\).
The grids were expanded using validation results; search budgets
therefore differ across settings. All \(80\) candidate configurations
were evaluated with both tuning seeds, and no selected stepsize lies
at the boundary of its tested grid.

The tuning initialization/sampling seed pairs are
\((901,1001)\) and \((902,1002)\). After selecting stepsizes, we evaluate
them with seed pairs \((1901,2001)\), \((1902,2002)\), and
\((1903,2003)\), paired across methods. Test metrics do not enter
stepsize selection. Table~\ref{tab:nn-results} reports the round-400
server model, and all curves evaluate server models every five rounds.
Training loss is clean-image cross-entropy on a fixed stratified
\(5{,}000\)-image training subset; validation and test metrics use their
full splits, without random augmentation. Shaded bands are sample
standard deviations across the three runs. Since the
partitions are fixed, these bands describe training randomness rather
than variation across client partitions. The evaluated server model
differs from the random local output in Theorem~\ref{thm:scaffold-nonconvex}.

\paragraph{Implementation.}
Training uses PyTorch~2.5.1, torchvision~0.20.1, CUDA~12.4,
and NVIDIA H200 GPUs, with float32 computation and deterministic
operations enabled.

\subsubsection{Additional training results}
Table~\ref{tab:nn-results} summarizes the selected stepsizes and final
accuracies for both local-step settings. Figure~\ref{fig:nn-k10} gives
the $K=10$ learning curves, complementing the $K=40$ results in
Figure~\ref{fig:nn-comparison}. On IID data, increasing $K$ improves
both SCAFFOLD variants at the same round budget. This benefit is absent
on the heterogeneous partition. The modest final difference between
the two SCAFFOLD variants on Dirichlet data with $K=40$ does not
establish an advantage for selection.

\begin{table}[H]
\centering
\caption{Selected constant stepsizes and test accuracy at round \(400\).
Accuracies are percentages, reported as mean \(\pm\) sample standard
deviation over three runs.}
\label{tab:nn-results}
\begin{tabular}{llrrr}
\toprule
Partition & Method & \(K\) & Stepsize & Test accuracy (\%)\\
\midrule
IID & SCAFFOLD averaging & 10 & \(0.1\) & \(71.68\pm1.65\)\\
& SCAFFOLD selection & 10 & \(0.01\) & \(53.14\pm3.18\)\\
& Minibatch SGD & 10 & \(0.01\) & \(37.25\pm3.74\)\\
& SCAFFOLD averaging & 40 & \(0.2\) & \(88.54\pm0.62\)\\
& SCAFFOLD selection & 40 & \(0.01\) & \(78.73\pm0.51\)\\
& Minibatch SGD & 40 & \(0.01\) & \(35.88\pm2.25\)\\
\midrule
Dirichlet \(0.1\) & SCAFFOLD averaging & 10 & \(0.0003\) & \(35.99\pm1.63\)\\
& SCAFFOLD selection & 10 & \(0.0003\) & \(30.16\pm4.36\)\\
& Minibatch SGD & 10 & \(0.01\) & \(37.25\pm3.74\)\\
& SCAFFOLD averaging & 40 & \(0.00003\) & \(30.34\pm1.18\)\\
& SCAFFOLD selection & 40 & \(0.00003\) & \(30.94\pm0.89\)\\
& Minibatch SGD & 40 & \(0.01\) & \(35.88\pm2.25\)\\
\bottomrule
\end{tabular}
\end{table}

\begin{figure}[H]
    \centering
    \includegraphics[width=\textwidth]{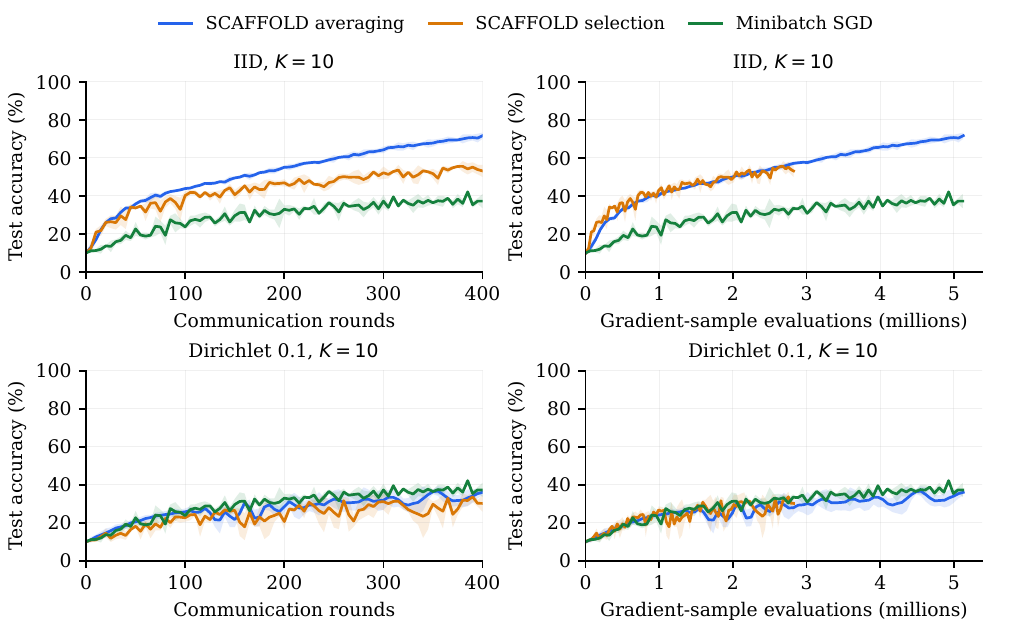}
    \caption{Additional CIFAR-10 results for $K=10$: server test accuracy
    versus communication rounds (left) and gradient-sample evaluations,
    including controls (right). Curves show means and bands show sample
    standard deviations over three runs. Figure~\ref{fig:nn-comparison}
    and Figure~\ref{fig:nn-computation} give the corresponding $K=40$ results.}
    \label{fig:nn-k10}
\end{figure}

\paragraph{Training loss.}
Figure~\ref{fig:nn-training-loss} shows that the IID advantage also
appears in training loss: averaging reaches lower loss than selection
and minibatch SGD for both values of $K$. On the Dirichlet partition,
increasing $K$ does not produce the same improvement, and the SCAFFOLD
curves exhibit occasional loss spikes. These results concern one
architecture, one dataset, two fixed partitions, and a $400$-round budget.

\begin{figure}[H]
    \centering
    \includegraphics[width=\textwidth]{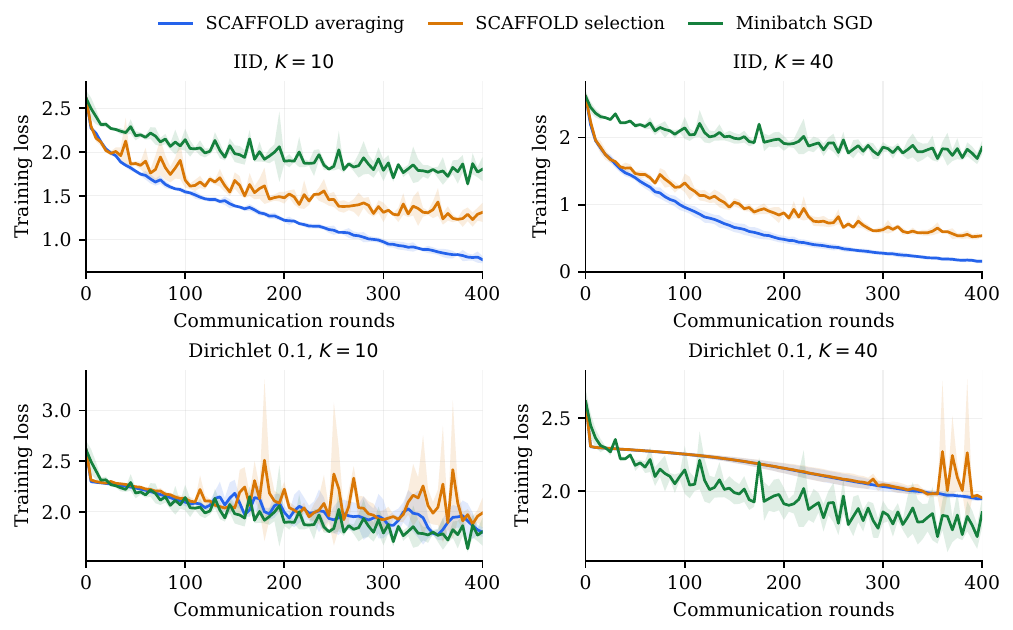}
    \caption{Server training loss for $K=10$ (left) and $K=40$ (right),
    evaluated without augmentation on the fixed $5{,}000$-image training
    subset. Curves show means and bands show sample standard deviations
    over three runs.}
    \label{fig:nn-training-loss}
\end{figure}

\clearpage
\subsubsection{Computation and communication costs}
We count each example used in a gradient calculation, including control
estimation and repeated draws. An ordinary round uses \(2NKb\) such
evaluations for averaging, \((N+1)Kb\) for selection, and \(2NKb\) for
minibatch SGD. Both SCAFFOLD variants also use \(NKb\) evaluations at
initialization. Thus minibatch SGD matches averaging's ordinary-round
sample budget; selection uses less computation. Evaluation passes are
excluded from these counts.

Figure~\ref{fig:nn-computation} compares accuracy at the same
computation budget for $K=40$, complementing the communication-round
comparison in Figure~\ref{fig:nn-comparison}.

\begin{figure}[H]
    \centering
    \includegraphics[width=\textwidth]{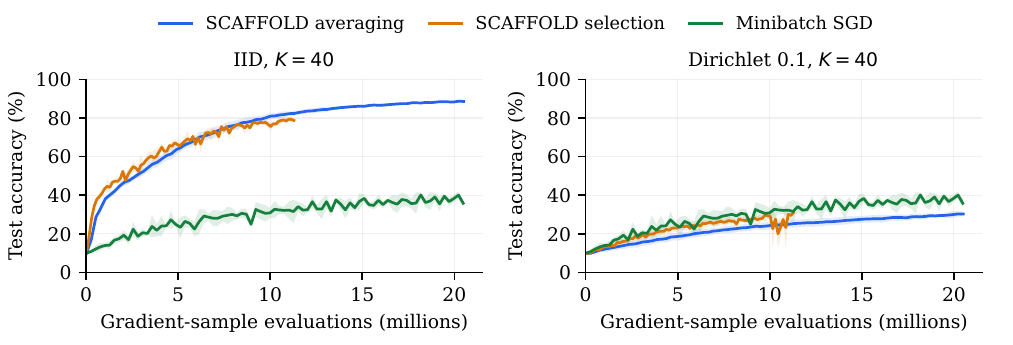}
    \caption{CIFAR-10 with $K=40$: server test accuracy versus
    gradient-sample evaluations, including controls, for IID (left)
    and Dirichlet (right) partitions. Curves show means and bands
    show sample standard deviations over three runs.}
    \label{fig:nn-computation}
\end{figure}

For communication volume, let \(P\) be the bytes in one model-sized
vector. We count uncompressed uploads and downloads: averaging uses
\(4NPR+2NP\) bytes over \(R\) rounds, selection uses
\(2(N+1)PR+2NP\), and minibatch SGD uses \(2NPR\).
These counts refer to the server-output implementation used in the
experiments, which does not return the additional sampled local iterate
in Algorithm~\ref{alg:scaffold-nonconvex}. The SCAFFOLD counts include
initial control estimation, model and control exchanges, and the returned
endpoints. Here
\(P=44{,}695{,}848\) bytes for \(11{,}173{,}962\) float32 parameters.
This measures modeled tensor traffic. Clients are simulated sequentially,
so runtime does not measure distributed communication latency.

\subsection{Trajectory diagnostics}
\label{app:nn-diagnostics}

We measure differences between clients' gradient changes along the
\(K=40\) training trajectories from Appendix~\ref{app:nn-experiments}.
These measurements use the same selected stepsizes and three random
seeds, with server models at rounds \(0,20,\ldots,400\).
They do not enter stepsize selection.

\paragraph{Measurements.}
For each partition, we sample \(256\) training examples per client
without replacement and take a nested subset of \(128\) examples.
The subsets and each image's crop and flip are fixed across models,
methods, and seeds. Let \(\widehat\nabla f_i(x)\) denote the gradient
averaged over one such subset, including the objective weight
\(Nn_i/n\) used in training. For a displacement \(u\), define
\[
 a_i(x,u)=\frac{\widehat\nabla f_i(x+u)-\widehat\nabla f_i(x)}{\|u\|},
 \qquad
 \bar a(x,u)=\frac1N\sum_{i=1}^N a_i(x,u).
\]
We compute four quantities:
\begin{align*}
 \widehat\delta_{\mathrm{avg}}(x,u)
 &=\left(\frac1N\sum_{i=1}^N\|a_i(x,u)-\bar a(x,u)\|^2\right)^{1/2},
 &\widehat\delta_{\max}(x,u)
 &=\max_i\|a_i(x,u)-\bar a(x,u)\|,\\
 \widehat L_{\mathrm{loc}}(x,u)&=\max_i\|a_i(x,u)\|,
 &\widehat L_g(x,u)&=\|\bar a(x,u)\|.
\end{align*}
The first two measure averaged and uniform disagreement between local
gradient changes; the last two measure local and global gradient
sensitivity on the sampled data.

At each model, we use three normalized Gaussian directions and
displacements of length \(10^{-3}\|x\|\) and \(10^{-2}\|x\|\).
The directions are shared across all measurements, and the denominator
uses the displacement represented in floating-point arithmetic.
For each seed, we take the maximum over the three directions separately
for each quantity, then report means and sample standard deviations
over seeds. The maximizing direction can differ between quantities.
The same examples and augmentations are used at \(x\) and \(x+u\),
so their gradient difference contains no fresh sampling between the
two evaluations. These are finite gradient-difference measurements
on sampled objectives, rather than estimates with guarantees for the
global constants in our assumptions.

\paragraph{Partition and trajectory comparisons.}
Figure~\ref{fig:nn-diagnostic-disagreement} shows the two disagreement
measurements using \(256\) examples per client and relative radius
\(10^{-2}\). Minibatch SGD follows
the same model trajectory in both partitions, allowing a comparison
at the same model parameters. On this trajectory, both disagreement
measurements are larger for the Dirichlet partition at every measured
round, seed, radius, and subset size. This supports the distinction
between the two partitions in terms of gradient changes. The comparison
includes both their different class distributions and the objective
weights induced by unequal client sizes.

\begin{figure}[H]
    \centering
    \includegraphics[width=0.95\textwidth]{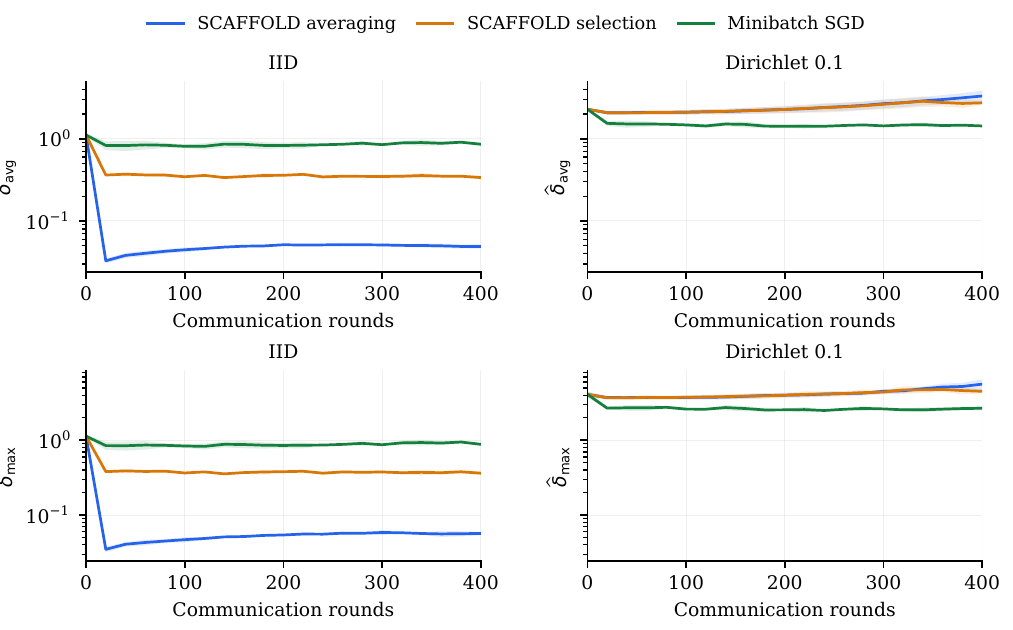}
    \caption{Averaged and uniform gradient disagreement along the
    \(K=40\) trajectories, using \(256\) examples per client and relative
    radius \(10^{-2}\). Curves show means of the three-direction maxima;
    bands show sample standard deviations over three runs.
    Vertical axes are logarithmic.}
    \label{fig:nn-diagnostic-disagreement}
\end{figure}

\paragraph{Comparison of similarity and smoothness measurements.}
Table~\ref{tab:nn-diagnostic-endpoint} compares all four quantities at
round $400$, with relative radius $10^{-2}$ and $256$ examples per
client. Figure~\ref{fig:nn-diagnostic-ratios} shows the corresponding
ratios to local sensitivity, computed separately for each seed and
then averaged.

\begin{table}[H]
\centering
\caption{Gradient-difference measurements at round \(400\), with
relative radius \(10^{-2}\) and \(256\) examples per client.
Values are means \(\pm\) sample standard deviations over three runs.}
\label{tab:nn-diagnostic-endpoint}
\small
\setlength{\tabcolsep}{5pt}
\begin{tabular}{llrrrr}
\toprule
Partition & Method & \(\widehat\delta_{\mathrm{avg}}\) &
\(\widehat\delta_{\max}\) & \(\widehat L_{\mathrm{loc}}\) & \(\widehat L_g\)\\
\midrule
IID & Averaging & \(0.049\pm0.003\) & \(0.057\pm0.003\) & \(0.061\pm0.003\) & \(0.017\pm0.001\)\\
& Selection & \(0.338\pm0.017\) & \(0.363\pm0.019\) & \(0.406\pm0.029\) & \(0.164\pm0.026\)\\
& Minibatch SGD & \(0.860\pm0.036\) & \(0.884\pm0.036\) & \(1.013\pm0.018\) & \(0.461\pm0.045\)\\
\midrule
Dirichlet \(0.1\) & Averaging & \(3.322\pm0.551\) & \(5.642\pm0.893\) & \(7.715\pm2.328\) & \(2.583\pm1.314\)\\
& Selection & \(2.745\pm0.156\) & \(4.519\pm0.274\) & \(5.567\pm0.687\) & \(1.688\pm0.610\)\\
& Minibatch SGD & \(1.434\pm0.021\) & \(2.680\pm0.168\) & \(2.947\pm0.261\) & \(0.513\pm0.053\)\\
\bottomrule
\end{tabular}
\end{table}

\begin{figure}[H]
    \centering
    \includegraphics[width=\textwidth]{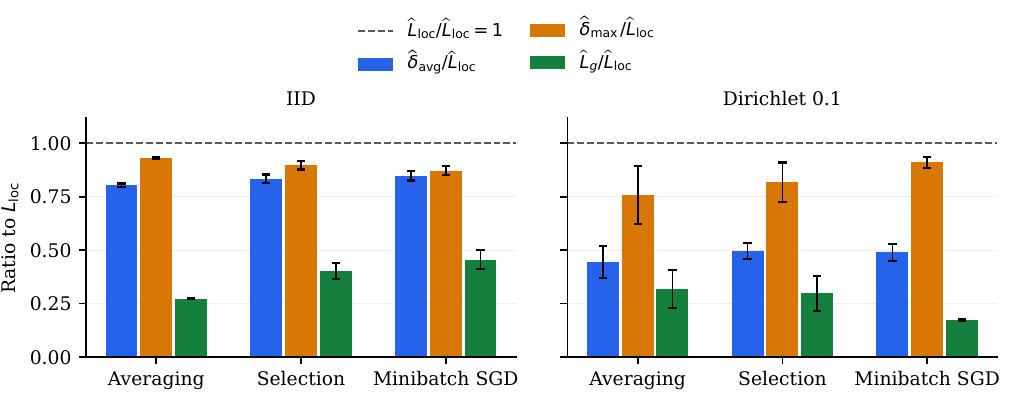}
    \caption{Similarity and global sensitivity relative to local sensitivity
    at round $400$, using relative radius $10^{-2}$ and $256$ examples per
    client. Bars show mean per-seed ratios and error bars show sample
    standard deviations over three runs. The dashed line represents
    $\widehat L_{\mathrm{loc}}/\widehat L_{\mathrm{loc}}=1$.}
    \label{fig:nn-diagnostic-ratios}
\end{figure}

By definition, $\widehat\delta_{\mathrm{avg}}\le\widehat\delta_{\max}$.
The ratio $\widehat\delta_{\max}/\widehat\delta_{\mathrm{avg}}$ ranges
from $1.03$ to $1.16$ across methods on IID data, compared with $1.65$
to $1.87$ on Dirichlet data. Thus averaged and uniform similarity give
closer measurements on the IID partition.

Global gradient sensitivity is also smaller than local sensitivity:
$\widehat L_g/\widehat L_{\mathrm{loc}}$ ranges from $0.17$ to $0.46$
across the six settings. Averaging gradient changes across clients can
therefore substantially reduce their magnitude relative to the largest
local change. This illustrates the distinction between the two smoothness
quantities retained in our nonconvex upper bound.

On IID data, SCAFFOLD with averaging reaches models with smaller absolute
disagreement, but their local gradient sensitivity is also smaller.
The ratio $\widehat\delta_{\mathrm{avg}}/\widehat L_{\mathrm{loc}}$ is
$0.80$--$0.85$ on IID data and $0.45$--$0.50$ on Dirichlet data;
$\widehat\delta_{\max}/\widehat L_{\mathrm{loc}}$ remains
$0.76$--$0.93$ across both partitions. These measurements do not exhibit
a regime of uniform disagreement much smaller than local sensitivity,
and the absolute and relative comparisons need not give the same ordering.
Since the methods also visit different models, these ratios alone do not
explain averaging's empirical advantage.

\paragraph{Sensitivity to radius and sample size.}
We repeat the measurements at both radii and nested subset sizes. With \(256\) examples per client, reducing the
relative radius from \(10^{-2}\) to \(10^{-3}\) increases
\(\widehat\delta_{\mathrm{avg}}\) by a median factor of \(2.86\)
over all measured models and seeds. Reducing the subset size from
\(256\) to \(128\) increases it by a median factor of \(1.31\)
at radius \(10^{-2}\) and \(1.38\) at radius \(10^{-3}\).
The ordering of the partitions on the minibatch-SGD trajectory
persists in all four settings.
Finite client samples, changes in ReLU activation patterns, and
numerical precision can affect these measurements. In particular,
they neither certify the similarity assumptions for the neural network
nor identify the constants governing the theoretical rates.

\end{document}